\documentclass[twoside,11pt]{article}

%

\usepackage{jmlr2e}
\usepackage{algorithm}
\usepackage{algorithmic}

\usepackage{multirow}
\usepackage{multicol}
\usepackage{color}
\usepackage{amsmath}
\usepackage{cleveref}
\usepackage{booktabs}

\usepackage{lastpage}
\jmlrheading{22}{2021}{1-\pageref{LastPage}}{9/20; Revised
8/21}{9/21}{20-1006}{Jinshan Zeng, Shao-Bo Lin, Yuan Yao, Ding-Xuan Zhou}
\ShortHeadings{On ADMM in Deep Learning: Convergence and Saturation-Avoidance}{Zeng, Lin, Yao, Zhou}

\newtheorem{assumption}{Assumption}

\newcommand{\cL}{{\cal L}}
\newcommand{\hcL}{\hat{\cal L}}

\newcommand{\cQ}{{\cal Q}}
\newcommand{\hcQ}{{\hat{\cal Q}}}






\begin{document}

\title{On ADMM in Deep Learning: Convergence and Saturation-Avoidance}

\author{\name Jinshan Zeng \email jinshanzeng@jxnu.edu.cn \\
       \addr School of Computer and Information Engineering, Jiangxi Normal University, Nanchang, China\\
       \addr Liu Bie Ju Centre for Mathematical Sciences, City University of Hong Kong, Hong Kong\\
       \addr Department of Mathematics, Hong Kong University of Science and Technology, Hong Kong
       \AND
       \name Shao-Bo Lin\thanks{Corresponding author.} \email sblin1983@gmail.com \\
       \addr Center of Intelligent Decision-Making and Machine Learning, School of Management, Xi'an Jiaotong University, Xi'an, China
       \AND
       \name Yuan Yao$^*$  \email yuany@ust.hk \\
       \addr Department of Mathematics, Hong Kong University of Science and Technology, Hong Kong
       \AND
       \name Ding-Xuan Zhou \email  mazhou@cityu.edu.hk \\
       \addr  School of Data Science and Department of Mathematics, City University of Hong Kong, Hong Kong
       }

\editor{Marc Schoenauer}

\maketitle

\begin{abstract}
In this paper, we develop an alternating direction method of multipliers (ADMM) for  deep neural networks training with sigmoid-type activation functions (called \textit{sigmoid-ADMM pair}), mainly motivated by the gradient-free nature of ADMM in avoiding  the saturation of sigmoid-type activations and the advantages of deep neural networks with sigmoid-type activations (called deep sigmoid nets) over their rectified linear unit (ReLU) counterparts (called deep ReLU nets) in terms of approximation. In particular, we  prove that the approximation capability of deep sigmoid nets is not worse  than that of deep ReLU nets by showing that ReLU activation function can be well approximated by deep  sigmoid nets with two hidden layers and finitely many free parameters but not vice-verse. We also establish the global convergence of the proposed ADMM for the nonlinearly constrained formulation of the deep sigmoid nets training from arbitrary initial points to a Karush-Kuhn-Tucker (KKT) point at a rate of order ${\cal O}(1/k)$. Besides sigmoid activation, such a convergence theorem holds for a general class of smooth activations. Compared with the widely used stochastic gradient descent (SGD) algorithm for the deep ReLU nets training (called ReLU-SGD pair), the proposed sigmoid-ADMM pair is practically stable with respect to the algorithmic hyperparameters including the learning rate, initial schemes and the pro-processing of the input data. Moreover, we find that to approximate and learn simple but important functions   the proposed sigmoid-ADMM pair numerically outperforms the ReLU-SGD pair.
\end{abstract}

\begin{keywords}
Deep learning, ADMM, sigmoid, global convergence, saturation avoidance
\end{keywords}

\section{Introduction}

In the era of big data, data of massive size are collected in a wide range of applications including image processing, recommender systems, search engineering, social activity mining and natural language processing \citep{Zhou2014}. These massive data provide a springboard to design machine learning systems matching or outperforming human capability but pose several challenges on how to develop learning systems to sufficiently exploit the data. As shown in Figure \ref{Fig:mechnism}, the traditional approach comes down to a three-step learning process.  It at first adopts delicate data transformations to yield a tractable representation of the original massive data; then develops some interpretable and computable optimization models based on the transformed data to embody the utility of data;  finally designs  efficient algorithms to solve the proposed optimization problems. These three steps are called feature extraction, model selection and algorithm designation respectively.  Since feature extraction usually involves  human ingenuity and prior knowledge, it is labor intensive, especially when the data size is huge. Therefore, it is highly desired to reduce the human factors in the learning process.
\begin{figure}[h]
    \centering
    \includegraphics[scale=0.5]{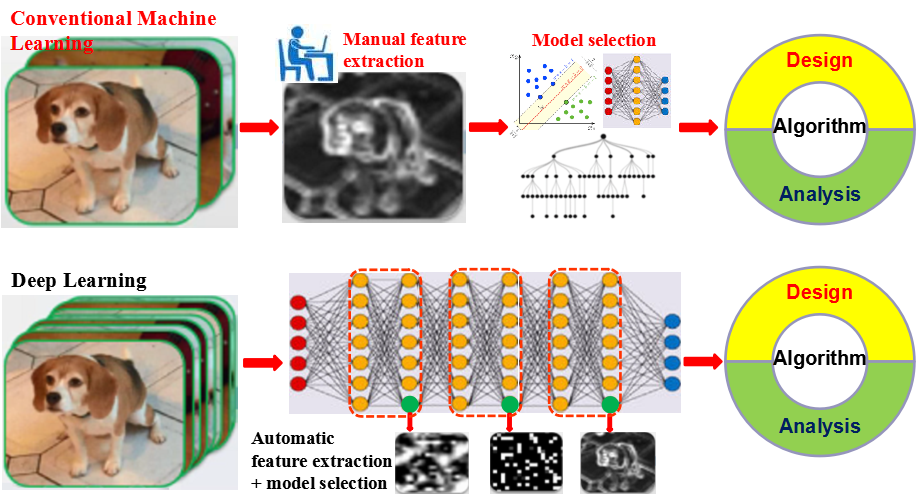}
    \caption{Philosophy behind deep learning}
    \label{Fig:mechnism}
\end{figure}

Deep learning \citep{Hinton-Salakhutdinov-DL06,LeCun-Bengio-Hinton-DL15}, which utilizes deep neural networks (deep nets for short) for feature extraction and model selection simultaneously, provides a promising way to reduce  human factors in machine learning. Just as Figure \ref{Fig:mechnism} purports to show, deep learning transforms the classical three-step strategy into a two-step approach: neural networks selection and algorithm designation. It is thus important to pursue   why such a transformation is feasible and  efficient. In particular, we are interested in  making clear of when deep nets are better than  classical methods such as shallow neural networks (shallow nets)  and kernel methods,  and which optimization algorithm is good enough to realize the benefits brought from deep nets.

In the past decade, deep nets with ReLU  activations (deep ReLU nets) equipped with the well known stochastic gradient descent (SGD) algorithm have been successfully used in image classification  \citep{Hinton-imagenet-2012}, speech recognition \citep{Hinton-speech-2012,Sainath-speech-2013},  natural language processing \citep{Devlin-NLP-2014},
demonstrating the power of ReLU-SGD pair  in deep learning. The problem is, however, that there is a crucial inconsistency between approximation and optimization for the ReLU-SGD pair. To be detailed,  from the approximation  theory viewpoint, it is necessary to deepen the network to approximate smooth function \citep{Yarotsky17}, extract  manifold structures \citep{Shaham-Cloninger-Coifman18}, realize rotation-invariance features \citep{han2020depth} and provide localized approximation \citep{Safran-Shamir17}. However, from the optimization viewpoint, it is difficult to solve optimization problems associated with  too deep networks with theoretical guarantees \citep{Goodfellow-et-al-2016}.  Besides the lack of convergence (to a global minima) guarantees, deep ReLU nets equipped with  SGD  may suffer  from the issue of gradient explosion/vanishing  \citep{Goodfellow-et-al-2016} and is usually sensitive to its algorithmic hyper-parameters such as the initialization \citep{Glorot-Bengio10,Sutskever-optdnn-2013,Hanin-Rolnick18} and learning rate \citep{Senior-lr13,Daniel-lr16,Ruder-SGD-review16} in the sense that these parameters have dramatic impacts on the performance of SGD and thus should be carefully tuned in practice. In a nutshell, deep  ReLU nets should be deep enough to exhibit excellent approximation  capability while too deep networks frequently impose additional difficulty  in optimization.

There are numerous remedies to tackle the aforementioned inconsistency for the ReLU-SGD pair with intuition that SGD as well as its variants is capable of efficiently solving the optimization problem associated with deep ReLU nets. In particular, some tricks on either the network architectures such as ResNets \citep{He-ResNet-2016} or the training procedure such as the batch normalization \citep{Ioffe-Szegedy-BN15} and weight normalization \citep{Salimans-Kingma-WN16} have been developed to address the issue of gradient vanishing/explosion; several efficient initialization  schemes including the \textit{MSRA} initialization \citep{He-msra-init15} have been proposed for deep ReLU nets; some  guarantees have been established \citep{Allen-Zhu-SGD19,Du-GD19,Zou-Gu-SGD19} in the over-parametrized setting to verify the convergence of SGD; and numerous strategies of
learning rates \citep{Chollet2015keras,Gotmare-lr-warmup19,Smith-lr-cyclical19} have been provided to enhance the feasibility of SGD.

Different from the aforementioned approach focusing on modifying SGD for deep ReLU nets, we pursue an alternative direction to ease the training via reducing the depth. Our studies stem from an interesting observation in neural networks approximation. As far as the approximation capability is concerned, deep nets with sigmoid-type activation functions (deep sigmoid nets) theoretically perform better than deep ReLU nets for some function classes in the sense that to attain the same approximation accuracy, the depth and number of parameters of the former is much smaller than those of the latter.  This phenomenon was observed in approximating smooth functions \citep{Mhaskar1996,Yarotsky17}, reflecting the rotation invariance feature \citep{Chui-Lin-Zhou19,han2020depth} and capturing sparse signals \citep{LinH2017,Schwab2019}.

\begin{figure}[!t]
\begin{minipage}[b]{0.325\linewidth}
\centering
\includegraphics*[scale=.35]{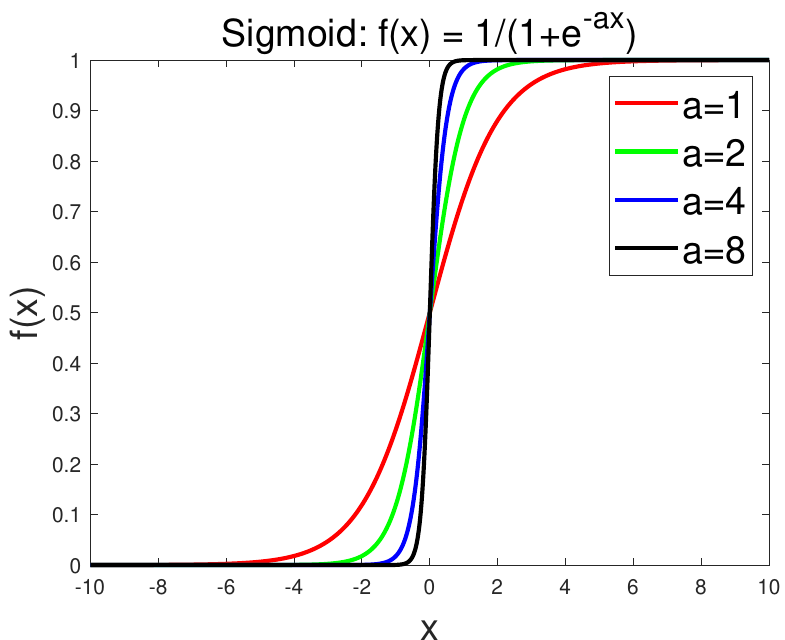}
\centerline{{\small (a) Sigmoid functions}}
\end{minipage}
\hfill
\begin{minipage}[b]{0.325\linewidth}
\centering
\includegraphics*[scale=.35]{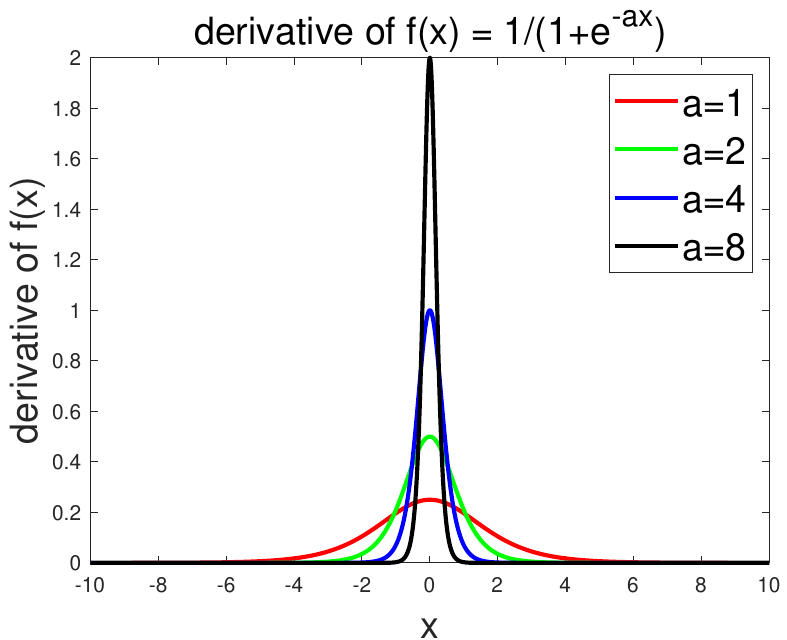}
\centerline{{\small (b) Saturation of sigmoid}}
\end{minipage}
\hfill
\begin{minipage}[b]{0.325\linewidth}
\centering
\includegraphics*[scale=.35]{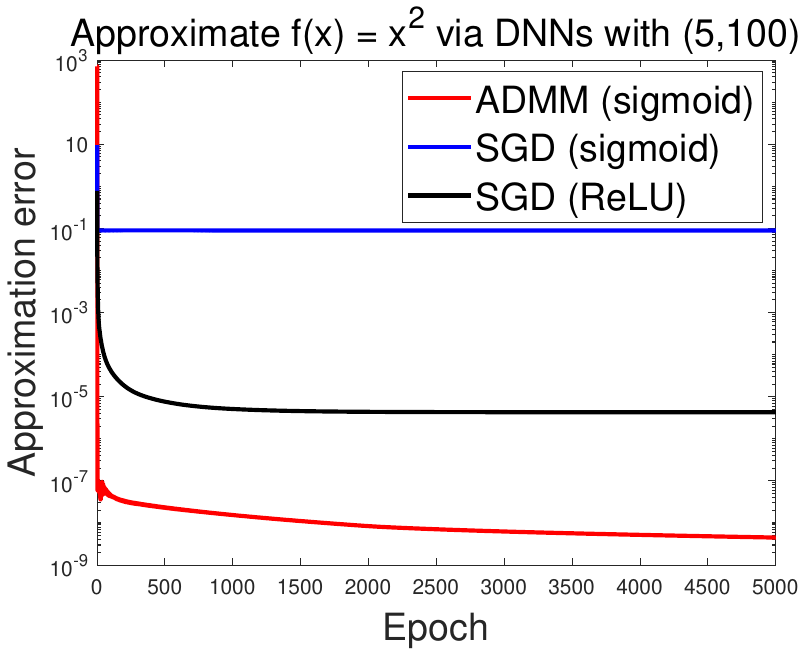}
\centerline{{\small (c) SGD vs. ADMM}}
\end{minipage}
\hfill
\caption{The cons of SGD and pros of ADMM in solving deep sigmoid nets. The setting of numerical simulation in (c) can be found in the Table \ref{Tab:exp-setting} below.}
\label{Fig:sigmoid}
\end{figure}

In spite of their advantages in approximation, deep sigmoid nets have not been widely used in the deep learning community. The major reason is due to the saturation problem of the sigmoid function \footnote{A function $f: \mathbb{R} \rightarrow \mathbb{R}$ is said to be \textit{saturating} if it is differentiable and its derivative $f'(x)$ satisfies $\lim_{|x| \rightarrow +\infty} f'(x)= 0$.} \citep[Section 6.3]{Goodfellow-et-al-2016}, which is easy to cause gradient vanishing for  gradient-descent based algorithms in the deep sigmoid nets training \citep{Bengio-gradientvanish1994,LeCun-init98b}. Specifically, as shown in Figure \ref{Fig:sigmoid} (b), derivatives of sigmoid functions  vanish numerically in a large range. In this paper, we aim at developing a gradient-free algorithm for the deep sigmoid nets training to avoid saturation of deep sigmoid nets and sufficiently embody their theoretical advantages. As a typical gradient-free optimization algorithm, alternating direction method of multipliers (ADMM)  can be regarded as a  primal-dual method based on an augmented Lagrangian by introducing nonlinear constraints and enables a convergent sequence satisfying the nonlinear constraints. Therefore, ADMM attracted rising attention in deep learning with various implementations \citep{Carreira2014-MAC,Goldstein-ADMM-DNN2016,Kiaee2016-ADMM-sparseCNN,Yang-ADMM-net2016,Gotmare2018,Murdock-ADNN2018}. Under this circumstance, we propose an efficient ADMM algorithm based on a novel update order and an efficient sub-problem solver. Surprisingly, as shown in Figure \ref{Fig:sigmoid} (c), the proposed  sigmoid-ADMM pair  performs better than ReLU-SGD pair in approximating the simple but extremely important square function \citep{Yarotsky17,Petersen-Voigtlaender18,han2020depth}. This implies that ADMM may be an efficient algorithm to sufficiently realize theoretical advantages of deep sigmoid nets. Our contributions of this paper can be summarized as the following three folds.

$\bullet$ {\bf Methodology Novelty:} We develop a novel sigmoid-ADMM pair for deep learning. Compared with the widely used ReLU-SGD pair, the proposed  sigmoid-ADMM pair is stable with respect to algorithmic hyperparameters including  learning rates, initial schemes and the pro-processing of input data.  Furthermore, we find that to approximate and learn simple but important functions including the square function, radial functions and product gate, deep sigmoid nets theoretically beat deep ReLU nets and the proposed sigmoid-ADMM pair outperforms the ReLU-SGD pair. In terms of algorithm designs, different from existing ADMM methods in deep learning, our proposed ADMM adopts a backward-forward update order that is similar as BackProp \citep{Hinton-BP1986} and a local linear approximation for sub-problems, and more importantly keeps all the nonlinear constraints such that the solution found by the proposed algorithm can converge to a solution satisfying these nonlinear constraints.

$\bullet$ {\bf Theoretical Novelty:}  To demonstrate the theoretical advantages of deep sigmoid nets, we rigorously prove that the approximation capability of deep sigmoid nets is not worse  than deep ReLU nets by showing that ReLU   can be well approximated by deep  sigmoid nets with two hidden layers and finitely many free parameters but not vice-verse.
We also establish the global convergence of the proposed ADMM for the nonlinearly constrained formulation of the deep sigmoid nets training from arbitrary initial points to a Karush-Kuhn-Tucker (KKT) point at a rate of order ${\cal O}(1/k)$. Different from the existing literature on convergence of nonconvex ADMM \citep{Hong-ADMM-2016,Wang-ADMM2018,Gao-Goldfarb-ADMM20} for linear or multiaffine constrained optimization problems, our analysis provides a new methodology to deal with the nonlinear constraints in deep learning. In a word, our approach  actually leads to a general convergence framework for ADMM with ``smooth'' enough activations.

$\bullet$ {\bf Numerical Novelty:} In terms of numerical performance, the effectiveness (particularly the stability to initial schemes and the easy-to-tune property of algorithmic parameters) of the proposed ADMM has been demonstrated by numerous experiments including a series of toy simulations and three real-data experiments. Numerical results illustrate the outperformance of the sigmoid-ADMM pair over the ReLU-SGD pair in approximating the extremely important square function, product gate, piecewise $L_1$ radial and smooth $L_2$ radial functions with stable algorithmic hyperparameters. Together with some other important functions such as the localized approximation \citep{Chui-Li-Mhaskar94}, these natural functions realized in this paper can represent some important data features such as piecewise smoothness in image processing \citep{Hinton-imagenet-2012}, sparseness in computer vision \citep{LeCun-Bengio-Hinton-DL15}, and rotation-invariance in earthquake prediction \citep{Vikraman-earthquake16}. The effectiveness of the proposed ADMM is further demonstrated by real-data experiments, i.e., earthquake intensity, extended Yale B databases and PTB Diagnostic ECG databases, which reflect the partially radial and low-dimensional manifold features  in some extent.

The rest of this paper is organized as follows. In the next section, we demonstrate the advantage of deep sigmoid nets in approximation (see Theorem \ref{theorem:app}).
Section \ref{sc:ADMM} describes the proposed ADMM method for the considered DNN training model followed by the main convergence theorem (see Theorem \ref{Thm:Conv-ADMM-sigmoid}).
Section \ref{sc:related-work} provides some discussions on related work and key ideas of our proofs.
Section \ref{sc:simulations} provides some toy simulations to show the effectiveness of the proposed ADMM method in realizing some important natural functions.
Section \ref{sc:real-experiments} provides two real data experiments to further demonstrate the effectiveness of the proposed method.
All proofs are presented in Appendix.

{\bf Notations:} For any matrix $A \in \mathbb{R}^{m\times n}$, $[A]_{ij}$ denotes its $(i,j)$-th entry. Given a matrix $A$, $\|A\|_F$, $\|A\|_2$ and $\|A\|_{\max}$ denote the Frobenius norm, operator norm, and max-norm of $A$, respectively, where $\|A\|_{\max} = \max_{i,j} |[A]_{ij}|$. Then obviously, $\|A\|_{\max} \leq \|A\|_2 \leq \|A\|_F$. We let $W_{<i}:=[W_1, W_2, \ldots, W_{i-1}]$, $W_{>i} := [W_{i+1},\ldots,W_N]$ for $i=1,\ldots, N$,  $W_{<1} = \emptyset$ and $W_{>N} = \emptyset$. ${\bf I}$ denotes the identity matrix whose size can be determined according to the text. Denote by $\mathbb{R}$ and $\mathbb{N}$ the real and natural number sets, respectively.

\section{Deep Sigmoid Nets in Approximation}
\label{Sec.versus}
For the depth $N\in\mathbb N$ of a neural network, let $d_i \in \mathbb{N}$ be the number of hidden neurons at the $i$-th hidden layer for $i=1,\ldots,N-1$. Denote an affine mapping ${\cal J}_i:\mathbb R^{d_{i-1}}\rightarrow\mathbb R^{d_i}$ by $\mathcal J_i(x):=W_i  x+b_i$ for $d_i\times d_{i-1}$ weight matrix and  thresholds  $b_i\in\mathbb R^{d_i}$. For a univariate activation function $\sigma_i, i=1\dots,N$, denote further $\sigma_i(x)$ when  $\sigma_i$ is applied component-wise to the vector $x$. Define an $N$-layer feedforward neural network by
\begin{equation}\label{deep-net}
     \mathcal N_{N,d_1,\dots,d_N,\sigma}
     = a\cdot \sigma_N\circ \mathcal J_N \circ \sigma_{N-1}\circ \mathcal J_{N-1} \circ \dots \circ \sigma_{1}\circ\mathcal J_1(x),
\end{equation}
where $a \in\mathbb R^{d_N}$.
The deep net  defined in (\ref{deep-net}) is called the deep ReLU net and deep sigmoid net, provided $\sigma_i(t)\equiv \sigma_{relu}(t)=\max\{t,0\}$ and $\sigma_i(t)\equiv \sigma(t)=\frac{1}{1+e^{-t}}$ respectively.

Since the (sub-)gradient computation of ReLU is very simple, deep ReLU nets have attracted enormous research activities  in the deep learning community \citep{Hinton2010-ReLU}.
The power   of deep ReLU nets, compared with shallow nets with ReLU (shallow ReLU nets) has been sufficiently explored in the literature \citep{Yarotsky17,Petersen-Voigtlaender18,Shaham-Cloninger-Coifman18,Schwab2019,Chui2020,han2020depth}. In particular, it was proved in \citep[Proposition 2]{Yarotsky17} that the following ``square-gate'' property  holds  for deep ReLU nets,  which is beyond the capability of shallow ReLU nets due to the non-smoothness of ReLU.

\begin{lemma}\label{Lemma:ReLU-square gate for infinite}
The function $f(t)=t^2$ on the segment $[-M,M]$ for $M>0$ can be approximated within any accuracy $\varepsilon>0$ by a deep ReLU net with the depth and free parameters of order $\mathcal O(\log(1/\varepsilon))$.
\end{lemma}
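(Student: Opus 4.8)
The plan is to follow Yarotsky's sawtooth construction. First I would reduce to the unit interval: with $u=(t+M)/(2M)\in[0,1]$ one has $t^2=4M^2u^2-4M^2u+M^2$, so an $\varepsilon$-accurate approximation of $t\mapsto t^2$ on $[-M,M]$ amounts to an $\varepsilon/(4M^2)$-accurate approximation of $u\mapsto u^2$ on $[0,1]$, the two affine changes of variable being absorbed into the input weights and the final read-out of the net; since $M$ is fixed, $\log(4M^2/\varepsilon)=\mathcal O(\log(1/\varepsilon))$, so it suffices to treat $u\mapsto u^2$ on $[0,1]$. The basic building block is the tent map $g(x)=2\sigma_{relu}(x)-4\sigma_{relu}(x-1/2)$, a one-hidden-layer ReLU net with two neurons realizing $g(x)=2\min\{x,1-x\}$ with $g([0,1])\subseteq[0,1]$; its $s$-fold self-composition $g_s=g\circ\cdots\circ g$ is the sawtooth with $2^{s-1}$ equispaced teeth on $[0,1]$, each of height $1$ and base-width $2^{-(s-1)}$.

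The heart of the argument is a telescoping identity. Let $f_m$ be the piecewise-linear interpolant of $f(x)=x^2$ at the dyadic nodes $k2^{-m}$, $k=0,\dots,2^m$. Refining the mesh from level $m-1$ to level $m$ subtracts, on each sub-interval of length $h=2^{-(m-1)}$, a tent vanishing at the endpoints with peak value $h^2/4$ at the midpoint, because for $f(x)=x^2$ the gap between a chord and the function value at the midpoint is exactly $h^2/4$. Collecting these tents over all sub-intervals gives precisely $(h^2/4)\,g_m=4^{-m}g_m$, hence $f_m=f_{m-1}-4^{-m}g_m$, and since $f_0(x)=x$ we get $f_m(x)=x-\sum_{s=1}^m 4^{-s}g_s(x)$. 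Combined with the standard interpolation estimate $\|f_m-f\|_{\infty,[0,1]}\le\frac{1}{8}\|f''\|_\infty (2^{-m})^2=2^{-2m-2}$, choosing $m=\lceil\frac{1}{2}\log_2(4M^2/\varepsilon)\rceil=\mathcal O(\log(1/\varepsilon))$ delivers the required accuracy after rescaling.

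It remains to realize $f_m$ by a ReLU net of depth $\mathcal O(m)$ and constant width. I would construct a network that propagates three scalars through its layers: a copy of the input $x$ (preserved via $x=\sigma_{relu}(x)-\sigma_{relu}(-x)$), the current iterate $g_s(x)$ (advanced by one copy of the two-neuron $g$-block), and the running partial sum $x-\sum_{s'\le s}4^{-s'}g_{s'}(x)$ (advanced by an affine combination of the other two channels, split into positive and negative ReLU parts so that it survives into the next layer). After $m$ such layers the partial-sum channel outputs $f_m(x)$, and a terminal affine map applies the output rescaling from the first step. This net has $\mathcal O(m)$ layers, constant width, hence $\mathcal O(m)=\mathcal O(\log(1/\varepsilon))$ free parameters, which is exactly the claimed bound.

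The conceptual content is essentially the one-line midpoint computation behind the telescoping identity; the main obstacle is the bookkeeping in the last step — making sure that each transported quantity (the partial sums are not a priori nonnegative) passes through the ReLU layers at constant width without amplitude blow-up, and that the affine reparametrizations of the first step compose cleanly with the first and last layers so the parameter count genuinely stays $\mathcal O(\log(1/\varepsilon))$ with the constant depending only on $M$.
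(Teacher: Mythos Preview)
Your proposal is correct and is precisely Yarotsky's sawtooth construction; the paper itself does not give a proof of this lemma but simply cites \cite[Proposition~2]{Yarotsky17}, so you have in effect reconstructed the referenced argument. One minor bookkeeping point: on all of $\mathbb{R}$ the tent needs a third neuron $+2\sigma_{relu}(x-1)$ to stay nonnegative, but since your iterates remain in $[0,1]$ the two-neuron version you wrote suffices.
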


The above lemma exhibits the  necessity of the depth for deep ReLU nets to act as a ``square-gate''. Since the depth depends on the accuracy, it requires many hidden layers for deep ReLU nets for such an easy task and too many hidden layers enhance the difficulty for analyzing SGD \citep[Sec.8.2]{Goodfellow-et-al-2016}. This presents the reason why the numerical accuracy of deep ReLU nets in approximating $t^2$ is not so good, just as Figure \ref{Fig:sigmoid} exhibits. Differently, due to the infinitely differentiable property of the sigmoid function, it is easy for shallow sigmoid nets \citep[Proposition 1]{Chui-Lin-Zhou19} to play as a ``square-gate'', as shown in  the following lemma.
\begin{lemma}\label{lemma:sig-square-gate}
Let $M>0$. For $\varepsilon>0$, there is  a shallow sigmoid net $\mathcal N_3$ with 3 free parameters bounded by $\mathcal O(\varepsilon^{-6})$ such that
$$
     |t^2- \mathcal N_{3}(t) |\leq \varepsilon,\qquad t\in[-M,M].
$$
\end{lemma}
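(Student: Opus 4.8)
The plan is to exploit the infinite smoothness of the sigmoid $\sigma(t)=1/(1+e^{-t})$ through a second-order symmetric divided difference, so that the local quadratic behaviour of $\sigma$ at a well-chosen point becomes a rescaled copy of $t^2$. Concretely, I would look for $\mathcal N_3$ of the form
\[
  \mathcal N_3(t) = a\bigl[\sigma(wt+b)+\sigma(-wt+b)-2\sigma(b)\bigr],
\]
which is a genuine one-hidden-layer sigmoid net: it consists of three neurons (inner weights $w,-w,0$, common threshold $b$, outer weights $a,a,-2a$), so that it is parametrized by precisely the three free scalars $a,w,b$, matching the statement.

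First I would Taylor-expand $\sigma$ about $b$ to third order. Since each derivative satisfies $\sigma^{(k)}=P_k(\sigma)$ for a polynomial $P_k$ and $\sigma$ takes values in $(0,1)$, every $\sigma^{(k)}$ is uniformly bounded on $\mathbb R$; hence, writing $u:=wt$, the linear terms cancel by the $u\mapsto-u$ symmetry and one obtains
\[
  \sigma(b+u)+\sigma(b-u)-2\sigma(b) = \sigma''(b)\,u^2 + R(u), \qquad |R(u)|\le C\,|u|^3,
\]
with $C:=\tfrac13\sup_{x\in\mathbb R}|\sigma'''(x)|<\infty$ an absolute constant. The key step is to take $b$ with $\sigma''(b)\neq 0$: since $\sigma''(x)=\sigma(x)(1-\sigma(x))(1-2\sigma(x))$ vanishes only at $x=0$, any fixed $b\neq0$ works. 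Then set $a:=1/(\sigma''(b)\,w^2)$ so that the leading term becomes exactly $t^2$.

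It remains to make the error uniform on $[-M,M]$. Using $|u|=|wt|\le|w|M$ gives
\[
  \bigl|\,\mathcal N_3(t)-t^2\,\bigr| = \frac{|R(wt)|}{|\sigma''(b)|\,w^2} \le \frac{C\,|w|^3M^3}{|\sigma''(b)|\,w^2} = \frac{C M^3}{|\sigma''(b)|}\,|w|,
\]
so choosing $|w|\le \varepsilon\,|\sigma''(b)|/(C M^3)$ yields $|t^2-\mathcal N_3(t)|\le\varepsilon$ for all $t\in[-M,M]$, with $\mathcal N_3$ using exactly the three free parameters $a,w,b$. This completes the construction.

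I expect the only delicate points to be bookkeeping rather than analytic: (i) one must expand at a point $b\neq 0$, because $\sigma''(0)=0$ makes the origin useless for producing a quadratic term — a mild departure from the usual reflex of expanding at the centre; and (ii) the estimate must be uniform in $t$, which is exactly where the global boundedness of the derivatives of the sigmoid (ultimately its analyticity and the $(0,1)$ range of $\sigma$) is used to control the Taylor remainder $R(u)$ by $C|u|^3$ with an absolute constant. If one wanted the sharper rate $O(|w|^2)$ one could carry the expansion one order further, but this is not needed for the stated lemma.
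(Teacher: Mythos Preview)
Your argument is correct. The symmetric second difference $a[\sigma(wt+b)+\sigma(-wt+b)-2\sigma(b)]$ does yield $\sigma''(b)\,a\,w^2 t^2$ plus a remainder of order $|w|^3|t|^3$, and with $b\neq 0$ (so that $\sigma''(b)\neq 0$) and $a=1/(\sigma''(b)w^2)$ you recover $t^2$ up to an error that is $O(|w|)$ uniformly on $[-M,M]$. The bookkeeping on the three free parameters $(a,w,b)$ is also fine in the convention the paper uses elsewhere (cf.\ the construction of $h^{linear}_{2,\mu}$ in the proof of Theorem~\ref{theorem:app}).

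Note that the paper itself does not give a proof of this lemma; it simply quotes it from \citep[Proposition 1]{Chui-Lin-Zhou19}. Your construction is exactly the standard one used there: a centred second divided difference to isolate the quadratic Taylor coefficient of the sigmoid, with the scaling $w\to 0$ controlling the accuracy. Two minor remarks: (i) because the odd Taylor terms all cancel in the symmetric difference, the true remainder is actually $O(|w|^2)$ rather than $O(|w|)$, as you anticipate in your closing comment; (ii) once $b$ is fixed (any nonzero value will do), the network is effectively determined by $w$ alone, since $a$ is tied to $w$ through $a=1/(\sigma''(b)w^2)$ --- but the count of three free parameters is still the correct architectural description.
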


Besides the ``square-gate'', deep sigmoid nets are capable of acting as a ``product-gate'' \citep{Chui-Lin-Zhou19}, providing localized approximation \citep{Chui-Li-Mhaskar94}, extracting the rotation-invariance property \citep{Chui-Lin-Zhou19} and reflecting the sparseness in spatial domain \citep{Lin19} and frequency domain \citep{LinH2017}  with much fewer hidden layers than deep ReLU nets. The following Table \ref{ReLU_fea} presents a comparison between deep sigmoid nets and deep ReLU nets in feature selection and approximation.
\begin{table}[!h]
\caption{Depth required for deep nets  in feature extraction and approximation  within accuracy
$\varepsilon$}\label{ReLU_fea}
\begin{center}
\vspace{-.5cm}
\begin{tabular}{|l|l|l|l|l|}
 \hline  Features & sigmoid & ReLU\\
 \hline   Square-gate  & $1$ \citep{Chui-Lin-Zhou19}  & $\log(\varepsilon^{-1})$ \citep{Yarotsky17}\\
 \hline   Product-gate & $1$ \citep{Chui-Lin-Zhou19}  & $\log(\varepsilon^{-1})$ \citep{Yarotsky17}\\
\hline   Localized approximation  & $2$ \citep{Chui-Li-Mhaskar94}  & $2$\citep{Chui2020}\\
\hline   $k$-spatially sparse+smooth & $2$ \citep{Lin19} & $>8$ \citep{Chui2020}  \\
\hline  Smooth+Manifold & $3$ \citep{Chui-Lin-Zhou18} & $4$ \citep{Shaham-Cloninger-Coifman18} \\
\hline   Smooth & 1 \citep{Mhaskar1996}  & $\log(\varepsilon^{-1})$ \citep{Yarotsky17} \\
\hline  $k$-sparse (frequency) & $k$ \citep{LinH2017}& $k\log(\varepsilon^{-1})$ \citep{Schwab2019}  \\
\hline Radial+smooth & 4 \citep{Chui-Lin-Zhou19} & $>8$ \citep{han2020depth} \\
\hline
\end{tabular}%
\end{center}
\end{table}


Table \ref{ReLU_fea} presents   theoretical  advantages of deep  sigmoid nets over deep ReLU nets. In fact, as the following theorem shows, it is easy to construct a sigmoid net with two hidden layers and small number of free parameters to approximate ReLU,  implying that
 the approximation capability of deep sigmoid nets is at least not worse than that of deep ReLU nets with comparable hidden layers and free parameters.

\begin{theorem}\label{theorem:app}
Let $1\leq p<\infty$ and $M\geq 1$. Then for any $\varepsilon\in(0,1/2)$ and $M>0$, there is a sigmoid net $h^*$ with $2$ hidden layers and at most 27 free parameters bounded by $\mathcal O(\varepsilon^{-7})$ such that
\begin{equation}\label{approx-comp}
      \|h^*-\sigma_{relu}\|_{L^p([-M,M])}\leq \varepsilon,
\end{equation}
where $L^p([-M,M])$ denotes the $L^p$ space of functions defined on $[-M, M]$.
\end{theorem}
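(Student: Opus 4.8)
The plan is to build $h^*$ around the ``Swish/SiLU'' identity $\sigma_{relu}(t)=\lim_{A\to\infty}t\,\sigma(At)$, realising $t\,\sigma(At)$ for a large but finite $A=A(\varepsilon)$ as a sigmoid net with two hidden layers (in the sense of \eqref{deep-net}) by means of a product gate obtained from the divided-difference idea behind Lemma~\ref{lemma:sig-square-gate}. Since $\|\cdot\|_{L^p([-M,M])}\le (2M)^{1/p}\|\cdot\|_{L^\infty([-M,M])}$, it suffices to control the sup norm, and I would split a sup-norm budget of size $\varepsilon/(2M)^{1/p}$ into three equal parts. First, $\|\sigma_{relu}(t)-t\,\sigma(At)\|_{L^\infty([-M,M])}\le 1/(eA)$: for $t\ge0$ one has $0\le t-t\sigma(At)=t/(1+e^{At})\le t e^{-At}$, for $t\le0$ one has $0\le -t\sigma(At)=|t|/(1+e^{A|t|})\le |t|e^{-A|t|}$, and $\max_{s\ge0}s e^{-As}=1/(eA)$; so a choice of $A$ of order $\varepsilon^{-1}$ spends the first third. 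Second, let the first hidden layer output $p_1=\sigma(\delta t)$ and $p_2=\sigma(At)$; using $\sigma(s)=1/2+s/4+\mathcal{O}(s^3)$, the affine function $u:=(4/\delta)(p_1-1/2)$ of $p_1$ satisfies $\|u-t\|_{L^\infty([-M,M])}\le C\delta^2 M^3$, so a small $\delta$ of order $\sqrt{\varepsilon}/M^{3/2}$ makes $u\,p_2$ a two-thirds-accurate surrogate for $\sigma_{relu}$ (here the point is that the raw input $t$ is invisible to the second hidden layer and must be re-encoded linearly through $p_1$).

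It then remains to approximate $u\cdot p_2=\tfrac14(u+p_2)^2-\tfrac14(u-p_2)^2$ by a linear readout of a second sigmoid layer applied to $(p_1,p_2)$. Fix $b$ with $\sigma''(b)\ne0$ (e.g.\ $b=1$) and a small $w>0$; since $u$ is affine in $p_1$, each of the four quantities $\sigma(b\pm w(u+p_2))$ and $\sigma(b\pm w(u-p_2))$ is a genuine second-layer neuron, and a fourth-order Taylor expansion gives $\sigma(b+wz)+\sigma(b-wz)=2\sigma(b)+\sigma''(b)w^2 z^2+\mathcal{O}(w^4 z^4)$, with the odd terms cancelling. The constant terms $2\sigma(b)$ then cancel between the two square gates, so that
\[
h^*(t):=\frac{1}{4\sigma''(b)w^2}\left(\sigma(b+w(u+p_2))+\sigma(b-w(u+p_2))-\sigma(b+w(u-p_2))-\sigma(b-w(u-p_2))\right)
\]
obeys $\|h^*-u\,p_2\|_{L^\infty([-M,M])}\le C' w^2 (M+2)^4$ because $|u\pm p_2|\le M+2$; a small $w$ of order $\sqrt{\varepsilon}/M^2$ spends the last third (alternatively one may invoke the product-gate construction for shallow sigmoid nets from \citep{Chui-Lin-Zhou19}). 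Summing the three bounds and undoing the initial rescaling yields $\|h^*-\sigma_{relu}\|_{L^p([-M,M])}\le\varepsilon$. The net $h^*$ has two hidden layers with $d_1=2$ and $d_2=4$ neurons, so counting the entries of the two weight matrices, the two bias vectors and the output vector $a=\tfrac{1}{4\sigma''(b)w^2}(1,1,-1,-1)$ gives at most $27$ free parameters for this fixed architecture; their magnitudes are dominated by $A$, the coefficient $1/(4\sigma''(b)w^2)$ and the layer-2 weight $4w/\delta$, and tracking the $M$- and $p$-dependent constants through the three budget splits gives the stated bound $\mathcal{O}(\varepsilon^{-7})$.

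The main obstacle is not conceptual but bookkeeping: the three choices have to be made simultaneously and consistently. Specifically, (i) the re-encoding $u=(4/\delta)(p_1-1/2)$ couples the calibration of $\delta$ and $w$, since the second-layer weight $4w/\delta$ must remain bounded; (ii) the parameter count stays at $\le 27$ only because the constant Taylor terms cancel, so that four rather than six neurons suffice in the second layer; and (iii) the square-gate error must genuinely be $\mathcal{O}(w^2\cdot(\text{argument range})^4)$ and not merely $\mathcal{O}(w^2)$, which is exactly what lets a \emph{fixed} architecture reach accuracy $\varepsilon$ by shrinking $w$ while enlarging the readout coefficient. Carrying out these three verifications together, with explicit constants in $M$ and $p$, is essentially all the work.
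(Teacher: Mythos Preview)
Your proposal is correct and follows the same architectural template as the paper's proof: first layer emits an approximant $u\approx t$ (via $\sigma(\delta t)$) together with a step-like function $\sigma(At)$, and a second-layer product gate multiplies them to approximate $\sigma_{relu}(t)=t\,\sigma_0(t)$. The paper composes $h^{prod}_{9,\nu}$ (cited from \citep{Chui-Lin-Zhou19}) with $(\sigma(At),\,h^{linear}_{2,\mu})$; you build the product gate explicitly from four symmetric-difference neurons.

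Where you genuinely differ is in the choice of target for the product. The paper approximates $t\cdot\sigma_0(t)$ and so must control $|\sigma(At)-\sigma_0(t)|$, which fails on $[-\tau,\tau]$; this forces the split integral $\int_{-M}^{-\tau}+\int_{-\tau}^{\tau}+\int_{\tau}^{M}$ and the choice $\tau=\varepsilon^{7}$, which in turn is what drives the parameter size up to $A=\tau^{-1}\log\varepsilon^{-1}=\mathcal O(\varepsilon^{-7})$. You instead approximate $t\cdot\sigma(At)$ and invoke the Swish/SiLU bound $|\sigma_{relu}(t)-t\,\sigma(At)|\le 1/(eA)$, which holds uniformly on all of $\mathbb R$. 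This lets you work entirely in $L^\infty$, eliminates the interval-splitting, and, if you track it, gives parameter magnitudes of order $\varepsilon^{-1}$ rather than $\varepsilon^{-7}$ (the largest entries are $A$ and the readout $1/(4\sigma''(b)w^2)$, each $\sim\varepsilon^{-1}$ up to $M,p$-dependent constants). Your ``$\mathcal O(\varepsilon^{-7})$'' is therefore a substantial overstatement of your own bound, not an error. The parameter count is also at most $20$ for the $(d_1,d_2)=(2,4)$ architecture, comfortably under $27$. The bookkeeping items you flag (coupling of $\delta$ and $w$ through $4w/\delta$, cancellation of the $2\sigma(b)$ terms, and the $w^2\cdot(\text{range})^4$ form of the squaring error) are exactly right and are the only places a careful write-up needs attention.
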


The proof of Theorem \ref{theorem:app} is postponed in Appendix \ref{Sec.Proof.th1}.
For an arbitrary deep ReLU net
$$
     \mathcal N^{relu}_{L,d_1,\dots,d_L}
     = a\cdot \sigma_{relu}\circ \mathcal J_L \circ \sigma_{relu}\circ \mathcal J_{L-1} \circ \dots \circ \sigma_{relu}\circ\mathcal J_1(x)
$$
with bounded free parameters, Theorem \ref{theorem:app} shows that we can construct a deep sigmoid net
$$
      \mathcal N^{sigmoid}_{L,d_1,\dots,d_L}
     = a\cdot h^{*}\circ \mathcal J_L \circ h^{*}\circ \mathcal J_{L-1} \circ \dots \circ h^{*}\circ\mathcal J_1(x)
$$
that possesses at least similar approximation capability. However, due to the infinitely differentiable property of the sigmoid function, it is difficult to construct a deep ReLU net with accuracy-independent depth and width to  approximate it. Indeed, it can be found in \cite[Theorem 4.5]{Petersen-Voigtlaender18} that for any open interval $\Omega$ and deep ReLU net $\mathcal N^{relu}_{L,n}$ with $L$ hidden layers and $n$ free parameters, there holds
\begin{equation}\label{converse}
     \|\sigma- \mathcal N^{relu}_{L,n}\|_{L^p(\Omega)}\geq C''n^{-2L},
\end{equation}
where $\sigma$ is the sigmoid activation and $C''$ is a constant independent of $n$ or $L$. Comparing   (\ref{approx-comp}) with (\ref{converse}), we find that any functions being well approximated by deep ReLU nets can also  be well approximated by deep sigmoid nets, but not vice-verse.

It should be mentioned that though the depth and width in Theorem \ref{theorem:app} are independent of $\varepsilon$ and relatively small, the magnitude of  free parameters depends heavily on the accuracy and may be large. Since such large free parameters are difficult to realize  for an optimization algorithm, a preferable way to shrink them is to deepen the network further. In particular, it can be found in \citep{Chui-Lin-Zhou19} that  there is a shallow  sigmoid net $\mathcal N^{sigmoid}_{2}$ with 2 free parameters of order $\mathcal O(1/\varepsilon)$ such that
$$
    |\sigma(\sqrt{w}\mathcal N^{sigmoid}_{2}(\sqrt{w}t))-\sigma(wt)|\leq \varepsilon,
$$
where $t,w\in\mathbb R$.
Because we only focus on the power of deep sigmoid nets in approximation, we do not shrink free parameters in Theorem \ref{theorem:app}.

\section{ADMM for Deep Sigmoid Nets}
\label{sc:ADMM}
Let  ${\cal Z}:= \{(x_j, y_j)\}_{j=1}^n \subset \mathbb{R}^{d_0} \times \mathbb{R}^{d_N}$ be $n$ samples. Denote $X:= (x_1,x_2,\ldots, x_n) \in \mathbb{R}^{d_0 \times n}$ and $Y:= (y_1, y_2, \ldots, y_n) \in \mathbb{R}^{d_N \times n}$. It is natural to consider the following regularized DNN training problem
\begin{equation}
\label{Eq:dnn-L2-org}
   \min_{\cal W} \left\{ \frac1n\sum_{i=1}^n \|\Phi(x_i,\mathcal W)-y_i\|_2^2+ \lambda' \|W_i\|_F^2\right\},
\end{equation}
where $\Phi(x_i,\mathcal W)$ denotes a deep sigmoid net with  $N$ layers, $\mathcal W=\{W_i\}_{i=1}^N$ and $\lambda'>0$ is the regularization parameter. Here, we consider the square loss as analyzed in the literature \citep{Allen-Zhu-SGD19,Du-GD19,Zou-Gu-SGD19}. We also absorb thresholds into the weight matrices for the sake of simplicity.   Based on the advantage of  deep sigmoid nets in approximation, \citep{Chui-Lin-Zhou19,Lin19} proved that the model defined by (\ref{Eq:dnn-L2-org}) with $N=2$ are optimal in embodying data features such as  the spatial sparseness, smoothness and rotation-invariance in the sense that it can achieve almost optimal generalization error bounds in the framework of learning theory. The aim of this section is to introduce an  efficient algorithm to solve the optimization problem (\ref{Eq:dnn-L2-org}).

Due to the saturation problem of the sigmoid function (see Figure \ref{Fig:sigmoid} (b)), the issue of gradient vanishing or explosion frequently happens for running SGD on deep sigmoid nets (see Figure \ref{Fig:squarefun-motivation} (a) for example), implying that the classical SGD is not a good candidate to solve (\ref{Eq:dnn-L2-org}). We then turn to designing a gradient-free optimization algorithm, like ADMM, to efficiently solve (\ref{Eq:dnn-L2-org}). For DNN training, there are generally two important ingredients in designing ADMM: update order and solution to each sub-problem. The novelty of our proposed algorithm is  the use of backward-forward update order similar to BackProp in \citep{Hinton-BP1986}  and  local linear approximation to sub-problems.

\subsection{Update order in ADMM for deep learning training}
\label{sc:ADMM-description}

The optimization problem \eqref{Eq:dnn-L2-org} can be equivalently reformulated as the following constrained optimization problem
\begin{align}
\label{Eq:dnn-L2-admm-reg}
&\mathop{\mathrm{minimize}}_{{\cal W}, {\cal V}} \  \frac{1}{2} \|V_N-Y\|_F^2 + \frac{\lambda}{2} \sum_{i=1}^N \|W_i\|_F^2\\
&\text{subject to} \quad V_i = \sigma(W_iV_{i-1}), \  i=1,\ldots, N-1, \quad V_N = W_{N}V_{N-1}, \nonumber
\end{align}
where ${\cal V}:=\{V_i\}_{i=1}^N$ represents the set of responses of all layers and $\lambda = \frac{\lambda'n}{2}$. We define the augmented Lagrangian of \eqref{Eq:dnn-L2-admm-reg} as follows:
\begin{align}
\label{Eq:ALMFun-L2}
{\cal L}({\cal W}, {\cal V}, \{\Lambda_i\}_{i=1}^N)
& := \frac{1}{2} \|V_N-Y\|_F^2 + \frac{\lambda}{2} \sum_{i=1}^N \|W_i\|_F^2 \\
& + \sum_{i=1}^{N-1} \left(\frac{\beta_i}{2} \|\sigma(W_iV_{i-1}) - V_i\|_F^2 + \langle \Lambda_i, \sigma(W_iV_{i-1}) - V_i\rangle \right) \nonumber\\
& + \frac{\beta_N}{2} \|W_NV_{N-1} - V_N\|_F^2 + \langle \Lambda_N, W_NV_{N-1} - V_N\rangle, \nonumber
\end{align}
where $\Lambda_i \in \mathbb{R}^{d_i \times n}$ is the multiplier matrix associated with the $i$-th constraint, and $\beta_i$ is the associated penalty parameter for $i=1,\ldots,N$.

ADMM is an augmented-Lagrangian based primal-dual method, which updates the primal variables ($\{W_i\}_{i=1}^N$ and $\{V_i\}_{i=1}^N$ in \eqref{Eq:ALMFun-L2}) via a Gauss-Seidel scheme and then multipliers ($\{\Lambda_i\}_{i=1}^N$ in \eqref{Eq:ALMFun-L2}) via a gradient ascent scheme in a parallel way \citep{Boyd-DADMM2011}. As suggested in \citep{Wang-ADMM2018}, the update order of the primal variables is tricky for ADMM in terms of the convergence analysis in the nonconvex setting. In light of \citep{Wang-ADMM2018}, the key idea to yield a \textit{desired update order} with convergence guarantee is to arrange the updates of some \textit{special primal variables} followed by the updates of multipliers such that the updates of multipliers can be \textit{explicitly expressed} by the updates of these \text{special primal variables}, and thus the dual ascent quantities arisen by the updates of multipliers shall be controlled by the descent quantities brought by the updates of these \text{special primal variables}.
Hence, the arrangement of these \text{special primal variables} is crucial.

It can be noted that there are $2N$ blocks of primal variables, i.e., $\{W_i\}_{i=1}^N$ and $\{V_i\}_{i=1}^N$  and $N$ blocks of multipliers $\{\Lambda_i\}_{i=1}^N$ involved in (\ref{Eq:ALMFun-L2}). For better elaboration of our idea, we take $N=3$ for an example. Notice that the multipliers $\{\Lambda_i\}_{i=1}^3$ are only involved in these inner product terms $\langle \Lambda_1, \sigma(W_1X)-V_1\rangle$, $\langle \Lambda_2, \sigma(W_2V_1) - V_2\rangle$ and $\langle \Lambda_3, W_3V_2 - V_3\rangle$. By these terms, the gradient of the $i$-th inner product with respect to $V_i$ is $-\Lambda_i$, while the associated gradient with respect to $W_i$ is a more complex term (namely, $(\Lambda_1\odot \sigma'(W_1X))X^T$ for $W_1$, $(\Lambda_2\odot \sigma'(W_2V_1))V_1^T$ for $W_2$, and $\Lambda_3V_2^T$ for $W_3$, where $\odot$ represents Hadamard product). If the update of $W_i$ is used to express $\Lambda_i$, then according to the $W_i$ subproblem, an inverse operation of a nonlinear or linear mapping is required, while such an inverse does not necessarily exist. Specifically, following the analysis of Lemma \ref{Lemm:dual-expressed-primal} shown later and taking the expression of $W_3$ for example, the term $\Lambda_3V_2^T$ will be involved in the expression of $W_3$. In this case, if we wish to express $\Lambda_3$ by $W_3$, then the inverse of $V_2$ is generally required, while it does not necessarily exist.
Due to this, it should be more convenient to express $\Lambda_i$ $(i=1,2,3)$ via exploiting the $V_i$ subproblem instead of the $W_i$ subproblem. Therefore, we suggest firstly update the blocks of $W_i$'s and then $V_i$'s such that $\Lambda_i$'s can be explicitly expressed via the latest updates of $V_i$'s. To be detailed, for each loop, we update $\{W_i\}_{i=1}^N$ in the backward order, i.e., $W_N \rightarrow W_{N-1} \rightarrow \cdots \rightarrow W_1$, then update $\{V_j\}_{j=1}^N$ in the forward order, i.e., $V_1 \rightarrow V_{2} \rightarrow \cdots \rightarrow V_N$, motivated by BackProp in \citep{Hinton-BP1986}, and finally update the multipliers $\{\Lambda_i\}_{i=1}^N$ in a parallel way, as shown by the following Figure \ref{Fig:update-order}.

\begin{figure}[h]
    \centering
    \includegraphics[scale=0.5]{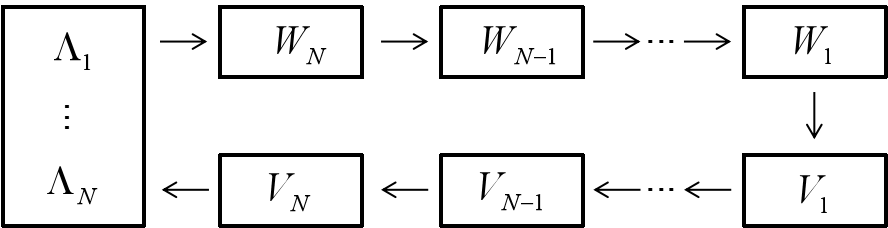}
    \caption{Update order of ADMM}
    \label{Fig:update-order}
\end{figure}

Specifically, given an initialization $\{W_i^0\}_{i=1}^N$, we set
\begin{align}
\label{Eq:initialization-admm}
V_j^0 = \sigma(W_j^0V_{j-1}^0), \ j=1,\ldots, N-1, \quad V_N^0 = W_N^0 V_{N-1}^0, \  \text{and}\ \Lambda_i^0 =0, \ i=1,\ldots, N,
\end{align}
where $V_0^0 = X$. Given the ($k$-$1$)-th iterate $\Big(\{W_i^{k-1}\}_{i=1}^N, \{V_i^{k-1}\}_{i=1}^N,$ $\{\Lambda_i^{k-1}\}_{i=1}^N\Big)$, we define the $W_i$- and $V_i$-subproblems at the $k$-th iteration via minimizing the augmented Lagrangian \eqref{Eq:ALMFun-L2} with respect to only one block but fixing the other blocks at the latest updates, according to the update order specified in Figure \ref{Fig:update-order}, shown as follows:
\begin{align}
& W_N^k = \arg \min_{W_N} \left\{\frac{\lambda}{2}\|W_N\|_F^2 + \frac{\beta_N}{2}\|W_NV_{N-1}^{k-1} - V_N^{k-1}\|_F^2 + \langle \Lambda_N^{k-1}, W_NV_{N-1}^{k-1} - V_N^{k-1}\rangle\right\}, \label{Eq:Wnk-prox}
\end{align}
and for $i=N-1,\ldots, 1,$
\begin{align}
\label{Eq:Wik-orig}
& W_i^k = \arg \min_{W_i} \left\{\frac{\lambda}{2}\|W_i\|_F^2 + \frac{\beta_i}{2}\|\sigma(W_iV_{i-1}^{k-1}) - V_i^{k-1}\|_F^2 + \langle \Lambda_i^{k-1}, \sigma(W_iV_{i-1}^{k-1}) - V_i^{k-1}\rangle\right\},
\end{align}
and for $j=1,\ldots, N-2$,
\begin{align}
\label{Eq:Vjk-orig}
& V_j^k = \arg\min_{V_j} \left\{\frac{\beta_j}{2}\|\sigma(W_j^kV_{j-1}^k)-V_j\|_F^2 + \langle \Lambda_j^{k-1},\sigma(W_j^kV_{j-1}^k)-V_j \right. \rangle\\
& \ \ \ \ \ \ \ \ \ \ \ \ \ \ \ \ \ \ \ \ \left. +\frac{\beta_{j+1}}{2}\|\sigma(W_{j+1}^kV_{j})-V_{j+1}^{k-1}\|_F^2 + \langle \Lambda_{j+1}^{k-1},\sigma(W_{j+1}^kV_{j})-V_{j+1}^{k-1} \rangle \right\}, \nonumber\\
& V_{N-1}^k = \arg\min_{V_{N-1}} \left\{\frac{\beta_{N-1}}{2}\|\sigma(W_{N-1}^kV_{N-2}^k)-V_{N-1}\|_F^2 + \langle \Lambda_{N-1}^{k-1},\sigma(W_{N-1}^kV_{N-2}^k)-V_{N-1} \right. \rangle \nonumber\\
& \ \ \ \ \ \ \ \ \ \ \ \ \ \ \ \ \ \ \ \ \left. +\frac{\beta_{N}}{2}\|W_{N}^kV_{N-1}-V_{N}^{k-1}\|_F^2 + \langle \Lambda_{N}^{k-1},W_{N}^kV_{N-1}-V_{N}^{k-1} \rangle \right\}, \label{Eq:Vn-1k-prox}\\
& V_N^k = \arg\min_{V_N} \left\{\frac{1}{2}\|V_N-Y\|_F^2+\frac{\beta_N}{2}\|W_N^kV_{N-1}^k-V_N\|_F^2 + \langle \Lambda_N^{k-1},W_N^kV_{N-1}^k-V_N \rangle \right\}. \label{Eq:Vnk-prox}
\end{align}
Once $\left(\{W_i^k\}_{i=1}^N, \{V_i^k\}_{i=1}^N \right)$ have been updated, we then update the multipliers $\{\Lambda_i^k\}_{i=1}^N$ parallelly according to the following:
for $ i=1,\ldots, N-1$,
\begin{align}
\label{Eq:Lambdak}
\Lambda_i^k = \Lambda_i^{k-1} + \beta_i (\sigma(W_i^kV_{i-1}^k)-V_i^k), \quad
\Lambda_N^k = \Lambda_N^{k-1} + \beta_N (W_N^kV_{N-1}^k-V_N^{k}).
\end{align}
Based on these, each iterate of ADMM only involves several relatively simpler sub-problems.

It should be mentioned that the suggested update order is actually a technical requirement in the convergence proof (see, Lemma \ref{Lemm:dual-expressed-primal} below), which also appears in the previous work \citep{Wang-ADMM2018}. Moreover, the following local linear approximation is also required to establish Lemma \ref{Lemm:dual-expressed-primal}.

\subsection{Local linear approximation for sub-problems}
\label{sc:ADMM-linearized-trick}

Note that $W_i$-subproblems ($i=1, \ldots, N-1$) involve functions of the following form
\begin{align}
\label{Eq:H-sigmoid}
H_{\sigma}(W;A,B) = \frac{1}{2}\|\sigma(WA)-B\|_F^2,
\end{align}
while $V_j$-subproblems ($j=1,\ldots, N-2$) involve functions of the following form
\begin{align}
\label{Eq:M-sigmoid}
M_{\sigma}(V;\tilde{A},\tilde{B}) = \frac{1}{2}\|\sigma(\tilde{A}V)-\tilde{B}\|_F^2,
\end{align}
where $A, B, \tilde{A}, \tilde{B}$ are four given matrices related to the previous updates. Due to the nonlinearity of the sigmoid activation function, the subproblems are generally difficult to be solved, or at least some additional numerical solvers are required to solve these subproblems. To break such computational hurdle, we adopt the first-order approximations of the original functions presented in \eqref{Eq:H-sigmoid} and \eqref{Eq:M-sigmoid} at the latest updates, instead of themselves, to update the variables, that is,
\begin{align}
H^k_{\sigma}(W;A,B) &:= H_{\sigma}(W^{k-1};A,B)+ \langle (\sigma(W^{k-1}A)-B)\odot \sigma'(W^{k-1}A), (W-W^{k-1})A \rangle \nonumber\\
&+ \frac{h^k}{4}\|(W-W^{k-1})A\|_F^2, \label{Eq:Hk-sigmoid}\\
M^k_{\sigma}(V;\tilde{A},\tilde{B}) &:= M_{\sigma}(V^{k-1};\tilde{A},\tilde{B}) + \langle (\sigma(\tilde{A}V^{k-1})-\tilde{B})\odot \sigma'(\tilde{A}V^{k-1}), \tilde{A}(V-V^{k-1}) \rangle \nonumber\\
& + \frac{\mu^k}{4}\|\tilde{A}(V-V^{k-1})\|_F^2, \label{Eq:Mk-sigmoid}
\end{align}
where $W^{k-1}$ and $V^{k-1}$ are the $(k$-$1)$-th iterate, and $\sigma'(W^{k-1}A)$ and $\sigma'(\tilde{A}V^{k-1})$ represent the componentwise derivatives,
$h^k $ and $\mu^k$ can be specified as the upper bounds of twice of the locally Lipschitz constants of functions $H_{\sigma}$ and $M_{\sigma}$, respectively, shown as
\[
h^k = \mathbb{L}(\|B\|_{\max}), \quad \mu^k = \mathbb{L}(\|\tilde{B}\|_{\max}).
\]
Here, for any given $c\in \mathbb{R}$,
\begin{align}
\label{Eq:Lipschitz-constant}
\mathbb{L}(|c|):= 2L_2(L_0 + |c|) + 2L_1^2
\end{align}
is an upper bound of the Lipschitz constant of the gradient of function $(\sigma(u)-c)^2$ with constants $L_0 = 1, L_1 = \frac{1}{4}$ and $L_2 = \frac{1}{4}$ related to the sigmoid activation $\sigma$.

Henceforth, we call this treatment as the \textbf{local linear approximation (LLA)}, which can be viewed as adopting certain \textit{prox-linear scheme} \citep{Xu-Yin-BCD2013} to update the subproblems of ADMM. Based on \eqref{Eq:Hk-sigmoid} and \eqref{Eq:Mk-sigmoid}, the original updates \eqref{Eq:Wik-orig} of $\{W_i^k\}_{i=1}^{N-1}$  are replaced by
\begin{align}
\label{Eq:Wik-prox}
W_i^k = \arg\min_{W_i} \left\{ \frac{\lambda}{2}\|W_i\|_F^2 + \beta_i H_{\sigma}^k(W_i;V_{i-1}^{k-1}, V_{i}^{k-1} - \beta_i^{-1}\Lambda_i^{k-1})\right\},
\end{align}
and by completing perfect squares and some simplifications, the original updates \eqref{Eq:Vjk-orig} of $\{V_j^k\}_{j=1}^{N-2}$  are replaced by
\begin{align}
\label{Eq:Vjk-prox}
V_j^k = \arg\min_{V_j} \left\{ \frac{\beta_j}{2}\|\sigma(W_j^kV_{j-1}^k) + \beta_j^{-1}\Lambda_j^{k-1} - V_j\|_F^2 + \beta_{j+1} M_{\sigma}^k(V_j;W_{j+1}^{k}, V_{j+1}^{k-1} - \beta_{j+1}^{-1}\Lambda_{j+1}^{k-1})\right\},
\end{align}
with $h_i^k$ and $\mu_j^k$ being specified as follows
\begin{align}
\label{Eq:hik}
h_i^k = {\mathbb L}(\|V_{i}^{k-1} - \beta_i^{-1}\Lambda_i^{k-1}\|_{\max}), \ i= 1,\ldots, N-1,
\end{align}
\begin{align}
\label{Eq:mujk}
\mu_j^k = {\mathbb L}(\|V_{j+1}^{k-1} - \beta_{j+1}^{-1}\Lambda_{j+1}^{k-1}\|_{\max}),\ j=1,\ldots, N-2,
\end{align}
where ${\mathbb L}(\cdot)$ is defined in \eqref{Eq:Lipschitz-constant}. Note that with these alternatives, all the subproblems can be solved with analytic expressions (see, Lemma \ref{Lemm:dual-expressed-primal} in Appendix \ref{app:dual-expressed-primal}).

\subsection{ADMM for deep sigmoid nets}
The ADMM algorithm for DNN training problem \eqref{Eq:dnn-L2-admm-reg} is summarized in Algorithm \ref{alg:ADMM}. As shown in Figure \ref{Fig:squarefun-motivation} (b) and Figure \ref{Fig:sigmoid} (c), ADMM does not suffer from either the issue of gradient explosion or the issue of gradient vanishing caused by the saturation of sigmoid activation and thus can approximate the square function within high precision. The intuition behind ADMM to avoid the issues of gradient explosion and vanishing is that the suggested ADMM does not exactly follow the chain rule as exploited in BackProp and SGD, but introduces the multipliers as certain compensation to eventually fit the chain rule at the stationary point.
From Algorithm \ref{alg:ADMM}, besides the regularization parameter $\lambda$ related to the DNN training model, only the penalty parameters $\beta_i$'s should be tuned. In the algorithmic perspective,  penalty parameters can be regarded as the dual step sizes for the updates of multipliers, which play similar roles as learning rates in SGD. As shown by our experiment results below, the performance of ADMM is not sensitive to  penalty parameters, making the parameters be easy-to-tune. Moreover, by exploiting the LLA, the updates for all variables can be very cheap with analytic expressions (see Lemma \ref{Lemm:dual-expressed-primal} in \Cref{app:dual-expressed-primal}).

As compared to the existing ADMM methods for deep learning \citep{Carreira2014-MAC,Goldstein-ADMM-DNN2016,Kiaee2016-ADMM-sparseCNN,Murdock-ADNN2018}, there are two major differences shown as follows. The first one is that the existing ADMM type methods in deep learning only keep partial nonlinear constraints for the sake of reducing the difficulty of optimization, while the ADMM method suggested in this paper keeps all the nonlinear constraints, and thus our proposed ADMM can come back to the original DNN training model in the sense that its convergent limit fits all the nonlinear constraints as shown in Theorem \ref{Thm:Conv-ADMM-sigmoid} below. To overcome the difficulty from optimization, we introduce an elegant update order and the LLA technique for subproblems. The second one is that most of existing ADMM methods focus on deep ReLU nets, while our proposed ADMM is designed for deep sigmoid nets.

It should be pointed out that the subproblems of the proposed algorithm require inverting matrices at each iteration, which could be expensive. Although there are some practical tricks like warm-start and solving inexactly via doing gradient descent by a fixed number of times to improve the computational efficiency of the proposed ADMM (e.g., in \citep{Liu-AccADMM2021} ), the major focus of this paper is mainly on the development of an effective ADMM method with theoretical guarantees for the training of deep sigmoid nets, and we will consider its practical acceleration in the future.

\begin{algorithm}[t]
{\small
\begin{algorithmic}\caption{ADMM for Deep Sigmoid Nets Training}
\label{alg:ADMM}
\STATE {\bf Samples}: $X:= [x_1, \ldots, x_n] \in \mathbb{R}^{d_0 \times n}$, $Y:= [y_1, \ldots, y_n] \in \mathbb{R}^{d_N \times n}$.
\STATE {\bf Initialization}: $(\{W_i^0\}_{i=1}^N, \{V_i^0\}_{i=1}^N, \{\Lambda_i^0\}_{i=1}^N)$ is set according to \eqref{Eq:initialization-admm}.
$V_0^k \equiv X, \forall k \in \mathbb{N}$.
\STATE {\bf Parameters:} $\lambda>0$, $\beta_i >0, i=1,\ldots, N$.
\smallskip
\FOR{$k=1,\ldots$}
\STATE{$\blacktriangleright$ (Backward Estimation)}
\FOR{$i= N :-1:1$}
\STATE Update $W_N^k$ via \eqref{Eq:Wnk-prox}
and the other $W_i^k$ via \eqref{Eq:Wik-prox}.
\ENDFOR
\STATE{$\blacktriangleright$(Forward Prediction)}
\FOR{$j= 1: N$}
\STATE Update $V_j^k (j=1,\ldots, N-2)$ via \eqref{Eq:Vjk-prox},
$V_{N-1}^k$ via \eqref{Eq:Vn-1k-prox},
and $V_N^k$ via \eqref{Eq:Vnk-prox}.
\ENDFOR
\STATE{$\blacktriangleright$(Updating Multipliers)}
\STATE $\Lambda_i^{k} = \Lambda_i^{k-1} + \beta_i (\sigma(W_i^{k}V_{i-1}^{k})-V_i^{k}), \ i=1,\ldots,N-1$,
\STATE $\Lambda_N^{k} = \Lambda_N^{k-1} + \beta_N (W_N^{k}V_{N-1}^{k}-V_N^{k})$.
\STATE $k \leftarrow k+1$
\ENDFOR
\end{algorithmic}}
\end{algorithm}

\begin{figure}[!t]
\begin{center}
\begin{minipage}[b]{0.49\linewidth}
\centering
\includegraphics*[scale=.45]{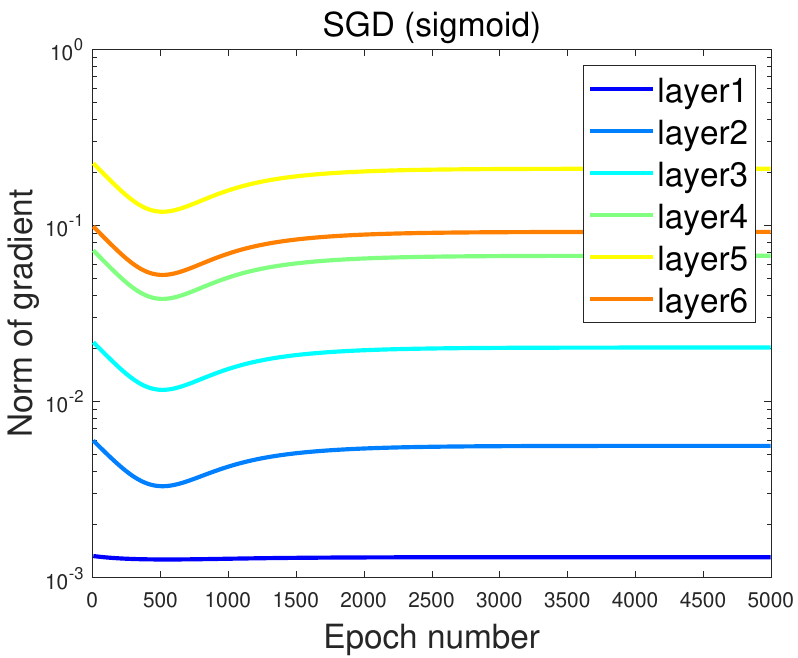}
\centerline{{\small (a) Gradient vanishing of SGD}}
\end{minipage}
\hfill
\begin{minipage}[b]{0.49\linewidth}
\centering
\includegraphics*[scale=.45]{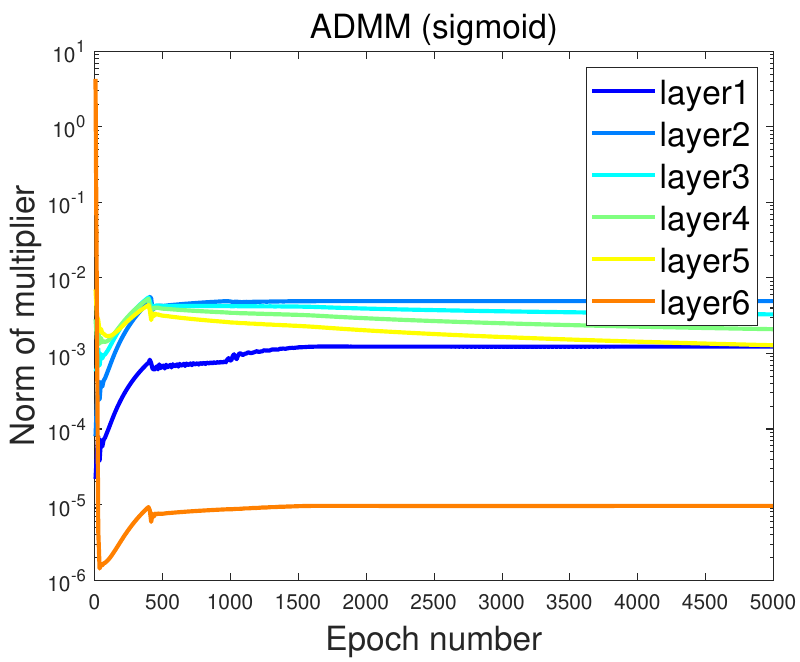}
\centerline{{\small (b) Saturation-avoidance of ADMM}}
\end{minipage}
\hfill
\end{center}
\caption{Gradient vanishing of SGD and saturation-avoidance of ADMM in the training of deep sigmoid nets. The numerical setting is the same as that of Figure \ref{Fig:sigmoid}.}
\label{Fig:squarefun-motivation}
\end{figure}

\subsection{Convergence of ADMM for deep sigmoid nets}
\label{sc:convergence-result}

Without loss of generality, we assume that $X, Y$ and $\{W_i^0\}_{i=1}^N$ are normalized with
$\|X\|_F = 1$, $\|Y\|_F=1$ and $\|W_i^0\|_F = 1$, $i=1,\ldots, N$, and all numbers of hidden layers are the same, i.e., $d_i=d, \ \forall i=1,\ldots,N-1$.
Under these settings, we present the main convergence theorem of ADMM in the following, while
that of ADMM under more general settings is presented in Theorem \ref{Thm:global-generic} in Appendix \ref{app:ADMM-GenericDNN}.

\begin{theorem}
\label{Thm:Conv-ADMM-sigmoid}
Let $\{ \c Q^k:=(\{W_i^k\}_{i=1}^N, \{V_i^k\}_{i=1}^N, \{\Lambda_i^k\}_{i=1}^N)\}$ be a sequence generated by Algorithm \ref{alg:ADMM}. If $2\leq N \leq \sqrt{n}$, $\lambda \geq \tilde{c}N^{\frac{N-3}{2}}(nd)^{\frac{N}{2}-\frac{1}{4}}$ and $\{\beta_i\}_{i=1}^N$ satisfy
\begin{align}
\label{Eq:cond-beta-sigmoid}
\beta_N \geq 3.5, \ \beta_{N-1} \geq 16 \beta_N, \ \beta_i\geq \tilde{c}_1 \beta_{N-1}(Nnd)^{\frac{N-1-i}{2}}, \ i=1,\ldots, N-2
\end{align}
for   some constants $\tilde{c},\tilde{c}_1>0$ independent of $n,N$,
 then we have:
\begin{enumerate}
\item[(a)] the augmented Lagrangian sequence $\{\cL(\cQ^k)\}$ is convergent.

\item[(b)] $\{\cQ^k\}$ converges to a stationary point $\cQ^*:= (\{W_i^*\}_{i=1}^N, \{V_i^*\}_{i=1}^N, \{\Lambda_i^*\}_{i=1}^N)$ of the augmented Lagrangian $\cL$, which is also a KKT point (defined in\eqref{Eq:kkt-cond} below) of problem \eqref{Eq:dnn-L2-admm-reg}, implying that $\{W_i^*\}_{i=1}^N$ is a stationary point of problem \eqref{Eq:dnn-L2-org} with $\lambda' = 2\lambda/n$.

\item[(c)] $\frac{1}{K}\sum_{k=1}^K \|\nabla \cL(\cQ^k)\|_F^2 \rightarrow 0$ at a rate of order ${\cal O}(\frac{1}{K})$.
\end{enumerate}
\end{theorem}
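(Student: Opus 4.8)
The plan is to run the standard three-step scheme for nonconvex ADMM --- establish a sufficient-decrease property of the augmented Lagrangian $\cL$, deduce boundedness of the iterates and convergence of $\{\cL(\cQ^k)\}$, then upgrade subsequential convergence to convergence of the whole sequence through the Kurdyka--{\L}ojasiewicz (KL) inequality --- while doing the extra work forced by the nonlinear sigmoid constraints and the local linear approximation \eqref{Eq:Hk-sigmoid}--\eqref{Eq:Mk-sigmoid}. The crucial preparatory step is to express the dual variables through the primal ones. Writing the first-order optimality conditions of the subproblems \eqref{Eq:Wnk-prox}--\eqref{Eq:Vnk-prox} together with the multiplier updates \eqref{Eq:Lambdak}, one sees that the backward-then-forward order is chosen precisely so that $\Lambda_i$ is governed by the $V_i$-subproblem rather than the $W_i$-subproblem: the $V_N$-subproblem \eqref{Eq:Vnk-prox} yields the clean identity $\Lambda_N^k = V_N^k - Y$, while for $j \le N-1$ the $V_j$-subproblem \eqref{Eq:Vjk-prox}, \eqref{Eq:Vn-1k-prox} gives $\Lambda_j^k$ as a multiple of the $V_j$-gradient of the (prox-linearized) coupling term with layer $j+1$, hence a Lipschitz function of $W_{j+1}^k$, the last two $V_j$-iterates, and $V_{j+1}^{k-1},\Lambda_{j+1}^{k-1}$. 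Using the global bounds on $\sigma,\sigma'$ and an a priori bound on the iterates, this produces the key estimate $\|\Lambda_i^k-\Lambda_i^{k-1}\|_F^2 \le c_i\big(\|V_i^k-V_i^{k-1}\|_F^2 + \|W_{i+1}^k-W_{i+1}^{k-1}\|_F^2 + \text{(increments of neighbouring blocks)}\big)$, the nonlinear replacement for the surjectivity/range argument available for (multi-)affine constraints.

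Next I would prove the sufficient-decrease inequality, the technical heart of the proof. Split $\cL(\cQ^{k-1})-\cL(\cQ^k)$ into the descent from each $W_i$-update, the descent from each $V_i$-update, and the ascent from the multiplier update. The functions $H_\sigma^k,M_\sigma^k$ in \eqref{Eq:Hk-sigmoid}--\eqref{Eq:Mk-sigmoid} are global majorants of $H_\sigma,M_\sigma$ because $h_i^k,\mu_j^k$ from \eqref{Eq:hik}--\eqref{Eq:mujk} dominate twice the relevant Lipschitz constant \eqref{Eq:Lipschitz-constant}; together with the strongly convex quadratics $\frac{\lambda}{2}\|W_i\|_F^2$ and $\frac{\beta_j}{2}\|\,\cdot\,-V_j\|_F^2$, each primal block contributes a descent of order $\|W_i^k-W_i^{k-1}\|_F^2$ or $\|V_i^k-V_i^{k-1}\|_F^2$. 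The multiplier step increases $\cL$ by exactly $\sum_i\beta_i^{-1}\|\Lambda_i^k-\Lambda_i^{k-1}\|_F^2$; bounding this by the estimate of the previous step and using that the penalty parameters grow geometrically across layers --- precisely the conditions $\beta_N\ge 3.5$, $\beta_{N-1}\ge 16\beta_N$, $\beta_i=\mathcal{O}(\beta_{N-1}(Nnd)^{(N-1-i)/2})$, the stated size of $\lambda$, and $N\le\sqrt n$ --- makes the $\beta_i^{-1}$ factors absorb all the ascent into the primal descent, leaving $\cL(\cQ^{k-1})-\cL(\cQ^k)\ge c\sum_i(\|W_i^k-W_i^{k-1}\|_F^2+\|V_i^k-V_i^{k-1}\|_F^2)$ for some $c>0$.

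From here the three conclusions follow along familiar lines. Using $\Lambda_N^k=V_N^k-Y$ to rewrite the inner-product terms, the boundedness $\sigma\in(0,1)$, and the regularizer $\frac{\lambda}{2}\sum_i\|W_i\|_F^2$, one shows $\cL(\cQ^k)$ is bounded below along the sequence; with the monotone decrease this gives convergence of $\{\cL(\cQ^k)\}$ (part (a)) and, by coercivity in the remaining variables, boundedness of $\{\cQ^k\}$ --- which retroactively justifies the a priori bound used above, via the induction: bounded iterates $\Rightarrow$ Lipschitz bounds $\Rightarrow$ sufficient decrease $\Rightarrow$ bounded $\cL$ $\Rightarrow$ bounded iterates. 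Telescoping the decrease inequality yields $\sum_k\|\cQ^k-\cQ^{k-1}\|_F^2<\infty$, so successive differences vanish. Reading the subproblem optimality conditions and accounting for the LLA error (of order $\|W_i^k-W_i^{k-1}\|_F$, etc., by the Lipschitz bounds) gives $\|\nabla\cL(\cQ^k)\|_F\le C\|\cQ^k-\cQ^{k-1}\|_F$, hence $\frac{1}{K}\sum_{k=1}^K\|\nabla\cL(\cQ^k)\|_F^2=\mathcal{O}(1/K)$, which is part (c). Finally, since $\cL$ is real-analytic --- a finite composition of polynomials and the analytic sigmoid --- it satisfies the KL inequality on the compact set containing $\{\cQ^k\}$, and the standard KL argument (sufficient decrease, the gradient bound, and boundedness, in the style of Attouch--Bolte--Svaiter) forces the whole sequence to converge to a single stationary point $\cQ^*$ of $\cL$. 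Passing to the limit in the optimality conditions and \eqref{Eq:Lambdak} shows $\cQ^*$ satisfies the KKT system of \eqref{Eq:dnn-L2-admm-reg}, and eliminating $\{V_i^*\}$ and $\{\Lambda_i^*\}$ shows $\{W_i^*\}$ is stationary for \eqref{Eq:dnn-L2-org} with $\lambda'=2\lambda/n$, which is part (b). The closed-form solvability of every subproblem, used throughout, is Lemma~\ref{Lemm:dual-expressed-primal}.

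The hardest part will be the sufficient-decrease step and, inside it, the dual-ascent estimate. Because the constraints $V_i=\sigma(W_iV_{i-1})$ are genuinely nonlinear, $\|\Lambda_i^k-\Lambda_i^{k-1}\|_F^2$ cannot be dominated by primal increments via the linear-algebraic range conditions used for affine or multiaffine ADMM; moreover the prox-linear terms $\frac{\mu_j^k}{2}(W_{j+1}^k)^\top W_{j+1}^k(V_j^k-V_j^{k-1})$ entering $\Lambda_j^k$ couple consecutive layers, so the bound on $\|\Lambda_j^k-\Lambda_j^{k-1}\|_F$ feeds on neighbouring-layer quantities and would blow up unless the iterates stay bounded and the $\beta_i$ are scaled exactly as in \eqref{Eq:cond-beta-sigmoid}. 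Making the constants, the layer-wise recursion, and the boundedness induction close simultaneously is the delicate bookkeeping where the paper's new methodology for nonlinear constraints is really needed.
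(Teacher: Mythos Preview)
Your outline captures the broad architecture (dual-by-primal, descent, boundedness, KL), but it misses the paper's central technical innovation and would not close as written. The key issue is a \emph{two-step memory} in the dual estimate. When you differentiate the $V_j$-subproblem optimality conditions at iterations $k$ and $k-1$, the prox-linear centers are $V_j^{k-1}$ and $V_j^{k-2}$ respectively, so $\|\Lambda_j^k-\Lambda_j^{k-1}\|_F$ unavoidably picks up terms of order $\|V_j^{k-1}-V_j^{k-2}\|_F$ and $\|V_{j+1}^{k-1}-V_{j+1}^{k-2}\|_F$ in addition to the current increments (this is Lemma~\ref{Lemm:dual-controlled-primal}). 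Consequently the dual ascent $\sum_i\beta_i^{-1}\|\Lambda_i^k-\Lambda_i^{k-1}\|_F^2$ cannot be absorbed into the step-$k$ primal descent alone, and your claimed inequality $\cL(\cQ^{k-1})-\cL(\cQ^k)\ge c\sum_i(\|W_i^k-W_i^{k-1}\|_F^2+\|V_i^k-V_i^{k-1}\|_F^2)$ fails. The paper fixes this by introducing the auxiliary Lyapunov function $\hcL(\hcQ^k)=\cL(\cQ^k)+\sum_i\xi_i\|V_i^k-V_i^{k-1}\|_F^2$ (see \eqref{Eq:def-hatL}) and proving sufficient descent for $\hcL$ along the extended sequence $\hcQ^k$; the extra quadratic ``stores'' the previous increment so it can be traded against the $\|V_i^{k-1}-V_i^{k-2}\|_F^2$ terms at the next step. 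The same two-step dependence shows up in the relative-error bound, which in the paper reads $\|\nabla\cL(\cQ^k)\|_F\le \bar b\sum_i(\|W_i^k-W_i^{k-1}\|_F+\|V_i^k-V_i^{k-1}\|_F+\|V_i^{k-1}-V_i^{k-2}\|_F)$, not $\le C\|\cQ^k-\cQ^{k-1}\|_F$ as you wrote.

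A second, smaller gap is your boundedness argument. The loop ``bounded iterates $\Rightarrow$ Lipschitz constants $\Rightarrow$ sufficient decrease $\Rightarrow$ bounded $\cL$ $\Rightarrow$ bounded iterates'' is circular: the adaptive constants $h_i^k=\mathbb{L}(\|V_i^{k-1}-\beta_i^{-1}\Lambda_i^{k-1}\|_{\max})$ and $\mu_j^k$ depend on the iterates, so you need boundedness \emph{before} you can even state the descent inequality with uniform constants. The paper breaks the circle by proving boundedness directly (Lemma~\ref{Lemm:boundedness-seq}) via induction on the explicit closed-form updates of Lemma~\ref{Lemm:dual-expressed-primal}, using only the size of $\lambda$, the geometric ratios among the $\beta_i$, and the initialization \eqref{Eq:initialization-admm}; sufficient descent is established afterwards. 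Once you incorporate the modified Lyapunov function and the a priori boundedness induction, the rest of your plan (KL via real analyticity, Attouch--Bolte--Svaiter, the $\mathcal O(1/K)$ rate by telescoping) matches the paper.
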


Theorem \ref{Thm:Conv-ADMM-sigmoid} establishes the global convergence of ADMM   to a KKT point at a rate of ${\cal O}(1/K)$. By \eqref{Eq:cond-beta-sigmoid}, the parameters $\{\beta_i\}_{i=1}^N$ increase exponentially fast from the output layer to the input layer. Moreover, by Theorem \ref{Thm:Conv-ADMM-sigmoid}, the regularization parameter $\lambda$ is also required to grow exponentially fast as the depth increases. Back to the original DNN training model \eqref{Eq:dnn-L2-org}, the requirement on the regularization parameter $\lambda'$ is $\lambda'\geq \tilde{c}N^{\frac{N-3}{2}}d^{\frac{N}{2}-\frac{1}{4}}n^{\frac{N}{2}-\frac{5}{4}}$. Particularly, when $N=2$, namely, the neural networks with single hidden layer, then $\lambda' = \tilde{c}\sqrt[4]{d^3/n}$ is a good choice, which implies that the regularization parameter can be small when the sample size $n$ is sufficiently large. Despite these convergence conditions seem a little stringent, by the existing literature \citep{Chui-Lin-Zhou18,Chui-Lin-Zhou19}, the depth of deep   sigmoid nets is usually small, say, 2 or 3 for realizing some important data features in deep learning. Moreover, as shown in the numerical results to be presented in Sections \ref{sc:simulations} and \ref{sc:real-experiments}, a moderately large augmented Lagrangian parameter (say, each $\beta_i=1$) and a small regularization parameter (say, $\lambda=10^{-6}$) are empirically enough for the proposed ADMM. In this case, the KKT point found by ADMM should be close to the optimal solutions to the empirical risk minimization of DNN training.

{\bf Remark 1: KKT conditions.}
Based on \eqref{Eq:ALMFun-L2}, the Karush-Kuhn-Tucker (KKT) conditions of the problem \eqref{Eq:dnn-L2-admm-reg} can be derived as follows. Specifically, let $\{W_i, V_i\}_{i=1}^N$ be an optimal solution of problem \eqref{Eq:dnn-L2-admm-reg}, then there exit multipliers $\{\Lambda_i\}_{i=1}^N$ such that the following hold:
\begin{align}
\label{Eq:kkt-cond}
& 0 = \lambda W_1 + (\Lambda_1 \odot \sigma'(W_1 V_0))V_0^T, \nonumber\\
& 0 = \lambda W_i + (\Lambda_i \odot \sigma'(W_i V_{i-1}))V_{i-1}^T, \ i=2,\ldots, N-1,\nonumber\\
& 0 = \lambda W_N + \Lambda_N V_{N-1}^T, \nonumber\\
& 0 = -\Lambda_i + W_{i+1}^T(\Lambda_{i+1}\odot \sigma'(W_{i+1}V_i)), \ i = 1,\ldots, N-2,\\
& 0 = -\Lambda_{N-1} + W_{N}^T\Lambda_{N}, \nonumber\\
& 0 = -\Lambda_N + (V_N - Y), \nonumber\\
& 0 = \sigma(W_iV_{i-1}) - V_i, \ i= 1, \ldots, N-1, \nonumber\\
& 0 = W_NV_{N-1} - V_N \nonumber
\end{align}
where $V_0 = X$. From \eqref{Eq:kkt-cond}, the KKT point of problem \eqref{Eq:dnn-L2-admm-reg} exactly fits these nonlinear constraints. Moreover, given a KKT point $(\{W_i^*\}_{i=1}^N, \{V_i^*\}_{i=1}^N, \{\Lambda_i^*\}_{i=1}^N)$ of \eqref{Eq:dnn-L2-admm-reg}, substituting the last five equations into the first three equations of \eqref{Eq:kkt-cond} shows that $\{W_i^*\}_{i=1}^N$ is also a stationary point of the original DNN training model \eqref{Eq:dnn-L2-org}.

{\bf Remark 2: More general activations:}
As presented in Theorem \ref{Thm:global-generic} in Appendix \ref{app:ADMM-GenericDNN}, the convergence results in Theorem \ref{Thm:Conv-ADMM-sigmoid} still hold for a general class of smooth activations such as the hyperbolic tangent activation as studied in \citep{Lin-CFN2019}. Actually, the approximation result yielded in Theorem \ref{theorem:app} can be also easily extended to a class of twice differentiable sigmoid-type activations.

\section{Related Work and Discussions}
\label{sc:related-work}
In this section, we present some related works and show the novelty of our studies.

\subsection{Deep sigmoid nets versus deep ReLU nets in approximation}
Deep ReLU nets are the most popular neural networks in deep learning. Compared with deep sigmoid nets, there are commonly three  advantages of deep ReLU nets \citep{Hinton2010-ReLU}. At first, the piecewise linear property makes it easy to compute the derivative to ease the training via gradient-type algorithms. Then, the derivative of ReLU is either $1$ or $0$, which in a large extent alleviates the saturation phenomenon for deep sigmoid nets and particularly the gradient vanishing/explotion issue of the gradient-descent based algorithms for the training of deep neural networks. Finally, $\sigma_{relu}(t)=0$ for $t<0$ enables the sparseness of the neural networks, which coincides with the biological mechanism for neural systems.

Theoretical verification  for the power of depth in deep ReLU nets is a hot topic in deep learning theory.  It stems from the study in \citep{Eldan2016}, where some functions were constructed to be well approximated by deep ReLU nets but cannot be expressed by shallow ReLU nets with similar number of parameters. Then, numerous interesting results on the expressivity and generalization of deep ReLU nets have been provided in \citep{Yarotsky17,Safran-Shamir17,Shaham-Cloninger-Coifman18,Petersen-Voigtlaender18,Schwab2019,Guo-Shi-Lin19,Zhou18,Zhou20a,Chui2020,han2020depth}.
Typically, it was proved in \citep{Yarotsky17} that deep ReLU nets perform at least not worse than the classical linear approaches in approximating smooth functions, and are beyond the capability of shallow ReLU nets. Furthermore, it was also exhibited  in \citep{Shaham-Cloninger-Coifman18} that deep ReLU nets can  extract the manifold structure of the input space and the smoothness of the target functions simultaneously.

The problem is, however, that there are frequently  too many hidden layers for deep ReLU nets to extract data features. Even for approximating the extremely simple square function, Lemma \ref{Lemma:ReLU-square gate for infinite} requires $\log( \varepsilon^{-1})$ depth, which is totally different from deep sigmoid nets. Due to its  infinitely differentiable property, sigmoid function is the most popular activation for shallow nets \citep{Pinkus1999}. The universal approximation property of shallow sigmoid nets has been verified in \citep{Cybenko1989} for thirty years. Furthermore,  \citep{Mhaskar93,Mhaskar1996} showed that the approximation capability of shallow sigmoid nets is at least not worse than that of polynomials. However, there are also several bottlenecks for shallow sigmoid nets in embodying the locality \citep{Chui-Li-Mhaskar94}, extracting the rotation-invariance \citep{Chui-Lin-Zhou19} and producing sparse estimators \citep{LinH2017}, which show the necessity to deepen the neural networks. Different from deep ReLU nets, adding only a few hidden layers can significantly improve the approximation capability of shallow sigmoid nets. In particular, deep sigmoid nets with two hidden layers are capable of providing localized approximation \citep{Chui-Li-Mhaskar94}, reflecting the spatially sparseness \citep{Lin19} and embodying the rotation-invariance \citep{Chui-Lin-Zhou19}.

In a nutshell, as shown in Table \ref{ReLU_fea}, it was proved in the existing literature that any function  expressible  for deep ReLU nets can also be  well approximated by deep sigmoid nets with fewer hidden layers and free parameters. Our Theorem \ref{theorem:app} partly reveals the reason for such a phenomenon in the sense that ReLU can be well approximated by sigmoid nets but not vice-verse.

\subsection{Algorithms for DNN training}

In order to address the choice of learning rate in SGD, there are many variants of SGD incorporated with adaptive learning rates called \textit{adaptive gradient} methods.
Some important adaptive gradient methods are Adagrad \citep{Duchi-adagrad-2011}, Adadelta \citep{Zeiler-adadelta12}, RMSprop \citep{Hinton-RMSProp-2012}, Adam \citep{Kingma2015}, and AMSGrad \citep{Reddi-Amsgrad-iclr2018}. Although these adaptive gradient methods have been widely used in deep learning, there are few theoretical guarantees when applied to the deep neural network training, a highly nonconvex and possibly nonsmooth optimization problem \citep{Wu-adagrad19}. Regardless the lack of theoretical guarantees of the existing variants of SGD, another major flaw is that they may suffer from the issue of gradient explosion/vanishing  \citep{Goodfellow-et-al-2016}, essentially due to the use of BackProp \citep{Hinton-BP1986} for updating the gradient during the iteration procedure.

To address the issue of gradient vanishing, there are some tricks that focus on either the design of the network architectures such as ResNets \citep{He-ResNet-2016} or the training procedure such as the batch normalization \citep{Ioffe-Szegedy-BN15} and weight normalization \citep{Salimans-Kingma-WN16}. Besides these tricks, there are many works in the perspective of algorithm design, aiming to propose some alternatives of SGD to alleviate the issue of gradient vanishing. Among these alternatives, the so called  \textit{gradient-free} type methods have recently attracted rising attention in deep learning since they may in principle alleviate this issue by their gradient-free natures, where the alternating direction method of multipliers (ADMM) and block coordinate descent (BCD) methods are two   most popular ones (see, \citep{Carreira2014-MAC,Goldstein-ADMM-DNN2016,Kiaee2016-ADMM-sparseCNN,Yang-ADMM-net2016,Murdock-ADNN2018,Gotmare2018,Zhang-BCD-NIPS2017,Gu-BCD2018,Lau-BCD-2018,Zeng-BCD19}).
Besides the gradient-free nature, another advantage of both ADMM and BCD is that they can be easily implemented in a distributed and parallel manner, and thus are capable of solving distributed/decentralized large-scale problems \citep{Boyd-DADMM2011}.

In the perspective of constrained optimization, all the BackProp (BP), BCD and ADMM can be regarded as certain Lagrangian methods or variants for the nonlinearly constrained formulation of DNN training problem. In \citep{LeCun-BP-1988}, BP was firstly reformulated as a Lagrangian multiplier method. The fitting of nonlinear equations motivated by the forward pass of the neural networks plays a central role in the development of BP. Following the Lagrangian framework, the BCD methods for  DNN training proposed by \citep{Zhang-BCD-NIPS2017,Lau-BCD-2018,Zeng-BCD19,Gu-BCD2018} can be regarded as certain Lagrangian relaxation methods without requiring the exact fitting of nonlinear constraints.
Unlike in BP, such nonlinear constraints are directly lifted as quadratic penalties to the objective function in BCD, rather than involving these nonlinear constraints with Lagrangian multipliers. However, such a lifted treatment of nonlinear constraints in BCD as penalties suffers from an inconsistent issue in the sense that the solution found by BCD cannot converge to a solution satisfying these nonlinear constraints. To tackle this issue, ADMM, a primal-dual method based on the augmented Lagrangian by introducing the nonlinear constraints via Lagrangian multipliers, enables a convergent sequence satisfying the nonlinear constraints. Therefore, ADMM attracted rising attention in deep learning with various implementations \citep{Carreira2014-MAC,Goldstein-ADMM-DNN2016,Kiaee2016-ADMM-sparseCNN,Yang-ADMM-net2016,Gotmare2018,Murdock-ADNN2018}.
However, most of the existing ADMM type methods in deep learning only keep partial nonlinear constraints for the sake of reducing the difficulty of optimization, and there are few convergence guarantees \citep{Gao-Goldfarb-ADMM20}.

\subsection{Convergence of ADMM and challenges}

Most results in the literature on the convergence of nonconvex ADMM focused on linear constrained optimization problems (e.g. \citep{Hong-ADMM-2016,Wang-ADMM2018}). Following the similar analysis of \citep{Wang-ADMM2018}, \citep{Gao-Goldfarb-ADMM20} extended the convergence results of ADMM to multiaffine constrained optimization problems. In the analysis of \citep{Hong-ADMM-2016,Wang-ADMM2018,Gao-Goldfarb-ADMM20}, the separation of a special block of variables is crucial for the convergence of ADMM in both linear and multiaffine scenarios. Following the notations of \citep{Wang-ADMM2018}, the linear constraint considered in \citep{Wang-ADMM2018} is of the form
\begin{align}
\label{Eq:ADMM-linear}
{\bf A}{\bf x} + B{y} =0
\end{align}
where ${\bf x}:= (x_0,\ldots, x_p)$ includes $p+1$ blocks of variables, $y$ is a special block of variables, ${\bf A}:= [A_0,\ldots, A_p]$ and $B$ are two matrices satisfying
${\mathrm{Im}({\bf A}) \subseteq \mathrm{Im}(B)}$, where $\mathrm{Im}(\cdot)$ returns the image of a matrix. Similarly, \citep{Gao-Goldfarb-ADMM20} extended \eqref{Eq:ADMM-linear} to multiaffine constraint of the form, ${\cal A}(x_1,x_2) + {\cal B}(y)=0,$ where ${\cal A}$ and ${\cal B}$ are respectively some multiaffine and linear maps satisfying ${\mathrm{Im}({\cal A}) \subseteq \mathrm{Im}({\cal B})}$. Leveraging this special block variable $y$, the dual variables (namely, multipliers) is expressed solely by $y$
\citep[Lemma 3]{Wang-ADMM2018}, and the amount of dual ascent part is controlled by the amount of descent part brought by the primal $y$-block update \citep[Lemma 5]{Wang-ADMM2018}.
Together with the descent quantity arisen by the ${\bf x}$-block update, the total progress of one step ADMM update is descent along the augmented Lagrangian. Such a technique is in the core of   analysis in \citep{Wang-ADMM2018} and \citep{Gao-Goldfarb-ADMM20} to deal with some multiaffine constraints in deep learning.

However, in a general formulation of ADMM for DNN training (e.g.  \eqref{Eq:dnn-L2-admm-reg}), it is impossible to separate such a special variable block ${y}$ satisfying these requirements. Let us take a three-layer neural network for example. Let ${\cal W}:= \{W_i\}_{i=1}^3$ be the weight matrices of the neural network, and ${\cal V}:= \{V_i\}_{i=1}^3$ be the response matrices of the neural network and $X$ be the input matrix, then the nonlinear constraints are of the following form,
\begin{subequations}
\label{Eq:DNN-constraints}
    \begin{align}
        \label{Eq:DNN-constraints-a}
        \sigma(W_1X)-V_1 &= 0,\\
        \label{Eq:DNN-constraints-b}
        \sigma(W_2V_1) - V_2 &= 0,\\
        \label{Eq:DNN-constraints-c}
        W_3V_2 - V_3 &= 0,
    \end{align}
\end{subequations}
where $\sigma$ is the sigmoid activation. Note that in \eqref{Eq:DNN-constraints-b} and \eqref{Eq:DNN-constraints-c}, $W_2$ is coupled with $V_1$ and $W_3$ is coupled with $V_2$, respectively, so none of these four variable blocks can be separated from the others. Although $W_1$ in \eqref{Eq:DNN-constraints-a} and $V_3$ in \eqref{Eq:DNN-constraints-c} can be separated, the image inclusion constraint above is not satisfied. Therefore, one cannot exploit the structure in \citep{Wang-ADMM2018,Gao-Goldfarb-ADMM20} to study such constraints in deep learning.

\subsection{Key stones to the challenges and main idea of proof}
\label{sc:keystone-proofidea}

In order to address the challenge of such nonlinear constraints $\sigma(W_iV_{i-1})-V_i=0$,   we introduce a \textit{local linear approximation} (LLA) technique. Let us take \eqref{Eq:DNN-constraints} for example to illustrate this idea. The most difficult block of variable is $V_1$ which involves two constraints, namely, a linear constraint in \eqref{Eq:DNN-constraints-a}, and a nonlinear constraint in \eqref{Eq:DNN-constraints-b}. Now we fix $W_1, W_2$ and $V_2$ as the previous updates, say $W_1^0, W_2^0$ and $V_2^0$, respectively. For the update of $V_1$-block, we keep the linear constraint, but relax the nonlinear constraint with its linear approximation at the previous update $V_1^0$,
\begin{align}
&\sigma(W_2^0V_1^0) - V_2^0 + \sigma'(W_2^0V_1^0)\odot \left[W_2^0(V_1 - V_1^0)\right] \approx 0, \label{Eq:linearized-V1}
\end{align}
assuming the differentiability of activation function $\sigma$. The other blocks can be handled in a similar way. Taking $W_1$ block for example, we relax the related nonlinear constraint via its linear approximation at the previous update $W_1^0$, namely, $\sigma(W_1^0X)-V_1^0 + \sigma'(W_1^0X)\odot ((W_1 - W_1^0)X) \approx 0.$ The operations of LLA on the nonlinear constraints can be regarded as applying certain \textit{prox-linear updates} \citep{Xu-Yin-BCD2013} to replace the subproblems of ADMM involving nonlinear constraints as shown in Section \ref{sc:ADMM-linearized-trick}.

To make such a local linear approximation valid, intuitively one needs: (a) the activation function $\sigma$ is smooth enough; and (b) the linear approximation occurs in a small enough neighbourhood around the previous updates. Condition (a) is mild and naturally satisfied by the sigmoid type activations. But condition (b) requires us to introduce a new Lyapunov function defined in \eqref{Eq:def-hatL} by adding to the original augmented Lagrangian a proximal control between $V_i$ and its previous updates. Equipped with such a Lyapunov function, we are able to show that an auxiliary sequence converges to a stationary point of the new Lyapunov function (see Theorem \ref{Thm:globconv-hatQk} below), which leads to the convergence of the original sequence generated by ADMM (see Theorem \ref{Thm:Conv-ADMM-sigmoid} in Section \ref{sc:convergence-result}). Specifically, denote $\{\hcQ^k\}$ as
\begin{align}
\label{Eq:def-hatQ}
\hcQ^k:= (\cQ^k, \{\hat{V}_i^k\}_{i=1}^N),
\end{align}
with $\hat{V}_i^k:= V_i^{k-1}$ for $i=1,\ldots, N$ and $k \geq 1$,
and $\hcL(\hcQ^k)$ as
\begin{align}
\label{Eq:def-hatL}
\hcL(\hcQ^k):= \cL(\cQ^k) + \sum_{i=1}^N \xi_i \|V_i^k - \hat{V}^k_i\|_F^2
\end{align}
for some positive constant $\xi_i>0$ ($i=1,\ldots, N$) specified later in Appendix \ref{app:proof-sufficient-descent}. Then we state the convergence of $\{\hcQ^k\}$ as follows.

\begin{theorem}[Convergence of $\{\hcQ^k\}$]
\label{Thm:globconv-hatQk}
Under conditions of Theorem \ref{Thm:Conv-ADMM-sigmoid}, we have:
\begin{enumerate}
\item[(a)] $\hcL(\hcQ^k)$ is convergent.
\item[(b)] $\hcQ^k$ converges to some stationary point $\hcQ^*$ of $\hcL$.
\item[(c)] $\frac{1}{K}\sum_{k=1}^K \|\nabla \hcL(\hcQ^k)\|_F^2 \rightarrow 0$ at a ${\cal O}(\frac{1}{K})$ rate.
\end{enumerate}
\end{theorem}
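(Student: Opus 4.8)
The plan is to prove Theorem~\ref{Thm:globconv-hatQk} by the standard three-part recipe for nonconvex ADMM convergence --- sufficient descent, bounded subgradient, and invocation of the Kurdyka--{\L}ojasiewicz (K{\L}) framework --- but carried out for the \emph{augmented} Lyapunov function $\hcL(\hcQ^k)$ of \eqref{Eq:def-hatL} rather than for $\cL$ directly. The role of the extra proximal terms $\sum_i \xi_i\|V_i^k-\hat V_i^k\|_F^2$ is precisely to absorb the error incurred by the local linear approximation: the LLA surrogates $H_\sigma^k, M_\sigma^k$ in \eqref{Eq:Hk-sigmoid}--\eqref{Eq:Mk-sigmoid} majorize the true functions $H_\sigma, M_\sigma$ up to a quadratic remainder controlled by the locally Lipschitz constants $h_i^k,\mu_j^k$ from \eqref{Eq:hik}--\eqref{Eq:mujk}, so the descent one gets on the surrogate translates into descent on $\cL$ only modulo a $\|V_i^k-V_i^{k-1}\|_F^2$ term, which is exactly what the $\xi_i$-terms are designed to dominate.

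First I would establish \textbf{a priori boundedness}: using the normalizations $\|X\|_F=\|Y\|_F=\|W_i^0\|_F=1$, the conditions \eqref{Eq:cond-beta-sigmoid} on $\{\beta_i\}$, the choice of $\lambda$, and the fact that $\sigma$ and $\sigma'$ are globally bounded (by $1$ and $1/4$), one shows by induction on $k$ that all iterates $W_i^k, V_i^k, \Lambda_i^k$ stay in a fixed compact set; this is what makes $h_i^k,\mu_j^k$ uniformly bounded and legitimizes treating the surrogates as genuine majorizers on the relevant region. Second, the \textbf{sufficient-descent step}: for each block update, since the $W_i$- and $V_j$-subproblems \eqref{Eq:Wik-prox}--\eqref{Eq:Vjk-prox} minimize a strongly convex surrogate (strong convexity coming from $\lambda>0$, the penalty terms, and the proximal coefficients $h_i^k/4$, $\mu_j^k/4$), each update decreases the surrogate by at least a multiple of the squared increment of that block; summing and using the majorization inequalities together with the explicit multiplier updates \eqref{Eq:Lambdak} to bound the dual-ascent term $\sum_i \beta_i^{-1}\|\Lambda_i^k-\Lambda_i^{k-1}\|_F^2$ by primal increments, one arrives at
\begin{align}
\hcL(\hcQ^k) \leq \hcL(\hcQ^{k-1}) - c\sum_{i=1}^N \left(\|W_i^k-W_i^{k-1}\|_F^2 + \|V_i^k-V_i^{k-1}\|_F^2\right)\nonumber
\end{align}
for some $c>0$, provided the $\beta_i$ satisfy \eqref{Eq:cond-beta-sigmoid} and $\xi_i$ are chosen large enough --- this is where the exponential growth of $\beta_i$ from output to input layer is forced, since the multiplier increment of layer $i$ is controlled only after paying a factor that compounds through the chain. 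Third, the \textbf{relative-error step}: differentiating $\hcL$ and using the optimality conditions of the surrogate subproblems, one bounds $\|\nabla\hcL(\hcQ^k)\|_F$ by a constant times $\sum_i(\|W_i^k-W_i^{k-1}\|_F+\|V_i^k-V_i^{k-1}\|_F)$, again leaning on boundedness and Lipschitz continuity of $\sigma,\sigma'$ on the compact region.

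Given these two inequalities plus boundedness, part~(c) --- the $\mathcal O(1/K)$ rate for $\frac1K\sum_{k=1}^K\|\nabla\hcL(\hcQ^k)\|_F^2$ --- follows by telescoping the descent inequality (the total decrease is finite because $\hcL$ is bounded below, which itself needs a short argument from the penalty structure) and combining with the relative-error bound. For parts~(a) and~(b), $\hcL$ is real-analytic in all its arguments on the bounded region (polynomials, inner products, and the analytic function $\sigma$), hence semialgebraic-like and satisfies the K{\L} inequality; the abstract convergence theorem for descent sequences with relative error under K{\L} (as in \citet{Xu-Yin-BCD2013}) then yields convergence of the whole sequence $\hcQ^k$ to a single stationary point $\hcQ^*$ of $\hcL$, and convergence of $\hcL(\hcQ^k)$. \textbf{The main obstacle} I anticipate is the sufficient-descent step, specifically making the dual-ascent term genuinely dominated: because the multiplier $\Lambda_i^k$ is \emph{not} expressible through a single clean block (the nonlinearity $\sigma$ intervenes in the $W_i$-side), one must route the control through the $V_i$-block update as explained in Section~\ref{sc:ADMM-description}, and the error terms from the LLA remainders and from the varying coefficients $h_i^k,\mu_j^k$ must be bookkept carefully across all $N$ layers; getting the constants to close simultaneously is what pins down the precise hypotheses \eqref{Eq:cond-beta-sigmoid} on $\{\beta_i\}$ and the scaling of $\lambda$, and is the technically heaviest part of the argument.
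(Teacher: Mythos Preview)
Your overall architecture --- a-priori boundedness by induction, sufficient descent for $\hcL$, a relative-error bound, then the K{\L} framework of \citet{Attouch2013} --- matches the paper's proof. But your explanation of \emph{why} the proximal terms $\xi_i\|V_i^k-\hat V_i^k\|_F^2$ are needed is incorrect, and this misunderstanding would block you when you actually try to close the sufficient-descent step.

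You say the $\xi_i$-terms absorb an LLA majorization error. They do not: with $h_i^k,\mu_j^k$ set as in \eqref{Eq:hik}--\eqref{Eq:mujk}, the surrogates $H_\sigma^k,M_\sigma^k$ are \emph{genuine} majorizers of $H_\sigma,M_\sigma$ (this is exactly Lemma~\ref{Lemm:descent-prox-linear}), so each primal block update yields honest descent on $\cL$ with no residual term to absorb. The obstruction is entirely on the dual side. When you express $\Lambda_j^k$ through the $V_j$-optimality condition (Lemma~\ref{Lemm:dual-expressed-primal}), the resulting formula involves not only $V_j^k$ but also the linearization points $V_j^{k-1}$ and $V_{j+1}^{k-1}$. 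Consequently the difference $\|\Lambda_j^k-\Lambda_j^{k-1}\|_F$ is bounded by current increments \emph{and} by the lagged increments $\|V_i^{k-1}-V_i^{k-2}\|_F$ --- a two-step memory (Lemma~\ref{Lemm:dual-controlled-primal}). Since $\hat V_i^k:=V_i^{k-1}$, these lagged terms are precisely $\|\hat V_i^k-\hat V_i^{k-1}\|_F$, and the $\xi_i$-terms in $\hcL$ are the device that converts the two-step inequality into a one-step descent that telescopes. Without this mechanism you cannot make $\sum_i\beta_i^{-1}\|\Lambda_i^k-\Lambda_i^{k-1}\|_F^2$ dominated by the primal descent part, no matter how large the $\beta_i$ are.

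The same omission affects your relative-error claim: $\|\nabla\hcL(\hcQ^k)\|_F$ is controlled by current increments \emph{plus} $\sum_i\|V_i^{k-1}-V_i^{k-2}\|_F$ (Lemma~\ref{Lemm:bound-grad}), i.e.\ by $\|\hcQ^k-\hcQ^{k-1}\|_F$ in the enlarged variable, not merely by $\sum_i(\|W_i^k-W_i^{k-1}\|_F+\|V_i^k-V_i^{k-1}\|_F)$ as you wrote. This does not spoil the ${\cal O}(1/K)$ rate once the descent inequality is established, but it is what forces the K{\L} argument to be run on $\{\hcQ^k\}$ rather than on $\{\cQ^k\}$.
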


Theorem \ref{Thm:globconv-hatQk} presents the function value convergence and sequence convergence to a stationary point at a ${\cal O}(1/K)$ rate of the auxiliary sequence $\{\hcQ^k\}$. By the definitions \eqref{Eq:def-hatQ} and \eqref{Eq:def-hatL} of $\hcQ^k$ and $\hcL$ , Theorem \ref{Thm:globconv-hatQk} directly implies Theorem \ref{Thm:Conv-ADMM-sigmoid}. As shown by the proofs in Appendix \ref{app:proof-main-theorem}, the claims in Theorem \ref{Thm:globconv-hatQk} also hold under the more general assumptions for Theorem \ref{Thm:global-generic} in Appendix \ref{app:ADMM-GenericDNN}. In Theorem \ref{Thm:globconv-hatQk}, we only give the convergence guarantee for the proposed ADMM. It would be interesting to derive the convergence rate to highlight the role of algorithmic parameters. We will keep in study and report the result in future work.

Our main idea of proof for Theorem \ref{Thm:globconv-hatQk} can be summarized as follows: we firstly establish a \textit{sufficient descent} lemma along the new Lyapunov function (see Lemma \ref{Lemm:suff-descent} in \Cref{app:sufficient-descent-lemma}), then show a \textit{relative error} lemma (see Lemma \ref{Lemm:bound-grad} in \Cref{app:proof-relative-error}), and later verify the \textit{Kurdyka-{\L}ojasiewicz property} \citep{Lojasiewicz-KL1993,Kurdyka-KL1998}  (see Lemma \ref{Lemm:KL-property} in \Cref{app:Kurdyka-Lojasiewicz-inequality}) and the \textit{limiting continuity} property of the new Lyapunov function by Assumption \ref{Assump:activ-fun},
and finally establish Theorem \ref{Thm:globconv-hatQk} via following the analysis framework formulated in \cite[Theorem 2.9]{Attouch2013}. In order to prove Lemma \ref{Lemm:suff-descent}, we prove the following three lemmas, namely, a one-step progress lemma (see Lemma \ref{Lemm:descent-two-iterates} in \Cref{app:proof-change-quantity}), a dual-bounded-by-primal lemma (see Lemma \ref{Lemm:dual-controlled-primal} in \Cref{app:proof-dual-by-primal}), and a boundedness lemma (see Lemma \ref{Lemm:boundedness-seq} in \Cref{app:proof-boundedness-sequence}), while to prove Lemma \ref{Lemm:bound-grad}, besides Lemmas \ref{Lemm:dual-controlled-primal} and \ref{Lemm:boundedness-seq}, we also use the Lipschitz continuity of the activation and its derivative by Assumption \ref{Assump:activ-fun} in Appendix \ref{app:ADMM-GenericDNN}. The proof sketch can be illustrated by Figure \ref{Fig:proof-sketch}.

\begin{figure}[h]
    \centering
    \includegraphics[scale=0.5]{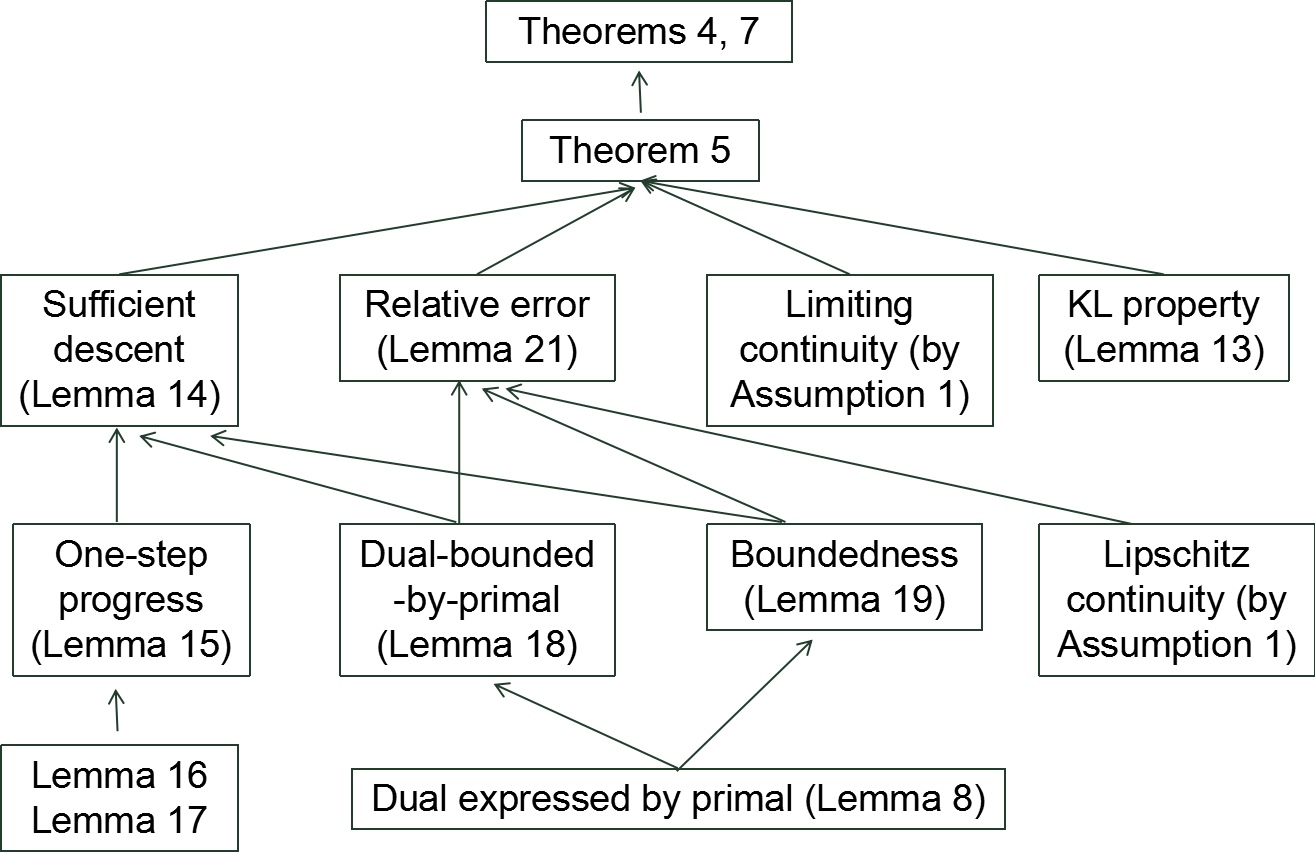}
    \caption{Proof sketch of convergence of ADMM.}
    \label{Fig:proof-sketch}
\end{figure}

According to Figure \ref{Fig:proof-sketch}, we show the boundedness of the sequence before the establishment of the sufficient descent lemma (i.e., Lemma \ref{Lemm:suff-descent}).
Such a proof procedure is different from the existing ones in the literature (say, \citep{Wang-ADMM2018}), where a sequence boundedness is usually implied by firstly showing the (sufficient) descent lemma \citep[Lemma 6]{Wang-ADMM2018}.

\section{Toy Simulations}
\label{sc:simulations}

In this section, we conduct a series of simulations to show the effectiveness of the proposed ADMM in approximating and learning  some natural functions including the square function, product gate, a piecewise $L_1$ radial function, and a smooth $L_2$ radial function, which    play important roles in reflecting some commonly used data features \citep{Safran-Shamir17,Shaham-Cloninger-Coifman18,Chui-Lin-Zhou19,Guo-Shi-Lin19}. In particular, we provide empirical studies to show that these important natural functions can be numerically well approximated or learned by the proposed ADMM-sigmoid pair. Furthermore we also show that the proposed ADMM-sigmoid pair is stable to its algorithmic hyper-parameters, via comparing to the popular deep learning optimizers including the vanilla SGD, SGD with momentum (called SGDM for short henceforth) and Adam \citep{Kingma2015}. There are four experiments concerning  approximation and learning tasks: \textbf{(a)}  approximation of square function, \textbf{(b)}  approximation of product gate, \textbf{(c)} learning of a piecewise $L_1$ radial function, and \textbf{(d)}  learning of a smooth $L_2$ radial function. All  numerical experiments were carried out in Matlab R2015b environment running Windows 10, Intel(R) Xeon(R) CPU E5-2667 v3 @ 3.2GHz 3.2GH.  The codes are available at \url{https://github.com/JinshanZeng/ADMM-DeepLearning}.

\subsection{Experimental settings}
\label{sc:simul-exp-setting}

In all our experiments, we use deep fully connected neural networks with different depths and widths. Throughout the paper,  the depth  and width  of deep neural networks are respectively the number of hidden layers and number of  neurons in each hidden layer. For simplicity, we only consider deep neural networks with the same width for all the hidden layers. We consider both   deep sigmoid nets and deep ReLU nets in the simulation. Motivated by the existing literature \citep{Guo-Shi-Lin19}, the depth required for deep ReLU nets is in general much more than that for deep sigmoid nets in the aforementioned approximation or learning tasks. For the fairness of comparison, we consider deep ReLU nets with more hidden layers, i.e., the maximal depth of deep ReLU nets is 20 while that of deep sigmoid nets is only 5 or 6, as presented in Table \ref{Tab:exp-setting}. Besides the vanilla SGD-ReLU pair ({\it SGD (ReLU)} for short), we also consider SGD-sigmoid pair ({\it SGD (sigmoid)} for short), SGDM-ReLU pair  ({\it SGDM (ReLU)} for short), and Adam-ReLU pair ({\it Adam (ReLU)} for short) as the competitors. Similarly, we denote by {\it ADMM (sigmoid)} the proposed ADMM-sigmoid pair.

For each experiment, our purposes are mainly two folds: excellent approximation or learning performance, and stability with respect to  initialization schemes and  penalty parameters with appropriate neural network structures for the proposed ADMM-sigmoid pair. For the first purpose, we consider  deep neural networks with different depths and widths as presented in Table \ref{Tab:exp-setting}. Moreover, for ADMM,  we empirically set the regularization parameter $\lambda = 10^{-6}$ and the augmented Lagrangian parameters $\beta_i$'s as the same $1$, while for  SGD methods, we empirically utilize the step exponential decay (or, called \textit{geometric decay}) learning rate schedule with the decay exponent $0.95$ for every 10 epochs. For SGDM and Adam, we use the default settings as presented in Table \ref{Tab:exp-setting}. The number of epochs in all experiments is empirically set as 2000. The specific settings of these experiments are presented in Table \ref{Tab:exp-setting}.

\begin{table*}
\caption{Experimental settings for toy simulations. Sample sizes for  approximation tasks are set as 1000, and training and test samples sizes for  learning tasks are set as 1000 respectively. The notation $[a:b]$ is denoted by the set $\{a,a+1,\ldots,b\}$ for two integers $a, b$. $\dag$ In the case of  learning  the $L_2$ radial function,  ranges of depth of   deep sigmoid and ReLU nets are $[2,6]$ and $[2,20]$, respectively.}
\vspace{-0.5cm}
\tiny
\begin{center}
\begin{tabular}{|c|c|c|c|c|c|c|c|c|}\hline
\multirow{2}*{task} &\multirow{2}*{functions}  & \multicolumn{2}{|c|}{deep fully-connected NNs} &\multicolumn{2}{|c|}{SGDs (sigmoid/ReLU), SGDM} & SGDM & \multirow{2}*{Adam} & ADMM \\
\cline{3-6}
~&~  & width & depth & learning rate (lr) & batch size & (momentum) & ~ & $(\lambda,\beta)$ \\\hline
\multirow{2}*{Approx.} &$x^2$  & $20\times [1:5]$ & $[1,5]$ (sigmoid),  & ~  & ~  & ~ & lr: 0.001 & ~\\
\cline{2-3}
~&$uv$  &$60\times [1:5]$ &$[1,20]$ (ReLU) & $0.1\times 0.95^k$ & 50 & 0.5 & $\beta_1$: 0.9 & $(10^{-6},1)$ \\
\cline{1-3}
\multirow{2}*{Learn.} & $(\|x\|_1 - 1)_+$  &$10\times[1:6]$ &~ &per 10 epochs, & ~ & ~ & $\beta_2$: 0.999 &~  \\
\cline{2-4}
~ & $g(|x|^2)$  &$100\times[1:5]$ & $[2,6]$, $[2,20]$ $^\dag$ &~ & ~ & ~ & $\epsilon$: 1e-8 & ~ \\\hline
\end{tabular}
\end{center}
\label{Tab:exp-setting}
\vspace{-.8cm}
\end{table*}

For the second purpose, we consider different regularization and penalty parameters $(\lambda \in \{10^{-6}, 10^{-5},10^{-4}, 10^{-3},10^{-2},10^{-1}\}, \beta \in \{0.01, 0.1, 0.5, 1, 5, 10, 100\})$, as well as several existing initialization schemes for ADMM under the optimal neural network structure determined by the first part. Particularly, we consider the following four types of  schemes yielding  six typical initializations:
\begin{enumerate}
\item[(1)]
\textbf{LeCun random initialization} \citep{LeCun-init98b}: the components of the weight matrix $W_l$ at the $l$-th layer are generated i.i.d. according to some random distribution with zero mean and variance $\frac{1}{d_{l-1}}$, $l =1,\ldots, N$. Particularly, we consider two special \textit{LeCun random} initialization schemes generated respectively according to the uniform and Gaussian distributions, i.e., $W_{l} \sim U([-\sqrt{\frac{3}{d_{l-1}}}, \sqrt{\frac{3}{d_{l-1}}}])$ (\textit{LeCun-Unif} for short) and $W_{l} \sim {\cal N}(0, \frac{1}{d_{l-1}})$ (\textit{LeCun-Gauss} for short).

\item[(2)]
\textbf{Random orthogonal initialization} \citep{Saxe-orth-init14}: the weight matrix $W$ is set as some random orthogonal  matrix such that $W^TW = {\bf I}$ or $WW^T = {\bf I}$. We call it \textit{Orth-Unif} (or \textit{Orth-Gauss}) for short if the random matrix is generated i.i.d. according to the uniform random (or, Gaussian) distribution.

\item[(3)]
\textbf{Xavier initialization} \citep{Glorot-Bengio10}: each $W_{l} \sim U([-\sqrt{\frac{6}{d_{l-1}+d_{l}}}, \sqrt{\frac{6}{d_{l-1}+d_{l}}}])$, $l=1,\ldots, N$.

\item[(4)]
\textbf{MSRA initialization} \citep{He-msra-init15}: $W_l \sim {\cal N}(0,\frac{2}{d_l})$, $l=1,\ldots, N-1$, and $W_N \sim {\cal N}(0,\frac{1}{d_N})$ since there is no ReLU activation for the last layer.
\end{enumerate}
The default settings for  initial threshold vectors in the above initialization schemes are set to be 0. For each group of parameters, we run 20 trails for average. Specifically, for the approximation tasks, the performance of an algorithm is measured by the \textit{approximation error}, defined as the average of these 20 trails's mean square errors, while for the learning tasks, the performance of an algorithm is measured by the \textit{test error}, defined as the average of the mean square errors of these trails over the test data.

\subsection{Approximation of square function}
\label{sc:simul-sqfun}

In these experiments, we consider the performance of   the ADMM-sigmoid pair in approximating the univariate square function, that is, $f(x)=x^2$ on $[-1,1]$. The specific experimental settings can be found in Table \ref{Tab:exp-setting}.

{\bf A. Approximation performance of ADMM.} Experiment results over the best neural network structures are presented in Table \ref{Tab:squarefun-result}, and trends of  approximation errors with respect to the depth  are shown in Figure \ref{Fig:squarefun-depth}. From Table \ref{Tab:squarefun-result}, the  ADMM-SGD pair can approximate the square function within very high precision, i.e., in the order of $10^{-9}$, which is slightly better than that of competitors for deeper ReLU nets, and is much better than  the SGD-sigmoid pair with the same depth. Specifically,   optimal depths for SGD (ReLU), SGDM (ReLU) and Adam (ReLU) are 18, 15, 15, respectively, while the optimal depth for ADMM (sigmoid) is only 2, which  matches the theoretical results in approximation theory, as shown in \citep[Proposition 2]{Chui-Lin-Zhou19}. In terms of running time, ADMM (sigmoid) with   optimal network structures is generally faster than the  SGDM (ReLU) and Adam (ReLU)  with optimal network structures as presented in the third row of Table \ref{Tab:squarefun-result}, mainly due to the  depth  required for ADMM (sigmoid) is much less  than  those for deep ReLU nets   SGD (ReLU), SGDM (ReLU) and Adam (ReLU).
Moreover, according to Figure \ref{Fig:squarefun-depth},   ADMM (sigmoid) can yield  high approximation precision with less layers than the competitors. These experimental results demonstrate that the proposed ADMM  can embody the advantage of deep sigmoid nets on approximating the square function, as pointed out in the existing theoretical literature \citep{Chui-Lin-Zhou19}.

\begin{table*}
\caption{Experimental results of different algorithms in approximating $f(x)=x^2$. The standard derivation of the approximation error is presented in the parentheses. The running time is recorded in seconds. The depth and width of the optimal network structure in terms of approximation error is presented in the last row. }
\vspace{-0.5cm}
\footnotesize
\begin{center}
\begin{tabular}{|c|c|c|c|c|c|}\hline
Algorithm     & SGD (ReLU)       & SGDM (ReLU)      & Adam (ReLU)      & SGD (sigmoid)     & ADMM (sigmoid) \\\hline
Approx. Error    & 5.34e-8(2.34e-8) & 3.95e-8(1.25e-8) & 3.33e-8(1.46e-8) & 2.46e-4(1.69e-4)  &{\bf 2.53e-9(1.18e-9)} \\\hline
Run Time (s)      & 26.99            & 41.35            & 38.26            & 3.45              &9.47 \\\hline
(depth, width) & (18,100)        & (15,100)         & (15,80)          & (2,60)            & (2,100)\\\hline
\end{tabular}
\end{center}
\label{Tab:squarefun-result}
\end{table*}

\begin{figure}[!t]
\begin{minipage}[b]{0.49\linewidth}
\centering
\includegraphics*[scale=.48]{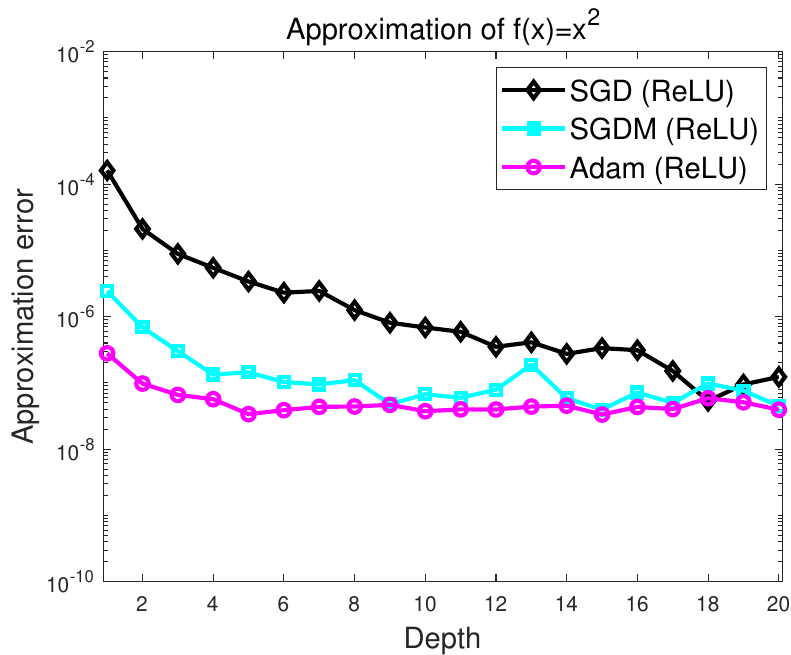}
\centerline{{\small (a) Deep ReLU nets}}
\end{minipage}
\hfill
\begin{minipage}[b]{0.49\linewidth}
\centering
\includegraphics*[scale=.48]{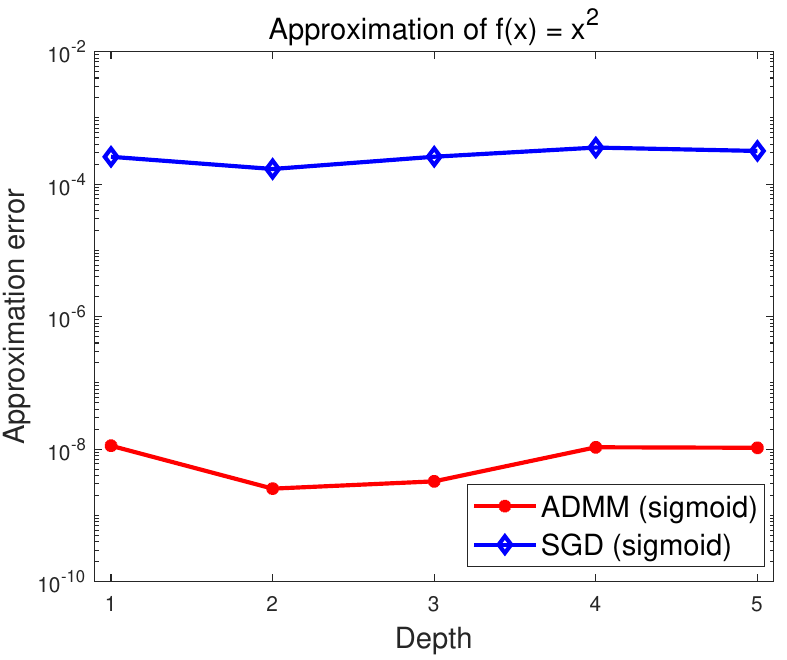}
\centerline{{\small (b) Deep sigmoid nets}}
\end{minipage}
\hfill
\caption{Effect of the depth of neural networks in approximating the square function.
}
\label{Fig:squarefun-depth}
\end{figure}

{\bf B. Effect of parameters for ADMM.}  There are mainly two parameters for the proposed ADMM, i.e., the  model parameter  $\lambda$ (also called as the regularization parameter) and the algorithm parameter $\beta$ involved in the augmented Lagrangian (also called as the penalty parameter). In this experiment, we consider the performance of ADMM in approximating the univariate square function with different model and algorithmic  parameters, under the optimal neural networks, i.e., deep fully-connected neural networks with depth 2 and width 100. Specifically, the regularization and penalty parameters vary from $\{10^{-6}, 10^{-5},10^{-4}, 10^{-3},10^{-2},10^{-1}\}$ and $\{0.01,0.1,0.5,1,5,10,100\}$, respectively. The approximation errors of ADMM with these parameters are shown in Figure \ref{Fig:squarefun-para-admm}(a). From Figure \ref{Fig:squarefun-para-admm}(a),  considering the penalty parameter $\beta$, ADMM with $\beta=1$ achieves the best performance in most cases, as also observed in the experiments later. Thus, in practice, we can empirically set the penalty parameter $\beta$ as $1$. Since $\lambda$ is a model parameter, it usually has significant effect on the performance of the proposed ADMM. From Figure \ref{Fig:squarefun-para-admm}(a), a small regularization parameter (say, $\lambda = 10^{-6}$) is sufficient to yield an ADMM solver with  high approximation precision.

\begin{figure}[!t]
\begin{minipage}[b]{0.49\linewidth}
\centering
\includegraphics*[scale=.5]{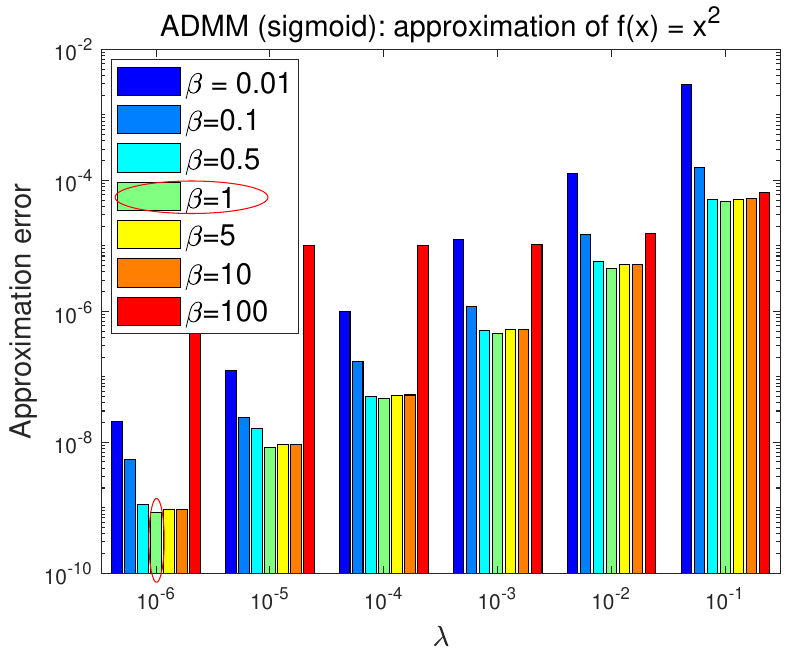}
\centerline{{\small (a) Effect of parameters of ADMM}}
\end{minipage}
\hfill
\begin{minipage}[b]{0.49\linewidth}
\centering
\includegraphics*[scale=.5]{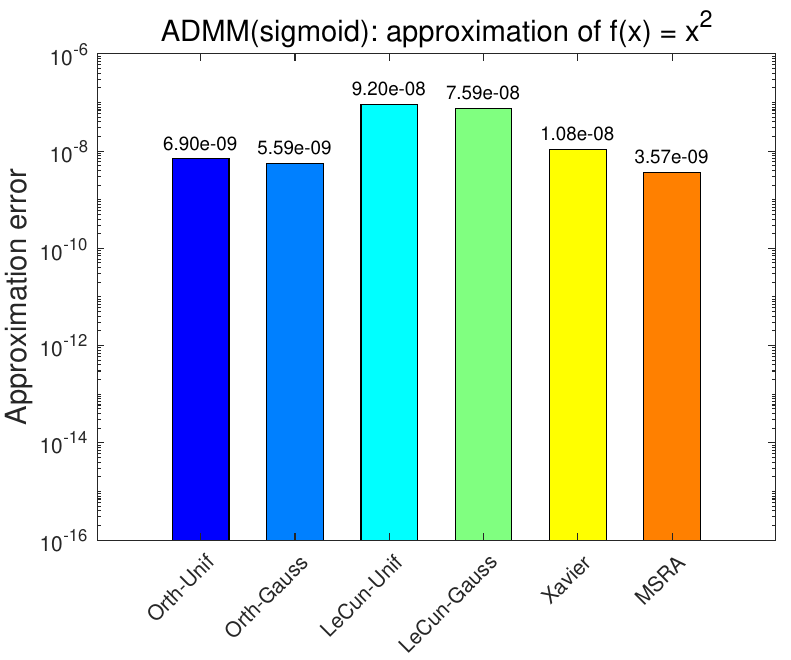}
\centerline{{\small (b) Stability of initialization}}
\end{minipage}
\hfill
\caption{Effect of parameters for ADMM in approximating the univariate square function.
}
\label{Fig:squarefun-para-admm}
\end{figure}

{\bf C. Effect of initial schemes.} Besides the MSRA initialization \citep{He-msra-init15}, there are some other commonly used initial schemes such as the random orthogonal initializations \citep{Saxe-orth-init14}, LeCun random initializations \citep{LeCun-init98b}, and Xavier initialization \citep{Glorot-Bengio10}. Under the optimal parameter settings presented in Table \ref{Tab:squarefun-result}, the performance of the ADMM-sigmoid pair with different initialization schemes is presented in Figure \ref{Fig:squarefun-para-admm}(b). From Figure \ref{Fig:squarefun-para-admm}(b), the proposed ADMM performs  well for all the initialization schemes. This demonstrates that the proposed ADMM is stable to the initial scheme.

\subsection{Approximation of product gate}
\label{sc:simul-prodgate}

In this subsection, we present  experimental results in approximating the product gate function, i.e., $f(u,v)=uv$ for $u,v \in [-1,1]$. The specific experimental settings in approximating the product gate function can be found in Table \ref{Tab:exp-setting}.

{\bf A. Approximation performance of ADMM}. The performance of ADMM and competitors is presented in Table \ref{Tab:prodgate-result}, and their performance with respect to the depth   is depicted in Figure \ref{Fig:prodgate-depth}. From Table \ref{Tab:prodgate-result}, the product gate function can be well approximated by the ADMM-sigmoid pair with precision in the order of $10^{-9}$, which is  better than those of    competitors including SGD (ReLU), SGDM (ReLU) and Adam (ReLU), even when more hidden layers are involved in the training.  It follows from Figure \ref{Fig:prodgate-depth}(b) and Table \ref{Tab:prodgate-result} that the optimal depth for ADMM in approximating the product gate function is $2$, which matches the theoretical depth for the approximation of product gate as shown in \citep[Proposition 3]{Chui-Lin-Zhou19}. Similar to the case of approximating square function, the running time of the proposed ADMM-sigmoid pair is less than the SGD type competitors for deep ReLU nets with more hidden layers.

\begin{table*}
\caption{Experimental results of different algorithms in approximating $f(u,v)=uv$. }
\footnotesize
\begin{center}
\begin{tabular}{|c|c|c|c|c|c|}\hline
Algorithm        & SGD (ReLU)       & SGDM (ReLU)      & Adam (ReLU)      & SGD (sigmoid)     & ADMM (sigmoid) \\\hline
Approx. Error    & 1.22e-6(3.68e-7) & 3.37e-7(1.29e-7) & 1.13e-6(4.37e-7) & 1.13e-3(2.33e-4)  &{\bf 2.62e-9(1.05e-9)} \\\hline
Run Time (s)     & 66.37            & 54.17            & 46.58            & 9.72              &17.29 \\\hline
(depth, width)   & (20,300)         & (18,180)         & (13,120)         & (2,240)           &(2,300)\\\hline
\end{tabular}
\end{center}
\label{Tab:prodgate-result}
\end{table*}

\begin{figure}[!t]
\begin{minipage}[b]{0.49\linewidth}
\centering
\includegraphics*[scale=.48]{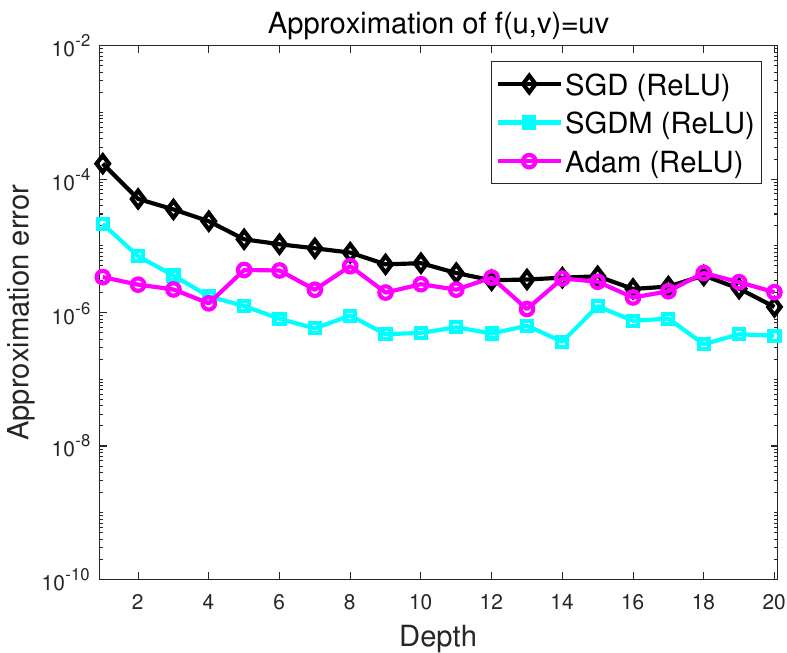}
\centerline{{\small (a) Deep ReLU nets}}
\end{minipage}
\hfill
\begin{minipage}[b]{0.49\linewidth}
\centering
\includegraphics*[scale=.48]{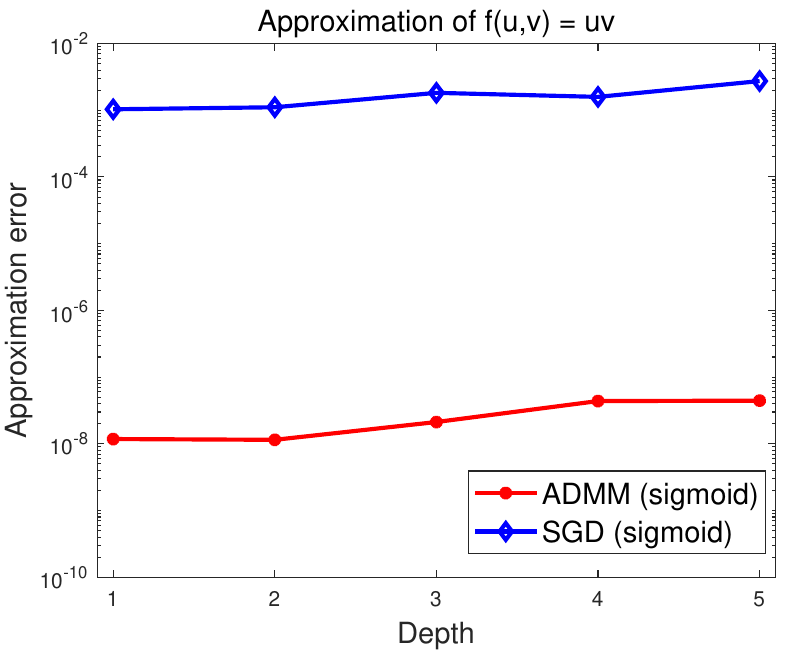}
\centerline{{\small (b) Deep sigmoid nets}}
\end{minipage}
\hfill
\caption{Effect of the depth of neural networks in approximating the product-gate function.
}
\label{Fig:prodgate-depth}
\end{figure}

{\bf B. Effect of parameters of ADMM.} Similar to Section \ref{sc:simul-sqfun} B, we also consider the effect of  parameters   $\lambda$ and $\beta$  for ADMM under the optimal  network structures, which is presented in Figure \ref{Fig:prodgate-para-admm}(a). From Figure \ref{Fig:prodgate-para-admm}(a), the effect of the concerned parameters on the performance of ADMM in approximating the product gate function is very similar to that in the approximation of univariate square function. It can be observed that the settings of parameters with $(\lambda = 10^{-6}, \beta=1)$ are empirically  good for ADMM in these two approximation tasks.

\begin{figure}[!t]
\begin{minipage}[b]{0.49\linewidth}
\centering
\includegraphics*[scale=.5]{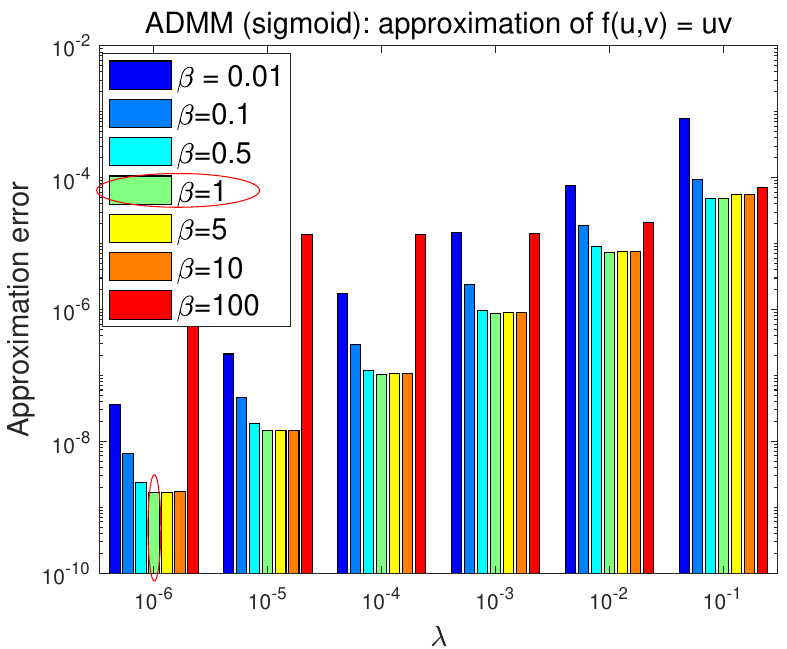}
\centerline{{\small (a) Effect of parameters of ADMM}}
\end{minipage}
\hfill
\begin{minipage}[b]{0.49\linewidth}
\centering
\includegraphics*[scale=.5]{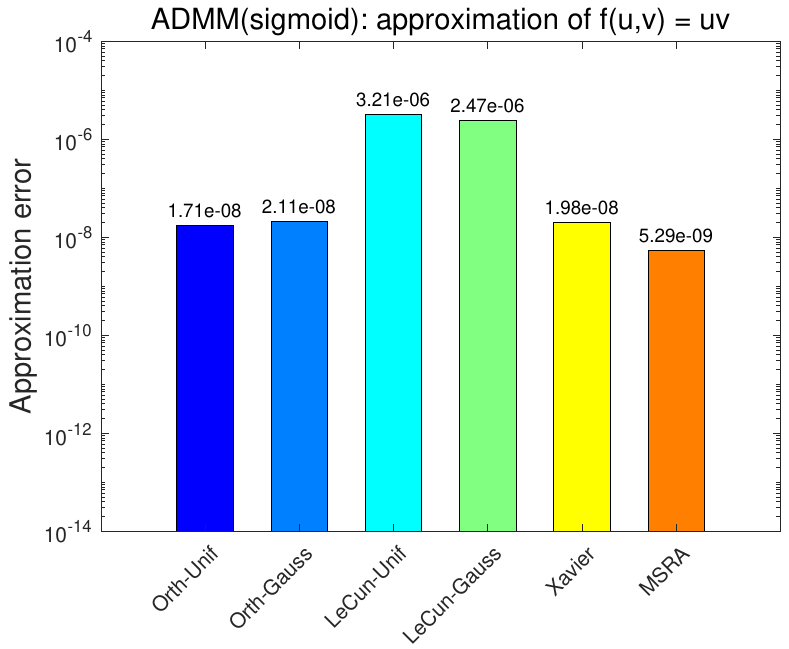}
\centerline{{\small (b) Stability of initialization}}
\end{minipage}
\hfill
\caption{Effect of parameters for ADMM in approximating the product-gate function.
}
\label{Fig:prodgate-para-admm}
\end{figure}

{\bf C. Effect of initial schemes.} Moreover, in this experiment, we consider the performance of ADMM (sigmoid) for the aforementioned six different initialization schemes. The experimental results are shown in Figure \ref{Fig:prodgate-para-admm}(b). It can be observed in  Figure \ref{Fig:prodgate-para-admm}(b) that all the initial schemes are generally effective  in yielding an ADMM solver with high precision. Among these effective initialization schemes, the LeCun type of initializations perform slightly worse than the others. This, in some extent, also implies that the proposed ADMM is usually stable to initial schemes.

\subsection{Learning  $L_1$ radial function}
\label{sc:simul-L1radial}

In this subsection, we consider the performance of the   ADMM-sigmoid pair for learning a two-dimensional $L_1$ radial function, i.e., $f(x)=(\|x\|_1-1)_+:=\max\{0,\|x\|_1-1\}$ for $x\in [r, (1+\epsilon)r]\times [r, (1+\epsilon)r]$ for some $r>0$ and $\epsilon>0$. Such an $L_1$ radial function was particularly considered in \citep{Safran-Shamir17}. In our experiments, we let $\epsilon = 1/2$ and $r = 1-\frac{\epsilon}{2}$ in light of the theoretical studies in \citep{Safran-Shamir17}. Different from the approximation tasks in Sections \ref{sc:simul-sqfun} and \ref{sc:simul-prodgate},   samples generated for the learning task include both  training and test samples, where  training samples are commonly generated with certain noise and the test samples are clean data. In these experiments, we consider  Gaussian noises with different variances.

{\bf A. Learning performance of ADMM.} Optimal test errors of different algorithms for learning the $L_1$ radial function are presented in Table \ref{Tab:L1radial-result}, where the variance of Gaussian noise added into the training samples is $0.1$. The associated test errors of these algorithms with respect to the depth of neural networks are presented in Figure \ref{Fig:L1radial-depth}. From Table \ref{Tab:L1radial-result}, the considered $L_1$ radial function can be well learned by both ADMM and SGD type methods. Specifically, the performance of the proposed ADMM is slightly better than   SGD type methods. In particular, the optimal depth of deep sigmoid nets trained by ADMM is only 4, which is much less than those of deep ReLU nets trained by SGD type methods. Under optimal network structures, the running time of the suggested ADMM-pair  is slightly less than that of SGD type methods for  deep ReLU nets, due to the less depth of  deep sigmoid nets. According to Figure \ref{Fig:L1radial-depth}(b), the proposed ADMM performs better than SGD for  training deep sigmoid nets, and as the depth increasing, the performance of SGD gets worse possibly due to the vanishing gradient issue, while our suggested ADMM can alleviate the issue of vanishing gradient and thus achieve better and better performance in general as the depth increases in our considered range of depth, i.e., $\{1,2,3,4,5\}$.

\begin{table*}
\caption{Performance of different algorithms for learning $L_1$ radial function with $0.1$ Gaussian noise.}
\footnotesize
\begin{center}
\begin{tabular}{|c|c|c|c|c|c|}\hline
Algorithm        & SGD (ReLU)       & SGDM (ReLU)      & Adam (ReLU)      & SGD (sigmoid)     & ADMM (sigmoid) \\\hline
Test Error    & 2.48e-5(9.74e-6) &2.26e-5(7.88e-6)  &2.16e-5(8.53e-6)  &4.58e-5(1.59e-5)   &{\bf 1.69e-5(4.34e-6)} \\\hline
Run Time (s)     & 23.24            &18.23             &14.66             &1.68              &10.36 \\\hline
(depth, width)   &(17,50)           &(13,50)           &(10,50)           & (1,20)           &(4,10) \\\hline
\end{tabular}
\end{center}
\label{Tab:L1radial-result}
\end{table*}

\begin{figure}[!t]
\begin{minipage}[b]{0.49\linewidth}
\centering
\includegraphics*[scale=.48]{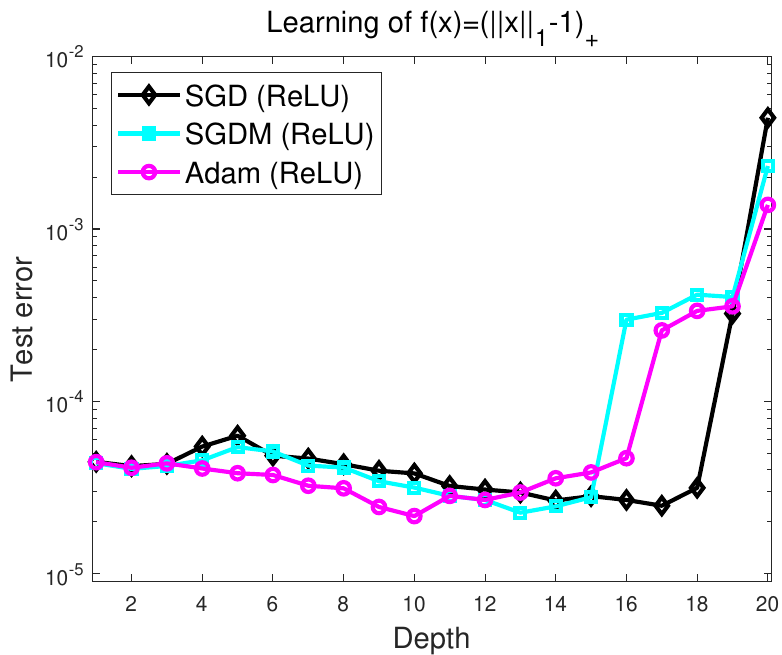}
\centerline{{\small (a) Deep ReLU nets}}
\end{minipage}
\hfill
\begin{minipage}[b]{0.49\linewidth}
\centering
\includegraphics*[scale=.48]{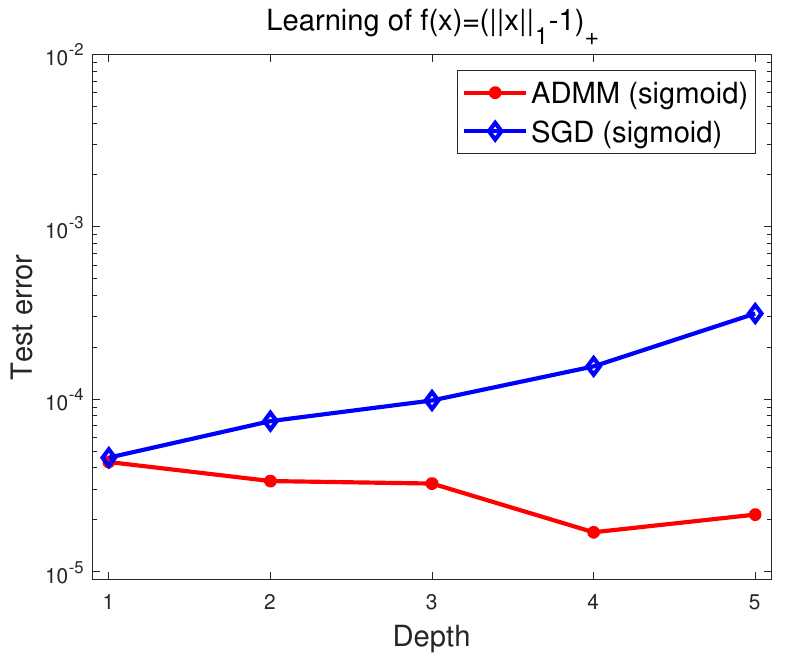}
\centerline{{\small (b) Deep sigmoid nets}}
\end{minipage}
\hfill
\caption{Effect of the depth of neural networks in learning the $L_1$ radial function.
}
\label{Fig:L1radial-depth}
\end{figure}

{\bf B. Effect of parameters and initialization.} Under the  optimal neural network structures specified in Table \ref{Tab:L1radial-result}, we consider the effect of  parameters, i.e., $(\lambda, \beta)$ for ADMM, as well as the effect of the initialization schemes for both ADMM and SGD type methods. The numerical results are shown in Figure \ref{Fig:L1radial-para-init}. From Figure \ref{Fig:L1radial-para-init}(a), we can observe that the specific parametric setting, i.e., $\lambda = 10^{-6}$ and $\beta=1$, is also empirically   effective in  learning  $L_1$ radial function. By Figure \ref{Fig:L1radial-para-init}(b), the proposed ADMM performs well for all the concerned random initialization schemes.

\begin{figure}[!t]
\begin{minipage}[b]{0.49\linewidth}
\centering
\includegraphics*[scale=.5]{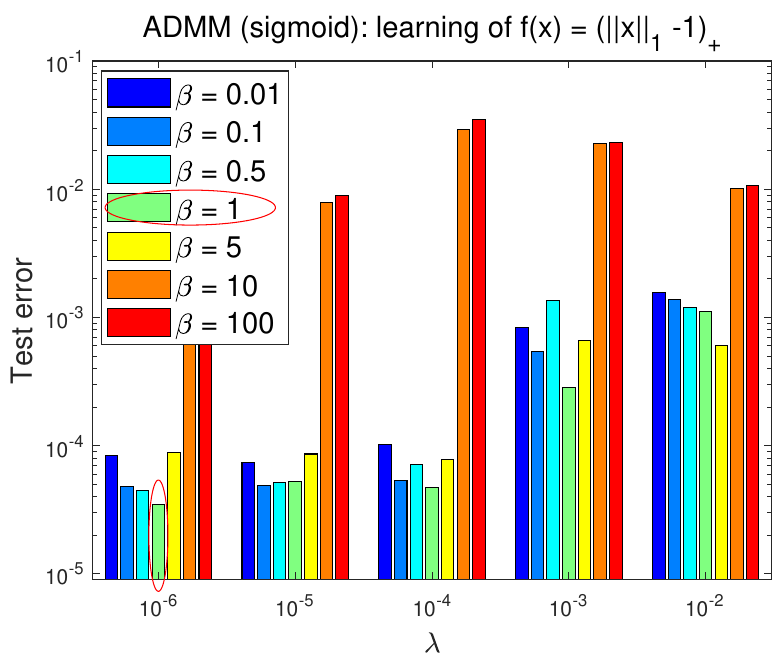}
\centerline{{\small (a) Effect of parameters of ADMM}}
\end{minipage}
\hfill
\begin{minipage}[b]{0.49\linewidth}
\centering
\includegraphics*[scale=.5]{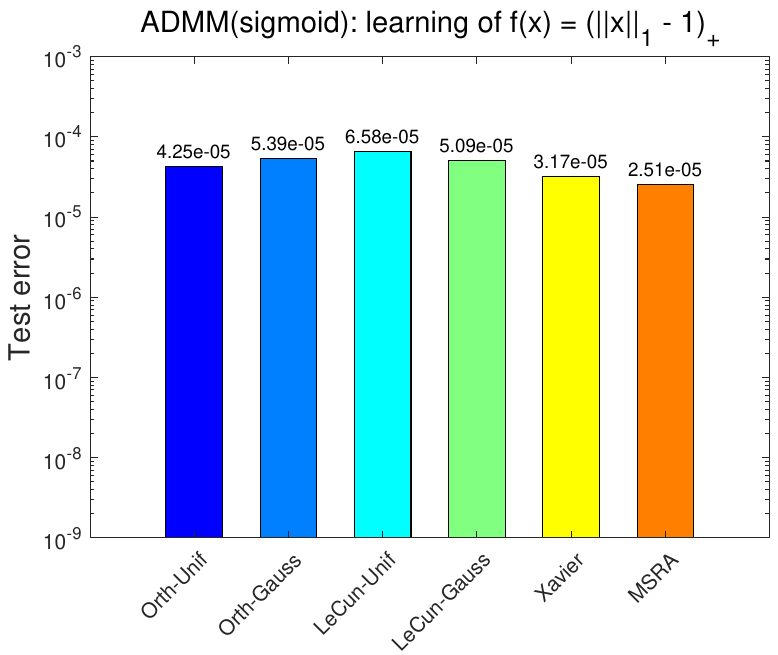}
\centerline{{\small (b) Stability of initialization}}
\end{minipage}
\hfill
\caption{Effect of parameters and initial schemes for ADMM in learning the $L_1$ radial function.
}
\label{Fig:L1radial-para-init}
\end{figure}

{\bf C. Robustness to the noise.} Moreover, we consider the performance of the proposed ADMM for  training data with different levels of noise. Specifically, under the optimal parameters specified in Table \ref{Tab:L1radial-result}, we consider several levels of noise, where the variance of Gaussian noise varies from   $\{0.1, 0.3, \ldots, 1.5\}$. Trends of  training  and test errors are presented in Figure \ref{Fig:L1radial-noise}(a) and (b) respectively. From Figure \ref{Fig:L1radial-noise}(a),  the proposed ADMM is generally trained well in the sense that the training error almost fits the true noise level. In this case, we can observe from Figure \ref{Fig:L1radial-noise}(b) that the proposed ADMM is robustness to the noise in the sense that the test error increases much slower than the increasing of the variance of Gaussian noise.

\begin{figure}[!t]
\begin{minipage}[b]{0.49\linewidth}
\centering
\includegraphics*[scale=.48]{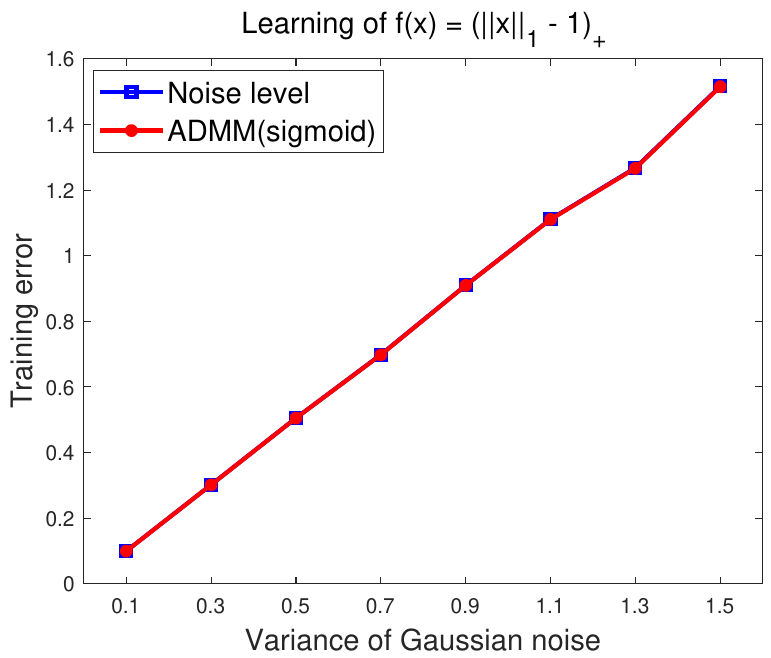}
\centerline{{\small (a) Training error}}
\end{minipage}
\hfill
\begin{minipage}[b]{0.49\linewidth}
\centering
\includegraphics*[scale=.48]{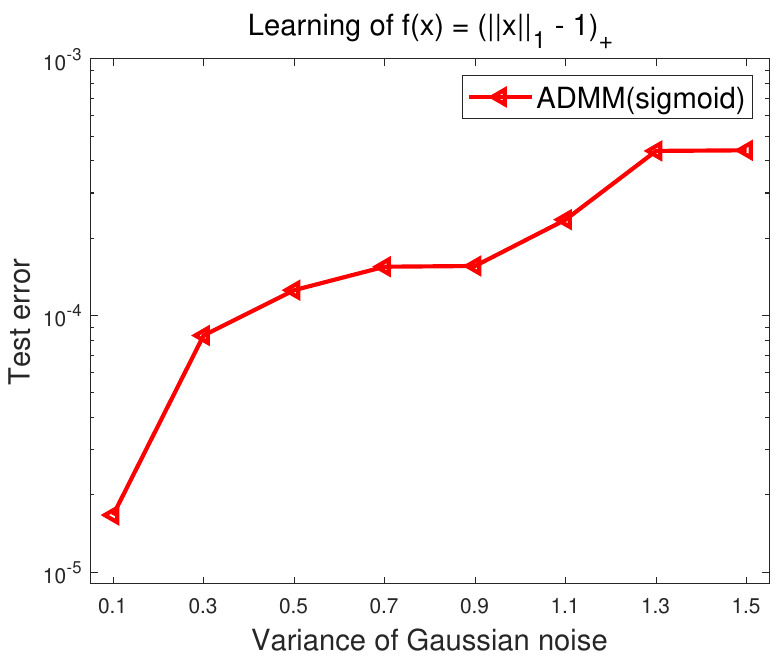}
\centerline{{\small (b) Test error}}
\end{minipage}
\hfill
\caption{Robustness of the proposed ADMM to the noise in learning $L_1$ radial function.
}
\label{Fig:L1radial-noise}
\end{figure}

\subsection{Learning  $L_2$ radial function}
\label{sc:simul-L2radial}

In this subsection, we consider to learn certain smooth $L_2$ radial function that frequently reflects the rotation-invariance feature in deep learning \citep{Chui-Lin-Zhou19}. Specifically, we adopt a two-dimensional smooth $L_2$ radial function, i.e., $f(x) = g(|x|^2)$, where $x\in [-1,1]\times [-1,1]$, $|x|^2 :=\sum_{i=1}^2 x_i^2$, and $g(t)= (1-t)_+^5 (8t^2+5t+1)$ on $\mathbb{R}$ is some Wendland function  \citep{Lin-CFN2019}. Except the target function $f$, the experimental settings in these experiments are similar to those in Section \ref{sc:simul-L1radial}.

{\bf A. Learning performance of ADMM.} The test error of the considered algorithms in learning such a smooth $L_2$ radial function is presented in Table \ref{Tab:L2radial-result}, while  trends of test errors with respect to the depth   are shown in Figure \ref{Fig:L2radial-depth}. By Table \ref{Tab:L2radial-result}, the considered smooth $L_2$ radial function can be learned by the proposed ADMM well with a  small test error. Specifically, in terms of test error, the performance of the   ADMM-sigmoid pair is slightly better than that of SGD type methods for  deep ReLU nets, and the optimal depth of deep sigmoid nets required by ADMM is much smaller than those of deep ReLU nets required by the concerned SGD type methods. Due to less depth, the running time of ADMM is less than that of the concerned SGD type methods for  deep ReLU nets under the optimal settings of neural networks.
Moreover, from Figure \ref{Fig:L2radial-depth}(a), a deeper ReLU network with about 10 layers is generally required to learn the $L_2$ radial function with a good test error, while from Figure \ref{Fig:L2radial-depth}(b), the depth of deep sigmoid nets trained by ADMM can be much smaller (i.e., about 5) to yield a good test error.

\begin{table*}
\caption{Test errors of different algorithms for learning $L_2$ radial function with $0.1$ Gaussian noise.}
\footnotesize
\begin{center}
\begin{tabular}{|c|c|c|c|c|c|}\hline
Algorithm        & SGD (ReLU)       & SGDM (ReLU)      & Adam (ReLU)      & SGD (sigmoid)     & ADMM (sigmoid) \\\hline
Test Error    &1.68e-5(6.43e-6)  &1.21e-5(5.25e-6)  &1.02e-5(4.88e-6)  &9.33e-5(1.42e-5)   &{\bf 9.28e-6(1.01e-6)} \\\hline
Run Time (s)     & 104.52            & 116.44            & 108.49            & 18.13             & 47.36\\\hline
(depth, width)   & (16,300)          & (12,400)          & (11,400)          & (4,200)          &(5,300) \\\hline
\end{tabular}
\end{center}
\label{Tab:L2radial-result}
\end{table*}

\begin{figure}[!t]
\begin{minipage}[b]{0.48\linewidth}
\centering
\includegraphics*[scale=.49]{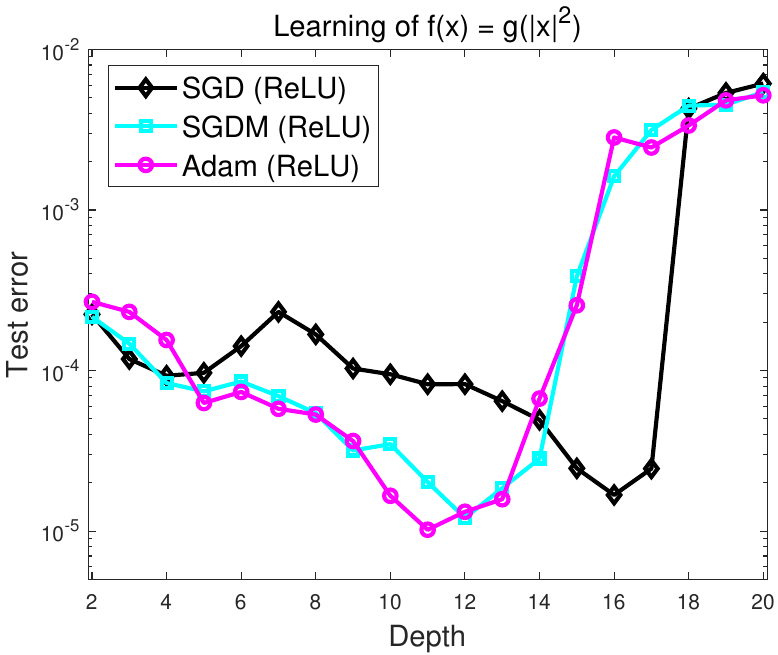}
\centerline{{\small (a) Deep ReLU nets}}
\end{minipage}
\hfill
\begin{minipage}[b]{0.48\linewidth}
\centering
\includegraphics*[scale=.49]{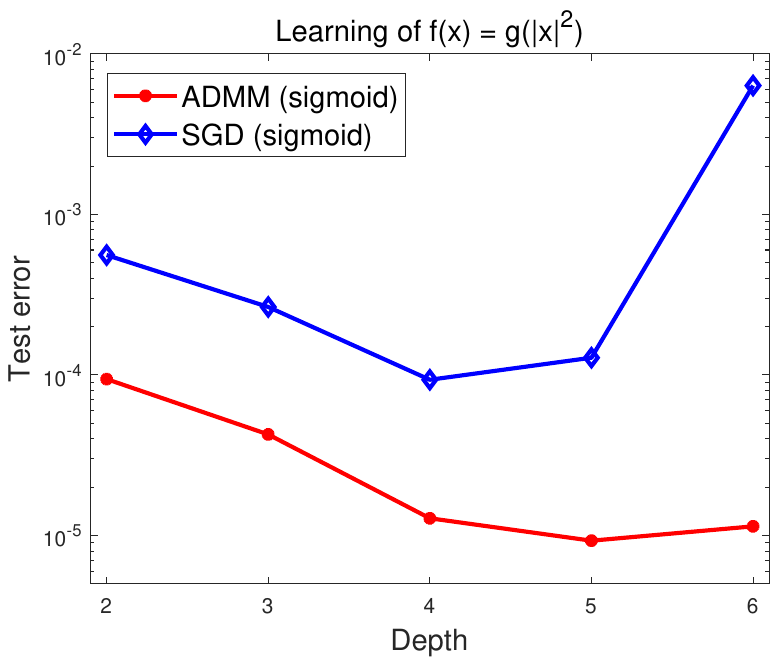}
\centerline{{\small (b) Deep sigmoid nets}}
\end{minipage}
\hfill
\caption{Effect of the depth of neural networks in learning the $L_2$ radial function.
}
\label{Fig:L2radial-depth}
\end{figure}

{\bf B. Effect of parameters and initialization.}
In this part, we consider the effect of parameters (i.e., $\lambda$ and $\beta$) of ADMM as well as the effect of initialization for learning the $L_2$ radial function in the optimal settings specified in Table \ref{Tab:L2radial-result}. The numerical results are presented in Figure \ref{Fig:L2radial-para-init}. From Figure \ref{Fig:L2radial-para-init}(a), the effect of parameters are similar to   previous three simulations and it can be observed that the specific settings, i.e., $\lambda = 10^{-6}$ and $\beta = 1$, are empirically  effective. From Figure \ref{Fig:L2radial-para-init}, we also observe that ADMM is effective to all the random initialization schemes.

\begin{figure}[!t]
\begin{minipage}[b]{0.49\linewidth}
\centering
\includegraphics*[scale=.5]{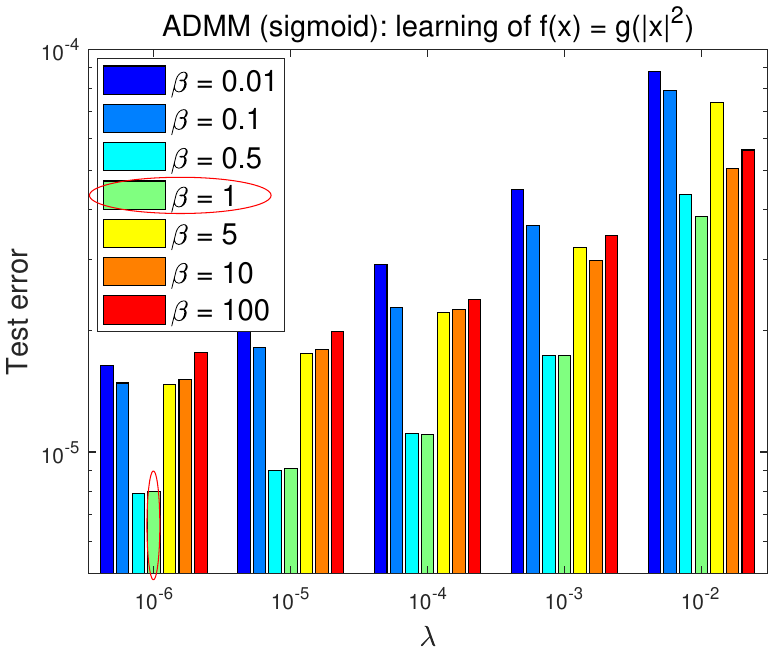}
\centerline{{\small (a) Effect of parameters of ADMM}}
\end{minipage}
\hfill
\begin{minipage}[b]{0.49\linewidth}
\centering
\includegraphics*[scale=.5]{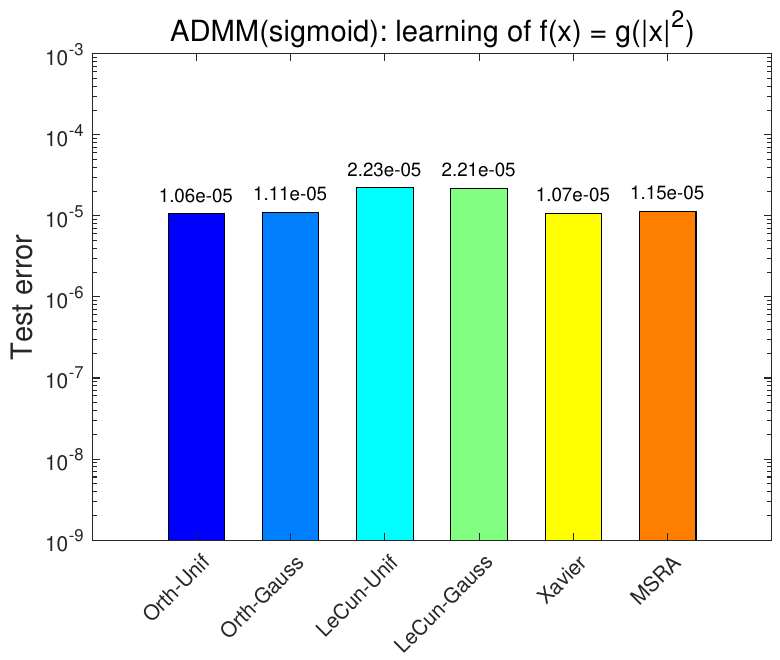}
\centerline{{\small (b) Stability of initialization}}
\end{minipage}
\hfill
\caption{Effect of parameters and initial schemes for ADMM in learning smooth $L_2$ radial function.
}
\label{Fig:L2radial-para-init}
\end{figure}

{\bf C. Robustness to noise.} Similar to the learning of $L_1$ radial function, we consider the performance of the proposed ADMM  for noisy training data with different levels of noise. Specifically, the variance of the Gaussian noise added into the training samples varies from   $\{0.1, 0.3, 0.5, 0.7, 0.9, 1.1\}$. Curves of training error and test error are shown respectively in Figure \ref{Fig:L2radial-noise}(a) and (b). From Figure \ref{Fig:L2radial-noise}, the behavior in learning $L_2$ radial function is  similar to that in learning $L_1$ radial function as shown in Figure \ref{Fig:L1radial-noise}. This demonstrates that the proposed ADMM is also robust to noise in learning such a smooth $L_2$ radial function.

\begin{figure}[!t]
\begin{minipage}[b]{0.49\linewidth}
\centering
\includegraphics*[scale=.48]{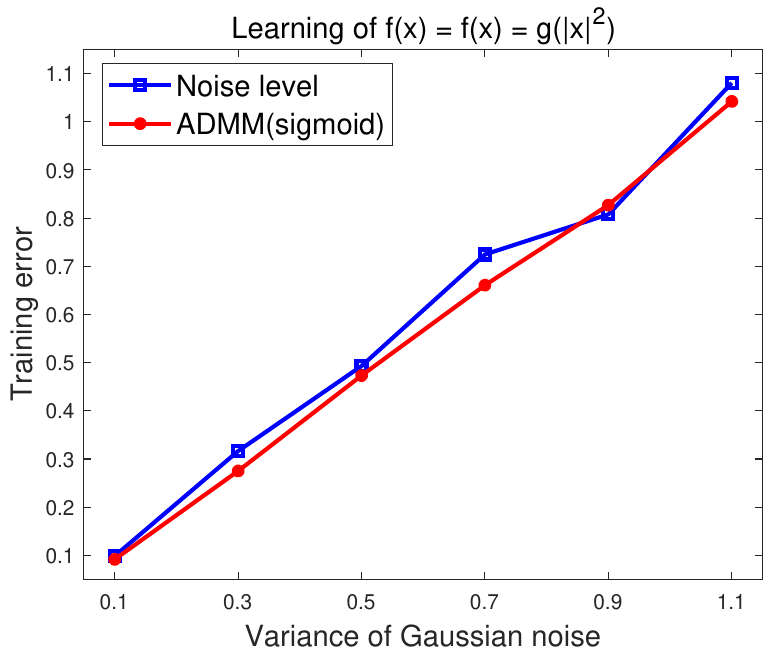}
\centerline{{\small (a) Training error}}
\end{minipage}
\hfill
\begin{minipage}[b]{0.49\linewidth}
\centering
\includegraphics*[scale=.48]{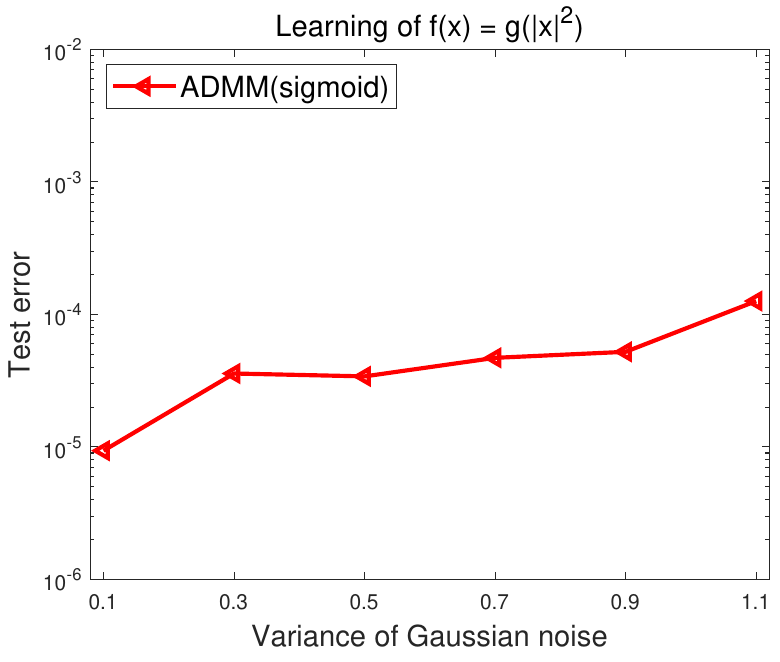}
\centerline{{\small (b) Test error}}
\end{minipage}
\hfill
\caption{Robustness of the proposed ADMM to the noise in learning $L_2$ radial function.
}
\label{Fig:L2radial-noise}
\end{figure}

\section{Real Data Experiments}
\label{sc:real-experiments}
In this section, we provide three real-data experiments over the earthquake intensity database, the extended Yale B (EYB) face recognition database and the PTB Diagnostic ECG database, to demonstrate the effectiveness of the proposed ADMM. We choose these three datasets since they can in some sense reflect certain features that can be well approximated by deep sigmoid nets, and thus, the benefits of the proposed ADMM can be embodied over these datasets. Specific experimental settings are presented in Table \ref{Tab:real-setting}, where the penalty parameter $\beta$ is empirically set as $1$ and the regularization parameter $\lambda$ is chosen via cross validation from the set $\{10^{-6}, 10^{-5}, 10^{-4}, 10^{-3}, 10^{-2}\}$ according to the previous studies of toy simulations.

\begin{table*}
\caption{Experimental settings for real-data experiments. The number of epochs for each case is set empirically to be 200.}
\tiny
\begin{center}
\begin{tabular}{|c|c|c|c|c|c|c|c|c|}\hline
\multirow{2}*{dataset} &(training size, & \multicolumn{2}{|c|}{Network structure} &\multicolumn{2}{|c|}{SGDs (sigmoid/ReLU),SGDM} & SGDM & Adam & ADMM \\
\cline{3-9}
~ &test size) & width & depth & batch size & learning rate & (momentum) & lr:0.001 & $(\lambda,\beta)$ \\\hline
Earthquake & (4173,4000) & $20\times [1:10]$ & [1:6]  & 100 & $0.1\times 0.95^k$, &~  & $\beta_1$: 0.9 & $\lambda \in 10^{[-6:-2]}$ \\
\cline{1-5}
EYB & (2432,2432) &$20\times [1:10]$ & 1 & 50 & per 10 epochs & 0.5 & $\beta_2$: 0.999 & $\beta=1$ \\
\cline{1-5}
PTB & (7000,7552) &$64\times [1:4]$ & [1:10] & 100 & ~ &~ &$\epsilon$: 1e-8 & ~ \\\hline
\end{tabular}
\end{center}
\label{Tab:real-setting}
\end{table*}

\subsection{Earthquake intensity dataset}
\label{sc:earthquake}

\textit{Earthquake Intensity Database} is from: \textit{https://www.ngdc.noaa.gov/hazard/intintro.shtml}. This database contains more than 157,000 reports on over 20,000 earthquakes that affected the United States from the year 1638 to 1985. For each record, the features include the geographic latitudes and longitudes of the epicentre and  ``reporting city'' (or, locality) where the Modified Mercalli Intensity (MMI) was observed, magnitudes (as a measure of seismic energy), and the hypocentral depth (positive downward) in kilometers from the surface. The output label of each record is measured by MMI, varying from 1 to 12 in integer. An illustration of the generation procedure of each earthquake record is shown in Figure \ref{Fig:earthquake-result}(a). In this paper, we transfer such multi-classification task into the binary classification since this database is very unbalanced (say, there is only one sample for the class with MMI being 1). Specifically, we set the labels lying in 1 to 4 as the positive class, while the other labels lying in 5 to 12 as the negative class, mainly according to the damage extent of the earthquake suggested by the referred website. Moreover, we removed those incomplete records with missing labels. After such preprocessing, there are total 8173 effective records, where the numbers of samples in positive and negative classes are respectively 5011 and 3162. We divide the total data set into the training and test sets randomly, where the training and test sample sizes are 4173 and 4000, respectively. Before training, we use  the \textit{z-scoring normalization} for each feature, that is, $\frac{x_i-\mu}{\sigma}$ with $\mu$ and $\sigma$ being respectively the mean and standard deviation of the $i$th feature $\{x_i\}$. The classification accuracies of all algorithms are shown in Table \ref{Tab:earthquake-result}. The effect of the depth of neural network, algorithmic parameters, and random initial schemes are shown in Figure \ref{Fig:earthquake-result} (b)-(d) respectively.

According to Table \ref{Tab:earthquake-result}, the performance of the proposed ADMM is comparable to the state-of-the-art methods in terms of test accuracy. Specifically, the proposed ADMM is slightly worse than Adam, and outperforms the other competitors in terms of test accuracy, while in terms of running time, the proposed ADMM is slightly faster than   Adam and SGDM under the associated optimal network settings, mainly because the optimal depth of the deep sigmoid nets trained by ADMM is less than those of deep ReLU nets trained by Adam and SGDM. Compared to the SGD counterpart for  deep sigmoid nets, the performance of the proposed ADMM is much better in terms of test accuracy. It can be observed from Figure \ref{Fig:earthquake-result}(b) that the vanilla SGD may suffer from the gradient vanishing/explosion issue when training a slightly deeper sigmoid nets (say, the depth is larger than 5) due to the saturation of the sigmoid activation, while the proposed ADMM can avoid such saturation and thus alleviate the gradient vanishing/explosion issue. From Figure \ref{Fig:earthquake-result}(c), the proposed ADMM with the default settings, i.e., $\lambda =$ 1e-6 and $\beta=1$ in general yields the best performance. Moreover, it can be observed from Figure \ref{Fig:earthquake-result}(d) that the proposed ADMM is stable to the commonly used initialization schemes under the optimal neural network structure specified in Table \ref{Tab:earthquake-result}.

\begin{table*}
\caption{Test accuracies (\%) of different algorithms for earthquake intensity database. The baseline of the test accuracy is $80.48\%$ \citep{Zeng20}.}
\footnotesize
\begin{center}
\begin{tabular}{|c|c|c|c|c|c|}\hline
Algorithm        & SGD (ReLU)       & SGDM (ReLU)      & Adam (ReLU)      & SGD (sigmoid)     & ADMM (sigmoid) \\\hline
Test Acc(\%)     &81.24(0.45)       &81.16(0.32)       &{\bf 81.31(0.36)}       &79.94(0.23)   &81.26(0.31) \\\hline
Run Time (s)     & 4.74            & 14.20            & 13.24           & 2.60             & 12.64\\\hline
(depth, width)   & (2,120)          & (5,140)          & (4,80)          & (1,100)          &(3,80) \\\hline
\end{tabular}
\end{center}
\label{Tab:earthquake-result}
\end{table*}

\begin{figure}[!t]
\begin{minipage}[b]{0.49\linewidth}
\centering
\vspace{-.5cm}
\includegraphics*[scale=.38]{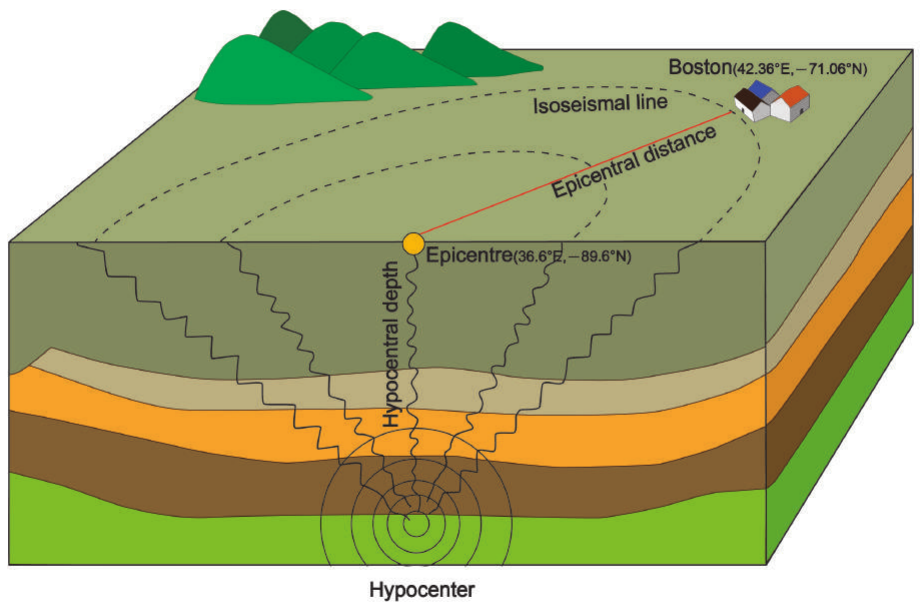}
\centerline{{\small (a) An illustration of earthquake data}}
\end{minipage}
\hfill
\begin{minipage}[b]{0.49\linewidth}
\centering
\includegraphics*[scale=.48]{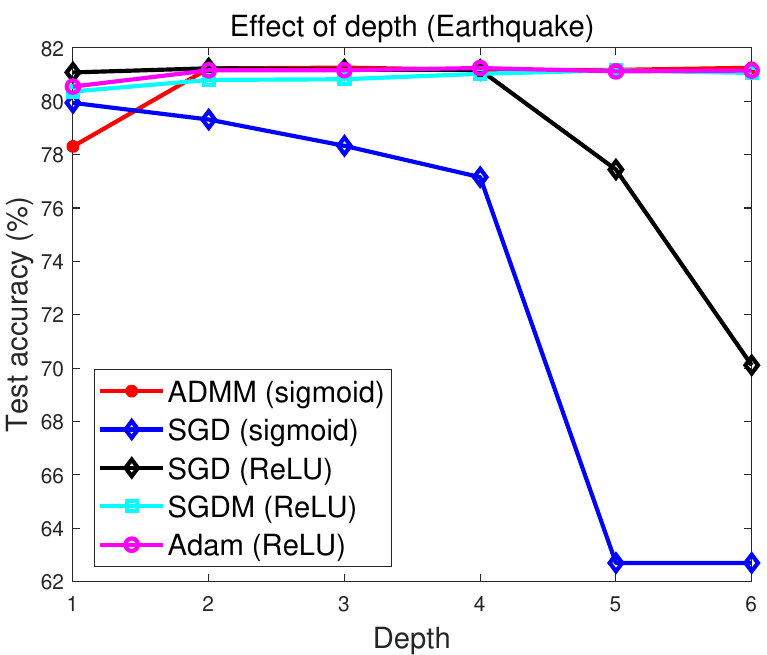}
\centerline{{\small (b) Effect of depth of NNs}}
\end{minipage}
\hfill
\begin{minipage}[b]{0.49\linewidth}
\centering
\vspace{.5cm}
\includegraphics*[scale=.48]{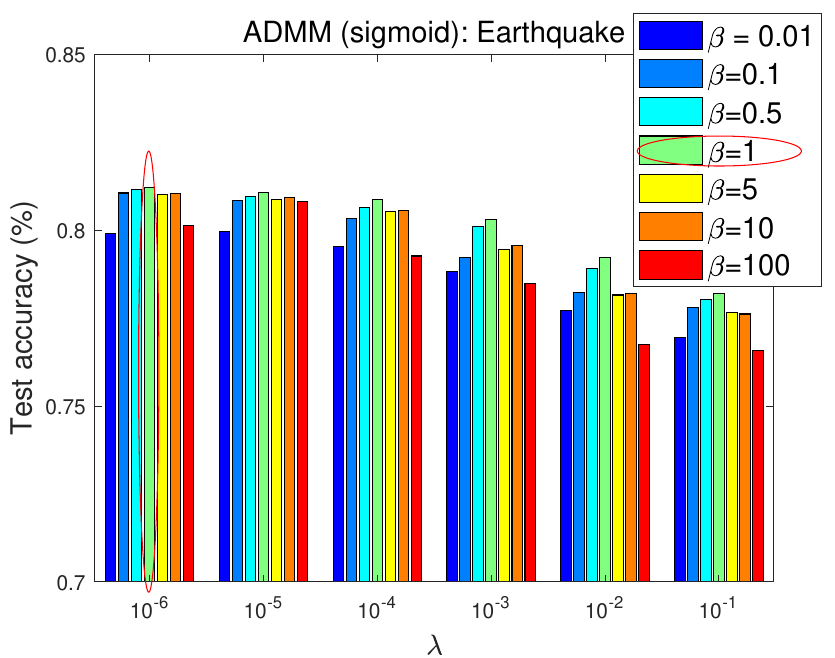}
\centerline{{\small (c) Effect of parameters}}
\end{minipage}
\hfill
\begin{minipage}[b]{0.49\linewidth}
\centering
\vspace{.5cm}
\includegraphics*[scale=.48]{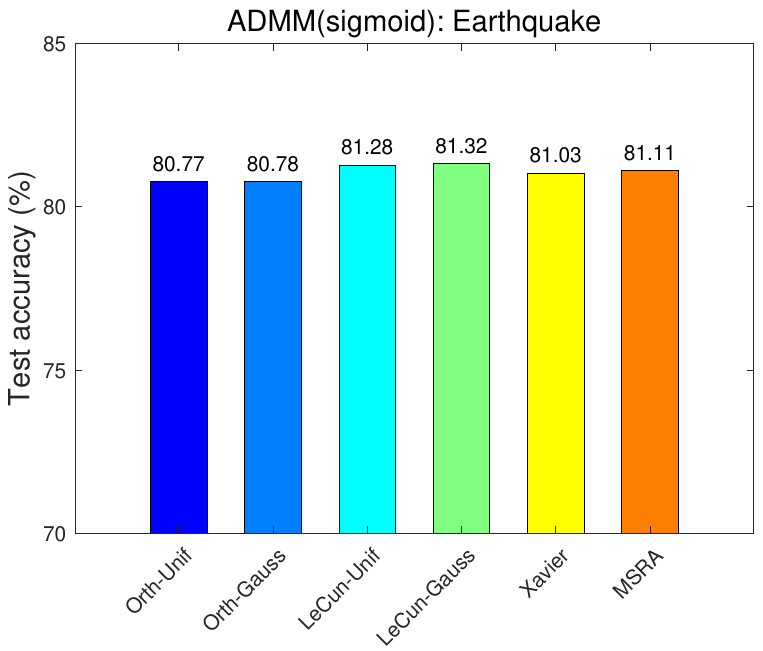}
\centerline{{\small (d) Stability to initial schemes}}
\end{minipage}
\hfill
\caption{Performance of ADMM in earthquake intensity data: (a) an illustration of the earthquake intensity data \citep{Zeng20}; (b) the effect of depth of the neural network for different algorithms; (c) the effect of algorithmic parameters for the proposed ADMM; (d) the stability of the proposed ADMM to different initial schemes.
}
\label{Fig:earthquake-result}
\end{figure}

\subsection{Extended Yale B face recognition database}
\label{sc:EYB}

In the extended Yale B (EYB) database,  well-known face recognition database \citep{Lee-EYB05}, there are in total 2432 images for 38 objects under 9 poses and 64 illumination conditions, where for each objective, there are 64 images. The pixel size of each image is $32\times32$. In our experiments, we randomly divide these 64 images for each objective into two equal parts, that is, one half of images are used for training while the rest half of images are used for testing. For each image, we normalize it via the z-scoring normalization. The specific experimental settings for this database can be found in Table \ref{Tab:real-setting}. Particularly, we empirically use a shallow neural network with depth one and various of widths, since such shallow neural network is good enough to extract the low-dimensional manifold feature of this face recognition data, as shown in Table \ref{Tab:EYB-result}. The effect of network structures and  stability of the proposed ADMM to initialization schemes are shown in Figure \ref{Fig:EYB-result}(a) and (b) respectively.

According to Table \ref{Tab:EYB-result}, the proposed ADMM achieves the state-of-the-art test accuracy (see, \cite{Lu-EYB20}) with a smaller width of the sigmoid nets when compared to the concerned competitors. From Figure \ref{Fig:EYB-result}, the proposed ADMM can achieve a very high test accuracy for most of the concerned widths of the networks and is stable to the commonly used random initial schemes.

\begin{table*}
\caption{Performance of different algorithms for extended Yale B database. The baseline of the test accuracy is about $96\%$ in \citep{Lu-EYB20}.}
\footnotesize
\begin{center}
\begin{tabular}{|c|c|c|c|c|c|}\hline
Algorithm        & SGD (ReLU)       & SGDM (ReLU)      & Adam (ReLU)      & SGD (sigmoid)     & ADMM (sigmoid) \\\hline
Test Acc(\%)     &98.84(0.28)       &97.18(0.28)       & 98.91(0.34)     &98.67(0.41)   &{\bf 98.93(0.43)} \\\hline
Run Time (s)     & 19.78            & 23.99            & 48.92           & 16.95             & 21.36\\\hline
(depth, width)   & (1,200)          & (1,200)          & (1,200)          & (1,140)          &(1,60) \\\hline
\end{tabular}
\end{center}
\label{Tab:EYB-result}
\end{table*}

\begin{figure}[!t]
\begin{minipage}[b]{0.49\linewidth}
\centering
\vspace{-.5cm}
\includegraphics*[scale=.48]{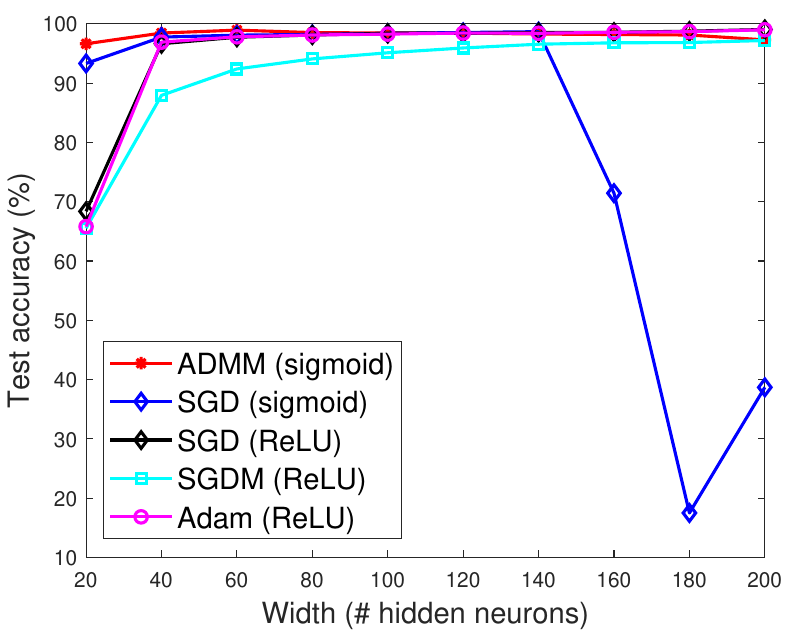}
\centerline{{\small (a) Effect of network structure.}}
\end{minipage}
\hfill
\begin{minipage}[b]{0.49\linewidth}
\centering
\includegraphics*[scale=.48]{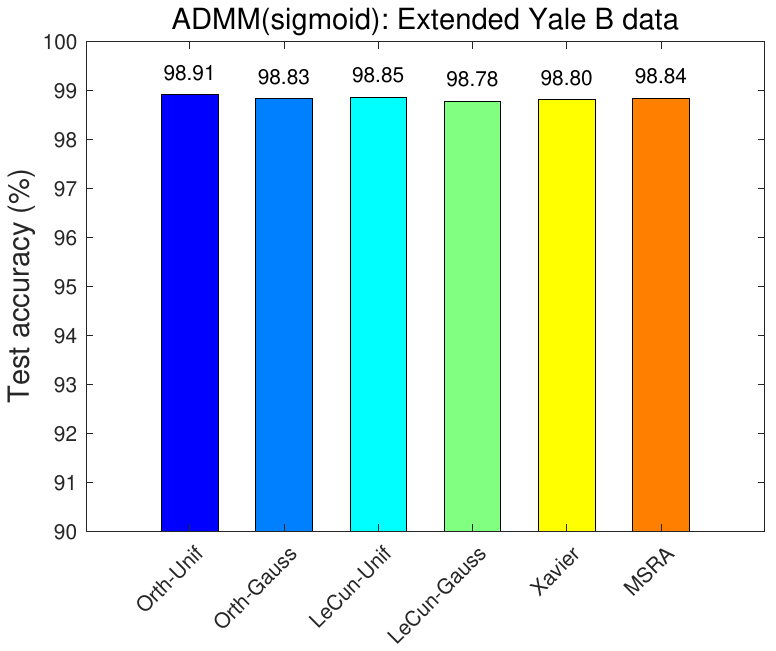}
\centerline{{\small (b) Stability to initial schemes of ADMM}}
\end{minipage}
\hfill
\caption{Performance of ADMM in extended Yale B database: (a) the effect of width for different algorithms; (b) the stability of the proposed ADMM to different random initial schemes.
}
\label{Fig:EYB-result}
\end{figure}

\subsection{PTB Diagnostic ECG database}
\label{sc:PTB}

An ECG is a 1D signal which is the result of recording the electrical activity of the heart using an electrode. It is one of popular tools that cardiologists use to diagnose heart anomalies and diseases. The PTB diagnostic ECG database is available at \url{https://github.com/CVxTz/ECG_Heartbeat_Classification} and was preprocessed by \citep{Kachuee-PTB}. There are 14,552 samples in total with 2 categories. The specific experimental settings for this database can be found in Table \ref{Tab:real-setting}. The experiment results of the proposed ADMM and concerned competitors are presented in Table \ref{Tab:PTB-result}. The effect of network structures and   stability of the proposed ADMM to initialization schemes are shown in Figure \ref{Fig:PTB-result}(a) and (b) respectively.

According to Table \ref{Tab:PTB-result}, the proposed ADMM achieves the state-of-the-art test accuracy (see, \cite{Kachuee-PTB}) with a less width of   sigmoid nets when compared to the concerned competitors. Specifically, the optimal depth of deep sigmoid nets trained by ADMM is 4, while those of deep ReLU nets trained respectively by SGD, SGDM and Adam are 8, 7, 7. This also verifies our previous claim on the advantage of deep sigmoid nets in feature representation. Due to less hidden layers, the proposed ADMM is slightly faster than the SGD competitors for  deep ReLU nets. From Figure \ref{Fig:PTB-result}(a), when the depth of deep sigmoid nets is larger than 8, the performance of all considered algorithms degrades much possibly due to the overfitting. From Figure \ref{Fig:PTB-result}(b), the proposed ADMM is stable to the commonly used random initial schemes under the optimal neural network setting as presented in Table \ref{Tab:PTB-result}.

\begin{table*}
\caption{Performance of different algorithms for PTB diagnostic ECG database. The baseline of the test accuracy is $99.20\%$ in \citep{Kachuee-PTB}.}
\footnotesize
\begin{center}
\begin{tabular}{|c|c|c|c|c|c|}\hline
Algorithm        & SGD (ReLU)       & SGDM (ReLU)      & Adam (ReLU)      & SGD (sigmoid)     & ADMM (sigmoid) \\\hline
Test Acc(\%)     &99.18(0.32)       &99.16(0.28)       & {\bf 99.25(0.25)}     &96.88(0.46)   &99.22(0.11) \\\hline
Run Time (s)     & 29.82            & 40.77            & 30.83            & 12.28             & 29.17\\\hline
(depth, width)   & (8,192)          & (7,192)          & (7,256)          & (3,256)           &(4,128) \\\hline
\end{tabular}
\end{center}
\label{Tab:PTB-result}
\end{table*}

\begin{figure}[!t]
\begin{minipage}[b]{0.49\linewidth}
\centering
\vspace{-.5cm}
\includegraphics*[scale=.48]{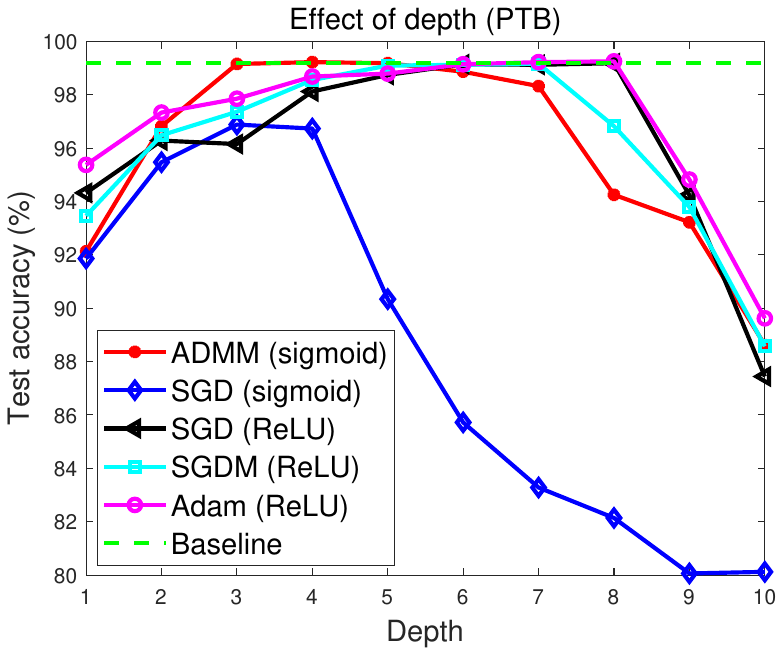}
\centerline{{\small (a) Effect of network structure.}}
\end{minipage}
\hfill
\begin{minipage}[b]{0.49\linewidth}
\centering
\includegraphics*[scale=.48]{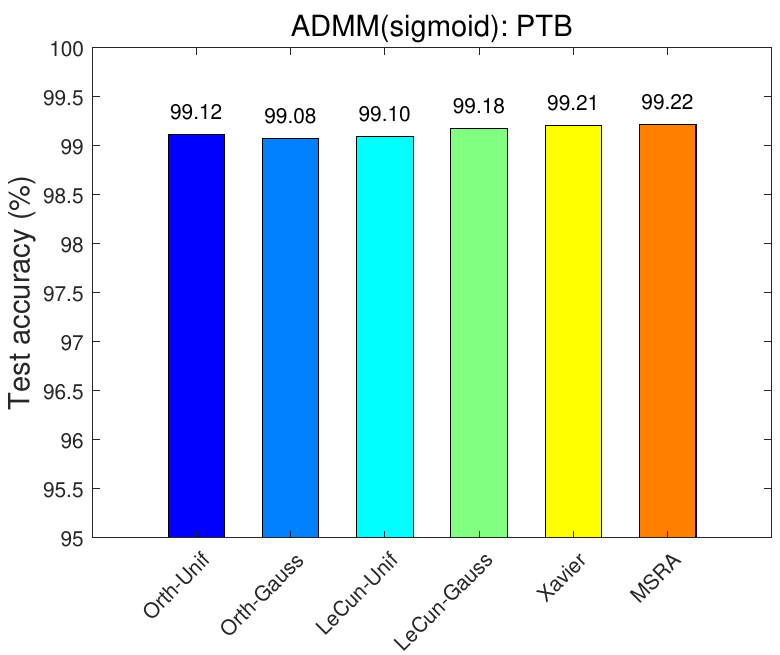}
\centerline{{\small (b) Stability to initial schemes of ADMM}}
\end{minipage}
\hfill
\caption{Performance of ADMM in PTB diagnostic ECG database: (a) the effect of depth for different algorithms; (b) the stability of the proposed ADMM to different initial schemes.
}
\label{Fig:PTB-result}
\end{figure}

\acks{We would like to thank Prof. Wotao Yin at UCLA and Dr. Yugen Yi at Jiangxi Normal University for their helpful discussions on this work. The work  is supported
 by   National Key R\&D Program of China (No.2020YFA0713900). The work of Jinshan Zeng is supported in part by the National Natural Science Foundation of China [Project No. 61977038] and by the Thousand Talents Plan of Jiangxi Province [Project NO. jxsq2019201124]. The work of Shao-Bo Lin is supported in part by the National Natural Science Foundation of China [Project No. 61876133]. This work of Yuan Yao is supported in part by Hong Kong Research Grant Council Project NO. RGC16308321 and 16303817, NSFC/RGC Joint Research Scheme N\_HKUST635/20, and ITF UIM/390. The work of Ding-Xuan Zhou is supported partially by the Research Grants Council of Hong Kong [Project No. CityU 11307319], Laboratory for AI-Powered Financial Technologies  and by the Hong Kong Institute for Data Science.   This research made use of the computing resources of the X-GPU cluster supported by the Hong Kong Research Grant Council Collaborative Research Fund: C6021-19EF.
}

\appendix

\section{Proof of Theorem \ref{theorem:app}}\label{Sec.Proof.th1}

To prove Theorem \ref{theorem:app}, we need the following ``product-gate'' for shallow sigmoid nets, which can be found  in \citep[Proposition 1]{Chui-Lin-Zhou19}.

\begin{lemma}\label{Lemma:sigmoid-product-gate}
Let $M>0$.  For any  $\nu\in(0,1)$  there exists a shallow sigmoid net
$h^{prod}_{9,\nu}:\mathbb R^2\rightarrow\mathbb R$
with 9 free parameters bounded by  $ \mathcal O(\nu^{-6})$  such that for any $t,t'\in[-M,M]$,
$$
       |tt'- h^{prod}_{9,\nu}(t,t')|\leq
       \nu.
$$
\end{lemma}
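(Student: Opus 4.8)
The plan is to reduce the bivariate product gate to three univariate square gates by a polarization identity and then invoke Lemma~\ref{lemma:sig-square-gate}. The identity I would use is
\[
 tt' = \frac{1}{2}\bigl[(t+t')^2 - t^2 - (t')^2\bigr],
\]
which is just a rearrangement of $(t+t')^2 = t^2 + 2tt' + (t')^2$. The virtue of this three-term form (as opposed to the two-term version $tt'=\frac14[(t+t')^2-(t-t')^2]$) is that it involves exactly three squares, so assembling three copies of the $3$-parameter square gate lands precisely on the advertised budget of $9$ free parameters.

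First I would fix a common domain. Since $t,t'\in[-M,M]$ we have $t+t'\in[-2M,2M]$, so all three arguments lie in $[-2M,2M]$. Applying Lemma~\ref{lemma:sig-square-gate} on the interval $[-2M,2M]$ (i.e.\ with $2M$ in place of $M$) at a target accuracy $\varepsilon$ to be chosen, I obtain three shallow sigmoid nets $\mathcal N_3^{(1)},\mathcal N_3^{(2)},\mathcal N_3^{(3)}$, each with $3$ free parameters, satisfying $|s^2-\mathcal N_3^{(\ell)}(s)|\le\varepsilon$ for $s\in[-2M,2M]$. I then set
\[
 h^{prod}_{9,\nu}(t,t') := \frac{1}{2}\bigl[\mathcal N_3^{(1)}(t+t') - \mathcal N_3^{(2)}(t) - \mathcal N_3^{(3)}(t')\bigr].
\]
Because $t+t'$, $t$ and $t'$ are fixed linear forms in the input $(t,t')$, every hidden unit of the three nets is a sigmoid of an affine function of $(t,t')$ living in one and the same hidden layer, and the three outputs are combined linearly; hence $h^{prod}_{9,\nu}$ is genuinely a shallow (single-hidden-layer) sigmoid net from $\mathbb R^2$ to $\mathbb R$. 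The triangle inequality together with the identity gives
\[
 |tt' - h^{prod}_{9,\nu}(t,t')| \le \frac{1}{2}\bigl(\varepsilon+\varepsilon+\varepsilon\bigr)=\frac{3\varepsilon}{2},
\]
so choosing $\varepsilon = \frac{2\nu}{3}$ yields the desired bound $|tt'-h^{prod}_{9,\nu}(t,t')|\le\nu$.

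The parameter count is then immediate: the three square gates contribute $3\times 3 = 9$ tunable scalars. The remaining point, and the one I expect to be the only real obstacle, is the magnitude bound $\mathcal O(\nu^{-6})$. This does not follow from the qualitative form of Lemma~\ref{lemma:sig-square-gate} alone; it requires its quantitative refinement, e.g.\ realizing the square gate as a second divided difference $\frac{1}{\sigma''(\theta)w^2}\bigl[\sigma(ws+\theta)+\sigma(-ws+\theta)-2\sigma(\theta)\bigr]$ at a point $\theta$ with $\sigma''(\theta)\neq0$, and tracking how the inner scale $w$ and the outer normalizer $1/(\sigma''(\theta)w^2)$ must grow to force the truncation error below $\varepsilon$ on $[-2M,2M]$. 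Substituting $\varepsilon=\Theta(\nu)$ into that magnitude estimate is what produces the bound on the free parameters, and carefully matching the stated exponent $\nu^{-6}$ (rather than the milder power a crude divided-difference estimate would suggest) is the delicate bookkeeping I would still need to carry out; the structural verification and the error aggregation above are routine by comparison.
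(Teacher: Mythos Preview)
The paper does not give its own proof of this lemma; it simply quotes it from \citep[Proposition~1]{Chui-Lin-Zhou19}. Your reduction via the three-term polarization identity $tt'=\tfrac12[(t+t')^2-t^2-(t')^2]$ together with three copies of the $3$-parameter square gate of Lemma~\ref{lemma:sig-square-gate} is exactly the standard construction and is almost certainly what underlies the cited result: it explains the peculiar count ``$9$ free parameters'' as $3\times 3$, your error aggregation is correct, and your observation that the fixed linear preprocessings $t+t'$, $t$, $t'$ and the fixed outer coefficients $\pm\tfrac12$ do not introduce new free parameters is the right way to read the statement.

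You are also right that the only nontrivial leftover is the magnitude bound $\mathcal O(\nu^{-6})$, which is not contained in the bare statement of Lemma~\ref{lemma:sig-square-gate} and must come from the explicit divided-difference realization of the square gate. Since the present paper outsources that computation to \citep{Chui-Lin-Zhou19}, you would have to import the quantitative version from there (or redo the Taylor-remainder bookkeeping yourself) to close the argument; what you have written suffices for everything except pinning down that exponent.
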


Then, we can give the proof of Theorem \ref{theorem:app} as follows.
\begin{proof}[Proof of Theorem \ref{theorem:app}]
Let $\sigma_0(t)$ be the heaviside function, i.e., $\sigma_0(t)=\left\{\begin{array}{cc}
1,&t\geq0\\
0,&t<0.\end{array}\right.$ Then, $\sigma_{relu}(t)=t\sigma_0(t)$.
A direct computation yields $\sigma(0)=1/2$ and
$\sigma'(0)=1/4$.
For $0<\mu<1/2$, according to the Taylor formula
$$
           \sigma(\mu t)=\frac12+
            \frac{\mu t}4+ \int_{0}^{\mu t}(\sigma'(u)-\sigma'(0))du,
$$
we have
$$
    t= \frac{4}\mu\sigma(\mu t)-\frac2\mu-\frac4{\mu}\int_{0}^{\mu t}(\sigma'(u)-\sigma'(0))du.
$$
Therefore,
\begin{align*}
\left|t-\frac{4}\mu\sigma(\mu t)- \frac4\mu\sigma(0 \cdot t)\right|\leq \frac4{\mu}\int_{0}^{\mu t}|\sigma'(u)-\sigma'(0)|du
    \leq
    \frac4{\mu}\max_{v\geq0}|\sigma''(v)|\int_{0}^{\mu t}udu
    \leq 2\mu t^2.
\end{align*}
Denote
$$
     h^{linear}_{2,\mu}=\frac{4}\mu\sigma(\mu t)- \frac4\mu\sigma(0 \cdot t).
$$
Then for
$
    |t|\leq M
$, there holds
\begin{equation}\label{app-1}
    |t-h^{linear}_{2,\mu}(t)|\leq 2M_0^2\mu,
\end{equation}
where $M_0>0$ satisfying $M_0 +M_0^2 = M$.
This shows that $h^{linear}_{2,\mu}$ is a good approximation of $t$. On the other hand,
for $\epsilon, \tau>0$ and $A=\frac1\tau\log\frac1\epsilon$, we have
$$
     \sigma(At)=\frac{1}{1+e^{-At}}\leq\frac{1}{1+e^{A\tau}}\leq\epsilon,  \qquad t\leq-\tau
$$
and
$$
     |\sigma(At)-1|\leq \frac{e^{-A\tau}}{1+e^{-A\tau}}\leq e^{-A\tau}\leq\epsilon,\qquad t\geq \tau,
$$
showing
\begin{equation}\label{app-2}
   |\sigma(At)-\sigma_0(t)|\leq\epsilon,\qquad t\in[-M_0,-\tau]\cup[\tau,M_0].
\end{equation}
Since $|\sigma(At)|\leq 1$ and $|h^{linear}_{2,\mu}(t)|\leq |t|+2M_0^2\mu\leq M_0+M_0^2$ for $|t|\leq M_0$, we then utilize the ``product-gate'' exhibited in Lemma \ref{Lemma:sigmoid-product-gate} with $M=M_0+M_0^2$  to construct a deep sigmoid net with two hidden layers and at most 27 free parameters to approximate $\sigma_{relu}(t)$.
Define
$$
    h^{relu}_{9,2,\mu,\nu,A}(t)=h^{prod}_{9,\nu}\left(\sigma(At),h^{linear}_{2,\mu}(t)\right)
$$
for $t\in[-M_0,M_0]$.  We then have from Lemma \ref{Lemma:sigmoid-product-gate}, (\ref{app-1}) and (\ref{app-2}) that for any  $t\in[-M_0,-\tau]\cup[\tau,M_0]$
\begin{eqnarray*}
     &&|t\sigma_0(t)-h^{relu}_{9,2,\mu,\nu,A}(t)|\\
     &\leq&
     |t\sigma_0(t)-t\sigma(At)|+|t\sigma(At)-h^{linear}_{2,\mu}(t)\sigma(At)|
     +
     |h^{linear}_{2,\mu}(t)\sigma(At)-h^{relu}_{9,2,\mu,\nu,A}(t)|\\
     &\leq&
     M_0\epsilon+2M_0^2\mu+\nu
\end{eqnarray*}
and
$$
   | h^{relu}_{9,2,\mu,\nu,A}(t) |\leq  C\nu^{-6},\qquad\forall t\in[-M,M].
$$
Let $\epsilon=\mu=\nu=\varepsilon$. We have for any $0<\varepsilon<1/2$,
\begin{equation}\label{step.1}
    |\sigma_{relu}(t)-h^{relu}_{9,2,\mu,\nu,A}(t)|\leq (M_0+1+2M_0^2)\varepsilon,\qquad t\in[-M_0,-\tau]\cup[\tau,M_0]
\end{equation}
and the free parameters of $h^{relu}_{9,2,\mu,\nu,A}(t)$ are bounded by $\max\{\mathcal O(\frac1{\varepsilon^6}),\frac1\tau\log\frac1\varepsilon\}$.
Then, setting $\tau=\varepsilon^7$, we have
\begin{eqnarray*}
    &&\int_{-M}^M|\sigma_{relu}(t)-h^{relu}_{9,2,\mu,\nu,A}(t)|^pdt
    =\left(\int_{_M}^{-\tau}+\int_{-\tau}^\tau +\int_{\tau}^M\right)|\sigma_{relu}(t)-h^{relu}_{9,2,\mu,\nu,A}(t)|^p\\
    &\leq&
    2M\varepsilon+ 2C\tau\varepsilon^{-6}
    \leq 2(M+C)\varepsilon.
 \end{eqnarray*}
This completes the proof of Theorem \ref{theorem:app} by a simple scaling.
\end{proof}

\section{Generic convergence of ADMM without normalization}
\label{app:ADMM-GenericDNN}

In this appendix, we consider more general settings than that in Section \ref{sc:convergence-result}, where $X$ and $Y$ are not necessarily normalized with unit norms, and the numbers of neurons of hidden layers can be different, and the activation function $\sigma$ can be any twice differentiable activation satisfying the following assumptions.
\begin{assumption}
\label{Assump:activ-fun}
Let $\sigma: \mathbb{R} \rightarrow \mathbb{R}$ be a twice-differentiable bounded function with bounded first- and second-order derivatives, namely, there exist positive constants $L_0 (\geq \frac{1}{8}), L_1, L_2$ such that: $|\sigma(u)|\leq L_0$, $|\sigma'(u)|\leq L_1$ and $|\sigma''(u)|\leq L_2$ for any $u\in \mathbb{R}$.
Moreover,  $\sigma$ is either a real analytic \citep[Definition 1.1.5]{Krantz2002-real-analytic} or semialgebraic function \citep{Bochnak-semialgebraic1998}.
\end{assumption}
Besides the sigmoid activation, some typical activations satisfying Assumption \ref{Assump:activ-fun} include the sigmoid-type activations \citep{Lin-CFN2019} such as the hyperbolic tangent activation. For the abuse use of notation, in this appendix, we still use $\sigma$ as any activation satisfying Assumption \ref{Assump:activ-fun}.
Before presenting our main theorem under these generic settings, we define the following constants:
\begin{align}
&L_3:= 2(L_1^2 + L_2L_0 + L_2), \label{Eq:constant-L3}\\
&\gamma:= \max_{1\leq i \leq N} \|W_i^0\|_F, \label{Eq:gamma}\\
&d_{\min}:= \min_{1\leq i \leq N-1} d_i, \label{Eq:dmin}\\
&f_{\min}:= \sqrt{6}\left(\sqrt{3L_1}+2(L_0L_3)^{1/2}(nd_{\min})^{1/4}\right), \label{Eq:fmin}\\
&\alpha_3:= \left(\frac{f_{\min}}{L_1} \right)^2, \label{Eq:alpha3}\\
&C_3:= \max \left\{ \max_{0\leq j\leq N-2} \frac{2L_0 \sqrt{nd_{j+1}}}{\gamma^j}, \frac{\|Y\|_F}{(\beta_N-3)\gamma^{N-1}}\right\}, \label{Eq:C3}
\end{align}
\begin{align*}
&\tilde{\lambda}_i:= 3L_1C_3\beta_i\gamma^{i-3}(4C_3\gamma^{i-1}+L_0\sqrt{nd_i}) \left(1+\sqrt{\frac{6L_3C_3^2\gamma^{2i-2}}{L_1(4C_3\gamma^{i-1}+L_0\sqrt{nd_i})}} \right), \ 2\leq i\leq N-1,\\
&\bar{\lambda}:= \max_{2\leq i\leq N-1}\left\{ \tilde{\lambda}_i, \frac{1}{6} (1+3L_1^{-1}L_2L_3\gamma^{i-1})^2C_3^2\gamma^{2(i-2)}\beta_i \right\}, \nonumber\\
&\hat{\lambda}:= L_1\beta_1 \|X\|_F (4C_3 + L_0\sqrt{nd_1})\gamma^{-1}\left(1+\sqrt{\frac{2L_3C_3\|X\|_F\gamma}{L_1(4C_3 + L_0\sqrt{nd_1})}}\right).\nonumber
\end{align*}

With these defined constants, we impose some conditions on the the penalty parameters $\{\beta_i\}_{i=1}^N$ in the augmented Lagrangian,  the regularization parameter $\lambda$, the minimal number of hidden neurons $d_{\min}$, and the initializations of $\{V_i^0\}_{i=1}^N$ and $\{\Lambda_i^0\}_{i=1}^N$ as follows
\begin{align}
&\beta_N \geq 3.5, \label{Eq:cond-betaN}\\
&\frac{\beta_{N-1}}{\beta_N} \geq 16 \gamma^2, \label{Eq:cond-betaN-1-N}\\
&\frac{\beta_i}{\beta_{i+1}} \geq \max \left\{ 6\sqrt{N}(2L_1^2+(4L_3+L_2)C_3\gamma^i)\gamma^2, 6(\sqrt{3L_1}+\sqrt{2L_3C_3\gamma^i})^2\gamma^2\right\},
\ i=1,\ldots,N-2, \label{Eq:cond-betai-i+1}\\
&\lambda \geq \max \left\{12\beta_NC_3^2\gamma^{2N-4}, \bar{\lambda}, \hat{\lambda} \right\}, \label{Eq:cond-lambda}\\
&d_{\min} \geq \frac{\left(\max\left\{ \sqrt{24N+1}L_1-\sqrt{18L_1},0 \right\}\right)^4}{n(24L_0L_3)^2}, \label{Eq:cond-dmin}\\
&\|V_i^0\|_F \leq 3C_3 \gamma^{i-1}, \quad \|\Lambda_i^0\|_F \leq C_3\beta_i\gamma^{i-1}, \quad i=1,\ldots, N. \label{Eq:cond-initial}
\end{align}

Under these assumptions, we state the main convergence theorem of ADMM as follows.
\begin{theorem}
\label{Thm:global-generic}
Let Assumption \ref{Assump:activ-fun} hold.
Let $\{\cQ^k:=(\{W_i^k\}_{i=1}^N, \{V_i^k\}_{i=1}^N, \{\Lambda_i^k\}_{i=1}^N)\}$ be a sequence generated by Algorithm \ref{alg:ADMM}
with $h_i^k = {\mathbb L}(\|V_{i}^{k-1} - \beta_i^{-1}\Lambda_i^{k-1}\|_{\max})$ for $i= 1,\ldots, N-1$.
and $\mu_j^k = {\mathbb L}(\|V_{j+1}^{k-1} - \beta_{j+1}^{-1}\Lambda_{j+1}^{k-1}\|_{\max})$ for $j=1,\ldots, N-2$, where $\mathbb{L}(\cdot)$ is defined in \eqref{Eq:Lipschitz-constant}.
Assume that \eqref{Eq:cond-betaN}-\eqref{Eq:cond-initial} hold, then the following hold:
\begin{enumerate}
\item[(a)] $\{\cL(\cQ^k)\}$ is convergent.

\item[(b)] $\{\cQ^k\}$ converges to a stationary point $\cQ^*:= (\{W_i^*\}_{i=1}^N, \{V_i^*\}_{i=1}^N, \{\Lambda_i^*\}_{i=1}^N)$ of $\cL$, which is also a KKT point \eqref{Eq:kkt-cond} of problem \eqref{Eq:dnn-L2-admm-reg}, implying $\{W_i^*\}_{i=1}^N$ is a stationary point of problem \eqref{Eq:dnn-L2-org} with $\lambda' = 2\lambda/n$.

\item[(c)] $\frac{1}{K}\sum_{k=1}^K \|\nabla \cL(\cQ^k)\|_F^2 \rightarrow 0$ at a ${\cal O}(\frac{1}{K})$ rate.
\end{enumerate}
\end{theorem}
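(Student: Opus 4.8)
The plan is to derive Theorem~\ref{Thm:global-generic} as a consequence of the convergence of the auxiliary sequence $\{\hcQ^k\}$ (Theorem~\ref{Thm:globconv-hatQk}), which itself is obtained by fitting the ADMM iteration into the abstract convergence framework of \cite[Theorem 2.9]{Attouch2013} for descent methods on Kurdyka--\L ojasiewicz functions. The central object is the Lyapunov function $\hcL$ of \eqref{Eq:def-hatL}, which augments the augmented Lagrangian $\cL$ by the proximal terms $\sum_{i=1}^N \xi_i\|V_i^k-\hat V_i^k\|_F^2$ with $\hat V_i^k=V_i^{k-1}$; these terms are precisely what keeps the local linear approximations \eqref{Eq:Hk-sigmoid}--\eqref{Eq:Mk-sigmoid} faithful, since they quantitatively pin successive $V$-iterates together.

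\emph{Boundedness first.} Contrary to the usual order, I would establish boundedness of $\{\cQ^k\}$ (Lemma~\ref{Lemm:boundedness-seq}) \emph{before} any descent estimate, by an induction on $k$ using the closed-form subproblem solutions (Lemma~\ref{Lemm:dual-expressed-primal}), the initialization bounds \eqref{Eq:cond-initial}, and the standing conditions \eqref{Eq:cond-betaN}--\eqref{Eq:cond-dmin}: one shows $\|W_i^k\|_F,\|V_i^k\|_F,\|\Lambda_i^k\|_F$ stay within a priori envelopes of size ${\cal O}(C_3\gamma^{i-1})$ (respectively ${\cal O}(C_3\beta_i\gamma^{i-1})$) for all $k$. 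The exponential growth of the $\beta_i$ and the lower bound on $\lambda$ exist exactly to make this induction close. Next I would prove the one-step progress lemma (Lemma~\ref{Lemm:descent-two-iterates}), decomposing $\hcL(\hcQ^{k})-\hcL(\hcQ^{k-1})$ into nonpositive primal-descent contributions from each $W_i$- and $V_j$-update (from strong convexity of the prox-linearized subproblems, with moduli governed by $\lambda$, $\beta_i$, $h_i^k$, $\mu_j^k$) plus a dual-ascent term $\sum_{i=1}^N \beta_i^{-1}\|\Lambda_i^k-\Lambda_i^{k-1}\|_F^2$.

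\emph{Dual controlled by primal.} The heart of the argument is Lemma~\ref{Lemm:dual-controlled-primal}. Because the update order is backward-then-forward, the multiplier recursion \eqref{Eq:Lambdak} can be rewritten through the optimality conditions of the just-performed $V_i$-updates, so that $\Lambda_i^k-\Lambda_i^{k-1}$ is expressible in terms of $V_i^k-V_i^{k-1}$, $V_i^{k-1}-V_i^{k-2}$ and differences of later weight blocks; using boundedness together with the Lipschitz continuity of $\sigma$ and $\sigma'$ (Assumption~\ref{Assump:activ-fun}), each $\beta_i^{-1}\|\Lambda_i^k-\Lambda_i^{k-1}\|_F^2$ is bounded by a constant times such consecutive-difference squares. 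Choosing the $\xi_i$ large enough and exploiting the separation of scales between consecutive $\beta_i$ forced by \eqref{Eq:cond-betaN-1-N}--\eqref{Eq:cond-betai-i+1}, the dual-ascent terms are absorbed into the primal-descent plus proximal terms, giving the sufficient descent inequality
\[
\hcL(\hcQ^{k-1})-\hcL(\hcQ^{k}) \;\geq\; a\sum_{i=1}^N\left(\|W_i^k-W_i^{k-1}\|_F^2+\|V_i^k-V_i^{k-1}\|_F^2\right)
\]
for some $a>0$ (Lemma~\ref{Lemm:suff-descent}). I expect this absorption --- making the constants in \eqref{Eq:cond-betaN}--\eqref{Eq:cond-dmin} genuinely close the estimate in the presence of nonlinear constraints, where no ``special'' variable block in the sense of \cite{Wang-ADMM2018,Gao-Goldfarb-ADMM20} is available --- to be the main obstacle and the most delicate bookkeeping.

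\emph{Concluding via the KL framework and transfer.} Given sufficient descent I would (i) prove the relative-error bound $\|\nabla\hcL(\hcQ^k)\|_F\le b\sum_{i=1}^N(\|W_i^k-W_i^{k-1}\|_F+\|V_i^k-V_i^{k-1}\|_F)$ (Lemma~\ref{Lemm:bound-grad}), again from boundedness and Lipschitzness of $\sigma,\sigma'$; (ii) verify the Kurdyka--\L ojasiewicz inequality for $\hcL$ (Lemma~\ref{Lemm:KL-property}) --- here Assumption~\ref{Assump:activ-fun}, that $\sigma$ is real analytic or semialgebraic, makes $\hcL$, being built from polynomials composed with $\sigma$, a KL function --- together with continuity of $\hcL$; (iii) apply \cite[Theorem 2.9]{Attouch2013} to conclude that $\hcL(\hcQ^k)$ converges, $\sum_k\|\hcQ^{k}-\hcQ^{k-1}\|_F<\infty$ so that $\hcQ^k\to\hcQ^*$ with $\hcQ^*$ stationary for $\hcL$, and $\frac1K\sum_{k=1}^K\|\nabla\hcL(\hcQ^k)\|_F^2={\cal O}(1/K)$. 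Finally I would transfer to $\cQ^k$: since $\hat V_i^k=V_i^{k-1}$ and $V_i^k-V_i^{k-1}\to0$, the proximal terms vanish, hence $\cL(\cQ^k)\to\hcL(\hcQ^*)$, $\cQ^k\to\cQ^*$, and $\nabla\cL(\cQ^k)\to0$ at the same rate; reading off the first-order conditions of $\cL$ at $\cQ^*$ yields exactly the KKT system \eqref{Eq:kkt-cond} of \eqref{Eq:dnn-L2-admm-reg}, and substituting the feasibility identities $V_i^*=\sigma(W_i^*V_{i-1}^*)$, $V_N^*=W_N^*V_{N-1}^*$ and the multiplier equations back into the stationarity conditions for $\{W_i^*\}$ shows via the chain rule that $\{W_i^*\}$ is a stationary point of \eqref{Eq:dnn-L2-org} with $\lambda'=2\lambda/n$. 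This gives (b), and (a) and (c) are the transferred statements.
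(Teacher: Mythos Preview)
Your proposal is correct and follows essentially the same route as the paper: boundedness first by induction (Lemma~\ref{Lemm:boundedness-seq}), then one-step progress, dual-bounded-by-primal, and sufficient descent along $\hcL$, then relative error and KL, then \cite[Theorem 2.9]{Attouch2013} for $\{\hcQ^k\}$, and finally transfer to $\{\cQ^k\}$. One small imprecision: your relative-error bound as written omits the $\|V_i^{k-1}-V_i^{k-2}\|_F$ terms, but these are unavoidable (they arise already in Lemma~\ref{Lemm:dual-controlled-primal} for $\|\Lambda_i^k-\Lambda_i^{k-1}\|_F$); the paper handles this by bounding $\|\nabla\hcL(\hcQ^k)\|_F$ by $\hat b\,\|\hcQ^k-\hcQ^{k-1}\|_F$, where the auxiliary block $\hat V_i^k=V_i^{k-1}$ absorbs exactly those lagged differences.
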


Theorem \ref{Thm:Conv-ADMM-sigmoid} presented in the context is a special case of Theorem \ref{Thm:global-generic} with $\gamma = 1$, $\|X\|_F = \|Y\|_F =1$, $\|W_i^0\|_F = 1, \ i=1,\ldots, N$, and the initialization strategy \eqref{Eq:initialization-admm}.
Actually, the initialization strategy \eqref{Eq:initialization-admm} satisfies \eqref{Eq:cond-initial} shown as follows:
\begin{align}
&\|V_j^0\|_F \leq L_0\sqrt{nd_j} \leq \frac{1}{2}C_3\gamma^{j-1}, \ j=1, \ldots, N-1, \label{Eq:Vj0}\\
&\|V_N^0\|_F \leq \gamma\cdot \frac{1}{2}C_3\gamma^{N-2} = \frac{1}{2}C_3 \gamma^{N-1}, \label{Eq:VN0}\\
&\|\Lambda_i^0\|_F =0, \ i=1, \ldots, N, \nonumber
\end{align}
where the first inequality in \eqref{Eq:Vj0} holds by the boundedness of activation, and the second inequality in \eqref{Eq:Vj0} holds by the definition \eqref{Eq:C3} of $C_3$, and the inequality in \eqref{Eq:VN0} holds for $\|W_N^0\|_F \leq \gamma$ and \eqref{Eq:Vj0} with $j=N-1$.
By the definitions \eqref{Eq:def-hatQ} and \eqref{Eq:def-hatL} of $\hcQ^k$ and $\hcL$ , if we can show that Theorem \ref{Thm:globconv-hatQk} holds under the assumptions of Theorem \ref{Thm:global-generic}, then we directly yield Theorem \ref{Thm:global-generic}.
Thus, we only need to prove Theorem \ref{Thm:globconv-hatQk} under the assumptions of Theorem \ref{Thm:global-generic}.

\section{Preliminaries}
\label{app:preliminaries}
Before presenting the  proof of Theorem \ref{Thm:globconv-hatQk} under the assumptions of Theorem \ref{Thm:global-generic},
we provide some preliminary definitions and lemmas which serve as the basis of our proof.

\subsection{Dual expressed by primal}
\label{app:dual-expressed-primal}

According to the specific updates of Algorithm \ref{alg:ADMM}, we show that the updates of dual variables $\{\Lambda_i^k\}_{i=1}^N$ can be expressed explicitly by the updates of primal variables $\{W_i^k\}_{i=1}^N$ and $\{V_i^k\}_{i=1}^N$ as in the following lemma.

\begin{lemma}[Dual expressed by primal]
\label{Lemm:dual-expressed-primal}
Suppose that Assumption \ref{Assump:activ-fun} holds.
Let $\{{\cal Q}^k := \left(\{W_i^k\}_{i=1}^N, \{V_i^k\}_{i=1}^N, \{\Lambda_i^k\}_{i=1}^N\right)\}$ be a sequence generated by Algorithm \ref{alg:ADMM}.
Then we have
\begin{align}
\label{Eq:update-Lambdank*}
&\Lambda_N^k = V_N^k -Y, \quad \forall k \in \mathbb{N},\\
&\Lambda_{N-1}^k
= ({W_N^k})^T \Lambda_N^k + \beta_N {(W_N^k)}^T (V_N^k - V_N^{k-1}),  \label{Eq:update-Lambdan-1k*}\\
&\Lambda_j^k
= ({W_{j+1}^k})^T \left(\Lambda_{j+1}^k \odot \sigma'(W_{j+1}^k V_j^{k-1}) \right) +\beta_{j+1}({W_{j+1}^k})^T
\left[ \left( (\sigma(W_{j+1}^kV_j^{k-1}) - \sigma(W_{j+1}^k V_j^k)) \right.\right.\nonumber\\
&\left. \left. + (V_{j+1}^k - V_{j+1}^{k-1})\right)\odot \sigma'(W_{j+1}^k V_j^{k-1}) + \mu_j^k W_{j+1}^k(V_j^k - V_j^{k-1})/2 \right],
\ j=N-2,\ldots, 1. \label{Eq:update-Lambdajk*}
\end{align}
\end{lemma}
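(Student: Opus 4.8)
The plan is to read off each identity from the first-order stationarity condition of the corresponding $V_i$-subproblem and then simplify it using the multiplier update rule \eqref{Eq:Lambdak}. Each of the subproblems \eqref{Eq:Vnk-prox}, \eqref{Eq:Vn-1k-prox}, \eqref{Eq:Vjk-prox} is an unconstrained, differentiable (by Assumption \ref{Assump:activ-fun}), strongly convex quadratic in the block being updated — the quadratic terms in $V_i$ carry strictly positive coefficients — so its unique minimizer $V_i^k$ is characterized by $\nabla_{V_i}(\cdot)=0$, and these are the only optimality conditions needed. Starting with the output layer, differentiating the objective of \eqref{Eq:Vnk-prox} in $V_N$ and equating to zero gives $0 = (V_N^k - Y) - \beta_N(W_N^kV_{N-1}^k - V_N^k) - \Lambda_N^{k-1}$; the last two terms are exactly $-\Lambda_N^k$ by the $\Lambda_N$-update in \eqref{Eq:Lambdak}, so $\Lambda_N^k = V_N^k - Y$, which is \eqref{Eq:update-Lambdank*}.

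Next, the stationarity condition of \eqref{Eq:Vn-1k-prox} in $V_{N-1}$ reads $0 = -\beta_{N-1}(\sigma(W_{N-1}^kV_{N-2}^k) - V_{N-1}^k) - \Lambda_{N-1}^{k-1} + (W_N^k)^T\Lambda_N^{k-1} + \beta_N(W_N^k)^T(W_N^kV_{N-1}^k - V_N^{k-1})$. The first two terms collapse to $-\Lambda_{N-1}^k$ by \eqref{Eq:Lambdak}, and substituting $\Lambda_N^{k-1} = \Lambda_N^k - \beta_N(W_N^kV_{N-1}^k - V_N^k)$ (again from \eqref{Eq:Lambdak}) makes the two $\beta_N$-terms telescope into $\beta_N(W_N^k)^T(V_N^k - V_N^{k-1})$, yielding \eqref{Eq:update-Lambdan-1k*}. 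For a generic layer $j \le N-2$, I differentiate the objective of \eqref{Eq:Vjk-prox}, using that its second summand is the local linear approximation $M_\sigma^k$ of \eqref{Eq:Mk-sigmoid} with $\tilde A = W_{j+1}^k$, $\tilde B = V_{j+1}^{k-1} - \beta_{j+1}^{-1}\Lambda_{j+1}^{k-1}$, $V^{k-1} = V_j^{k-1}$, $\mu^k = \mu_j^k$, so that $\nabla_{V_j}M_\sigma^k = (W_{j+1}^k)^T\bigl[(\sigma(W_{j+1}^kV_j^{k-1}) - \tilde B)\odot \sigma'(W_{j+1}^kV_j^{k-1})\bigr] + \tfrac{\mu_j^k}{2}(W_{j+1}^k)^TW_{j+1}^k(V_j^k - V_j^{k-1})$. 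Setting the full gradient to zero, collapsing $-\beta_j(\sigma(W_j^kV_{j-1}^k) - V_j^k) - \Lambda_j^{k-1}$ to $-\Lambda_j^k$ via \eqref{Eq:Lambdak}, and then eliminating $\Lambda_{j+1}^{k-1}$ via $\beta_{j+1}^{-1}\Lambda_{j+1}^{k-1} = \beta_{j+1}^{-1}\Lambda_{j+1}^k - (\sigma(W_{j+1}^kV_j^k) - V_{j+1}^k)$ — which turns $\sigma(W_{j+1}^kV_j^{k-1}) - \tilde B$ into $\beta_{j+1}^{-1}\Lambda_{j+1}^k + (\sigma(W_{j+1}^kV_j^{k-1}) - \sigma(W_{j+1}^kV_j^k)) + (V_{j+1}^k - V_{j+1}^{k-1})$ — and distributing the Hadamard product with $\sigma'(W_{j+1}^kV_j^{k-1})$ over this sum, the $\beta_{j+1}^{-1}\Lambda_{j+1}^k$ piece cancels the $\beta_{j+1}$ prefactor and one lands on exactly \eqref{Eq:update-Lambdajk*}.

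I expect the only real work to be careful bookkeeping rather than any genuine estimate, since the lemma is an exact identity. The delicate points are: tracking the Hadamard factors with $\sigma'$ when differentiating $M_\sigma^k$ (note $\sigma'$ is always evaluated at $W_{j+1}^kV_j^{k-1}$ because the LLA is linearized at $V_j^{k-1}$); keeping the superscripts $k$ versus $k-1$ straight; and performing the two multiplier substitutions in the right order — first collapsing $\Lambda_j^{k-1}$ using the layer-$j$ update, then eliminating $\Lambda_{j+1}^{k-1}$ using the layer-$(j+1)$ update. To minimize sign errors I would organize the computation strictly layer by layer, treating $V_N$, then $V_{N-1}$, then the $j \le N-2$ case, so that the two-stage substitution is isolated in the last step.
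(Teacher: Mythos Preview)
Your proposal is correct and follows essentially the same route as the paper: read off the stationarity condition of each $V_i$-subproblem, collapse the layer-$i$ residual terms into $\Lambda_i^k$ via \eqref{Eq:Lambdak}, and then eliminate $\Lambda_{i+1}^{k-1}$ via the layer-$(i{+}1)$ multiplier update. The only additional content in the paper's proof is that it also records, along the way, the explicit closed-form solutions of the $W_i$- and $V_j$-subproblems (used later in the boundedness lemma), which your outline omits but which are not needed for the lemma's three identities themselves.
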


\begin{proof}
We firstly derive the explicit updates of $\{W_i^k\}_{i=1}^N$ and $\{V_j^k\}_{j=1}^N$,
then based on these updates, we prove Lemma \ref{Lemm:dual-expressed-primal}.

\textbf{1) $W_i$-subproblems:}
According to the update \eqref{Eq:Wnk-prox}, $W_N^k$ is updated via
\begin{align}
\label{Eq:update-WNk*}
W_N^k = (\beta_NV_N^{k-1} - \Lambda_N^{k-1})(V_{N-1}^{k-1})^T \left(\lambda {\bf I} + \beta_N V_{N-1}^{k-1}{V_{N-1}^{k-1}}^T\right)^{-1}.
\end{align}
By \eqref{Eq:Wik-prox}, for $i=1,\ldots, N-1$, we get
\begin{align}
W_i^k
&= W_i^{k-1}\frac{\beta_ih_i^k}{2}V_{i-1}^{k-1}{(V_{i-1}^{k-1})}^T \left(\lambda {\bf I} + \frac{\beta_ih_i^k}{2}V_{i-1}^{k-1}{(V_{i-1}^{k-1})}^T\right)^{-1}
\nonumber\\
&- \left[\left(\Lambda_i^{k-1} + \beta_i(\sigma(W_i^{k-1}V_{i-1}^{k-1})-V_{i}^{k-1}) \right)\odot \sigma'(W_i^{k-1}V_{i-1}^{k-1}) \right]
{(V_{i-1}^{k-1})}^T \left(\lambda {\bf I} + \frac{\beta_ih_i^k}{2}V_{i-1}^{k-1}{(V_{i-1}^{k-1})}^T\right)^{-1} \nonumber\\
& = W_i^{k-1} - W_i^{k-1}\left({\bf I} + \frac{\beta_ih_i^k}{2\lambda }V_{i-1}^{k-1}{(V_{i-1}^{k-1})}^T\right)^{-1} \label{Eq:update-Wik*}\\
&- \left[\left(\Lambda_i^{k-1} + \beta_i(\sigma(W_i^{k-1}V_{i-1}^{k-1})-V_{i}^{k-1}) \right)\odot \sigma'(W_i^{k-1}V_{i-1}^{k-1}) \right]
{(V_{i-1}^{k-1})}^T \left(\lambda {\bf I} + \frac{\beta_ih_i^k}{2}V_{i-1}^{k-1}{V_{i-1}^{k-1}}^T\right)^{-1}. \nonumber
\end{align}
Particularly, when $i=1$, $W_1^k$ is updated by
\begin{align}
W_1^k
&= W_1^{k-1}\frac{\beta_1h_1^k}{2}V_0{V_0}^T \left(\lambda {\bf I} + \frac{\beta_1h_1^k}{2}V_0{V_0}^T\right)^{-1} \nonumber\\
&- \left[\left(\Lambda_1^{k-1} + \beta_1(\sigma(W_1^{k-1}V_0)-V_1^{k-1}) \right)\odot \sigma'(W_1^{k-1}V_0) \right]
{V_0}^T \left(\lambda {\bf I} + \frac{\beta_1h_1^k}{2}V_0{V_0}^T\right)^{-1} \nonumber \\
&=W_1^{k-1} - W_1^{k-1}\left({\bf I} + \frac{\beta_1h_1^k}{2\lambda}V_0{V_0}^T\right)^{-1} \label{Eq:update-W1k*} \\
&- \left[\left(\Lambda_1^{k-1} + \beta_1(\sigma(W_1^{k-1}V_0)-V_1^{k-1}) \right)\odot \sigma'(W_1^{k-1}V_0) \right]
{V_0}^T \left(\lambda {\bf I} + \frac{\beta_1h_1^k}{2}V_0{V_0}^T\right)^{-1}. \nonumber
\end{align}

\textbf{2) $V_j$-subproblems:}
According to \eqref{Eq:Vnk-prox}, it holds
\begin{align}
\label{Eq:opt0-vN}
V_N^k - Y - \left[\Lambda_N^{k-1} + \beta_N \left(W_N^k V_{N-1}^k - V_N^k\right)\right]=0.
\end{align}
By the relation $\Lambda_N^k = \Lambda_N^{k-1} + \beta_N \left(W_N^k V_{N-1}^k - V_N^k\right)$, \eqref{Eq:opt0-vN} implies
\begin{align}
\label{Eq:update-LambdaN-VN}
\Lambda_N^k = V_N^k -Y, \quad \forall k \in \mathbb{N},
\end{align}
which shows \eqref{Eq:update-Lambdank*} in Lemma \ref{Lemm:dual-expressed-primal}.
Substituting the equality \eqref{Eq:update-LambdaN-VN} with the index value $k-1$ into \eqref{Eq:opt0-vN} yields
\begin{align}
\label{Eq:update-Vnk}
V_N^k = \frac{1}{1+\beta_N}V_N^{k-1} + \frac{\beta_N}{1+\beta_N} W_N^k V_{N-1}^k.
\end{align}

According to \eqref{Eq:Vn-1k-prox},
it holds
\begin{align*}
-\left[\Lambda_{N-1}^{k-1}+\beta_{N-1}(\sigma(W_{N-1}^kV_{N-2}^k)-V_{N-1}^k)\right] + {(W_N^k)}^T\left[\Lambda_N^{k-1}+\beta_N\left(W_N^kV_{N-1}^{k} -V_N^{k-1}\right)\right] =0,
\end{align*}
which implies
\begin{align}
&V_{N-1}^k = \label{Eq:update-Vn-1k} \\
& (\beta_{N-1} {\bf I} + \beta_N {(W_N^k)}^TW_N^k)^{-1} \left[ \Lambda_{N-1}^{k-1}+ \beta_{N-1}\sigma(W_{N-1}^k V_{N-2}^k) - {(W_N^k)}^T \left(\Lambda_N^{k-1} - \beta_N V_N^{k-1}\right)\right], \nonumber
\end{align}
and together with the updates of $\Lambda_{N-1}^k$ and $\Lambda_{N}^k$ in Algorithm \ref{alg:ADMM} yields
\begin{align*}
\Lambda_{N-1}^k
=(W_N^k)^T\left[ \Lambda_N^{k-1} + \beta_N \left(W_N^kV_{N-1}^k - V_N^{k-1} \right)\right]
= {(W_N^k)}^T \left(\Lambda_N^k + \beta_N(V_N^k - V_N^{k-1})\right).
\end{align*}
This implies \eqref{Eq:update-Lambdan-1k*} in Lemma \ref{Lemm:dual-expressed-primal}.

By \eqref{Eq:Vjk-prox}, for $j=1,\ldots, N-2$, $V_j^k$ satisfies the following optimality condition
\begin{align*}
&-\left[\Lambda_j^{k-1} + \beta_j(\sigma(W_j^kV_{j-1}^k)-V_j^k) \right] +\frac{\beta_{j+1}\mu_j^k}{2}{W_{j+1}^k}^TW_{j+1}^k (V_j^k - V_j^{k-1})\\
&+ {W_{j+1}^k}^T \left[ \left(\Lambda_{j+1}^{k-1} + \beta_{j+1} ( \sigma(W_{j+1}^kV_j^{k-1})-V_{j+1}^{k-1}) \right) \odot \sigma'(W_{j+1}^k V_j^{k-1})\right]
 =0,
\end{align*}
which implies
\begin{align}
V_j^k
&= \left( \beta_j {\bf I} + \frac{\beta_{j+1}\mu_j^k}{2} {W_{j+1}^k}^T W_{j+1}^k \right)^{-1}
\left[ \frac{1}{2}\beta_{j+1}\mu_j^k{W_{j+1}^k}^T W_{j+1}^kV_j^{k-1} + \left(\Lambda_j^{k-1} + \beta_j \sigma(W_j^kV_{j-1}^k)\right) \right.
\nonumber\\
& \left. +  {W_{j+1}^k}^T \left( [\Lambda_{j+1}^{k-1} + \beta_{j+1} ( \sigma(W_{j+1}^kV_j^{k-1})-V_{j+1}^{k-1})]
\odot \sigma'(W_{j+1}^k V_j^{k-1})\right) \right] \nonumber
\\
&= V_j^{k-1} - \left( {\bf I} + \frac{\beta_{j+1}\mu_j^k}{2\beta_j } {W_{j+1}^k}^T W_{j+1}^k \right)^{-1}V_j^{k-1} \nonumber\\
&+\left( \beta_j {\bf I} + \frac{\beta_{j+1}\mu_j^k}{2} {W_{j+1}^k}^T W_{j+1}^k \right)^{-1}
\left[\left( \Lambda_j^{k-1} + \beta_j \sigma(W_j^kV_{j-1}^k)\right) \right. \nonumber\\
& \left. +  {W_{j+1}^k}^T \left( \left[\Lambda_{j+1}^{k-1} + \beta_{j+1} ( \sigma(W_{j+1}^kV_j^{k-1})-V_{j+1}^{k-1})\right]
\odot \sigma'(W_{j+1}^k V_j^{k-1})\right) \right]. \label{Eq:update-Vjk*}
\end{align}
and together with the updates of $\Lambda_{j}^k$ and $\Lambda_{j+1}^k$ in Algorithm \ref{alg:ADMM} yields
\begin{align*}
\Lambda_j^k
&= {(W_{j+1}^k)}^T \left[ \left(\Lambda_{j+1}^{k-1} + \beta_{j+1} \left( \sigma(W_{j+1}^kV_j^{k-1})-V_{j+1}^{k-1}\right) \right)
\odot \sigma'(W_{j+1}^k V_j^{k-1})\right]
\\
&+ \frac{\beta_{j+1}\mu_j^k}{2}{(W_{j+1}^k)}^TW_{j+1}^k (V_j^k - V_j^{k-1}), \nonumber\\
&= {(W_{j+1}^k)}^T \left(\Lambda_{j+1}^k \odot \sigma'(W_{j+1}^k V_j^{k-1}) \right) +\beta_{j+1}{(W_{j+1}^k)}^T
\left[ \left( (\sigma(W_{j+1}^kV_j^{k-1}) - \sigma(W_{j+1}^k V_j^k)) \right.\right.\nonumber\\
&\left. \left.+ (V_{j+1}^k - V_{j+1}^{k-1})\right)\odot \sigma'(W_{j+1}^k V_j^{k-1}) + \mu_j^k W_{j+1}^k(V_j^k - V_j^{k-1})/2 \right]. 
\end{align*}
The final equality implies \eqref{Eq:update-Lambdajk*} in Lemma \ref{Lemm:dual-expressed-primal}.
This completes the proof of this lemma.
\end{proof}

\subsection{Kurdyka-{\L}ojasiewicz property}
\label{app:Kurdyka-Lojasiewicz-inequality}

The Kurdyka-{\L}ojasiewicz (KL) property \citep{Lojasiewicz-KL1993,Kurdyka-KL1998} plays a crucial role in the convergence analysis of nonconvex algorithm (see, \cite{Attouch2013}).
The following definition is adopted from \citep{Bolte-KL2007a}.

\begin{definition}[KL property]
\label{Def-KLProp}
An extended real valued function $h: {\cal X} \rightarrow \mathbb{R}\cup \{+\infty\}$
is said to have the Kurdyka-{\L}ojasiewicz property at $x^*\in\mathrm{dom}(\partial h)$ if there exist a neighborhood $U$ of $x^*$, a constant $\eta>0$, and a continuous concave function $\phi(s) = cs^{1-\theta}$ for some $c>0$ and $\theta \in [0,1)$
such that the Kurdyka-{\L}ojasiewicz inequality holds
\begin{equation}
\phi'(h(x)-h(x^*)) \mathrm{dist}(0,\partial h(x))\geq 1, \ \forall x \in U \cap \mathrm{dom}(\partial h) \ \text{and} \ h(x^*)<h(x) <h(x^*)+\eta,
\label{Eq:KLIneq}
\end{equation}
where $\partial h(x)$ denotes the \textit{limiting-subdifferential} of $h$ at $x\in \mathrm{dom}(h)$ (introduced in \cite{Mordukhovich-2006}), $\mathrm{dom}(h):=\{x\in {\cal X}: h(x)<+\infty\}$, $\mathrm{dom}(\partial h) :=\{x\in {\cal X}: \partial h(x)\neq \emptyset\}$, and $\mathrm{dist}(0,\partial h(x)) := \min \{\|z\|: z\in \partial h(x)\}$, where $\|\cdot\|$ represents the Euclidean norm.

Proper lower semi-continuous functions which satisfy the Kurdyka-{\L}ojasiewicz inequality at each point of $\mathrm{dom}(\partial h)$ are called KL functions.
\end{definition}

Note that we have adopted in the definition of KL inequality \eqref{Eq:KLIneq} the following notational conventions: $0^0=1, \infty/\infty=0/0=0.$
This property was firstly introduced by \citep{Lojasiewicz-KL1993} on real analytic functions \citep{Krantz2002-real-analytic} for $\theta \in [\frac{1}{2},1)$, then was extended to functions defined on the $o$-minimal structure in \citep{Kurdyka-KL1998}, and later was extended to nonsmooth subanalytic functions in \citep{Bolte-KL2007a}.
In the following, we give the definitions of real-analytic and semialgebraic functions.

\begin{definition}[Real analytic, Definition 1.1.5 in \citep{Krantz2002-real-analytic}]
\label{Def:real-analytic}
A function $h$ with domain being an open set $U\subset \mathbb{R}$ and range either the real or the complex numbers, is said to be \textbf{real analytic} at $u$ if the function $f$ may be represented by a convergent power series on some interval of positive radius centered at $u$:
\[
h(x) = \sum_{j=0}^{\infty} \alpha_j(x-u)^j,
\]
for some $\{\alpha_j\} \subset \mathbb{R}$.
The function is said to be \textbf{real analytic} on $V\subset U$ if it is real analytic at each $u \in V.$
The real analytic function $f$ over $\mathbb{R}^p$ for some positive integer $p>1$ can be defined similarly.
\end{definition}

According to \citep{Krantz2002-real-analytic}, some typical real analytic functions include polynomials, exponential functions, and the logarithm, trigonometric and power functions  on any open set of their domains.
One can verify whether a multivariable real function $h({\bf x})$ on $\mathbb{R}^p$ is analytic by checking the analyticity of $g(t):= h({\bf x}+ t {\bf y})$ for any ${\bf x}, {\bf y} \in \mathbb{R}^p$.
The following lemma shows some important properties of real analytic functions.

\begin{lemma}[\citealp{Krantz2002-real-analytic}]
\label{Lemm:real-analytic}
The sums, products, and compositions of real analytic functions are real analytic functions.
\end{lemma}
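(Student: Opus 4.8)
The plan is to reduce everything to manipulations of convergent power series, since analyticity is a local property: to prove that a sum, product, or composition is analytic at a point it suffices to exhibit a power series centered at that point (or at the relevant image point) with positive radius of convergence that represents the function on a neighborhood. After a translation we may assume the relevant centers are at the origin. The workhorse throughout is the elementary characterization that $h(x)=\sum_{j\geq 0}\alpha_j x^j$ has radius of convergence at least $r$ if and only if for every $\rho<r$ there is a constant $C>0$ with $|\alpha_j|\leq C\rho^{-j}$ for all $j$.

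For sums, if $f=\sum_j a_j x^j$ and $g=\sum_j b_j x^j$ converge on $|x|<r_f$ and $|x|<r_g$, then $\sum_j(a_j+b_j)x^j$ converges absolutely on $|x|<\min\{r_f,r_g\}$ by the triangle inequality applied to partial sums, with value $f(x)+g(x)$ there. For products, I would form the Cauchy product $\sum_j c_j x^j$ with $c_j=\sum_{k=0}^{j}a_k b_{j-k}$; on the common interval of absolute convergence the double series $\sum_{k,l}a_k b_l x^{k+l}$ converges absolutely, so it may be summed in any order, yielding $\bigl(\sum_k a_k x^k\bigr)\bigl(\sum_l b_l x^l\bigr)=\sum_j c_j x^j$. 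Both arguments carry over to functions on $\mathbb{R}^p$ using multi-index notation.

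The substantive case is composition. Suppose $g$ is analytic at $u$ with $g(u)=v$ and $g(x)=v+\sum_{j\geq 1}b_j(x-u)^j$ on $|x-u|<r$, while $f$ is analytic at $v$ with $f(y)=\sum_{i\geq 0}a_i(y-v)^i$ on $|y-v|<s$. Substituting formally, $f(g(x))=\sum_{i\geq 0} a_i\bigl(\sum_{j\geq 1}b_j(x-u)^j\bigr)^i$; expanding each power and collecting by degree in $(x-u)$ produces a formal series whose coefficient of $(x-u)^m$ is a finite sum (only $i\leq m$ contribute, since each inner exponent is $\geq 1$) of products of the $a_i$ and $b_j$. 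To certify convergence and justify the rearrangement, I would use the method of majorants: pick $\rho<s$, $\rho'<r$ and $C,C'>0$ with $|a_i|\leq C\rho^{-i}$ and $|b_j|\leq C'(\rho')^{-j}$, and set $F(y)=\frac{C}{1-y/\rho}$ and $G(x)=\frac{C'(x-u)/\rho'}{1-(x-u)/\rho'}$, whose Taylor coefficients are nonnegative and dominate the absolute values of those of $y\mapsto f(y+v)$ and $x\mapsto g(x)-v$ respectively; note $G$ has no constant term. A direct computation expresses $F\circ G$ as a rational function of $x$ whose only singularity lies at some $x_0$ with $x_0>u$, so $F\circ G$ is analytic on a neighborhood of $u$; its Taylor series there has nonnegative coefficients, and by the routine bookkeeping that the absolute value of a sum of products is at most the sum of products of absolute values, these dominate termwise the absolute values of the formal coefficients of $f\circ g$. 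Consequently the series for $f\circ g$ converges absolutely near $u$, the absolute convergence legitimizes the rearrangement performed in the formal substitution, and the sum equals $f(g(x))$. The multivariate statement follows by running the same majorant argument with multi-indices.

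The step I expect to be the main obstacle is exactly this composition argument — turning the iterated sum $\sum_i a_i\bigl(\sum_j b_j(x-u)^j\bigr)^i$ into a bona fide power series in $(x-u)$ together with a convergence certificate — which the majorant comparison supplies cleanly; the sum and product cases are routine once the geometric coefficient bound is in hand.
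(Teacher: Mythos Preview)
Your argument is correct and follows the classical route: termwise addition and Cauchy products for sums and products, and the method of majorants (Cauchy's majorant calculus) for composition. The majorant computation is sound; with $t=x-u$ one finds $F\circ G$ has its nearest singularity at $t=\rho\rho'/(\rho+C')>0$, giving a genuine positive radius of convergence that dominates the formal series for $f\circ g$.

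However, there is no proof in the paper to compare against: the lemma is simply quoted from \citet{Krantz2002-real-analytic} as a standard fact and is used as a black box in the proof of Lemma~\ref{Lemm:KL-property}. Your proof is essentially the textbook argument one finds in Krantz, so in that sense it matches the intended source; the paper itself supplies nothing beyond the citation.
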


Let $h: \mathbb{R}^p \to \mathbb{R} \cup \{+\infty\}$ be an extended-real-valued function (respectively, $h:\mathbb{R}^p \rightrightarrows \mathbb{R}^q$ be a point-to-set mapping), its \textit{graph} is defined by
\begin{align*}
&\mathrm{Graph}(h) := \{({x},y)\in \mathbb{R}^p \times \mathbb{R}: y = h({x})\}, \\
(\text{resp.}\; &\mathrm{Graph}(h) := \{({x},{y})\in \mathbb{R}^p \times \mathbb{R}^q: {y} \in h({x})\}),
\end{align*}
and its domain by $\mathrm{dom}(h):=\{{x}\in \mathbb{R}^p: h({x})<+\infty\}$ (resp. $\mathrm{dom}(h) :=\{{x}\in\mathbb{R}^p: h({x})\neq \emptyset\}$).

\begin{definition}[Semialgebraic]\hfill
\label{Def:semialgebraic}
\begin{enumerate}
\item[(a)]
A set ${\cal D} \subset \mathbb{R}^p$ is called semialgebraic \citep{Bochnak-semialgebraic1998} if it can be represented as
\[
{\cal D} = \bigcup_{i=1}^s \bigcap_{j=1}^t \{x\in \mathbb{R}^p: P_{ij}(x) = 0, Q_{ij}(x)>0\},
\]
where $P_{ij}, Q_{ij}$ are real polynomial functions for $1\leq i \leq s, 1\leq j \leq t.$

\item[(b)]
A function $h:\mathbb{R}^p\rightarrow \mathbb{R}\cup \{+\infty\}$ (resp. a point-to-set mapping $h:\mathbb{R}^p \rightrightarrows \mathbb{R}^q$) is called \textit{semialgebraic} if its graph $\mathrm{Graph}(h)$ is semialgebraic.
\end{enumerate}
\end{definition}

According to \citep{Lojasiewicz1965-semianalytic,Bochnak-semialgebraic1998} and \cite[I.2.9, p.52]{Shiota1997-subanalytic}, the class of semialgebraic sets is stable under the operation of finite union, finite intersection, Cartesian product or complementation. Some typical examples include \text{polynomial} functions, the indicator function of a semialgebraic set, and the \text{Euclidean norm} \cite[p.26]{Bochnak-semialgebraic1998}.

\begin{lemma}[KL properties of $\cL$ and $\hcL$]
\label{Lemm:KL-property}
Suppose that Assumption \ref{Assump:activ-fun} holds, then both $\cL$ and $\hcL$ are KL functions.
\end{lemma}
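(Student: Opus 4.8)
The plan is to place $\cL$ and $\hcL$ inside a class of functions for which the KL property is classical: real analytic functions, via {\L}ojasiewicz's gradient inequality, or semialgebraic functions, via the subanalytic KL theorem of \citep{Bolte-KL2007a}. Since Assumption~\ref{Assump:activ-fun} posits that $\sigma$ is \emph{either} real analytic \emph{or} semialgebraic, I would treat these two cases separately, showing in each case that both $\cL$ and $\hcL$ lie entirely in the corresponding class; then the KL inequality \eqref{Eq:KLIneq} at every point follows from the relevant classical theorem.

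First I would decompose $\cL$ as given in \eqref{Eq:ALMFun-L2} into a finite sum of three kinds of terms: (i) the squared Frobenius norms $\|W_i\|_F^2$ and $\|V_N-Y\|_F^2$, which are polynomials in the matrix entries of the variables; (ii) the couplings $\|W_NV_{N-1}-V_N\|_F^2$ and $\langle \Lambda_N, W_NV_{N-1}-V_N\rangle$, again polynomials; and (iii) the terms $\|\sigma(W_iV_{i-1})-V_i\|_F^2$ and $\langle \Lambda_i,\sigma(W_iV_{i-1})-V_i\rangle$ for $i\le N-1$, where $\sigma$ acts entrywise. Each entry of $\sigma(W_iV_{i-1})$ is $\sigma$ evaluated at a bilinear (hence polynomial) function of the entries of $W_i$ and $V_{i-1}$. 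In the real-analytic branch, the entrywise map $W_iV_{i-1}\mapsto\sigma(W_iV_{i-1})$ is real analytic as a composition of the real analytic $\sigma$ with a polynomial map, so by Lemma~\ref{Lemm:real-analytic} the products and sums in (iii) are real analytic; together with the polynomial terms (i)--(ii) this makes $\cL$ real analytic on its Euclidean domain. In the semialgebraic branch, the graph of the bilinear map and the graph of $\sigma$ are semialgebraic, and by the stability of the semialgebraic class under composition, finite sums and finite products (the remarks following Definition~\ref{Def:semialgebraic}), each term in (iii), hence $\cL$, is semialgebraic.

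Next I would extend this to $\hcL$. By \eqref{Eq:def-hatL}, regarded as a function of the enlarged variable $\hcQ=(\cQ,\{\hat{V}_i\}_{i=1}^N)$, $\hcL$ equals $\cL(\cQ)$ — which is independent of the $\hat{V}_i$ coordinates and, by the previous step, real analytic (resp. semialgebraic) when viewed as a function of all coordinates — plus $\sum_{i=1}^N \xi_i\|V_i-\hat{V}_i\|_F^2$, a quadratic polynomial in $(V_i,\hat{V}_i)$ and therefore both real analytic and semialgebraic. Closure of each class under finite sums makes $\hcL$ real analytic in the first branch and semialgebraic in the second. Since $\sigma$ is twice differentiable, $\cL$ and $\hcL$ are everywhere-finite $C^2$ functions, so they are proper and continuous, their limiting subdifferentials coincide with their gradients, and $\mathrm{dom}(\partial\cL)$ and $\mathrm{dom}(\partial\hcL)$ are the whole space. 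Invoking {\L}ojasiewicz's gradient inequality \citep{Lojasiewicz-KL1993,Kurdyka-KL1998} in the real-analytic case and the KL result of \citep{Bolte-KL2007a} for subanalytic (in particular semialgebraic) functions in the other case yields \eqref{Eq:KLIneq} at every point, i.e. $\cL$ and $\hcL$ are KL functions.

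The only delicate point I anticipate is the bookkeeping in case (iii): carefully verifying that composing the univariate $\sigma$ with the multivariate bilinear map $W_iV_{i-1}$, and then forming the Frobenius norm and inner product, stays within the same class. This is routine given Lemma~\ref{Lemm:real-analytic} and the stability properties of semialgebraic sets, but it is the step that must be written out explicitly; everything else reduces to a direct citation of the classical {\L}ojasiewicz and Bolte--Daniilidis--Lewis--Shiota KL theorems.
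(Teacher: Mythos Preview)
Your proposal is correct and follows essentially the same approach as the paper: decompose $\cL$ into polynomial pieces and pieces built from $\sigma$ composed with bilinear maps, use closure of the real-analytic (resp.\ semialgebraic) class under sums, products, and compositions to place $\cL$ and $\hcL$ in that class, then invoke the corresponding classical KL theorem. The only cosmetic difference is that the paper unifies the two branches by noting that both real-analytic and semialgebraic functions are subanalytic and then applies \cite[Theorem~3.1]{Bolte-KL2007a} once (using continuity of $\cL$), whereas you keep the branches separate and cite {\L}ojasiewicz directly in the real-analytic case; either route is fine.
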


\begin{proof}
Let $\cQ:=\left(\{W_i\}_{i=1}^N, \{V_i\}_{i=1}^N, \{\Lambda_i\}_{i=1}^N \right)$, $\hcQ:=\left(\cQ,\{\hat{V}_i\}_{i=1}^N \right)$  and
\begin{align*}
&\cL_1(\cQ):= \frac{1}{2}\|V_N-Y\|_F^2 + \frac{\lambda}{2} \sum_{i=1}^N \|W_i\|_F^2 + \sum_{i=1}^N \frac{\beta_i}{2} \|\sigma_i(W_iV_{i-1}) - V_i\|_F^2,\\
&\cL_2(\cQ):= \sum_{i=1}^N \langle \Lambda_i, \sigma_i(W_iV_{i-1}) - V_i\rangle.
\end{align*}
Then
$
\cL(\cQ) = \cL_1(\cQ) + \cL_2(\cQ),
\hcL(\hcQ) = \cL(\cQ) + \sum_{i=1}^N \xi_i \|V_i - \hat{V}_i\|_F^2,
$
where $\xi_i>0$, $i=1,\ldots,N$.
According to the same arguments as in the proof of \cite[Proposition 2]{Zeng-BCD19}, $\cL_1$ is real analytic (resp. semialgebraic) if $\sigma_i$ is real analytic (resp. semialgebraic).
By the closedness of real analytic (resp. semialgebraic) functions under the sum, product and composition (see, \cite{Krantz2002-real-analytic,Bochnak-semialgebraic1998}), we can show that $\cL_2$ is also real analytic (resp. semialgebraic) if $\sigma_i$ is real analytic (resp. semialgebraic).
Thus, ${\cL}$ is a finite sum of real analytic or semialgebraic functions. According to \cite{Shiota1997-subanalytic}, $\cL$ is a subanalytic function.
By Assumption \ref{Assump:activ-fun}, $\cL$ is continuous. Thus, $\cL$ is a KL function by \cite[Theorem 3.1]{Bolte-KL2007a}.
Since $\sum_{i=1}^N \xi_i \|V_i - \hat{V}_i\|_F^2$ is polynomial, $\hcL$ is also a KL function by a similar argument.
This completes the proof.
\end{proof}

\section{Proofs for Theorem \ref{Thm:globconv-hatQk}}
\label{app:proof-main-theorem}

As stated in \Cref{sc:keystone-proofidea}, the main idea of proof of Theorem \ref{Thm:globconv-hatQk} is shown as follows:
we firstly establish the desired sufficient descent lemma (see, Lemma \ref{Lemm:suff-descent}) via estimating the progress made by one step update, and bounding dual by primal as well as showing the boundedness of the sequence,
then develop the desired relative error lemma (see, Lemma \ref{Lemm:bound-grad}) via the optimality conditions of all subproblems, the Lipschitz continuity of the activation as well as the boundedness of the sequence,
and finally prove this theorem via \citep[Theorem 2.9]{Attouch2013}, together with Lemma \ref{Lemm:KL-property} and the continuous assumption of the activation.
In the following, we establish these lemmas followed by the detailed proof of Theorem \ref{Thm:globconv-hatQk}.

\subsection{Proof for Lemma \ref{Lemm:suff-descent}: Sufficient descent lemma}
\label{app:sufficient-descent-lemma}

In order to prove Theorem \ref{Thm:globconv-hatQk}, the following \textit{sufficient descent} lemma plays a key role.
\begin{lemma}[Sufficient descent]
\label{Lemm:suff-descent}
Under assumptions of Theorem \ref{Thm:global-generic}, for $k\geq 2$, there holds
\begin{align}
\label{Eq:suff-descent}
{\hcL}(\hcQ^k) \leq {\hcL}(\hcQ^{k-1}) - a\left(\sum_{i=1}^N \|W_i^k - W_i^{k-1}\|_F^2 + \sum_{i=1}^N \|V_i^k - V_i^{k-1}\|_F^2\right),
\end{align}
where $a$ is some positive constant specified later in \eqref{Eq:a} in Appendix \ref{app:proof-sufficient-descent}.
\end{lemma}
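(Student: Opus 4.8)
The plan is to decompose the one-step change $\hcL(\hcQ^k)-\hcL(\hcQ^{k-1})$ into a primal-descent part, a dual-ascent part, and the telescoped proximal correction, and then to show that with $\{\beta_i\}$ and $\lambda$ chosen as in \eqref{Eq:cond-betaN}--\eqref{Eq:cond-lambda} the descent dominates. Since $\hat V_i^k = V_i^{k-1}$ and $\hat V_i^{k-1}=V_i^{k-2}$ for $k\geq 2$,
\[
\hcL(\hcQ^k)-\hcL(\hcQ^{k-1}) = [\cL(\cQ^k)-\cL(\cQ^{k-1})] + \sum_{i=1}^N \xi_i(\|V_i^k-V_i^{k-1}\|_F^2 - \|V_i^{k-1}-V_i^{k-2}\|_F^2),
\]
and I would further split $\cL(\cQ^k)-\cL(\cQ^{k-1})$ through the intermediate point $(\{W_i^k\}_{i=1}^N,\{V_i^k\}_{i=1}^N,\{\Lambda_i^{k-1}\}_{i=1}^N)$ into the change caused by the primal ($W$- and $V$-block) updates and the change caused by the multiplier updates.

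For the primal part I would invoke the one-step progress lemma (Lemma~\ref{Lemm:descent-two-iterates}): after the local linear approximation \eqref{Eq:Hk-sigmoid}--\eqref{Eq:Mk-sigmoid}, each $W_i$- and $V_j$-subproblem is a strongly convex quadratic in its own block, with modulus at least $\lambda$ for the $W$-blocks (from the $\frac{\lambda}{2}\|W_i\|_F^2$ term) and at least $\beta_i$ for the $V$-blocks (from the corresponding penalty), so exact minimization decreases $\cL$ by at least a positive multiple of $\|W_i^k-W_i^{k-1}\|_F^2$, resp.\ $\|V_j^k-V_j^{k-1}\|_F^2$, up to a linearization error that is quadratic in the step by the boundedness of $\sigma,\sigma',\sigma''$ in Assumption~\ref{Assump:activ-fun}. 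For the dual part, the update rule \eqref{Eq:Lambdak} gives $\Lambda_i^k-\Lambda_i^{k-1}=\beta_i(\sigma(W_i^kV_{i-1}^k)-V_i^k)$ (and the analogous linear expression for $i=N$), so the dual-ascent contribution $\cL(\{W_i^k\},\{V_i^k\},\{\Lambda_i^k\}) - \cL(\{W_i^k\},\{V_i^k\},\{\Lambda_i^{k-1}\})$ equals exactly $\sum_{i=1}^N \beta_i^{-1}\|\Lambda_i^k-\Lambda_i^{k-1}\|_F^2$.

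The core step is to control $\sum_i\beta_i^{-1}\|\Lambda_i^k-\Lambda_i^{k-1}\|_F^2$ by the primal increments; this is the dual-bounded-by-primal lemma (Lemma~\ref{Lemm:dual-controlled-primal}). I would take the explicit representation of $\Lambda_i^k$ from Lemma~\ref{Lemm:dual-expressed-primal}, difference it in $k$, and use the boundedness lemma (Lemma~\ref{Lemm:boundedness-seq}) to keep $\|W_i^k\|_F$, $\|V_i^k\|_F$, $\|\Lambda_i^k\|_F$ and the prox constants $h_i^k,\mu_j^k$ uniformly bounded by the $\gamma^{i-1}$-type quantities and the constants $C_3,L_3$. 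A key feature here is that $\Lambda_j^k$ is written through $\sigma'(W_{j+1}^kV_j^{k-1})$, the difference $\sigma(W_{j+1}^kV_j^{k-1})-\sigma(W_{j+1}^kV_j^k)$ and $W_{j+1}^k(V_j^k-V_j^{k-1})$, all anchored at the \emph{previous} iterate $V_j^{k-1}$, and recursively through $\Lambda_{j+1}^k$; hence the bound on $\|\Lambda_j^k-\Lambda_j^{k-1}\|_F^2$ involves not only $\|W_j^k-W_j^{k-1}\|_F^2$ and $\|V_j^k-V_j^{k-1}\|_F^2$ but also the \emph{lag} terms $\|V_j^{k-1}-V_j^{k-2}\|_F^2$, and the backward recursion from layer $N$ to layer $1$ forces at each step a multiplicative factor of the type appearing on the right-hand side of \eqref{Eq:cond-betai-i+1} (involving $\gamma^2$, $\sqrt N$ and $C_3\gamma^i$) --- which is precisely why \eqref{Eq:cond-betaN-1-N}--\eqref{Eq:cond-betai-i+1} require the $\beta_i$ to grow geometrically from the output layer inward.

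Finally I would assemble the three contributions and fix the free constants. Schematically the primal descent contributes $-\Theta(\lambda)\|W_i^k-W_i^{k-1}\|_F^2-\Theta(\beta_i)\|V_i^k-V_i^{k-1}\|_F^2$ per block, the dual ascent contributes for each $i$ a term of order $\beta_i^{-1}\beta_{i+1}^2$ (up to the factors above) multiplying the current increments and the lag terms $\|V_j^{k-1}-V_j^{k-2}\|_F^2$, and the proximal correction contributes $+\xi_i\|V_i^k-V_i^{k-1}\|_F^2-\xi_i\|V_i^{k-1}-V_i^{k-2}\|_F^2$. I would choose each $\xi_i$ large enough to absorb the coefficient of the lag term $\|V_i^{k-1}-V_i^{k-2}\|_F^2$ produced by the dual bound, and then use \eqref{Eq:cond-betaN}, \eqref{Eq:cond-lambda}, \eqref{Eq:cond-dmin} (together with \eqref{Eq:cond-betaN-1-N}--\eqref{Eq:cond-betai-i+1}) to ensure that, even after adding back the positive $\xi_i\|V_i^k-V_i^{k-1}\|_F^2$, the net coefficient of every current increment $\|W_i^k-W_i^{k-1}\|_F^2$ and $\|V_i^k-V_i^{k-1}\|_F^2$ is at most $-a<0$, which defines the constant $a$ in \eqref{Eq:a}. \textbf{The main obstacle} is exactly this constant bookkeeping through the backward recursion for $\Lambda_j^k$: one must check that dividing by $\beta_j$ together with the geometric growth of the $\beta$'s beats the accumulated factors $\gamma$, $\sqrt N$, $C_3$ and $L_1,L_2,L_3$ in precisely the regime carved out by \eqref{Eq:cond-betaN-1-N}--\eqref{Eq:cond-lambda} and \eqref{Eq:cond-dmin}, and that the $\xi_i$ can be chosen consistently with those conditions --- which is the whole reason the modified Lyapunov function $\hcL$ must be used in place of $\cL$.
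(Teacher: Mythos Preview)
Your proposal is correct and follows essentially the same route as the paper: split $\hcL(\hcQ^k)-\hcL(\hcQ^{k-1})$ via Lemma~\ref{Lemm:descent-two-iterates} into primal descent plus the dual ascent $\sum_i\beta_i^{-1}\|\Lambda_i^k-\Lambda_i^{k-1}\|_F^2$, invoke Lemmas~\ref{Lemm:dual-controlled-primal} and~\ref{Lemm:boundedness-seq} to bound the latter by current and lagged primal increments, set each $\xi_i$ equal to the resulting coefficient of $\|V_i^{k-1}-V_i^{k-2}\|_F^2$, and then verify layer by layer that the conditions \eqref{Eq:cond-betaN}--\eqref{Eq:cond-dmin} force $\zeta_i>0$ and $\eta_i-\xi_i>0$ so that $a=\min_i\{\zeta_i,\eta_i-\xi_i\}>0$. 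The only refinement is that the paper does not leave $\xi_i$ as a free parameter to be ``chosen large enough'' but fixes it exactly as the lag coefficient (equations \eqref{Eq:xi-N}--\eqref{Eq:xi-1}), which is precisely what makes the subsequent check $\eta_i-\xi_i>0$ tractable.
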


From Lemma \ref{Lemm:suff-descent},
we establish the sufficient descent property of an auxiliary sequence $\{\hcQ^k\}$ instead of the sequence $\{\cQ^k\}$ itself, along a new Laypunov function $\hcL$ but not the original augmented Lagrangian $\cL$.
This is different from the convergence analysis of ADMM in \citep{Wang-ADMM2018} for linear constrained optimization problems, where the sufficient descent lemma is shown for the original sequence along the augmented Lagrangian (see, \cite[Lemma 5]{Wang-ADMM2018}).
In order to establish Lemma \ref{Lemm:suff-descent}, the following three lemmas are required,
where the first lemma shows the progress made by one step update (called, \textit{one-step progress lemma}),
the second lemma bounds the discrepancies of two successive dual updates via those of the primal updates (called, \textit{dual-bounded-by-primal lemma}),
and the third lemma shows the boundedness of the sequence (called, \textit{boundedness lemma}).

\subsubsection{Lemma \ref{Lemm:descent-two-iterates}: One-step progress lemma}
\label{app:proof-change-quantity}

We present the first lemma that estimates the progress made by a single update of ADMM.

\begin{lemma}[One-step progress]
\label{Lemm:descent-two-iterates}
Let Assumption \ref{Assump:activ-fun} hold.
Let $\big\{{\cal Q}^k := \big(\{W_i^k\}_{i=1}^N, \{V_i^k\}_{i=1}^N, $ $\{\Lambda_i^k\}_{i=1}^N\big)\big\}$ be a sequence generated by Algorithm \ref{alg:ADMM} with $\{h_i^k\}_{i=1}^{N-1}$ and $\{\mu_j^k\}_{j=1}^{N-2}$ specified in \eqref{Eq:hik} and \eqref{Eq:mujk}, respectively.
Then for any integer $k\geq 1$, the following holds
\begin{align}
\label{Eq:descent-two-iterates}
{\cal L}({\cal Q}^k )
&\leq {\cal L}({\cal Q}^{k-1}) - \sum_{i=1}^N \left( \frac{\lambda}{2}\|W_i^k - W_i^{k-1}\|_F^2+\frac{\beta_i h_i^k}{4}\|(W_i^k - W_i^{k-1})V_{i-1}^{k-1}\|_F^2 \right) \\
&-\sum_{j=1}^{N-1} \left( \frac{\beta_j}{2}\|V_j^k - V_j^{k-1}\|_F^2 + \frac{\beta_{j+1}\mu_j^k}{4} \|W_{j+1}^k(V_j^k - V_j^{k-1})\|_F^2\right) - \frac{1+\beta_N}{2} \|V_N^k - V_N^{k-1}\|_F^2 \nonumber\\
&+\sum_{i=1}^N \beta_i^{-1}\|\Lambda_i^k - \Lambda_i^{k-1}\|_F^2, \nonumber
\end{align}
where $V_0^k \equiv X$, $h_N^k = 1$ and $\mu_{N-1}^k =1$.
\end{lemma}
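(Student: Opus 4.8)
The plan is to obtain \eqref{Eq:descent-two-iterates} by tracking the change of the augmented Lagrangian $\cL$ through the individual block updates of Algorithm~\ref{alg:ADMM} and telescoping. The key observation is that each primal update is a block-coordinate minimization of $\cL$ over one block with the others frozen at their most recent values: for the blocks $W_N$, $V_{N-1}$, $V_N$ this is an \emph{exact} minimization of a convex quadratic, while for the blocks $W_i$ ($i\le N-1$) and $V_j$ ($j\le N-2$) it is the minimization of a \emph{majorizing surrogate} obtained by the LLA. Concretely, completing the square in the penalty-plus-multiplier terms shows that the restriction of $\cL$ to $W_i$ equals $\frac{\lambda}{2}\|W_i\|_F^2+\beta_iH_\sigma(W_i;V_{i-1},V_i-\beta_i^{-1}\Lambda_i)$ up to an additive constant, and the restriction to $V_j$ equals $\frac{\beta_j}{2}\|\sigma(W_jV_{j-1})+\beta_j^{-1}\Lambda_j-V_j\|_F^2+\beta_{j+1}M_\sigma(V_j;W_{j+1},V_{j+1}-\beta_{j+1}^{-1}\Lambda_{j+1})$ up to a constant; by \eqref{Eq:Wik-prox} and \eqref{Eq:Vjk-prox} the iterates $W_i^k$, $V_j^k$ minimize these expressions with $H_\sigma$, $M_\sigma$ replaced by $H_\sigma^k$, $M_\sigma^k$ from \eqref{Eq:Hk-sigmoid}, \eqref{Eq:Mk-sigmoid}.

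The main step is to verify the \textbf{majorization property}: with $h^k=\mathbb{L}(\|B\|_{\max})$ as in \eqref{Eq:hik} one has $H_\sigma^k(W;A,B)\ge H_\sigma(W;A,B)$ for all $W$ with equality at $W^{k-1}$, and likewise $M_\sigma^k\ge M_\sigma$ with $\mu^k$ as in \eqref{Eq:mujk}. Writing $H_\sigma(W;A,B)=\sum_{p,q}\phi_{pq}([WA]_{pq})$ with $\phi_{pq}(t)=\frac12(\sigma(t)-[B]_{pq})^2$, one checks $|\phi_{pq}''(t)|\le L_1^2+L_2(L_0+|[B]_{pq}|)=\frac12\mathbb{L}(|[B]_{pq}|)\le\frac12\mathbb{L}(\|B\|_{\max})=\frac{h^k}{2}$, using that $\mathbb{L}(\cdot)$ in \eqref{Eq:Lipschitz-constant} is nondecreasing; the scalar descent lemma then gives $\phi_{pq}(s)\le\phi_{pq}(t)+\phi_{pq}'(t)(s-t)+\frac{h^k}{4}(s-t)^2$, and summing over $(p,q)$ with $s=[WA]_{pq}$, $t=[W^{k-1}A]_{pq}$ — so that $\sum_{p,q}(s-t)^2=\|(W-W^{k-1})A\|_F^2$ — produces exactly the quadratic term in \eqref{Eq:Hk-sigmoid}; the same computation with $\tilde A$, $\tilde B$ yields \eqref{Eq:Mk-sigmoid}.

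Given the majorization, I would finish as follows. For each LLA block the surrogate $W_i$-objective is an explicit convex quadratic minimized at $W_i^k$, so expanding it about $W_i^k$ gives that its value drops by exactly $\frac{\lambda}{2}\|W_i^k-W_i^{k-1}\|_F^2+\frac{\beta_ih_i^k}{4}\|(W_i^k-W_i^{k-1})V_{i-1}^{k-1}\|_F^2$; combining with majorization — the $\cL$-value at $W_i^k$ is at most the surrogate value there, which is at most the surrogate value at $W_i^{k-1}$, which equals the $\cL$-value there — shows the $W_i$-step decreases $\cL$ by at least this amount, and the analogous argument for $V_j$ ($j\le N-2$) gives decrease at least $\frac{\beta_j}{2}\|V_j^k-V_j^{k-1}\|_F^2+\frac{\beta_{j+1}\mu_j^k}{4}\|W_{j+1}^k(V_j^k-V_j^{k-1})\|_F^2$. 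The exact blocks $W_N$, $V_{N-1}$, $V_N$ are handled directly: expanding the corresponding quadratic about its minimizer gives decreases at least as large as the claimed terms with $h_N^k=\mu_{N-1}^k=1$ (in fact with a factor-$2$ larger cross-term coefficient), and the $V_N$-step contributes exactly $\frac{1+\beta_N}{2}\|V_N^k-V_N^{k-1}\|_F^2$. Finally, $\cL$ is affine in each $\Lambda_i$, so by the multiplier update \eqref{Eq:Lambdak} the dual step changes $\cL$ by exactly $\langle\Lambda_i^k-\Lambda_i^{k-1},\beta_i^{-1}(\Lambda_i^k-\Lambda_i^{k-1})\rangle=\beta_i^{-1}\|\Lambda_i^k-\Lambda_i^{k-1}\|_F^2$. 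Telescoping these changes over the backward $W$-sweep, the forward $V$-sweep and the parallel $\Lambda$-update gives \eqref{Eq:descent-two-iterates}.

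The only non-routine part is the majorization step — i.e.\ confirming that $\mathbb{L}(\|B\|_{\max})$ from \eqref{Eq:Lipschitz-constant} really is an upper bound on twice the local Lipschitz constant of $\nabla_W H_\sigma(\cdot;A,B)$ in the relevant geometry, so that the coefficients $h^k/4$, $\mu^k/4$ in the LLA are large enough; everything after that is bookkeeping through the update order. I would also emphasize that the positive dual term $\sum_i\beta_i^{-1}\|\Lambda_i^k-\Lambda_i^{k-1}\|_F^2$ is \emph{not} controlled at this stage — that is the job of the later dual-bounded-by-primal lemma (Lemma~\ref{Lemm:dual-controlled-primal}) — so \eqref{Eq:descent-two-iterates} is a one-step estimate, not yet monotone descent along $\cL$.
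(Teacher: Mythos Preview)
Your proposal is correct and follows essentially the same route as the paper: the paper packages the majorization step (your entrywise $|\phi_{pq}''|\le\frac12\mathbb{L}(\|B\|_{\max})$ bound) as a preliminary scalar descent lemma (Lemma~\ref{Lemm:descent-lemma}) and then combines it with the exact quadratic expansion of the surrogate about its minimizer (Lemma~\ref{Lemm:descent-prox-linear}), before telescoping through the block updates exactly as you describe. Your observation that the exact blocks $W_N$ and $V_{N-1}$ actually yield a factor-$2$ larger cross-term than stated (with the convention $h_N^k=\mu_{N-1}^k=1$) is also what the paper proves in \eqref{Eq:descent-Wnk} and \eqref{Eq:descent-Vn-1k}.
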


From Lemma \ref{Lemm:descent-two-iterates},
there are two key parts that contribute to the progress along the augmented Lagrangian sequence, namely, the descent part arisen by the primal updates and the ascent part brought by the dual updates.
Due to the existence of the dual ascent part, the convergence of nonconvex ADMM is usually very challengeable.
By \eqref{Eq:descent-two-iterates}, in order to further estimate the progress in terms of the primal updates,
we shall bound these dual ascent parts via the primal updates as shown in Lemma \ref{Lemm:dual-controlled-primal} below.

To prove Lemma \ref{Lemm:descent-two-iterates}, we firstly establish two preliminary lemmas.

\begin{lemma}
\label{Lemm:descent-lemma}
Given a constant $c\in \mathbb{R}$, let $f_c$ be the function on $\mathbb{R}$ given by $f_c(u) = (\sigma(u)-c)^2$. Then the following holds
\[
f_c(v) \leq f_c(u) + f_c'(u)(v-u) + \frac{\mathbb{L}(|c|)}{2} (v-u)^2, \forall u, v\in \mathbb{R}
\]
where $\mathbb{L}(|c|)$ is defined in \eqref{Eq:Lipschitz-constant}.
\end{lemma}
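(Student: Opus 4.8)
\textbf{Proof proposal for Lemma \ref{Lemm:descent-lemma}.}
The plan is to recognize this as an instance of the classical \emph{descent lemma}: any $C^2$ function $g:\mathbb{R}\to\mathbb{R}$ whose second derivative is bounded in absolute value by some constant $L$ satisfies $g(v)\leq g(u)+g'(u)(v-u)+\frac{L}{2}(v-u)^2$ for all $u,v$. Thus the entire content of the lemma reduces to verifying that $f_c''$ is bounded uniformly by $\mathbb{L}(|c|)$ and then invoking this standard quadratic upper bound.

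First I would compute the derivatives of $f_c(u)=(\sigma(u)-c)^2$ explicitly. Differentiating once gives $f_c'(u)=2(\sigma(u)-c)\sigma'(u)$, and differentiating again gives
\[
f_c''(u)=2\bigl(\sigma'(u)\bigr)^2+2\bigl(\sigma(u)-c\bigr)\sigma''(u).
\]
Next I would bound each term using Assumption \ref{Assump:activ-fun}: $|\sigma(u)|\leq L_0$, $|\sigma'(u)|\leq L_1$, and $|\sigma''(u)|\leq L_2$ for all $u\in\mathbb{R}$. By the triangle inequality, $|\sigma(u)-c|\leq L_0+|c|$, so
\[
|f_c''(u)|\leq 2L_1^2+2(L_0+|c|)L_2=\mathbb{L}(|c|),
\]
which matches the definition \eqref{Eq:Lipschitz-constant} exactly.

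Finally I would conclude by Taylor's theorem with integral remainder: for any $u,v\in\mathbb{R}$,
\[
f_c(v)=f_c(u)+f_c'(u)(v-u)+(v-u)^2\int_0^1(1-t)\,f_c''\bigl(u+t(v-u)\bigr)\,dt,
\]
and bounding the integrand by $\mathbb{L}(|c|)$ (using $\int_0^1(1-t)\,dt=\tfrac12$) yields the claimed inequality. There is no serious obstacle here; the only point requiring care is that the bound on $|f_c''|$ must hold globally on $\mathbb{R}$, which is exactly why Assumption \ref{Assump:activ-fun} demands uniformly bounded $\sigma,\sigma',\sigma''$ — this global bound is what later lets the local linear approximation terms $\tfrac{h^k}{4}\|\cdot\|_F^2$ and $\tfrac{\mu^k}{4}\|\cdot\|_F^2$ genuinely majorize the nonlinear functions $H_\sigma$ and $M_\sigma$ entrywise.
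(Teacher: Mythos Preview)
Your proof is correct and follows exactly the same approach as the paper: bound $|f_c''(u)|$ by $\mathbb{L}(|c|)$ using Assumption \ref{Assump:activ-fun}, then invoke the standard descent lemma. The paper's own proof compresses the ``simple derivations'' you spell out into a single sentence, so your version is actually more detailed than the original.
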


\begin{proof}
According to Assumption \ref{Assump:activ-fun}, by some simple derivations, we can show $|f_c''(u)| \leq \mathbb{L}(|c|), \forall u \in \mathbb{R}$.
This yields the inequality
$
f_c(v) \leq f_c(u) + f_c'(u)(v-u) + \frac{\mathbb{L}(|c|)}{2} (v-u)^2, \forall u, v\in \mathbb{R}.
$
\end{proof}

Note that the $W_i^k$ $(i=1,\ldots, N-1)$ and $V_j^k$ $(j=1,\ldots, N-2)$ updates involve the following update schemes, i.e.,
\begin{align}
& W^k = \arg\min_W \left\{\frac{\lambda}{2} \|W\|_F^2 + \beta H_{\sigma}^k (W; A, B)\right\}, \label{Eq:W-update}\\
& V^k = \arg\min_V \left\{ \frac{\lambda}{2} \|V-C\|_F^2 + \beta M_{\sigma}^k (V; A, B)\right\} \label{Eq:V-update}
\end{align}
for some matrices $A,B$ and $C$, positive constants $\lambda$ and $\beta$.
Based on Lemma \ref{Lemm:descent-lemma}, we provide a lemma to estimate the descent quantities of the above two updates.

\begin{lemma}
\label{Lemm:descent-prox-linear}
Suppose that Assumption \ref{Assump:activ-fun} holds.
Let $W^k$ and $V^k$ be updated according to \eqref{Eq:W-update} and \eqref{Eq:V-update}, respectively, then
\begin{align}
\frac{\lambda}{2}\|W^k\|_F^2 + \beta H_{\sigma}(W^k;A,B)
& \leq \frac{\lambda}{2}\|W^{k-1}\|_F^2 + \beta H_{\sigma}(W^{k-1};A,B) \label{Eq:W-descent}\\
&- \frac{\lambda}{2}\|W^k - W^{k-1}\|_F^2 -\frac{\beta h^k}{4} \|(W^k - W^{k-1})A\|_F^2, \nonumber\\
\frac{\lambda}{2}\|V^k - C\|_F^2 + \beta M_{\sigma}(V^k;A,B)
&\leq \frac{\lambda}{2}\|V^{k-1}-C\|_F^2 + \beta M_{\sigma}(V^{k-1};A,B) \label{Eq:V-descent}\\
&- \frac{\lambda}{2}\|V^k - V^{k-1}\|_F^2 -\frac{\beta \mu^k}{4} \|A(V^k - V^{k-1})\|_F^2,\nonumber
\end{align}
where $h^k:= \mathbb{L}(\|B\|_{\max})$ and $\mu^k := \mathbb{L}(\|B\|_{\max})$.
\end{lemma}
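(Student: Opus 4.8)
The plan is to exploit two structural features of the surrogates in \eqref{Eq:W-update}--\eqref{Eq:V-update}: first, that $H_\sigma^k$ and $M_\sigma^k$ are \emph{global majorants} of $H_\sigma$ and $M_\sigma$ which are \emph{tight} at the previous iterate; and second, that each surrogate objective is a convex quadratic in its variable, so the gap between its value at $W^{k-1}$ (resp.\ $V^{k-1}$) and its minimum value is computed \emph{exactly} by a second-order expansion. Combining these two observations will produce the asserted descent in terms of the true functions $H_\sigma, M_\sigma$; I will only treat the $W$-update, the $V$-update being handled the same way.

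First I would establish the majorization. Writing $H_\sigma(W;A,B)=\frac12\sum_{m,n} f_{[B]_{mn}}\big([WA]_{mn}\big)$ with $f_c(u)=(\sigma(u)-c)^2$ as in Lemma \ref{Lemm:descent-lemma}, I apply that lemma entrywise with $u=[W^{k-1}A]_{mn}$ and $v=[WA]_{mn}$. Using $|[B]_{mn}|\le\|B\|_{\max}$ together with the monotonicity of $\mathbb{L}(\cdot)$ in \eqref{Eq:Lipschitz-constant} to replace each $\mathbb{L}(|[B]_{mn}|)$ by $h^k=\mathbb{L}(\|B\|_{\max})$, and summing the inequalities — the linear terms assembling into the inner product appearing in \eqref{Eq:Hk-sigmoid} and, after the global factor $\frac12$, the quadratic remainders summing to at most $\frac{h^k}{4}\|(W-W^{k-1})A\|_F^2$ — I obtain $H_\sigma(W;A,B)\le H_\sigma^k(W;A,B)$ for every $W$, with equality at $W=W^{k-1}$ because the correction terms vanish there. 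The identical computation with $M_\sigma(V;A,B)=\frac12\sum_{m,n} f_{[B]_{mn}}\big([AV]_{mn}\big)$ and $\mu^k=\mathbb{L}(\|B\|_{\max})$ gives $M_\sigma\le M_\sigma^k$, tight at $V^{k-1}$.

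Next I would use that the surrogate objective $\Phi^k(W):=\frac{\lambda}{2}\|W\|_F^2+\beta H_\sigma^k(W;A,B)$ is a quadratic function of $W$ whose Hessian quadratic form is $Q_W(\cdot)=\lambda\|\cdot\|_F^2+\frac{\beta h^k}{2}\|(\cdot)A\|_F^2$. Since $W^k$ minimizes $\Phi^k$ we have $\nabla\Phi^k(W^k)=0$, so the exact second-order identity for convex quadratics gives $\Phi^k(W^{k-1})=\Phi^k(W^k)+\frac12 Q_W(W^{k-1}-W^k)$, i.e.
\[
\Phi^k(W^{k-1}) = \Phi^k(W^k) + \frac{\lambda}{2}\|W^k-W^{k-1}\|_F^2 + \frac{\beta h^k}{4}\|(W^k-W^{k-1})A\|_F^2 .
\]
Finally I would chain this with $\Phi^k(W^{k-1})=\frac{\lambda}{2}\|W^{k-1}\|_F^2+\beta H_\sigma(W^{k-1};A,B)$ (tightness of the surrogate at the base point) and $\Phi^k(W^k)\ge \frac{\lambda}{2}\|W^k\|_F^2+\beta H_\sigma(W^k;A,B)$ (the majorization from the first step) to obtain \eqref{Eq:W-descent}; inequality \eqref{Eq:V-descent} then follows verbatim from the analogous expansion of $\Phi^k_V(V):=\frac{\lambda}{2}\|V-C\|_F^2+\beta M_\sigma^k(V;A,B)$, whose Hessian quadratic form is $\lambda\|\cdot\|_F^2+\frac{\beta\mu^k}{2}\|A(\cdot)\|_F^2$.

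The step that needs the most care is the majorization: one has to keep track of the global factor $\frac12$ in the definitions of $H_\sigma, M_\sigma$ so that the second-order remainder lands as $\frac{h^k}{4}\|\cdot\|_F^2$, exactly matching the quadratic term of \eqref{Eq:Hk-sigmoid}--\eqref{Eq:Mk-sigmoid} rather than being off by a constant, and one has to check that the entrywise bound $|f_c''(u)|\le\mathbb{L}(|c|)$ from Lemma \ref{Lemm:descent-lemma} may be replaced uniformly by $\mathbb{L}(\|B\|_{\max})$ using the monotonicity of $\mathbb{L}$. Everything else is the elementary exact-expansion identity for a convex quadratic together with an immediate chaining of two inequalities.
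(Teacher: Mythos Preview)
Your proposal is correct and follows essentially the same route as the paper's proof: both arguments use the exact second-order Taylor identity for the quadratic surrogate objective $\Phi^k$ at its minimizer $W^k$, combined with the entrywise application of Lemma~\ref{Lemm:descent-lemma} to show that $H_\sigma^k$ majorizes $H_\sigma$ with equality at $W^{k-1}$, and then chain these to pass from the surrogate descent to the true descent. Your treatment is in fact slightly more explicit than the paper's about the monotonicity of $\mathbb{L}(\cdot)$ needed to replace $\mathbb{L}(|[B]_{mn}|)$ uniformly by $h^k=\mathbb{L}(\|B\|_{\max})$, a point the paper leaves implicit.
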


\begin{proof}
We first establish the descent inequality \eqref{Eq:W-descent} then similarly show \eqref{Eq:V-descent}.

Let $h(W):= \frac{\lambda}{2} \|W\|_F^2 + \beta H_{\sigma}^k (W;A,B)$. By Taylor's formula, the optimality of $W^k$, and noting that $h(W)$ is a quadratic function, there holds
\[
h(W^{k-1}) = h(W^k) + \frac{\lambda}{2}\|W^k - W^{k-1}\|_F^2 + \frac{\beta h^k}{4} \|(W^k - W^{k-1})A\|_F^2,
\]
which implies
\begin{align*}
&\frac{\lambda}{2}\|W^{k-1}\|_F^2 + \beta H_{\sigma}(W^{k-1};A,B) \\
&= \frac{\lambda}{2}\|W^k\|_F^2 + \beta \left( H_{\sigma}(W^{k-1};A,B) + \langle \nabla H_{\sigma}(W^{k-1};A,B), W^k - W^{k-1}\rangle + \frac{h^k}{4}\|(W^k - W^{k-1})A\|_F^2\right) \\
&+ \frac{\lambda}{2}\|W^k - W^{k-1}\|_F^2 + \frac{\beta h^k}{4} \|(W^k - W^{k-1})A\|_F^2 \\
&\geq \frac{\lambda}{2}\|W^k\|_F^2 + \beta H_{\sigma}(W^{k};A,B)
+ \frac{\lambda}{2}\|W^k - W^{k-1}\|_F^2 + \frac{\beta h^k}{4} \|(W^k - W^{k-1})A\|_F^2,
\end{align*}
where the final inequality holds by the definition \eqref{Eq:H-sigmoid} of $H_{\sigma}(W;A,B) = \frac{1}{2}\|\sigma(WA)-B\|_F^2$ and by specializing Lemma \ref{Lemm:descent-lemma} with $v=[W^kA]_{ij}$, $u=[W^{k-1}A]_{ij}$ and $c = B_{ij}$ for any $i,j$, where $[W^kA]_{ij}$ and $[W^{k-1}A]_{ij}$ are the $(i,j)$-th entries of $W^kA$ and $W^{k-1}A$, respectively.
This yields \eqref{Eq:W-descent}.

Similarly, we can establish the inequality \eqref{Eq:V-descent}. This completes the proof.
\end{proof}

Based on Lemma \ref{Lemm:descent-prox-linear}, we prove Lemma \ref{Lemm:descent-two-iterates} as follows.

\begin{proof}[Proof of Lemma \ref{Lemm:descent-two-iterates}]
We establish \eqref{Eq:descent-two-iterates} via estimating the progress for each block update.
At first, we consider the $W_N^k$ update. By \eqref{Eq:Wnk-prox}, it is easy to show
\begin{align}
&{\cal L}(W_{<N}^{k-1}, W_N^k, \{V_j^{k-1}\}_{j=1}^N, \{\Lambda_j^{k-1}\}_{j=1}^N)
\leq {\cal L}(W_{<N}^{k-1}, W_N^{k-1}, \{V_j^{k-1}\}_{j=1}^N, \{\Lambda_j^{k-1}\}_{j=1}^N) \nonumber\\
&- \frac{\lambda}{2} \|W_N^k - W_N^{k-1}\|_F^2 - \frac{\beta_N}{2} \|(W_N^k - W_N^{k-1})V_{N-1}^{k-1}\|_F^2.  \label{Eq:descent-Wnk}
\end{align}
By \eqref{Eq:Wik-prox}, $W_i^k$ $(i=1,\ldots, N-1)$ is updated according to \eqref{Eq:W-update} with $\lambda = \lambda$, $\beta =\beta_i$, $A=V_{i-1}^{k-1}$  and $B = V_{i}^{k-1} - \beta_i^{-1}\Lambda_i^{k-1}$. Then by Lemma \ref{Lemm:descent-prox-linear}, it holds
\begin{align}
&{\cal L}(W_{<i}^{k-1}, W_i^k, W_{>i}^k, \{V_j^{k-1}\}_{j=1}^N, \{\Lambda_j^{k-1}\}_{j=1}^N)
\leq {\cal L}(W_{<i}^{k-1}, W_i^{k-1}, W_{>i}^k, \{V_j^{k-1}\}_{j=1}^N, \{\Lambda_j^{k-1}\}_{j=1}^N) \nonumber\\
& - \frac{\lambda}{2}\|W_i^k - W_i^{k-1}\|_F^2 - \frac{\beta_ih_i^k}{4} \|(W_i^k - W_i^{k-1})V_{i-1}^{k-1}\|_F^2. \label{Eq:descent-Wik}
\end{align}
Similarly, for the $V_j^k$-update ($j=1,\ldots, N-2$), by \eqref{Eq:Vjk-prox} and Lemma \ref{Lemm:descent-prox-linear}, the following holds
\begin{align}
&{\cal L}(\{W_i^{k}\}_{i=1}^N, V_{<j}^{k}, V_j^k, V_{>j}^{k-1}, \{\Lambda_i^{k-1}\}_{i=1}^N)
\leq {\cal L}(\{W_i^{k}\}_{i=1}^N, V_{<j}^{k}, V_j^{k-1}, V_{>j}^{k-1}, \{\Lambda_i^{k-1}\}_{i=1}^N) \nonumber\\
& - \frac{\beta_j}{2}\|V_j^k - V_j^{k-1}\|_F^2 - \frac{\beta_{j+1}\mu_j^k}{4} \|W_{j+1}^k(V_j^k - V_j^{k-1})\|_F^2. \label{Eq:descent-Vjk}
\end{align}
For the $V_{N-1}^k$ and $V_N^k$ updates, by \eqref{Eq:Vn-1k-prox} and \eqref{Eq:Vnk-prox}, we can easily obtain the following
\begin{align}
& {\cal L}(\{W_i^{k}\}_{i=1}^N, V_{<N-1}^{k}, V_{N-1}^k, V_{N}^{k-1}, \{\Lambda_i^{k-1}\}_{i=1}^N)
\leq {\cal L}(\{W_i^{k}\}_{i=1}^N, V_{<N-1}^{k}, V_{N-1}^{k-1}, V_{N}^{k-1}, \{\Lambda_i^{k-1}\}_{i=1}^N) \nonumber\\
& - \frac{\beta_{N-1}}{2} \|V_{N-1}^k - V_{N-1}^{k-1}\|_F^2 - \frac{\beta_N}{2} \|W_N^k(V_{N-1}^k - V_{N-1}^{k-1})\|_F^2 \label{Eq:descent-Vn-1k}
\end{align}
and
\begin{align}
&{\cal L}(\{W_i^{k}\}_{i=1}^N, V_{<N}^{k}, V_{N}^k, \{\Lambda_i^{k-1}\}_{i=1}^N)
\leq {\cal L}(\{W_i^{k}\}_{i=1}^N, V_{<N}^{k}, V_{N}^{k-1}, \{\Lambda_i^{k-1}\}_{i=1}^N) \nonumber\\
&- \frac{1+\beta_N}{2} \|V_{N}^k - V_{N}^{k-1}\|_F^2.\label{Eq:descent-Vnk}
\end{align}
Particularly, by the updates of $\Lambda_j^k$ $(j=1,\ldots, N)$, we have
\begin{align}
&{\cal L}(\{W_i^k\}_{i=1}^N, \{V_j^k\}_{j=1}^N, \{\Lambda_i^k\}_{i=1}^N) \nonumber\\
&= {\cal L}(\{W_i^k\}_{i=1}^N, \{V_j^k\}_{j=1}^N, \{\Lambda_i^{k-1}\}_{i=1}^N) + \sum_{i=1}^N \langle \Lambda_i^k - \Lambda_i^{k-1},\sigma_i(W_i^kV_{i-1}^k)-V_i^k \rangle \nonumber\\
&= {\cal L}(\{W_i^k\}_{i=1}^N, \{V_j^k\}_{j=1}^N, \{\Lambda_i^{k-1}\}_{i=1}^N) + \sum_{i=1}^N \beta_i^{-1} \|\Lambda_i^k - \Lambda_i^{k-1}\|_F^2. \label{Eq:ascent-Lambda}
\end{align}
Summing up \eqref{Eq:descent-Wnk}-\eqref{Eq:ascent-Lambda} yields \eqref{Eq:descent-two-iterates}.
\end{proof}

\subsubsection{Lemma \ref{Lemm:dual-controlled-primal}: Dual-bounded-by-primal lemma}
\label{app:proof-dual-by-primal}

By Lemma \ref{Lemm:descent-two-iterates}, how to control the amount of ascent part brought by the dual updates via the amount of descent part characterized by the primal updates is very important.
The following lemma shows that the dual ascent quantity $\{\|\Lambda_j^k - \Lambda_j^{k-1}\|^2_F\}_{j=1}^N$ can be bounded by the discrepancies between two successive primal updates $\{\|W_i^k - W_i^{k-1}\|^2_F\}_{i=1}^N$,
$\{\|V_i^k - V_i^{k-1}\|^2_F\}_{i=1}^N$, and $\{\|V_i^{k-1} - V_i^{k-2}\|^2_F\}_{i=1}^N$ via a recursive way.

\begin{lemma}[Dual-bounded-by-primal]
\label{Lemm:dual-controlled-primal}
Let Assumption \ref{Assump:activ-fun} hold.
For any positive integer $k\geq 2$, the following hold
\begin{align}
&\|\Lambda_N^k - \Lambda_N^{k-1}\|_F = \|V_N^k - V_N^{k-1}\|_F, \label{Eq:dual-N}\\
&\|\Lambda_{N-1}^k - \Lambda_{N-1}^{k-1}\|_F
\leq \|W_N^k\|_F\cdot \|\Lambda_N^k - \Lambda_N^{k-1}\|_F + \|\Lambda_N^{k-1}\|_F\cdot \|W_N^k - W_N^{k-1}\|_F \nonumber\\
&\quad \quad \quad \quad \quad \quad \quad \quad +\beta_N \|W_N^k\|_F \cdot \|V_N^k - V_N^{k-1}\|_F + \beta_N \|W_N^{k-1}\|_F\cdot \|V_N^{k-1} - V_N^{k-2}\|_F, \label{Eq:dual-N-1}
\end{align}
and for $j=1,\ldots,N-2,$
\begin{align}
\|\Lambda_j^k - \Lambda_j^{k-1}\|_F
& \leq L_1 \|W_{j+1}^k\|_F \cdot \|\Lambda_{j+1}^k - \Lambda_{j+1}^{k-1}\|_F \label{Eq:dual-j}\\
&+\left(L_1\|\Lambda_{j+1}^{k-1}\|_F + L_2 \|W_{j+1}^{k-1}\|_F \cdot \|\Lambda_{j+1}^{k-1}\|_F\cdot \|V_j^{k-1}\|_F\right)\cdot \|W_{j+1}^k - W_{j+1}^{k-1}\|_F \nonumber\\
&+L_1\beta_{j+1}\left(\|W_{j+1}^k\|_F \cdot \|V_{j+1}^k - V_{j+1}^{k-1}\|_F + \|W_{j+1}^{k-1}\|_F\cdot\|V_{j+1}^{k-1}-V_{j+1}^{k-2}\|_F\right) \nonumber\\
&+\left(L_1^2 + \frac{\mu_j^k}{2} \right)\beta_{j+1}\|W_{j+1}^k\|_F^2 \cdot \|V_j^k - V_j^{k-1}\|_F \nonumber\\
&+\left( (L_1^2 + \mu_j^{k-1}/2)\cdot\beta_{j+1} + L_2 \|\Lambda_{j+1}^{k-1}\|_F\right) \|W_{j+1}^{k-1}\|_F^2 \cdot \|V_j^{k-1} - V_j^{k-2}\|_F, \nonumber
\end{align}
where $L_1$ and $L_2$ are two constants specified in Assumption \ref{Assump:activ-fun}.
\end{lemma}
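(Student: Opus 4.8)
The plan is to obtain all three estimates by differencing, between consecutive iterations $k$ and $k-1$, the closed-form expressions for $\Lambda_N^k$, $\Lambda_{N-1}^k$ and $\Lambda_j^k$ supplied by Lemma~\ref{Lemm:dual-expressed-primal}, and then bounding each resulting difference term by term using the triangle inequality, submultiplicativity of the Frobenius norm (via $\|AB\|_F\le\|A\|_2\|B\|_F\le\|A\|_F\|B\|_F$ and $\|X\odot Z\|_F\le\|Z\|_{\max}\|X\|_F$), and the Lipschitz bounds $|\sigma'|\le L_1$, $|\sigma''|\le L_2$ from Assumption~\ref{Assump:activ-fun}. Estimate \eqref{Eq:dual-N} is immediate: since $\Lambda_N^k=V_N^k-Y$ for every $k$ by \eqref{Eq:update-Lambdank*}, subtracting gives $\Lambda_N^k-\Lambda_N^{k-1}=V_N^k-V_N^{k-1}$. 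For \eqref{Eq:dual-N-1} I start from \eqref{Eq:update-Lambdan-1k*}, write the difference as $\bigl[(W_N^k)^T\Lambda_N^k-(W_N^{k-1})^T\Lambda_N^{k-1}\bigr]+\beta_N\bigl[(W_N^k)^T(V_N^k-V_N^{k-1})-(W_N^{k-1})^T(V_N^{k-1}-V_N^{k-2})\bigr]$, insert the intermediate term $(W_N^k)^T\Lambda_N^{k-1}$ in the first bracket to split off $(W_N^k)^T(\Lambda_N^k-\Lambda_N^{k-1})$ and $(W_N^k-W_N^{k-1})^T\Lambda_N^{k-1}$, and bound the $\beta_N$-bracket directly by the triangle inequality; applying submultiplicativity then produces exactly the four summands on the right of \eqref{Eq:dual-N-1}.

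The substantive case is \eqref{Eq:dual-j}. Taking the difference of \eqref{Eq:update-Lambdajk*} at $k$ and $k-1$, I organize it into four groups matching the four terms of that formula: (i) the Hadamard-product term $(W_{j+1})^T(\Lambda_{j+1}\odot\sigma'(W_{j+1}V_j))$; (ii) the $\sigma$-difference term $\beta_{j+1}(W_{j+1})^T\bigl[(\sigma(W_{j+1}V_j^{\mathrm{old}})-\sigma(W_{j+1}V_j^{\mathrm{new}}))\odot\sigma'\bigr]$; (iii) the $V_{j+1}$-increment term $\beta_{j+1}(W_{j+1})^T\bigl[(V_{j+1}^{\mathrm{new}}-V_{j+1}^{\mathrm{old}})\odot\sigma'\bigr]$; and (iv) the proximal term $\frac{\beta_{j+1}\mu_j}{2}(W_{j+1})^TW_{j+1}(V_j^{\mathrm{new}}-V_j^{\mathrm{old}})$ (here ``old'' and ``new'' refer to the two consecutive $V$-iterates appearing inside \eqref{Eq:update-Lambdajk*} at a fixed step). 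Only group (i) requires a telescoping argument: inserting the intermediate quantities $(W_{j+1}^k)^T(\Lambda_{j+1}^{k-1}\odot\sigma'(W_{j+1}^kV_j^{k-1}))$ and $(W_{j+1}^{k-1})^T(\Lambda_{j+1}^{k-1}\odot\sigma'(W_{j+1}^kV_j^{k-1}))$ splits its difference into (a) a term carrying $\Lambda_{j+1}^k-\Lambda_{j+1}^{k-1}$, which yields line~1 of \eqref{Eq:dual-j} and is the piece that makes the lemma recursive in $j$; (b) a term carrying $W_{j+1}^k-W_{j+1}^{k-1}$, which gives the $L_1\|\Lambda_{j+1}^{k-1}\|_F$ part of line~2; and (c) a term carrying $\sigma'(W_{j+1}^kV_j^{k-1})-\sigma'(W_{j+1}^{k-1}V_j^{k-2})$, which via $|\sigma''|\le L_2$ and the identity $W_{j+1}^kV_j^{k-1}-W_{j+1}^{k-1}V_j^{k-2}=(W_{j+1}^k-W_{j+1}^{k-1})V_j^{k-1}+W_{j+1}^{k-1}(V_j^{k-1}-V_j^{k-2})$ produces both the $L_2$ part of line~2 and the $L_2\|\Lambda_{j+1}^{k-1}\|_F\|W_{j+1}^{k-1}\|_F^2$ part of line~5.

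Groups (ii), (iii), (iv) need no telescoping: for each I bound the two products in the difference separately, using $\|X\odot\sigma'(\cdot)\|_F\le L_1\|X\|_F$, the estimate $\|\sigma(WV')-\sigma(WV'')\|_F\le L_1\|W\|_F\|V'-V''\|_F$, $\|W^TY\|_F\le\|W\|_F\|Y\|_F$, and $\|W^TWZ\|_F\le\|W\|_F^2\|Z\|_F$; this gives line~3 from group (iii), the $L_1^2$ parts of lines~4 and~5 from group (ii), and the $\mu_j^k/2$ and $\mu_j^{k-1}/2$ parts of lines~4 and~5 from group (iv). Collecting all these bounds yields \eqref{Eq:dual-j}. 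The only real difficulty I anticipate is the bookkeeping in group (i): the intermediate insertions must be chosen so that exactly one summand retains the factor $\Lambda_{j+1}^k-\Lambda_{j+1}^{k-1}$ while every other difference collapses onto one of the five primal increments $\|W_{j+1}^k-W_{j+1}^{k-1}\|_F$, $\|V_{j+1}^k-V_{j+1}^{k-1}\|_F$, $\|V_{j+1}^{k-1}-V_{j+1}^{k-2}\|_F$, $\|V_j^k-V_j^{k-1}\|_F$, $\|V_j^{k-1}-V_j^{k-2}\|_F$; in particular, the $\sigma'$-argument mismatch is the single place where a cross term $W_{j+1}^{k-1}(V_j^{k-1}-V_j^{k-2})$ appears and must be routed to line~5 rather than producing a spurious $V_j$-increment in line~2. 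No individual inequality is delicate — only the correct assignment of the constants $L_1,L_2$ and the weights $\beta_{j+1},\mu_j$ to the appropriate increments requires care.
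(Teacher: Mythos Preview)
Your proposal is correct and follows essentially the same route as the paper: difference the closed-form expressions from Lemma~\ref{Lemm:dual-expressed-primal}, telescope the Hadamard term in group~(i), and bound groups~(ii)--(iv) separately via the triangle inequality and the bounds $|\sigma'|\le L_1$, $|\sigma''|\le L_2$. Your two-step insertion in group~(i) (first isolating $\Lambda_{j+1}^k-\Lambda_{j+1}^{k-1}$, then $W_{j+1}^k-W_{j+1}^{k-1}$, then the $\sigma'$-argument difference) is a slight variant of the paper's single insertion, but it actually matches the indices in the stated inequality \eqref{Eq:dual-j} more closely, and the resulting bound is identical.
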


From Lemma \ref{Lemm:dual-controlled-primal},
the amount of the dual ascent part at $j$-th layer is related to all the later layers (i.e., $i=j+1,\ldots, N$) via a recursive way.
Besides these terms $\{\|W_i^k - W_i^{k-1}\|^2_F\}_{i=1}^N$ and $\{\|V_i^k - V_i^{k-1}\|^2_F\}_{i=1}^N$ exist in the upper bounds,
the discrepancies between the previous two updates $\{\|V_i^{k-1} - V_i^{k-2}\|^2_F\}_{i=1}^N$ are also involved in the upper bounds.
This may bring some challenge to construct the Lyapunov function such that the sequence or its variant is a descent sequence,
because in this case, the augmented Lagrangian shall not be an appropriate Lyapunov function by Lemma \ref{Lemm:descent-two-iterates},
where the amount of descent part is only characterized by $\{\|W_i^k - W_i^{k-1}\|^2_F\}_{i=1}^N$ and $\{\|V_i^k - V_i^{k-1}\|^2_F\}_{i=1}^N$ without $\{\|V_i^{k-1} - V_i^{k-2}\|^2_F\}_{i=1}^N$.

\begin{proof}
The equality \eqref{Eq:dual-N} follows directly from \eqref{Eq:update-Lambdank*}.
By the update \eqref{Eq:update-Lambdan-1k*} of $\Lambda_{N-1}^k$, the following holds
\begin{align*}
&\Lambda_{N-1}^k - \Lambda_{N-1}^{k-1}\\
& = {(W_N^k)}^T\Lambda_N^k - {(W_N^{k-1})}^T \Lambda_N^{k-1} + \beta_N {(W_N^k)}^T(V_N^k - V_N^{k-1}) - \beta_N {(W_N^{k-1})}^T(V_N^{k-1}-V_N^{k-2})\\
& = {(W_N^k)}^T (\Lambda_N^k - \Lambda_N^{k-1}) + (W_N^k - W_N^{k-1})^T \Lambda_N^{k-1} + \beta_N {(W_N^k)}^T(V_N^k - V_N^{k-1}) \\
& - \beta_N{(W_N^{k-1})}^T(V_N^{k-1}-V_N^{k-2}),
\end{align*}
which implies \eqref{Eq:dual-N-1} directly by the triangle inequality.
For $j=1, \ldots, N-2$, by the update of \eqref{Eq:update-Lambdajk*},
\begin{align*}
&\Lambda_j^k - \Lambda_j^{k-1}
= {(W_{j+1}^k)}^T\left( \Lambda_{j+1}^k \odot \sigma'(W_{j+1}^k V_j^{k-1})\right)
- {(W_{j+1}^{k-1})}^T\left( \Lambda_{j+1}^{k-1} \odot \sigma'(W_{j+1}^{k-1} V_j^{k-2})\right)\\
&+\beta_{j+1}{(W_{j+1}^k)}^T \bigg[ \left((\sigma(W_{j+1}^kV_j^{k-1})-\sigma(W_{j+1}^k V_j^k))+(V_{j+1}^k - V_{j+1}^{k-1})\right)\odot \sigma'(W_{j+1}^k V_j^{k-1})\\
& + \frac{\mu_j^k}{2}W_{j+1}^k (V_j^k - V_j^{k-1}) \bigg]\\
&-\beta_{j+1}{(W_{j+1}^{k-1})}^T \bigg[ \left((\sigma(W_{j+1}^{k-1}V_j^{k-2})-\sigma(W_{j+1}^{k-1} V_j^{k-1}))+(V_{j+1}^{k-1} - V_{j+1}^{k-2})\right)\odot \sigma'(W_{j+1}^{k-1} V_j^{k-2}) \\
& + \frac{\mu_j^{k-1}}{2}W_{j+1}^{k-1} (V_j^{k-1} - V_j^{k-2}) \bigg].
\end{align*}
By Assumption \ref{Assump:activ-fun} and the triangle inequality, the above equality implies that
\begin{align}
&\|\Lambda_j^k - \Lambda_j^{k-1}\|_F
\leq  \|{(W_{j+1}^k)}^T\left( \Lambda_{j+1}^k \odot \sigma'(W_{j+1}^k V_j^{k-1})\right)
- {(W_{j+1}^{k-1})}^T\left( \Lambda_{j+1}^{k-1} \odot \sigma'(W_{j+1}^{k-1} V_j^{k-2})\right)\|_F \nonumber\\
&+\beta_{j+1}\|{W_{j+1}^k}\|_F\left( (L_1^2 + \mu_j^k/2)\|W_{j+1}^k\|_F \|V_j^k - V_j^{k-1}\|_F + L_1 \|V_{j+1}^k - V_{j+1}^{k-1}\|_F\right) \label{Eq:lambda-j-k-k-1}\\
&+\beta_{j+1}\|{W_{j+1}^{k-1}}\|_F\left( (L_1^2 + \mu_j^{k-1}/2)\|W_{j+1}^{k-1}\|_F \|V_j^{k-1} - V_j^{k-2}\|_F + L_1 \|V_{j+1}^{k-1} - V_{j+1}^{k-2}\|_F\right).\nonumber
\end{align}
Note that
\begin{align}
\label{Eq:lambda-j-k-k-1-part1}
& \|{(W_{j+1}^k)}^T\left( \Lambda_{j+1}^k \odot \sigma'(W_{j+1}^k V_j^{k-1})\right) - {(W_{j+1}^{k-1})}^T\left( \Lambda_{j+1}^{k-1} \odot \sigma'(W_{j+1}^{k-1} V_j^{k-2})\right)\|_F \nonumber\\
& \leq \|W_{j+1}^k - W_{j+1}^{k-1}\|_F \|\Lambda_{j+1}^{k} \odot \sigma'(W_{j+1}^k V_j^{k-1})\|_F \nonumber\\
&+ \|W_{j+1}^{k-1}\|_F \|\Lambda_{j+1}^k \odot \sigma'(W_{j+1}^k V_j^{k-1}) - \Lambda_{j+1}^{k-1} \odot \sigma'(W_{j+1}^{k-1} V_j^{k-2})\|_F \nonumber\\
&\leq L_1 \|\Lambda_{j+1}^{k}\|_F\|W_{j+1}^k - W_{j+1}^{k-1}\|_F
+L_1\|W_{j+1}^{k-1}\|_F \|\Lambda_{j+1}^k - \Lambda_{j+1}^{k-1}\|_F \nonumber\\
&+L_2 \|W_{j+1}^{k-1}\|_F \|\Lambda_{j+1}^{k-1}\|_F \|V_j^{k-1}\|_F \|W_{j+1}^k - W_{j+1}^{k-1}\|_F \nonumber\\
&+L_2 \|W_{j+1}^{k-1}\|_F^2 \|\Lambda_{j+1}^{k-1}\|_F \|V_j^{k-1} - V_j^{k-2}\|_F,
\end{align}
where the final inequality holds for
\begin{align*}
&\|\Lambda_{j+1}^k \odot \sigma'(W_{j+1}^k V_j^{k-1}) - \Lambda_{j+1}^{k-1} \odot \sigma'(W_{j+1}^{k-1} V_j^{k-2})\|_F \\
&\leq \|(\Lambda_{j+1}^k - \Lambda_{j+1}^{k-1})\odot \sigma'(W_{j+1}^k V_j^{k-1})\|_F
+ \|\Lambda_{j+1}^{k-1} \odot (\sigma'(W_{j+1}^k V_j^{k-1}) - \sigma'(W_{j+1}^{k-1}V_j^{k-2}))\|_F \\
&\leq L_1 \|\Lambda_{j+1}^k - \Lambda_{j+1}^{k-1}\|_F + L_2 \|\Lambda_{j+1}^{k-1}\|_F \|W_{j+1}^kV_j^{k-1} - W_{j+1}^{k-1}V_j^{k-2}\|_F\\
&\leq L_1 \|\Lambda_{j+1}^k - \Lambda_{j+1}^{k-1}\|_F +
L_2 \|\Lambda_{j+1}^{k-1}\|_F \left(\|W_{j+1}^k - W_{j+1}^{k-1}\|_F\|V_j^{k-1}\|_F + \|W_{j+1}^{k-1}\|_F \|V_j^{k-1}-V_j^{k-2}\|_F \right)
\end{align*}
by Assumption \ref{Assump:activ-fun} and the triangle inequality.
Substituting \eqref{Eq:lambda-j-k-k-1-part1} into \eqref{Eq:lambda-j-k-k-1} yields \eqref{Eq:dual-j}.
This completes the proof of this lemma.
\end{proof}

\subsubsection{Lemma \ref{Lemm:boundedness-seq}: Boundedness lemma}
\label{app:proof-boundedness-sequence}

Note that in the upper bounds of Lemma \ref{Lemm:dual-controlled-primal}, the terms $\{\|W_i^k - W_i^{k-1}\|_F\}_{i=1}^N$, $\{\|V_i^k - V_i^{k-1}\|_F\}_{i=1}^N$ and $\{\|V_i^{k-1} - V_i^{k-2}\|^2_F\}_{i=1}^N$ are multiplied by many other terms including $\{\|W_i^k\|_F\}_{i=1}^N$, $\{\|V_i^k\|_F\}_{i=1}^N$, $\{\|\Lambda_i^k\|_F\}_{i=1}^N$, and the locally Lipschitz constants $\{h_i^k := {\mathbb L}(\|V_{i}^{k-1} - \beta_i^{-1}\Lambda_i^{k-1}\|_{\max})\}_{i=1}^{N-1}$ and
$\{\mu_j^k := {\mathbb L}(\|V_{j+1}^{k-1} - \beta_{j+1}^{-1}\Lambda_{j+1}^{k-1}\|_{\max})\}_{j=1}^{N-2}$, highly depending on the current or previous updates.
In order to make these bounds in Lemma \ref{Lemm:dual-controlled-primal} only depend on those desired terms,
the following boundedness property of the sequence is required.

Instead of the conditions of Theorem \ref{Thm:global-generic},
we impose the following weaker conditions:
\begin{align}
&\beta_N \geq 3.5, \label{Eq:betaN}\\
&\frac{\beta_{N-1}}{\beta_N} \geq 7\gamma^2, \label{Eq:betaN*} \\
&\frac{\beta_i}{\beta_{i+1}} \geq 6\left(\sqrt{3L_1} + \sqrt{2L_3C_3\gamma^i}\right)^2\gamma^2,
\quad i=1,\ldots, N-2, \label{Eq:betaj*}\\
&\lambda \geq \max\left\{\hat{\lambda}, 12\beta_NC_3^2\gamma^{2N-4}, \max_{2\leq j\leq N-1} \tilde{\lambda}_j  \right\}, \label{Eq:lambda}\\
&\|W_i^0\|_F\leq \gamma, \quad \|V_i^0\|_F \leq 3C_3 \gamma^{i-1}, \quad \|\Lambda_i^0\|_F \leq C_3\beta_i\gamma^{i-1}, \quad i=1,\ldots, N. \label{Eq:initial-cond}
\end{align}
It can be seen that the conditions \eqref{Eq:betaN}-\eqref{Eq:betaj*} on $\beta_i$'s are slightly weaker than the conditions \eqref{Eq:cond-betaN}-\eqref{Eq:cond-betai-i+1}.

\begin{lemma}[Boundedness]
\label{Lemm:boundedness-seq}
Under Assumption \ref{Assump:activ-fun} and the above conditions \eqref{Eq:betaN}-\eqref{Eq:initial-cond}, for any $k\in \mathbb{N}$, there hold
\begin{align}
\label{Eq:boundedness}
&\|W_i^k\|_F \leq \gamma, \quad \|V_i^k\|_F \leq 3C_3\gamma^{i-1}, \quad \|\Lambda_i^k\|_F \leq C_3\beta_i\gamma^{i-1}, \quad i=1,\ldots, N, \\
& h_i^k \leq 4L_3 C_3 \gamma^{i-1}, \quad i=1, \ldots,  N-1, \label{Eq:hi-bound-sigmoid}\\
& \mu_i^k \leq 4L_3C_3 \gamma^i, \quad i=1, \ldots, N-2, \label{Eq:mui-bound-sigmoid}
\end{align}
where $\gamma:= \max_{1\leq i \leq N} \|W_i^0\|_F$ (particularly, $\gamma = 1$ in the normalized case), $C_3$ and $L_3$ are
specified later in \eqref{Eq:C3} and \eqref{Eq:constant-L3}, respectively.
\end{lemma}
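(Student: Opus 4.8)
The plan is to prove the three bounds in \eqref{Eq:boundedness} simultaneously by induction on $k$, with the bounds \eqref{Eq:hi-bound-sigmoid} and \eqref{Eq:mui-bound-sigmoid} following as immediate corollaries of \eqref{Eq:boundedness} together with the definition \eqref{Eq:Lipschitz-constant} of $\mathbb{L}(\cdot)$ and the definition \eqref{Eq:constant-L3} of $L_3$. Indeed, since $h_i^k = \mathbb{L}(\|V_i^{k-1}-\beta_i^{-1}\Lambda_i^{k-1}\|_{\max})$ and $\|V_i^{k-1}-\beta_i^{-1}\Lambda_i^{k-1}\|_{\max}\le \|V_i^{k-1}\|_F + \beta_i^{-1}\|\Lambda_i^{k-1}\|_F \le 3C_3\gamma^{i-1}+C_3\gamma^{i-1}=4C_3\gamma^{i-1}$ once \eqref{Eq:boundedness} holds at index $k-1$, plugging into $\mathbb{L}(|c|)=2L_2(L_0+|c|)+2L_1^2$ and comparing with $L_3 = 2(L_1^2+L_2L_0+L_2)$ gives $h_i^k \le 4L_3C_3\gamma^{i-1}$ (using $C_3\gamma^{i-1}\ge$ a constant, which holds by the definition \eqref{Eq:C3} of $C_3$); the bound on $\mu_j^k$ is analogous with one extra factor of $\gamma$. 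So the heart of the matter is the inductive step for \eqref{Eq:boundedness}.

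The base case $k=0$ is exactly the assumption \eqref{Eq:initial-cond}. For the inductive step, assume \eqref{Eq:boundedness} at $k-1$; then as noted above the Lipschitz constants $h_i^k,\mu_j^k$ obey \eqref{Eq:hi-bound-sigmoid}--\eqref{Eq:mui-bound-sigmoid}. Now I would bound the new iterates in the order in which Algorithm \ref{alg:ADMM} produces them, using the explicit formulas from Lemma \ref{Lemm:dual-expressed-primal} and its proof (equations \eqref{Eq:update-WNk*}--\eqref{Eq:update-W1k*}, \eqref{Eq:update-Vnk}, \eqref{Eq:update-Vn-1k}, \eqref{Eq:update-Vjk*}, \eqref{Eq:update-LambdaN-VN}, \eqref{Eq:update-Lambdan-1k*}, \eqref{Eq:update-Lambdajk*}):
\begin{itemize}
\item For $W_N^k$: from \eqref{Eq:update-WNk*}, $\|W_N^k\|_F \le \|(\beta_N V_N^{k-1}-\Lambda_N^{k-1})(V_{N-1}^{k-1})^T\|_F \cdot \|(\lambda I + \beta_N V_{N-1}^{k-1}(V_{N-1}^{k-1})^T)^{-1}\|_2$; the second factor is $\le \lambda^{-1}$, and the first is controlled by the induction hypothesis and the definition of $C_3$; the condition $\lambda \ge 12\beta_N C_3^2\gamma^{2N-4}$ absorbs the resulting constant, giving $\|W_N^k\|_F\le\gamma$.
\item For $W_i^k$ ($i\le N-1$): from \eqref{Eq:update-Wik*}, $W_i^k = W_i^{k-1} - W_i^{k-1}(I+\tfrac{\beta_i h_i^k}{2\lambda}V_{i-1}^{k-1}(V_{i-1}^{k-1})^T)^{-1} - (\text{correction})$; the first two terms telescope to $W_i^{k-1}(I+\tfrac{\beta_i h_i^k}{2\lambda}\cdots)^{-1}\cdot\tfrac{\beta_i h_i^k}{2\lambda}V_{i-1}^{k-1}(V_{i-1}^{k-1})^T$ which is small, and the correction term has operator norm $\le \lambda^{-1}\|(\cdots)\odot\sigma'(\cdots)\|_F\|V_{i-1}^{k-1}\|_F$; the conditions $\lambda\ge\hat\lambda$ (for $i=1$) and $\lambda\ge\tilde\lambda_i$ (for $2\le i\le N-1$) are precisely designed so the total stays $\le\gamma$.
\item For $V_N^k$: from \eqref{Eq:update-Vnk}, $V_N^k$ is a convex combination of $V_N^{k-1}$ and $W_N^k V_{N-1}^k$, so $\|V_N^k\|_F \le \max\{3C_3\gamma^{N-2}, \gamma\cdot 3C_3\gamma^{N-2}\} = 3C_3\gamma^{N-1}$ once $\|V_{N-1}^k\|_F\le 3C_3\gamma^{N-2}$ is in hand.
\item For $V_{N-1}^k$ and $V_j^k$ ($j\le N-2$): the update matrices $(\beta_{N-1}I+\beta_N(W_N^k)^TW_N^k)^{-1}$ and $(\beta_j I + \tfrac{\beta_{j+1}\mu_j^k}{2}(W_{j+1}^k)^TW_{j+1}^k)^{-1}$ have norm $\le \beta_{N-1}^{-1}$ and $\le\beta_j^{-1}$; expanding the brackets using the induction hypothesis and the just-proved bounds on $W$'s, the growth in $\gamma$ is absorbed by the ratio conditions \eqref{Eq:betaN*}--\eqref{Eq:betaj*} on $\beta_i/\beta_{i+1}$ — this is where the geometric growth $\beta_i\sim\beta_{i+1}\gamma^2(\cdots)$ enters.
\item For $\Lambda_N^k$: $\Lambda_N^k = V_N^k - Y$ by \eqref{Eq:update-LambdaN-VN}, so $\|\Lambda_N^k\|_F \le 3C_3\gamma^{N-1}+\|Y\|_F \le C_3\beta_N\gamma^{N-1}$ using $\beta_N\ge 3.5$ and the definition \eqref{Eq:C3} of $C_3$ (which contains the term $\|Y\|_F/((\beta_N-3)\gamma^{N-1})$).
\item For $\Lambda_{N-1}^k$ and $\Lambda_j^k$: from \eqref{Eq:update-Lambdan-1k*}, \eqref{Eq:update-Lambdajk*}, these are $(W^k)^T$ times combinations of $\Lambda$'s and $V$-differences times the newly bounded Lipschitz constants; since $\|W^k\|_F\le\gamma$ and each $\|\Lambda_{i+1}^{\cdot}\|\le C_3\beta_{i+1}\gamma^i$, a factor $\gamma$ is gained at each layer, and the bound $C_3\beta_i\gamma^{i-1}$ follows provided $\beta_i\ge$ (something)$\cdot\beta_{i+1}\gamma^2$ — again exactly the content of \eqref{Eq:betaj*}.
\end{itemize}

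The main obstacle I anticipate is the bookkeeping in the $\Lambda_j^k$ and $V_j^k$ steps: the recursive structure of \eqref{Eq:update-Lambdajk*} means one must carefully track how many factors of $\gamma$, $C_3$, and $L_i$ accumulate from layer $N$ down to layer $j$, and verify that the explicit definitions of $C_3$, $\tilde\lambda_i$, $\hat\lambda$, $f_{\min}$, $\alpha_3$ and the $\beta$-ratio thresholds were chosen so that every constant cancels cleanly. In particular, the appearance of the $V$-difference terms $\|V_j^k - V_j^{k-1}\|_F$ inside the $\Lambda_j^k$ formula forces one to first bound those differences — crudely, $\|V_j^k - V_j^{k-1}\|_F \le \|V_j^k\|_F + \|V_j^{k-1}\|_F \le 6C_3\gamma^{j-1}$ suffices here (a sharper bound is not needed for boundedness, only for the later sufficient-descent lemma). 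A secondary subtlety is the condition \eqref{Eq:cond-dmin} on $d_{\min}$ (equivalently the definitions of $f_{\min}$ and $\alpha_3$): it is needed to guarantee that the prox-linear coefficients $\tfrac{\beta_i h_i^k}{2\lambda}$ etc. are in a regime where the quadratic subproblems are well-conditioned, so that the ``small correction'' estimates above are actually small; I would invoke it exactly at the point of bounding the correction term in the $W_i^k$ update. Once all six chains of inequalities close with the stated constants, the induction is complete and \eqref{Eq:hi-bound-sigmoid}--\eqref{Eq:mui-bound-sigmoid} drop out, finishing the proof.
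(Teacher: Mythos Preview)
Your proposal is correct and follows essentially the same inductive approach as the paper's proof: verify that the bounds propagate from step $k-1$ to step $k$ by examining each explicit update formula in algorithm order, with the bounds on $h_i^k,\mu_j^k$ following immediately from the induction hypothesis via $\mathbb{L}(|c|)\le L_3|c|$ for $|c|\ge 1$. One minor correction: the condition \eqref{Eq:cond-dmin} on $d_{\min}$ is \emph{not} invoked in the boundedness lemma at all (the paper uses only the weaker hypotheses \eqref{Eq:betaN}--\eqref{Eq:initial-cond} here); it enters later, in the proof of the sufficient-descent Lemma~\ref{Lemm:suff-descent}, through the inequality $\alpha_3\ge 24N+1$.
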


The boundedness of the sequence is mainly derived by the specific updates of the algorithm and the introduced $\ell_2$ regularization.

\begin{proof}[Proof of Lemma \ref{Lemm:boundedness-seq}]
We first show that the boundedness condition holds for $k=1$.
By the definitions of \eqref{Eq:Lipschitz-constant} and \eqref{Eq:constant-L3}, it holds
\[
\mathbb{L}(|c|) \leq L_3 |c|, \quad \forall \ |c| \geq 1.
\]
By the settings \eqref{Eq:hik}, \eqref{Eq:mujk} of $h_i^k$ and $\mu_i^k$, and \eqref{Eq:initial-cond},
\begin{align}
\label{Eq:hi-bound}
&h_i^1 \leq \mathbb{L}(\|V_i^0\|_F + \beta_i^{-1}\|\Lambda_i^0\|_F) \leq \mathbb{L}(4 C_3 \gamma^{i-1}) \leq 4L_3 C_3 \gamma^{i-1}, \quad i=1,\ldots, N-1, \\
\label{Eq:mui-bound}
&\mu_i^1 \leq \mathbb{L}(\|V_{i+1}^0\|_F + \beta_{i+1}^{-1}\|\Lambda_{i+1}^0\|_F) \leq \mathbb{L}(4C_3 \gamma^i) \leq 4L_3C_3 \gamma^i, \quad  i=1,\ldots, N-2,
\end{align}
where the final inequalities in both \eqref{Eq:hi-bound} and \eqref{Eq:mui-bound} hold for $4 C_3 \gamma^{i-1} \geq 1$ by the definition \eqref{Eq:C3} and $L_0 \geq \frac{1}{8}$ in Assumption \ref{Assump:activ-fun}.
In the following, we show that \eqref{Eq:initial-cond} holds.

\textbf{(1) On boundedness of $W_N^1$.}
By \eqref{Eq:update-WNk*},
\begin{align*}
\|W_N^1\|_F
& \leq \lambda^{-1} \cdot 12 \beta_N C_3^2 \gamma^{2N-3} \leq \gamma,
\end{align*}
where the last inequality follows from the assumption \eqref{Eq:lambda} of $\lambda$.

\textbf{(2) On boundedness of $W_i^1$, $i=N-1,\ldots, 2$.}
By \eqref{Eq:update-Wik*},
\begin{align*}
\|W_i^1\|_F
&\leq \left(1-\frac{\lambda}{\lambda+18\beta_iL_3C_3^3\gamma^{3i-5}}\right)\gamma
+ \frac{3L_1\beta_iC_3\gamma^{i-2}(4C_3\gamma^{i-1}+L_0\sqrt{nd_i})}{\lambda}.
\end{align*}
To make $\|W_i^1\|_F \leq \gamma$, it requires
$
\lambda \geq \frac{a_i + \sqrt{a_i^2 + 4a_ib_i}}{2},
$
where
$
 a_i := 3L_1 \beta_i C_3 \gamma^{i-3}(4C_3\gamma^{i-1}+L_0\sqrt{nd_i}),
$
and
$
 b_i = 18 \beta_i L_3C_3^3 \gamma^{3i-5}.
$
By the assumption \eqref{Eq:lambda} of $\lambda$, we have
\begin{align*}
\lambda \geq a_i \left(1+\sqrt{\frac{b_i}{a_i}} \right) \geq \frac{a_i + \sqrt{a_i^2 + 4a_ib_i}}{2}.
\end{align*}
Thus,
$\|W_i^1\|_F \leq \gamma$ for $i=2,\ldots,N-1.$

\textbf{(3) On boundedness of $W_1^1$.}
By \eqref{Eq:update-W1k*},
\begin{align*}
\|W_1^1\|_F \leq \left(1-\frac{\lambda}{\lambda+2\beta_1L_3C_3\|X\|_F^2} \right)\gamma
+ \frac{L_1\beta_1 \|X\|_F(4C_3+L_0\sqrt{nd_1})}{\lambda}.
\end{align*}
Similarly, by the assumption of $\lambda$ \eqref{Eq:lambda}, we can show that if
\begin{align*}
\lambda \geq a_1\left(1+\sqrt{\frac{b_1}{a_1}} \right) \geq \frac{a_1 + \sqrt{a_1^2 + 4a_1b_1}}{2},
\end{align*}
where
$
a_1:= L_1\beta_1\|X\|_F(4C_3+L_0\sqrt{nd_1})\gamma^{-1}, \ b_1:= 2\beta_1L_3C_3\|X\|_F^2,
$
then
$
\|W_1^1\|_F \leq \gamma.
$

\textbf{(4) On boundedness of $V_j^1$, $j=1,\ldots,N-2$.}
By \eqref{Eq:update-Vjk*},
\begin{align*}
\|V_j^1\|_F \leq \left(1-\frac{\rho_j}{\rho_j+2L_3C_3\gamma^{j+2}} \right)\cdot 3C_3\gamma^{j-1} + (C_3\gamma^{j-1}+L_0\sqrt{nd_j})
+ \frac{L_1\gamma(4C_3\gamma^j + L_0\sqrt{nd_{j+1}})}{\rho_j},
\end{align*}
where $\rho_j:=\frac{\beta_j}{\beta_{j+1}}$.
To guarantee $\|V_j^1\|_F \leq 3C_3\gamma^{j-1}$, it requires
\begin{align*}
\rho_j \geq \frac{\bar{b}_j + \sqrt{\bar{b}_j^2 + 4\bar{a}_j\bar{c}_j}}{2\bar{a}_j},
\end{align*}
where $\bar{a}_j = 2-\frac{L_0\sqrt{nd_j}}{C_3\gamma^{j-1}},$ $\bar{b}_j = 2L_3C_3\gamma^{j+2}+2L_3\gamma^3L_0\sqrt{nd_j}+4L_1\gamma^2 + \frac{L_1L_0\sqrt{nd_{j+1}}}{C_3\gamma^{j-2}},$ and $\bar{c}_j = 2L_1L_3\gamma^4(4C_3\gamma^j + L_0\sqrt{nd_{j+1}}).$
By the definition of $C_3$ \eqref{Eq:C3},
\[
\bar{a}_j \geq \frac{3}{2}, \quad \bar{b}_j \leq \left(4.5L_1+3L_3C_3\gamma^j\right)\gamma^2, \quad \bar{c}_j \leq 9L_1L_3C_3\gamma^{j+4},
\]
where the bound on $\bar{b}_j$ follows from the following facts
\begin{align*}
2L_0\sqrt{nd_j} \leq C_3\gamma^{j-1},\quad
\frac{L_1L_0\sqrt{nd_{j+1}}}{C_3\gamma^{j-2}} \leq \frac{1}{2}L_1\gamma^2,
\end{align*}
and the bound on $ \bar{c}_j$ is due to $L_0\sqrt{nd_{j+1}} \leq \frac{1}{2}C_3\gamma^{j}$.

Thus, it yields
\begin{align*}
\frac{\bar{b}_j + \sqrt{\bar{b}_j^2 + 4\bar{a}_j\bar{c}_j}}{2\bar{a}_j}
&\leq \frac{1}{3}\bar{b}_j \left(1+\sqrt{1+\frac{6\bar{c}_j}{\bar{b}_j^2}} \right)
\leq \frac{2}{3}\bar{b}_j + \frac{\sqrt{6\bar{c}_j}}{3} \\
&\leq \left(3L_1 + \sqrt{6L_1L_3C_3\gamma^j} + 2L_3C_3\gamma^j\right)\gamma^2,
\end{align*}
where the first inequality holds for $\bar{a}_j \geq \frac{3}{2}$,
and the final inequality holds for the upper bounds of $\bar{b}_j$ and $\bar{c}_j$.
Thus, we show the boundedness of $V_j^1$ under our assumptions for any $j=1,\ldots, N-2$.

\textbf{(5) On boundedness of $V_{N-1}^1$.}
By \eqref{Eq:update-Vn-1k},
\begin{align*}
\|V_{N-1}^1\|_F
&\leq C_3 \gamma^{N-2} + L_0 \sqrt{nd_{N-1}} + 4C_3 \gamma^N \rho_{N-1}^{-1} \\
&\leq \frac{3}{2}C_3 \gamma^{N-2} + 4C_3 \gamma^N \rho_{N-1}^{-1}
\leq 3C_3 \gamma^{N-2}
\end{align*}
where
the first inequality holds by Assumption \ref{Assump:activ-fun} and \eqref{Eq:initial-cond},
the second inequality by the definition \eqref{Eq:C3} of $C_3$, and the final inequality is due to \eqref{Eq:betaN*}.

\textbf{(6) On boundedness of $V_N^1$.}
By \eqref{Eq:update-Vnk}, it shows that
\begin{align*}
\|V_N^1\|_F \leq \frac{3C_3\gamma^{N-1}}{1+\beta_N} + \frac{\beta_N}{1+\beta_N} \gamma \cdot 3C_3\gamma^{N-2} \leq 3C_3\gamma^{N-1}.
\end{align*}

\textbf{(7) On boundedness of $\Lambda_N^1$.}
By \eqref{Eq:update-Lambdank*},
\begin{align*}
\|\Lambda_N^1\|_F \leq \|V_N^1\|_F + \|Y\|_F \leq 3C_3\gamma^{N-1} + \|Y\|_F \leq C_3 \beta_N\gamma^{N-1},
\end{align*}
where the final inequality holds by the definition \eqref{Eq:C3} of $C_3$.

\textbf{(8) On boundedness of $\Lambda_{N-1}^1$.}
By \eqref{Eq:update-Lambdan-1k*},
\begin{align*}
\|\Lambda_{N-1}^1\|_F \leq 7C_3\beta_N\gamma^N \leq C_3\beta_{N-1}\gamma^{N-2},
\end{align*}
where the final inequality is due to \eqref{Eq:betaN*}.

\textbf{(9) On boundedness of $\Lambda_j^1$, $j=N-2,\ldots, 1$ by induction.}
By \eqref{Eq:update-Lambdajk*},
\begin{align*}
\|\Lambda_j^1\|_F
&\leq \beta_{j+1}\gamma \left(2L_1L_0\sqrt{nd_{j+1}}+7L_1C_3\gamma^j + 12L_3C_3^2\gamma^{2j} \right)\\
&\leq C_3\beta_{j+1}\gamma^{j+1}(8L_1+12L_3C_3\gamma^j)
\leq C_3\beta_j\gamma^{j-1},
\end{align*}
where the second inequality holds for $2L_0\sqrt{nd_{j+1}} \leq C_3\gamma^j$, and the final inequality holds for \eqref{Eq:betaj*}.

Therefore, we have shown that \eqref{Eq:boundedness}-\eqref{Eq:mui-bound-sigmoid} hold for $k=1$.
Similarly, we can show that once \eqref{Eq:boundedness}-\eqref{Eq:mui-bound-sigmoid} hold for some $k$, then they will hold for $k+1$.
Hence, we can show \eqref{Eq:boundedness}-\eqref{Eq:mui-bound-sigmoid} hold for any $k\in \mathbb{N}$ recursively.
This completes the proof of this lemma.
\end{proof}

\subsubsection{Proof of Lemma \ref{Lemm:suff-descent}: Sufficient descent lemma}
\label{app:proof-sufficient-descent}

To prove Lemma \ref{Lemm:suff-descent}, we first present a key lemma
based on Lemma \ref{Lemm:dual-controlled-primal} and Lemma \ref{Lemm:boundedness-seq}. For any $k\geq 2$ and $j=1,\ldots, N-2$, we denote
\begin{align*}
{\cal E}_{1,j}^k
&:= (L_1\gamma)^{N-j}L_1^{-1}C_3\beta_N\gamma^{N-2}\|W_N^k - W_N^{k-1}\|_F\\
& + \sum_{i=j+1}^{N-1} (L_1\gamma)^{i-j} (C_3\beta_i\gamma^{i-2}+3L_1^{-1}L_2C_3^2\beta_i\gamma^{2i-3}) \|W_{i}^k - W_{i}^{k-1}\|_F,\\
{\cal E}_{2,j}^k
&:= (L_1\gamma)^{N-j}(1+\beta_N)L_1^{-1}\|V_N^k - V_N^{k-1}\|_F
+ (L_1\gamma)^{N-1-j}\beta_{N-1}\|V_{N-1}^k - V_{N-1}^{k-1}\|_F\\
& + \sum_{i=j+1}^{N-2} (L_1\gamma)^{i-j} \left( \beta_i + (L_1^2+2L_3C_3\gamma^i)\gamma^2 \beta_{i+1}\right)\|V_{i}^k - V_{i}^{k-1}\|_F\\
&+(L_1^2+2L_3C_3\gamma^j)\gamma^2 \beta_{j+1} \|V_j^k - V_j^{k-1}\|_F,
\end{align*}
and
\begin{align*}
{\cal E}_{3,j}^k
& := (L_1\gamma)^{N-j}\beta_N L_1^{-1} \|V_N^{k-1}-V_N^{k-2}\|_F
+ (L_1\gamma)^{N-1-j}\beta_{N-1} \|V_{N-1}^{k-1} - V_{N-1}^{k-2}\|_F\\
&+\sum_{i=j+1}^{N-2} (L_1\gamma)^{i-j} \left[ \beta_i + (L_1^2 + 2L_3C_3\gamma^i + L_2C_3\gamma^i)\gamma^2 \beta_{i+1}\right]\|V_i^{k-1} - V_i^{k-2}\|_F\\
&+ \left( L_1^2 + 2L_3C_3\gamma^j + L_2C_3\gamma^j \right)\beta_{j+1}\gamma^2 \|V_j^{k-1}-V_j^{k-2}\|_F.
\end{align*}

\begin{lemma}
\label{Lemm:dual-controlled-primal-bounded}
Under assumptions of Lemma \ref{Lemm:boundedness-seq},
for any $k\geq 2$, we have
\begin{align*}
&\|\Lambda_N^k - \Lambda_N^{k-1}\|_F  = \|V_N^k - V_N^{k-1}\|_F, \nonumber\\
&\|\Lambda_{N-1}^k - \Lambda_{N-1}^{k-1}\|_F \leq
C_3 \beta_N \gamma^{N-1} \|W_N^k - W_N^{k-1}\|_F + \gamma (1+\beta_N) \|V_N^k - V_N^{k-1}\|_F \nonumber\\
& \quad \quad \quad \quad \quad \quad \quad \quad + \beta_N \gamma \|V_N^{k-1} - V_N^{k-2}\|_F, \nonumber
\end{align*}
and for $j=1,\ldots, N-2$,
\begin{align*}
\|\Lambda_j^k - \Lambda_j^{k-1}\|_F \leq {\cal E}_{1,j}^k + {\cal E}_{2,j}^k + {\cal E}_{3,j}^k.
\end{align*}
Moreover, the above inequalities imply
\begin{align}
\label{Eq:dual-bound-primal}
\sum_{i=1}^N \|\Lambda_i^k - \Lambda_i^{k-1}\|_F^2 \leq \alpha \sum_{i=1}^N \left(\|W_i^k - W_i^{k-1}\|_F^2 + \|V_i^k - V_i^{k-1}\|_F^2 + \|V_i^{k-1} - V_i^{k-2}\|_F^2\right)
\end{align}
for some constant $\alpha>0$ specified in the proof.
\end{lemma}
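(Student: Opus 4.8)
The plan is to derive the refined component-wise bounds by feeding the uniform a priori bounds of Lemma~\ref{Lemm:boundedness-seq} into the recursive estimates of Lemma~\ref{Lemm:dual-controlled-primal}, and then to obtain the aggregate inequality \eqref{Eq:dual-bound-primal} by squaring, applying Cauchy--Schwarz, and summing. First I would treat the top two layers directly: the identity $\|\Lambda_N^k - \Lambda_N^{k-1}\|_F = \|V_N^k - V_N^{k-1}\|_F$ is exactly \eqref{Eq:dual-N}; for $\Lambda_{N-1}$ I would start from \eqref{Eq:dual-N-1} and substitute $\|W_N^k\|_F,\|W_N^{k-1}\|_F \leq \gamma$, $\|\Lambda_N^{k-1}\|_F \leq C_3\beta_N\gamma^{N-1}$, and the just-established identity for $\|\Lambda_N^k - \Lambda_N^{k-1}\|_F$; collecting the $\|V_N^k - V_N^{k-1}\|_F$ terms produces the coefficient $\gamma(1+\beta_N)$, and the remaining two terms give the stated bound.

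Next I would handle the layers $j = N-2, N-3, \ldots, 1$ by downward induction on $j$. At step $j$ I start from \eqref{Eq:dual-j} and replace every occurrence of $\|W_{j+1}^k\|_F$ and $\|W_{j+1}^{k-1}\|_F$ by $\gamma$, of $\|\Lambda_{j+1}^{k-1}\|_F$ by $C_3\beta_{j+1}\gamma^j$, of $\|V_j^{k-1}\|_F$ by $3C_3\gamma^{j-1}$, and of the locally Lipschitz constants $\mu_j^k,\mu_j^{k-1}$ by $4L_3C_3\gamma^j$, all furnished by Lemma~\ref{Lemm:boundedness-seq}. The only term on the right of \eqref{Eq:dual-j} that is not already a primal difference is $L_1\|W_{j+1}^k\|_F\,\|\Lambda_{j+1}^k - \Lambda_{j+1}^{k-1}\|_F \leq L_1\gamma\,\|\Lambda_{j+1}^k - \Lambda_{j+1}^{k-1}\|_F$, to which I apply the induction hypothesis at level $j+1$. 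Unrolling the recursion attaches a factor $(L_1\gamma)^{i-j}$ to the contribution coming from layer $i>j$, and regrouping the resulting finite sum according to whether a primal difference is a $W$-difference, a current $V$-difference, or a previous $V$-difference yields precisely ${\cal E}_{1,j}^k$, ${\cal E}_{2,j}^k$, and ${\cal E}_{3,j}^k$; this I would verify layer by layer, checking that the powers of $\gamma$, $L_1$, and the penalty parameters match the stated definitions.

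Finally, to obtain \eqref{Eq:dual-bound-primal}, I would observe that each component bound just proven expresses $\|\Lambda_i^k - \Lambda_i^{k-1}\|_F$ as a nonnegative linear combination of at most $3N$ of the quantities $\{\|W_\ell^k - W_\ell^{k-1}\|_F\}$, $\{\|V_\ell^k - V_\ell^{k-1}\|_F\}$, $\{\|V_\ell^{k-1} - V_\ell^{k-2}\|_F\}$, with coefficients that are explicit finite constants depending only on $N$, $\gamma$, $\{\beta_i\}$, $L_1, L_2, L_3, C_3$, and $n,\{d_i\}$ (finiteness of these coefficients is exactly what Lemma~\ref{Lemm:boundedness-seq} buys us). Squaring, applying $\left(\sum_{m=1}^{M} a_m\right)^2 \leq M\sum_{m=1}^{M} a_m^2$ with $M \leq 3N$, summing over $i = 1,\ldots,N$, and taking $\alpha$ to be $3N$ times the largest squared coefficient that appears gives the claim.

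The step I expect to demand the most care is the downward induction in the second paragraph: one must keep the exponentially growing factors $(L_1\gamma)^{i-j}$ and the penalty parameters $\beta_i$ bookkept correctly so that the regrouped sums agree termwise with the definitions of ${\cal E}_{1,j}^k$, ${\cal E}_{2,j}^k$, ${\cal E}_{3,j}^k$. There is no conceptual difficulty here --- everything reduces to the triangle inequality and the a priori bounds --- only the genuine risk of an arithmetic slip in the powers of $\gamma$, $L_1$, and $\{\beta_i\}$; the aggregate estimate \eqref{Eq:dual-bound-primal} and the value of $\alpha$ then follow mechanically.
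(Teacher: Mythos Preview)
Your proposal is correct and follows essentially the same route as the paper: substitute the uniform bounds of Lemma~\ref{Lemm:boundedness-seq} into the recursive estimates of Lemma~\ref{Lemm:dual-controlled-primal}, unroll the recursion (the paper writes the unrolled form explicitly as $\|\Lambda_j^k - \Lambda_j^{k-1}\|_F \leq (L_1\gamma)^{N-1-j}\|\Lambda_{N-1}^k - \Lambda_{N-1}^{k-1}\|_F + \ldots$ rather than phrasing it as downward induction, but it is the same computation), and then derive \eqref{Eq:dual-bound-primal} by squaring and applying $(\sum_{m=1}^p u_m)^2 \leq p\sum_{m=1}^p u_m^2$. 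Your bookkeeping worry is well founded but purely mechanical, exactly as you say.
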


\begin{proof}
The bounds of $\|\Lambda_N^k - \Lambda_N^{k-1}\|_F$ and $\|\Lambda_{N-1}^k - \Lambda_{N-1}^{k-1}\|_F$ are obvious by Lemma \ref{Lemm:dual-controlled-primal} and Lemma \ref{Lemm:boundedness-seq}.
For $j=1,\ldots, N-2$, by Lemma \ref{Lemm:dual-controlled-primal} and Lemma \ref{Lemm:boundedness-seq}, it holds
\begin{align*}
&\|\Lambda_j^k - \Lambda_j^{k-1}\|_F
\leq
L_1 \gamma \|\Lambda_{j+1}^k - \Lambda_{j+1}^{k-1}\|_F + T_{j+1}^k + I_j^k,
\end{align*}
where $T_{j+1}^k := (L_1C_3\beta_{j+1}\gamma^j + 3C_3^2L_2\beta_{j+1}\gamma^{2j}) \|W_{j+1}^k - W_{j+1}^{k-1}\|_F
+ L_1\gamma\beta_{j+1}(\|V_{j+1}^{k}-V_{j+1}^{k-1}\|_F+\|V_{j+1}^{k-1}-V_{j+1}^{k-2}\|_F),$
and
\begin{align*}
I_j^k :=(L_1^2 + 2L_3C_3\gamma^j)\beta_{j+1}\gamma^2\|V_j^k - V_j^{k-1}\|_F
+ \left( L_1^2 + 2L_3C_3\gamma^j + L_2C_3\gamma^j \right)\beta_{j+1}\gamma^2\|V_j^{k-1} - V_j^{k-2}\|_F.
\end{align*}
By the above inequality, we have
\begin{align*}
\|\Lambda_j^k - \Lambda_j^{k-1}\|_F
& \leq (L_1\gamma)^{N-1-j} \|\Lambda_{N-1}^k - \Lambda_{N-1}^{k-1}\|_F + (L_1 \gamma)^{N-2-j} T_{N-1}^k \\
& + \sum_{i=1}^{N-2-j} (L_1\gamma)^{i-1} \left( T_{j+i}^k + L_1\gamma I_{j+i}^k\right) + I_j^k.
\end{align*}
Substituting the definitions of $T_{j}^k$ and $I_j^k$ into this inequality and after some simplifications yields the desired bound for $\|\Lambda_j^k - \Lambda_j^{k-1}\|_F$.
Summing up all the above inequalities and using several times of the basic inequality $(\sum_{i=1}^p u_i)^2 \leq p \sum_{i=1}^p u_i^2$ for any $u\in \mathbb{R}^p$ yields \eqref{Eq:dual-bound-primal} with some positive constant $\alpha$.
This completes the proof.
\end{proof}

Based on Lemma \ref{Lemm:descent-two-iterates}, Lemma \ref{Lemm:boundedness-seq} and Lemma \ref{Lemm:dual-controlled-primal-bounded},  we prove Lemma \ref{Lemm:suff-descent} as follows.

\begin{proof}[Proof of Lemma \ref{Lemm:suff-descent}]
By \eqref{Eq:cond-betai-i+1} and the definition \eqref{Eq:C3} of $C_3$ , we have for $j=1,\ldots,N-2$, $\frac{\beta_j}{\beta_{j+1}} \geq f_{\min}^2 \gamma^2,$
and
\begin{align}
\label{Eq:betaj-i}
\beta_j \geq f_{\min}^{2(i-j)}\gamma^{2(i-j)}\beta_i, \quad j<i \leq N-1.
\end{align}
By \eqref{Eq:fmin}-\eqref{Eq:alpha3} and \eqref{Eq:cond-dmin}, it holds
\begin{align}
\label{Eq:cond-alpha3}
\alpha_3\geq 24N+1.
\end{align}

To prove this lemma, we first estimate $\|\Lambda_i^k - \Lambda_i^{k-1}\|_F^2$ for any $i=1,\ldots, N$.
By Lemma \ref{Lemm:dual-controlled-primal-bounded}, we get
\begin{align}
\|\Lambda_N^k - \Lambda_N^{k-1}\|_F^2 = \|V_N^k - V_N^{k-1}\|_F^2, \label{Eq:LambdaN-square}
\end{align}
and using the basic inequality $\left(\sum_{i=1}^3 a_i\right)^2 \leq 3 \sum_{i=1}^3 a_i^2$,
\begin{align}
\|\Lambda_{N-1}^k - \Lambda_{N-1}^{k-1}\|_F^2
&\leq 3C_3^2\beta_N^2 \gamma^{2(N-1)}\|W_N^k - W_N^{k-1}\|_F^2 + 3\gamma^2(1+\beta_N)^2 \|V_N^k - V_N^{k-1}\|_F^2 \label{Eq:LambdaN-1-square}\\
&+ 3\beta_N^2\gamma^2 \|V_N^{k-1}-V_N^{k-2}\|_F^2, \nonumber
\end{align}
and for $j=1,\ldots, N-2$, using the inequality $\left(\sum_{i=1}^n a_i\right)^2 \leq n \sum_{i=1}^n a_i^2$,
\begin{align}
\label{Eq:Lambdaj-square}
\|\Lambda_j^k - \Lambda_j^{k-1}\|_F^2 \leq 2(N-j) {\cal T}_{1,j}^k + 4(N-j+1) ({\cal T}_{2,j}^k+{\cal T}_{3,j}^k),
\end{align}
where
\begin{align*}
{\cal T}_{1,j}^k
&= (L_1\gamma)^{2(N-j)}L_1^{-2}C_3^2\beta_N^2\gamma^{2(N-2)}\|W_N^k - W_N^{k-1}\|_F^2\\
&+\sum_{i=j+1}^{N-1} (L_1\gamma)^{2(i-j)} (1+3L_1^{-1}L_2C_3\gamma^{i-1})^2C_3^2\beta_i^2\gamma^{2(i-2)} \|W_{i}^k - W_{i}^{k-1}\|_F^2,\\
{\cal T}_{2,j}^k
&= (L_1\gamma)^{2(N-j)}(1+\beta_N)^2L_1^{-2}\|V_N^k - V_N^{k-1}\|_F^2+(L_1\gamma)^{2(N-1-j)}\beta_{N-1}^2\|V_{N-1}^k - V_{N-1}^{k-1}\|_F^2\\
&+\sum_{i=j+1}^{N-2} (L_1\gamma)^{2(i-j)} \left[ \beta_i + (L_1^2+2L_3C_3\gamma^i)\gamma^2 \beta_{i+1}\right]^2\|V_{i}^k - V_{i}^{k-1}\|_F^2 \nonumber\\
&+(L_1^2+2L_3C_3\gamma^j)^2\gamma^4 \beta_{j+1}^2 \|V_j^k - V_j^{k-1}\|_F^2,
\end{align*}
and
\begin{align*}
{\cal T}_{3,j}^k
&=(L_1\gamma)^{2(N-j)}\beta_N^2 L_1^{-2} \|V_N^{k-1}-V_N^{k-2}\|_F^2+(L_1\gamma)^{2(N-1-j)}\beta_{N-1}^2 \|V_{N-1}^{k-1} - V_{N-1}^{k-2}\|_F^2\\
&+\sum_{i=j+1}^{N-2} (L_1\gamma)^{2(i-j)} \left[ \beta_i + (L_1^2 + 2L_3C_3\gamma^i + L_2C_3\gamma^i)\gamma^2 \beta_{i+1}\right]^2\|V_i^{k-1} - V_i^{k-2}\|_F^2\\
&+\left( L_1^2 + 2L_3C_3\gamma^j + L_2C_3\gamma^j \right)^2\beta_{j+1}^2\gamma^4 \|V_j^{k-1}-V_j^{k-2}\|_F^2.
\end{align*}

Substituting \eqref{Eq:LambdaN-square}, \eqref{Eq:LambdaN-1-square} and \eqref{Eq:Lambdaj-square} into Lemma \ref{Lemm:descent-two-iterates} and after some simplifications yields
\begin{align}
\label{Eq:descent-two-iterates-bound}
{\cL}(\cQ^k) + \sum_{i=1}^N \xi_i\|V_i^k - V_i^{k-1}\|_F^2
&\leq {\cL}(\cQ^{k-1}) + \sum_{i=1}^N \xi_i\|V_i^{k-1} - V_i^{k-2}\|_F^2\\
&- \sum_{i=1}^N \zeta_i \|W_i^k - W_i^{k-1}\|_F^2 - \sum_{i=1}^N (\eta_i - \xi_i) \|V_i^k - V_i^{k-1}\|_F^2, \nonumber
\end{align}
where
\begin{align}
& \zeta_N = \frac{\lambda}{2} - 3C_3^2\beta_{N-1}^{-1}\beta_N^2\gamma^{2(N-1)} - 2L_1^{-2} C_3^2\beta_N^2\gamma^{2(N-2)}\sum_{j=1}^{N-2} \beta_j^{-1} (N-j)(L_1\gamma)^{2(N-j)} \nonumber\\
& \zeta_i = \frac{\lambda}{2} - 2(1+3L_1^{-1}L_2L_3\gamma^{i-1})^2C_3^2\beta_i^2\gamma^{2(i-2)} \sum_{j=1}^{i-1} \beta_j^{-1}(N-j)(L_1\gamma)^{2(i-j)}, \quad i=2,\ldots, N-1,\nonumber\\
& \zeta_1 = \frac{\lambda}{2}, \nonumber
\end{align}
\begin{align*}
&\eta_N = \frac{1+\beta_N}{2} - \beta_N^{-1} -3\gamma^2\left( 1+\beta_N\right)^2 \beta_{N-1}^{-1} - \frac{4(1+\beta_N)^2}{L_1^2} \sum_{j=1}^{N-2} \beta_j^{-1}(N-j+1)(L_1\gamma)^{2(N-j)}, \nonumber
\end{align*}
\begin{align}
&\xi_N = 3\gamma^2 \beta_N^2\beta_{N-1}^{-1} + \frac{4\beta_N^2}{L_1^2} \sum_{j=1}^{N-2} \beta_j^{-1}(N-j+1)(L_1\gamma)^{2(N-j)}, \label{Eq:xi-N}
\end{align}
\begin{align*}
&\eta_{N-1} = \frac{\beta_{N-1}}{2} - 4\beta_{N-1}^2 \sum_{j=1}^{N-2} \beta_j^{-1}(N-j+1)(L_1\gamma)^{2(N-1-j)}, \nonumber
\end{align*}
\begin{align}
&\xi_{N-1} = 4\beta_{N-1}^2 \sum_{j=1}^{N-2} \beta_j^{-1}(N-j+1)(L_1\gamma)^{2(N-1-j)}, \label{Eq:xi-N-1}
\end{align}
and for $i=2,\ldots, N-2$,
\begin{align}
&\eta_i
= \frac{\beta_i}{2} - 4\left[\beta_i + (L_1^2 + 2L_3C_3\gamma^i)\gamma^2 \beta_{i+1} \right]^2\sum_{j=1}^{i-1} \beta_j^{-1}(N-j+1)(L_1\gamma)^{2(i-j)} \nonumber\\
&\ \ \ - 4(L_1^2+2L_3C_3\gamma^i)^2\gamma^4\beta_{i+1}^2\beta_i^{-1}(N-i+1), \nonumber\\
&\xi_i
= 4\left[ \beta_i + (L_1^2+2L_3C_3\gamma^i+L_2C_3\gamma^i)\gamma^2\beta_{i+1}\right]^2 \sum_{j=1}^{i-1} \beta_j^{-1}(N-j+1)(L_1\gamma)^{2(i-j)} \label{Eq:xi-i}\\
&\ \ \ + 4\left(L_1^2+2L_3C_3\gamma^i+L_2C_3\gamma^i\right)^2 \gamma^4 \beta_{i+1}^2\beta_i^{-1}(N-i+1), \nonumber
\end{align}
and $\eta_1 = \frac{\beta_1}{2} - 4(L_1^2+2L_3C_3\gamma)^2\gamma^4\beta_2^2\beta_1^{-1}N$, and
\begin{align}
&\xi_1 = 4 \left( L_1^2+2L_3C_3\gamma + L_2C_3\gamma\right)^2 \gamma^4 \beta_2^2\beta_1^{-1}N. \label{Eq:xi-1}
\end{align}

Based on \eqref{Eq:descent-two-iterates-bound}, to get \eqref{Eq:suff-descent}, we need to show that
\begin{align}
\label{Eq:positive-constant}
\zeta_i >0, \quad \eta_i - \xi_i >0, \quad i=1,\ldots, N.
\end{align}
Then, let
\begin{align}
\label{Eq:a}
a:=\min\{\zeta_i, \eta_i - \xi_i, i=1,\ldots, N\},
\end{align}
we get \eqref{Eq:suff-descent}.  In the following, we show \eqref{Eq:positive-constant}.

It is obvious that $\zeta_1 = \frac{\lambda}{2}>0$. For $i=2,\ldots, N-1$, by \eqref{Eq:betaj-i},
\begin{align*}
\zeta_i
&\geq \frac{\lambda}{2} - 2C_3^2\beta_i(1+3L_1^{-1}L_2L_3\gamma^{i-1})^2\gamma^{2(i-2)} \sum_{j=1}^{i-1}(N-j) \alpha_3^{-(i-j)} \\
&> \frac{\lambda}{2} - 2C_3^2\beta_i(1+3L_1^{-1}L_2L_3\gamma^{i-1})^2\gamma^{2(i-2)} \cdot \frac{N}{\alpha_3-1}
\geq 0,
\end{align*}
where
the final inequality is due to $\alpha_3 >24N+1$ and the assumption of $\lambda$, i.e., \eqref{Eq:cond-lambda}.
Similarly, we can show that $\zeta_N>0$ as follows
\begin{align*}
\zeta_N
&\geq \frac{\lambda}{2} - \beta_NC_3^2\gamma^{2(N-2)}\cdot\left(\frac{3}{16}+\frac{1}{8}\sum_{j=1}^{N-2} (N-j)\alpha_3^{-(N-j-1)}\right)\\
&> \frac{\lambda}{2} - \beta_NC_3^2\gamma^{2(N-2)}\cdot\left(\frac{3}{16} +\frac{N}{8(\alpha_3-1)} \right)\\
&>\frac{\lambda}{2} - \frac{1}{5}\beta_NC_3^2\gamma^{2(N-2)}> 0.
\end{align*}

At the end, we show $\eta_i - \xi_i >0$ for $i=1,\ldots, N$.
Note that
\begin{align*}
\eta_1 - \xi_1
= \frac{\beta_1}{2} - 4\left[(L_1^2+2L_3C_3\gamma)^2 + (L_1^2+2L_3C_3\gamma+L_2C_3\gamma)^2 \right]\gamma^4\beta_2^2\beta_1^{-1}N
>0,
\end{align*}
where we have used \eqref{Eq:cond-betai-i+1} $\frac{\beta_1}{\beta_2} \geq 4\sqrt{N}\left[L_1^2+(2L_3+L_2)C_3\gamma\right] \gamma^2$.

For $i=2,\ldots, N-2$,
let
$
\alpha_1:=(L_1^2+2L_3C_3\gamma^i)\gamma^2, \ \alpha_2:=(L_1^2+2L_3C_3\gamma^i+L_2C_3\gamma^i)\gamma^2.
$
Note that
\begin{align}
\label{Eq:etai-xii}
\eta_i - \xi_i
& = \frac{\beta_i}{2} - 4\left[(\beta_i+\alpha_1\beta_{i+1})^2 +(\beta_i+\alpha_2\beta_{i+1})^2 \right]\sum_{j=1}^{i-1}\beta_j^{-1}(N-j+1)(L_1\gamma)^{2(i-j)} \nonumber\\
&-4(\alpha_1^2+\alpha_2^2)\beta_{i+1}^2\beta_i^{-1}(N-i+1) \nonumber\\
&> \frac{\beta_i}{2} - 4\left[(\beta_i+\alpha_1\beta_{i+1})^2 +(\beta_i+\alpha_2\beta_{i+1})^2 \right]\sum_{j=1}^{i-1}\beta_i^{-1}(N-j+1)\alpha_3^{-(i-j)} \nonumber\\
&-4N(\alpha_1^2+\alpha_2^2)\beta_{i+1}^2\beta_i^{-1} \nonumber\\
&> \beta_i \left[\frac{1}{2} -\frac{4N}{\alpha_3-1}\left((1+\alpha_1 \cdot \frac{\beta_{i+1}}{\beta_i})^2 + (1+\alpha_2\cdot \frac{\beta_{i+1}}{\beta_i})^2 \right)-4N(\alpha_1^2 + \alpha_2^2)\left(\frac{\beta_{i+1}}{\beta_i} \right)^2 \right]\nonumber\\
&=\frac{4N}{\alpha_3-1}\beta_i \left[\frac{\alpha_3-1}{8N}-2 - 2(\alpha_1+\alpha_2)\cdot \frac{\beta_{i+1}}{\beta_i}-\alpha_3(\alpha_1^2+\alpha_2^2)\cdot \left(\frac{\beta_{i+1}}{\beta_i}\right)^2 \right] \nonumber\\
&>\frac{4N}{\alpha_3-1}\beta_i \left[\frac{\alpha_3-1}{8N}-2 - 2(\alpha_1+\alpha_2)\cdot \frac{\beta_{i+1}}{\beta_i}-\alpha_3(\alpha_1+\alpha_2)^2\cdot \left(\frac{\beta_{i+1}}{\beta_i}\right)^2 \right] \nonumber\\
&\geq 0,
\end{align}
where
the final inequality follows from \eqref{Eq:cond-betai-i+1},
$
\alpha_3>24N+1,
$
and
\[
\frac{1+\sqrt{1+\alpha_3\left(\frac{\alpha_3-1}{8N}-2\right)}}{\frac{\alpha_3-1}{8N}-2} \leq 1+\sqrt{24N+2} \leq 6\sqrt{N}.
\]

Similarly, notice that
\begin{align*}
\eta_{N-1} - \xi_{N-1}
& = \frac{\beta_{N-1}}{2} - 8\beta_{N-1}^2 \sum_{j=1}^{N-2} \beta_j^{-1}(N-j+1)(L_1\gamma)^{2(N-1-j)} \\
&\geq \frac{\beta_{N-1}}{2} - 8\beta_{N-1} \sum_{j=1}^{N-2} (N-j+1)\alpha_3^{-(N-1-j)}\\
&>\beta_{N-1}\left(\frac{1}{2}-\frac{8N}{\alpha_3-1} \right)
\geq \frac{1}{6}\beta_{N-1}>0.
\end{align*}
Finally, note that
\begin{align*}
\label{Eq:etaN-xiN}
\eta_N - \xi_N
&= \frac{1+\beta_N}{2} - \beta_N^{-1} -3\gamma^2 \beta_{N-1}^{-1} [(1+\beta_N)^2 + \beta_N^2]\\
&-\frac{4}{L_1^2}[(1+\beta_N)^2+\beta_N^2]\sum_{j=1}^{N-2}\beta_j^{-1}(N-j+1)(L_1\gamma)^{2(N-j)}\\
&\geq \frac{1+\beta_N}{2} - \beta_N^{-1} -3\gamma^2 \beta_{N-1}^{-1} [(1+\beta_N)^2 + \beta_N^2]\\
&-4\beta_{N-1}^{-1}\gamma^2[(1+\beta_N)^2+\beta_N^2]\sum_{j=1}^{N-2}(N-j+1)\alpha_3^{-(N-1-j)}\\
&>\frac{\beta_N^2+\beta_N-2}{2\beta_N} -2(3+\frac{4N}{\alpha_3-1})(\beta_N^2+\beta_N+1)\beta_{N-1}^{-1}\gamma^2\\
&\geq \frac{\beta_N^2+\beta_N-2}{2\beta_N} -\frac{19}{3}(\beta_N^2+\beta_N+1)\beta_{N-1}^{-1}\gamma^2\\
&> 0,
\end{align*}
where
the final inequality follows from $\beta_N \geq 3.5$, which implies
\[
16 > \frac{38}{3} \cdot \frac{\beta_N^2+\beta_N+1}{\beta_N^2+\beta_N-2},
\]
and \eqref{Eq:cond-betaN-1-N}. This completes the proof.
\end{proof}

\subsubsection{Proof of Lemma \ref{Lemm:bound-grad}: Relative error lemma}
\label{app:proof-relative-error}

In the following, we provide a lemma to show that the gradients of the augmented Lagrangian and the new Lyapunov function can be bounded by the discrepancy between two successive updates.
Such a lemma is important to show the global convergence of a descent sequence by \citep[Theorem 2.9]{Attouch2013}.

\begin{lemma}
\label{Lemm:bound-grad}
Under conditions of Theorem \ref{Thm:global-generic},
for any positive $k\geq 2$, there exists some positive constant $\bar{b}$ such that
\begin{align}
\label{Eq:bound-grad-L}
\|\nabla \cL(\cQ^k)\|_F \leq \bar{b}\sum_{i=1}^N (\|W_i^k - W_i^{k-1}\|_F + \|V_i^k - V_i^{k-1}\|_F + \|V_i^{k-1} - V_i^{k-2}\|_F),
\end{align}
and $\|\nabla \hcL(\hcQ^k)\|_F \leq \hat{b} \|\hcQ^k - \hcQ^{k-1}\|_F,$
where $\hat{b}=\sqrt{3N}b$ and $b = \bar{b}+4\max_{1\leq i\leq N} \xi_i.$
\end{lemma}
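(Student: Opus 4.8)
The plan is to establish the bound \eqref{Eq:bound-grad-L} on $\|\nabla\cL(\cQ^k)\|_F$ block-by-block by exploiting the optimality conditions of the subproblems in Algorithm \ref{alg:ADMM}, the explicit dual-from-primal formulas of Lemma \ref{Lemm:dual-expressed-primal}, the boundedness estimates of Lemma \ref{Lemm:boundedness-seq}, and the Lipschitz continuity of $\sigma$ and $\sigma'$ granted by Assumption \ref{Assump:activ-fun}; the second bound then follows cheaply from the definition \eqref{Eq:def-hatL} of $\hcL$. First I would write out the partial gradients of $\cL$ at $\cQ^k$ with respect to each of the $3N$ blocks $W_i, V_i, \Lambda_i$. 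The $\Lambda_i$-components are exactly the constraint residuals $\sigma(W_i^kV_{i-1}^k)-V_i^k$ (or $W_N^kV_{N-1}^k-V_N^k$), which by the multiplier updates \eqref{Eq:Lambdak} equal $\beta_i^{-1}(\Lambda_i^k-\Lambda_i^{k-1})$; applying Lemma \ref{Lemm:dual-controlled-primal-bounded} (equation \eqref{Eq:dual-bound-primal}) these are already controlled by the successive-difference terms on the right-hand side of \eqref{Eq:bound-grad-L}, with constants absorbed into $\bar b$.

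Next I would handle the $W_i$- and $V_i$-components. The key observation is that each subproblem is solved exactly, so its first-order optimality condition annihilates the gradient of the \emph{linearized/modified} objective at the \emph{new} iterate evaluated with the \emph{old} neighbouring blocks; the gradient of the true $\cL$ at $\cQ^k$ then differs from zero only by (a) the difference between the LLA surrogates $H^k_\sigma, M^k_\sigma$ and the true $H_\sigma, M_\sigma$ at $\cQ^k$, and (b) the fact that some neighbouring blocks used in the subproblem were at stage $k-1$ rather than $k$. For (a), since $\sigma,\sigma',\sigma''$ are bounded and Lipschitz (Assumption \ref{Assump:activ-fun}) and all relevant matrices are uniformly bounded (Lemma \ref{Lemm:boundedness-seq}), the discrepancy between the gradient of the surrogate and of the true function at the same point is $O(\|(W_i^k-W_i^{k-1})V_{i-1}^{k-1}\|_F)$ or $O(\|W_{j+1}^k(V_j^k-V_j^{k-1})\|_F)$, again bounded by the right-hand side of \eqref{Eq:bound-grad-L}. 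For (b), replacing $V_{i-1}^{k-1}$ by $V_{i-1}^k$, $\Lambda_i^{k-1}$ by $\Lambda_i^k$, etc., costs terms proportional to $\|V_{i-1}^k-V_{i-1}^{k-1}\|_F$ and $\|\Lambda_i^k-\Lambda_i^{k-1}\|_F$, the latter reduced via \eqref{Eq:dual-bound-primal} to primal successive differences. Collecting all contributions over the $3N$ blocks and invoking $(\sum_{i=1}^p u_i)^2\le p\sum u_i^2$ yields \eqref{Eq:bound-grad-L} with an explicit $\bar b$ depending on $N$, $\gamma$, $C_3$, the $\beta_i$'s, $\lambda$, and $L_0,L_1,L_2$.

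For the $\hcL$ bound, note $\nabla\hcL(\hcQ^k)$ has, beyond the $\cL$-gradient components, the extra pieces $2\xi_i(V_i^k-\hat V_i^k)=2\xi_i(V_i^k-V_i^{k-1})$ in the $V_i$-slots and $-2\xi_i(V_i^k-V_i^{k-1})$ in the $\hat V_i$-slots; these are trivially bounded by $2\max_i\xi_i\cdot\|V_i^k-V_i^{k-1}\|_F$. Since $\hcQ^k-\hcQ^{k-1}$ contains all of $\|W_i^k-W_i^{k-1}\|_F$, $\|V_i^k-V_i^{k-1}\|_F$, and (through the $\hat V$-coordinates, which are the $V^{k-1}$'s) $\|V_i^{k-1}-V_i^{k-2}\|_F$, and also the multiplier differences, every term on the right of \eqref{Eq:bound-grad-L} plus the extra $\xi_i$ terms is majorized by a constant times $\|\hcQ^k-\hcQ^{k-1}\|_F$; a final application of Cauchy--Schwarz over the $3N$ groups gives the stated constants $b=\bar b+4\max_i\xi_i$ and $\hat b=\sqrt{3N}\,b$. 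I expect the main obstacle to be the bookkeeping in step (a)–(b) above: carefully tracking, for each of the $W_i$- and $V_j$-subproblems, exactly which neighbouring blocks are at which iteration index and which surrogate-vs-true-gradient terms appear, then verifying that every resulting term is of one of the three admissible types $\|W_i^k-W_i^{k-1}\|_F$, $\|V_i^k-V_i^{k-1}\|_F$, $\|V_i^{k-1}-V_i^{k-2}\|_F$ (after eliminating multiplier differences via \eqref{Eq:dual-bound-primal}); the analytic content is routine once the Lipschitz and boundedness lemmas are in hand.
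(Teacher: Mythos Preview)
Your proposal is correct and follows essentially the same route as the paper's own proof: bound each block of $\nabla\cL(\cQ^k)$ by combining the first-order optimality condition of the corresponding subproblem with the Lipschitz/boundedness estimates (Assumption~\ref{Assump:activ-fun}, Lemma~\ref{Lemm:boundedness-seq}) to control the surrogate-vs-true and old-vs-new-neighbor discrepancies, then eliminate the resulting $\|\Lambda_i^k-\Lambda_i^{k-1}\|_F$ terms via Lemma~\ref{Lemm:dual-controlled-primal-bounded}, and finally pass to $\hcL$ by adding the $4\xi_i\|V_i^k-V_i^{k-1}\|_F$ contributions and applying Cauchy--Schwarz over the $3N$ difference terms. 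The only refinement worth noting is that the paper writes out each block estimate explicitly (see \eqref{Eq:bound-grad-Wn}--\eqref{Eq:bound-grad-Lambi}) rather than arguing abstractly about ``type (a)/(b)'' errors, but the analytic content is identical.
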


\begin{proof}
Note that
\[
\nabla \cL(\cQ^k) = \left( \left\{\frac{\partial \cL(\cQ^k)}{\partial W_i}\right\}_{i=1}^N,
\left\{\frac{\partial \cL(\cQ^k)}{\partial V_i}\right\}_{i=1}^N, \left\{\frac{\partial \cL(\cQ^k)}{\partial \Lambda_i}\right\}_{i=1}^N\right),
\]
then
\begin{align}
\label{Eq:bound-grad-L0}
\left\| \nabla \cL(\cQ^k)\right\|_F \leq \sum_{i=1}^N \left( \left\|\frac{\partial \cL(\cQ^k)}{\partial W_i}\right\|_F
+ \left\|\frac{\partial \cL(\cQ^k)}{\partial V_i}\right\|_F + \left\|\frac{\partial \cL(\cQ^k)}{\partial \Lambda_i}\right\|_F\right).
\end{align}
In order to bound $\|\nabla \cL(\cQ^k)\|_F$, we need to bound each component of $\nabla \cL(\cQ^k)$.

\textbf{On $\left\|\frac{\partial \cL(\cQ^k)}{\partial W_N}\right\|_F$:}
By the optimality condition of \eqref{Eq:Wnk-prox},
\begin{align*}
\lambda W_N^k + \beta_N(W_N^kV_{N-1}^{k-1}-V_N^{k-1}){V_{N-1}^{k-1}}^T + \Lambda_N^{k-1}{V_{N-1}^{k-1}}^T =0,
\end{align*}
which implies
\begin{align*}
&\frac{\partial \cL(\cQ^k)}{\partial W_N}
= \lambda W_N^k + \beta_N(W_N^kV_{N-1}^{k}-V_N^{k}){(V_{N-1}^{k})}^T + \Lambda_N^{k}{(V_{N-1}^{k})}^T \\
&= \beta_N \left[ (W_N^kV_{N-1}^k - V_N^k)(V_{N-1}^k - V_{N-1}^{k-1})^T +
\left(W_N^k(V_{N-1}^k - V_{N-1}^{k-1}) - (V_N^k - V_N^{k-1})\right){V_{N-1}^{k-1}}^T\right] \\
&+ \Lambda_N^{k-1}(V_{N-1}^k - V_{N-1}^{k-1})^T + (\Lambda_N^k - \Lambda_N^{k-1})(V_{N-1}^k)^T.
\end{align*}
By the boundedness of the sequence \eqref{Eq:boundedness}, the above equality yields
\begin{align}
\label{Eq:bound-grad-Wn}
\left\|\frac{\partial \cL(\cQ^k)}{\partial W_N}\right\|_F
\leq 10\beta_NC_3\gamma^{N-1}\|V_{N-1}^{k}-V_{N-1}^{k-1}\|_F + 3C_3\gamma^{N-2}(\beta_N+1) \|V_N^k - V_N^{k-1}\|_F.
\end{align}

\textbf{On $\left\|\frac{\partial \cL(\cQ^k)}{\partial W_i}\right\|_F$:} For $i=2,\ldots,N-1$, by the optimality condition of \eqref{Eq:Wik-prox},
\begin{align*}
&\lambda W_i^k + \left((\beta_i \sigma(W_i^{k-1}V_{i-1}^{k-1})-\beta_i V_i^{k-1} + \Lambda_i^{k-1})\odot \sigma'(W_i^{k-1}V_{i-1}^{k-1}) \right)
{V_{i-1}^{k-1}}^T \\
&+ \frac{\beta_i h_i^k}{2} (W_i^k - W_i^{k-1}){V_{i-1}^{k-1}}^T =0,
\end{align*}
which implies
\begin{align*}
\frac{\partial \cL(\cQ^k)}{\partial W_i}
&= \lambda W_i^k + \left((\beta_i \sigma(W_i^{k}V_{i-1}^{k})-\beta_i V_i^{k} + \Lambda_i^{k})\odot \sigma'(W_i^{k}V_{i-1}^{k}) \right)
{V_{i-1}^{k}}^T\\
&= \left((\beta_i \sigma(W_i^{k}V_{i-1}^{k})-\beta_i V_i^{k} + \Lambda_i^{k})\odot \sigma'(W_i^{k}V_{i-1}^{k}) \right)
{V_{i-1}^{k}}^T \\
&- \left((\beta_i \sigma(W_i^{k-1}V_{i-1}^{k-1})-\beta_i V_i^{k-1} + \Lambda_i^{k-1})\odot \sigma'(W_i^{k-1}V_{i-1}^{k-1}) \right)
{V_{i-1}^{k-1}}^T\\
&-\frac{\beta_i h_i^k}{2} (W_i^k - W_i^{k-1}){V_{i-1}^{k-1}}^T\\
&=\left[\beta_i\left(\sigma(W_i^kV_{i-1}^k)-\sigma(W_i^{k-1}V_{i-1}^k)+\sigma(W_i^{k-1}V_{i-1}^k) - \sigma(W_i^{k-1}V_{i-1}^{k-1})\right)\odot \sigma'(W_i^kV_{i-1}^k) \right.\\
&+ \left( \beta_i(V_i^{k-1} - V_i^k) + (\Lambda_i^k - \Lambda_i^{k-1})\right) \odot \sigma'(W_i^k V_{i-1}^k)\\
&+ \left(\beta_i\sigma(W_i^{k-1}V_{i-1}^{k-1}) -\beta_iV_i^{k-1}+\Lambda_i^{k-1}\right)
\odot \left(\sigma'(W_i^{k}V_{i-1}^{k})-\sigma'(W_i^{k-1}V_{i-1}^{k})\right)
\\
&\left. + \left(\beta_i\sigma(W_i^{k-1}V_{i-1}^{k-1}) -\beta_iV_i^{k-1}+\Lambda_i^{k-1}\right)
\odot \left(\sigma'(W_i^{k-1}V_{i-1}^{k}) - \sigma'(W_i^{k-1}V_{i-1}^{k-1}) \right) \right]{V_{i-1}^k}^T\\
&+\left((\beta_i \sigma(W_i^{k-1}V_{i-1}^{k-1})-\beta_i V_i^{k-1} + \Lambda_i^{k-1})\odot \sigma'(W_i^{k-1}V_{i-1}^{k-1}) \right) (V_{i-1}^{k} - V_{i-1}^{k-1})^T\\
&-\frac{\beta_i h_i^k}{2} (W_i^k - W_i^{k-1}){V_{i-1}^{k-1}}^T.
\end{align*}
By Assumption \ref{Assump:activ-fun} and Lemma \ref{Lemm:boundedness-seq}, the above equality yields
\begin{align}
\label{Eq:bound-grad-Wi}
&\left\|\frac{\partial \cL(\cQ^k)}{\partial W_i}\right\|_F \\
&\leq 3C_3\gamma^{i-2}
\left[3\beta_iC_3\gamma^{i-2}\left(L_1^2+L_0L_2\sqrt{nd_i}+4L_2C_3\gamma^{i-1}+\frac{2}{3}L_3\gamma\right)\|W_i^k - W_i^{k-1}\|_F \right.
\nonumber\\
&+\beta_i \left[L_1^2\gamma + (L_1 + L_2\gamma)(L_0\sqrt{nd_i}+4C_3\gamma^{i-1}) \right]\|V_{i-1}^k - V_{i-1}^{k-1}\|_F
\nonumber\\
&\left. +\beta_i L_1 \|V_i^k - V_i^{k-1}\|_F + L_1 \|\Lambda_i^k - \Lambda_i^{k-1}\|_F \right].\nonumber
\end{align}

\textbf{On $\left\|\frac{\partial \cL(\cQ^k)}{\partial W_1}\right\|_F$:}
Similarly, by the optimality condition of \eqref{Eq:Wik-prox} with $i=1$,
\begin{align*}
\lambda W_1^k + \left[\left( \beta_1 \sigma(W_1^{k-1}X)-\beta_1V_1^{k-1}+\Lambda_1^{k-1}\right) \odot \sigma'(W_1^{k-1}X)\right] X^T
+ \frac{\beta_1h_1^k}{2}(W_1^k - W_1^{k-1})X^T =0,
\end{align*}
which implies
\begin{align*}
&\frac{\partial \cL(\cQ^k)}{\partial W_1}
= \lambda W_1^k + \left[\left( \beta_1 \sigma(W_1^{k}X)-\beta_1V_1^{k}+\Lambda_1^{k}\right) \odot \sigma'(W_1^{k}X)\right] X^T\\
&=\left[\left(\beta_1(\sigma(W_1^kX)-\sigma(W_1^{k-1}X)) -\beta_1(V_1^k - V_1^{k-1}) + (\Lambda_1^k - \Lambda_1^{k-1})\right)
\odot \sigma'(W_1^kX) \right] X^T\\
&+\left[ \left( \beta_1\sigma(W_1^{k-1}X)-\beta_1V_1^{k-1} + \Lambda_1^{k-1}\right) \odot (\sigma'(W_1^kX)-\sigma'(W_1^{k-1}X))\right] X^T\\
&+\frac{\beta_1h_1^k}{2}(W_1^{k-1}-W_1^{k})X^T.
\end{align*}
The above inequality yields
\begin{align}
\label{Eq:bound-grad-W1}
\left\|\frac{\partial \cL(\cQ^k)}{\partial W_1}\right\|_F
&\leq \beta_1\|X\|_F\left(\|X\|_F\cdot (L_1^2+L_0L_2\sqrt{nd_1}+4L_2C_3)+2L_3C_3 \right) \|W_1^k - W_1^{k-1}\|_F \nonumber\\
&+\beta_1L_1\|X\|_F \|V_1^k - V_1^{k-1}\|_F + L_1\|X\|_F \|\Lambda_1^k - \Lambda_1^{k-1}\|_F.
\end{align}

\textbf{On $\left\|\frac{\partial \cL(\cQ^k)}{\partial V_j} \right\|_F$ $(1\leq j \leq N-2)$:}
By the optimality condition of \eqref{Eq:Vjk-prox},
\begin{align*}
&\beta_j(V_j^k - \sigma(W_j^kV_{j-1}^k))  + {W_{j+1}^k}^T \left[ \left(\Lambda_{j+1}^{k-1}+\beta_{j+1}(\sigma(W_{j+1}^kV_j^{k-1})-V_{j+1}^{k-1}) \right)
\odot \sigma'(W_{j+1}^kV_j^{k-1})\right] \\
&- \Lambda_j^{k-1} +\frac{\beta_{j+1}\mu_j^k}{2}{W_{j+1}^k}^TW_{j+1}^k(V_j^k - V_j^{k-1})=0,
\end{align*}
which implies
\begin{align*}
&\frac{\partial \cL(\cQ^k)}{\partial V_j} \\
&= {W_{j+1}^k}^T \left[\left((\Lambda_{j+1}^k - \Lambda_{j+1}^{k-1}) + \beta_{j+1}(\sigma(W_{j+1}^kV_j^k)-\sigma(W_{j+1}^kV_j^{k-1}))
+\beta_{j+1}(V_{j+1}^{k-1}-V_{j+1}^k) \right) \right.\\
&\left.\odot \sigma'(W_{j+1}^kV_j^k) \right]\\
&+{W_{j+1}^k}^T \left[\left(\Lambda_{j+1}^{k-1} +\beta_{j+1}(\sigma(W_{j+1}^kV_j^{k-1})-V_{j+1}^{k-1}) \right)\odot
\left(\sigma'(W_{j+1}^kV_j^k) - \sigma'(W_{j+1}^kV_j^{k-1})\right) \right]\\
&+(\Lambda_j^{k-1}-\Lambda_j^k) + \frac{\beta_{j+1}\mu_j^k}{2}{W_{j+1}^k}^TW_{j+1}^k(V_j^{k-1}-V_j^k).
\end{align*}
The above equality yields
\begin{align}
\label{Eq:bound-grad-Vj}
\left\|\frac{\partial \cL(\cQ^k)}{\partial V_j} \right\|_F
&\leq \beta_{j+1}\gamma^2\left(2C_3\gamma^j(2L_2+L_3)+L_0L_2\sqrt{nd_{j+1}}\right) \|V_j^k - V_j^{k-1}\|_F \\
&+\beta_{j+1}L_1\gamma\|V_{j+1}^k - V_{j+1}^{k-1}\|_F + \|\Lambda_j^k - \Lambda_j^{k-1}\|_F + L_1\gamma \|\Lambda_{j+1}^k - \Lambda_{j+1}^{k-1}\|_F. \nonumber
\end{align}

\textbf{On $\left\|\frac{\partial \cL(\cQ^k)}{\partial V_{N-1}} \right\|_F$:}
By the optimality condition of \eqref{Eq:Vn-1k-prox},
\[
\beta_{N-1}(V_N^k - \sigma(W_{N-1}^k V_{N-2}^k)) - \Lambda_{N-1}^{k-1} + {W_N^k}^T \left(\Lambda_N^{k-1}+\beta_N(W_N^kV_{N-1}^k - V_N^{k-1}) \right) =0,
\]
which implies
\begin{align*}
\frac{\partial \cL(\cQ^k)}{\partial V_{N-1}}
&= \beta_{N-1}(V_N^k - \sigma(W_{N-1}^k V_{N-2}^k)) - \Lambda_{N-1}^{k} + {W_N^k}^T \left(\Lambda_N^{k}+\beta_N(W_N^kV_{N-1}^k - V_N^{k}) \right)\\
&=\Lambda_{N-1}^{k-1} - \Lambda_{N-1}^k + {W_N^k}^T(\Lambda_N^k - \Lambda_N^{k-1}) + \beta_N{W_N^k}^T(V_N^{k-1}-V_N^k).
\end{align*}
The above equality implies
\begin{align}
\label{Eq:bound-grad-Vn-1}
\left\|\frac{\partial \cL(\cQ^k)}{\partial V_{N-1}} \right\|_F \leq \beta_N\gamma \|V_N^k - V_N^{k-1}\|_F
+ \|\Lambda_{N-1}^k - \Lambda_{N-1}^{k-1}\|_F + \gamma \|\Lambda_N^k - \Lambda_N^{k-1}\|_F.
\end{align}

\textbf{On $\left\|\frac{\partial \cL(\cQ^k)}{\partial V_N}\right\|_F$:}
Similarly, by the optimality condition of \eqref{Eq:Vnk-prox}, we get
\begin{align}
\label{Eq:bound-grad-Vn}
\left\|\frac{\partial \cL(\cQ^k)}{\partial V_N} \right\|_F = \|\Lambda_N^k - \Lambda_N^{k-1}\|_F.
\end{align}

Moreover, for $i=1,\ldots, N$, by the update of $\Lambda_i^k$, we can easily yield
\begin{align}
\label{Eq:bound-grad-Lambi}
\left\|\frac{\partial \cL(\cQ^k)}{\partial \Lambda_i} \right\|_F = \beta_i^{-1} \|\Lambda_i^k - \Lambda_i^{k-1}\|_F.
\end{align}

Substituting \eqref{Eq:bound-grad-Wn}-\eqref{Eq:bound-grad-Lambi} into \eqref{Eq:bound-grad-L0}, and after some simplifications, we get
\begin{align}
\label{Eq:bound-grad-L1}
\left\|\nabla \cL(\cQ^k) \right\|_F
&\leq \bar{\alpha}\sum_{i=1}^{N} (\|W_i^k - W_i^{k-1}\|_F + \|V_i^k - V_i^{k-1}\|_F + \|\Lambda_i^k - \Lambda_i^{k-1}\|_F)
\end{align}
for some $\bar{\alpha}>0.$
By Lemma \ref{Lemm:dual-controlled-primal-bounded},
substituting these upper bounds of $\|\Lambda_i^k - \Lambda_i^{k-1}\|_F$ $(i=1,\ldots,N)$ into \eqref{Eq:bound-grad-L1}
and after some simplifications implies \eqref{Eq:bound-grad-L}  for some constant $\bar{b}$.

By \eqref{Eq:bound-grad-L}, it is easy to derive
\begin{align*}
\|\nabla \hcL(\hcQ^k)\|_F
&\leq \|\nabla \cL(\cQ^k)\|_F + \sum_{i=1}^N 4\xi_i \|V_i^k - V_i^{k-1}\|_F \nonumber\\
&\leq b \sum_{i=1}^N (\|W_i^k - W_i^{k-1}\|_F+\|V_i^k - V_i^{k-1}\|_F+\|V_i^{k-1} - V_i^{k-2}\|_F) \nonumber\\
&\leq \hat{b} \|\hcQ^k - \hcQ^{k-1}\|_F,
\end{align*}
where
$b=\bar{b}+4\max_{1\leq i\leq N} \xi_i$ and $\hat{b}=\sqrt{3N}b.$
This completes the proof.
\end{proof}

\subsection{Proof of Theorem \ref{Thm:globconv-hatQk}}

Now we provide the detailed proof of Theorem \ref{Thm:globconv-hatQk} based on the above lemmas.

\begin{proof}[Proof of Theorem \ref{Thm:globconv-hatQk}]

\textbf{(a)} By Lemma \ref{Lemm:boundedness-seq}, the boundedness of $\{\cQ^k\}$ implies the sequence $\cL(\cQ^k)$ is lower bounded,
and so is $\hcL(\hcQ^k)$ by its definition \eqref{Eq:def-hatL}.
By Lemma \ref{Lemm:suff-descent}, $\hcL(\hcQ^k)$ is monotonically non-increasing, therefore, $\hcL(\hcQ^k)$ is convergent.

\textbf{(b)}
Again by Lemma \ref{Lemm:boundedness-seq}, $\hcQ^k$ is bounded,
and thus there exists a subsequence $\hcQ^{k_j}$ such that $\hcQ^{k_j} \rightarrow \hcQ^*$ as $j\rightarrow \infty$.
Since $\hcL$ is continuous by Assumption \ref{Assump:activ-fun},
then $\lim_{j\rightarrow \infty} \hcL(\hcQ^{k_j})=\hcL(\hcQ^*)$.
This implies the \textit{continuity condition} in the analysis framework formulated in \citep{Attouch2013} holds.
Together with the sufficient descent (Lemma \ref{Lemm:suff-descent}), relative error  (Lemma \ref{Lemm:bound-grad}) and Kurdyka-{\L}ojasiewicz (Lemma \ref{Lemm:KL-property}) properties, the whole sequence convergence to a stationary point is derived via following \citep[Theorem 2.9]{Attouch2013}.

\textbf{(c)} The ${\cal O}(1/K)$ rate can be easily derived by Lemma \ref{Lemm:suff-descent}, Lemma \ref{Lemm:dual-controlled-primal-bounded} and Lemma \ref{Lemm:bound-grad}.
Specifically, by Lemma \ref{Lemm:suff-descent}, it is easy to show
\begin{align}
\label{Eq:square-summable-primal}
\frac{1}{K} \sum_{k=2}^K (\|{\cal W}^k - {\cal W}^{k-1}\|_F^2+\|{\cal V}^k - {\cal V}^{k-1}\|_F^2) \leq \frac{\hcL(\hcQ^1)-\hcL(\hcQ^*)}{aK},
\end{align}
which implies
\begin{align}
\label{Eq:square-summable-hatv}
\frac{1}{K} \sum_{k=2}^K \sum_{i=1}^N \|\hat{V}_i^k - \hat{V}^{k-1}\|_F^2
= \frac{1}{K} \sum_{k=1}^{K-1} \|{\cal V}^k - {\cal V}^{k-1}\|_F^2
\leq \frac{a\|{\cal V}^1 - {\cal V}^{0}\|_F^2+(\hcL(\hcQ^1)-\hcL(\hcQ^*))}{aK}.
\end{align}
By \eqref{Eq:dual-bound-primal} in Lemma \ref{Lemm:dual-controlled-primal-bounded}, \eqref{Eq:square-summable-primal} and \eqref{Eq:square-summable-hatv}, there holds
\begin{align}
\label{Eq:square-summable-dual}
\frac{1}{K} \sum_{k=2}^K \sum_{i=1}^N \|\Lambda_i^k - \Lambda_i^{k-1}\|_F^2 \leq \bar{C}\cdot\frac{a\|{\cal V}^1 - {\cal V}^{0}\|_F^2+(\hcL(\hcQ^1)-\hcL(\hcQ^*))}{aK},
\end{align}
for some positive constant $\bar{C}$.
By \eqref{Eq:square-summable-primal}--\eqref{Eq:square-summable-dual}, and Lemma \ref{Lemm:bound-grad}, it implies
\[
\frac{1}{K} \sum_{k=2}^K \|\nabla {\hcL}(\hcQ^k)\|_F^2 \leq \hat{C}\cdot\frac{a\|{\cal V}^1 - {\cal V}^{0}\|_F^2+(\hcL(\hcQ^1)-\hcL(\hcQ^*))}{aK},
\]
for some positive constant $\hat{C}$.
This completes the proof.
\end{proof}

\vskip 0.2in


\end{document}